\documentclass[letterpaper,11pt]{article}


\usepackage{diagbox}
\usepackage{makecell}
\usepackage{booktabs}
\usepackage{tikz,tikz-3dplot}
\usepackage{amsmath}
\usepackage{bbm}
\usepackage{amsfonts}
\usepackage{amssymb}
\usepackage{amsthm}
\usepackage{url,ifthen}
\usepackage[shortlabels]{enumitem}
\usepackage{srcltx}
\usepackage{dsfont}
\usepackage{multirow}
\usepackage{boxedminipage}
\usepackage[margin=1.1in]{geometry}
\usepackage{nicefrac}
\usepackage{xspace}
\usepackage{graphicx}
\usepackage{color}
\usepackage{subfigure}
\usepackage{colortbl}
\usepackage{setspace}
\usepackage{pgfplots}
\pgfplotsset{compat=1.12}
\usepackage{algorithm,algorithmic}
\usepackage[algo2e]{algorithm2e}

\usepackage{xcolor}
\definecolor{DarkGreen}{rgb}{0.1,0.5,0.1}
\definecolor{DarkRed}{rgb}{0.5,0.1,0.1}
\definecolor{DarkBlue}{rgb}{0.1,0.1,0.5}
\definecolor{Gray}{rgb}{0.2,0.2,0.2}

\definecolor{c1}{RGB}{38, 70, 83}
\definecolor{c2}{RGB}{42, 157, 143}
\definecolor{c3}{RGB}{233, 196, 106}
\definecolor{c5}{RGB}{231, 111, 81}
\definecolor{c4}{RGB}{244, 162, 97}

\definecolor{c1}{RGB}{38, 70, 83}
\definecolor{c2}{RGB}{42, 157, 143}
\definecolor{c3}{RGB}{233, 196, 106}
\definecolor{c5}{RGB}{231, 111, 81}
\definecolor{c4}{RGB}{244, 162, 97}

\usepackage{listings}
\lstdefinestyle{mystyle}{
    commentstyle=\color{DarkBlue},
    keywordstyle=\color{DarkRed},
    numberstyle=\tiny\color{Gray},
    stringstyle=\color{DarkGreen},
    basicstyle=\footnotesize,
    breakatwhitespace=false,         
    breaklines=true,                 
    captionpos=b,                    
    keepspaces=true,                 
    numbers=left,                    
    numbersep=5pt,                  
    showspaces=false,                
    showstringspaces=false,
    showtabs=false,                  
    tabsize=2
}
\lstset{style=mystyle}

\usepackage[small]{caption}
\usepackage[pdftex]{hyperref}
\hypersetup{
    unicode=false,          
    pdftoolbar=true,        
    pdfmenubar=true,        
    pdffitwindow=false,      
    pdfnewwindow=true,      
    colorlinks=true,       
    linkcolor=DarkBlue,          
    citecolor=DarkGreen,        
    filecolor=DarkRed,      
    urlcolor=DarkBlue,          
    %
    %
    pdftitle={},
    pdfauthor={},
}

\usepackage[capitalize]{cleveref}

\def\draft{1}

\def\submit{0}

\ifnum\draft=1 
    \def\ShowAuthNotes{1}
\else
    \def\ShowAuthNotes{0}
\fi

\ifnum\submit=1
\newcommand{\forsubmit}[1]{#1}
\newcommand{\forreals}[1]{}
\else
\newcommand{\forreals}[1]{#1}
\newcommand{\forsubmit}[1]{}
\fi

\ifnum\ShowAuthNotes=1
\newcommand{\authnote}[2]{{ \footnotesize \bf{\color{DarkRed}[#1's Note:
{\color{DarkBlue}#2}]}}}
\else
\newcommand{\authnote}[2]{}
\fi

\newcommand{\cynthia}[1]{{\color{blue}Cynthia: #1}}
\newcommand{\jcpnote}[1]{{\color{purple}Juanky: #1}}
\newcommand{\ch}[1]{{\color{olive}[Chris: #1]}}
\newcommand{\pranay}[1]{{\color{red}\textbf{Pranay:} #1}}
\newcommand{\nsi}[1]{{\color{violet}\textbf{Nicole:} #1}}
\newcommand{\todo}{{\color{red} \textbf{TODO}}}

\renewcommand{\cynthia}[1]{{}}
\renewcommand{\jcpnote}[1]{{}}
\renewcommand{\ch}[1]{{}}
\renewcommand{\pranay}[1]{{}}
\renewcommand{\nsi}[1]{{}}
\renewcommand{\todo}[1]{{}}

%
%

\newtheorem{theorem}{Theorem}[section]
\newtheorem{remark}[theorem]{Remark}
\newtheorem{lemma}[theorem]{Lemma}
\newtheorem{corollary}[theorem]{Corollary}
\newtheorem{proposition}[theorem]{Proposition}
\newtheorem{fact}[theorem]{Fact}

\theoremstyle{definition} 

\newtheorem{definition}[theorem]{Definition}

\newtheorem*{definition*}{Definition}
\newtheorem*{proposition*}{Proposition}

\theoremstyle{definition}
\newtheorem*{example*}{Example}
\newtheorem{example}[theorem]{Example}

\newtheoremstyle{example_contd}
{\topsep} {\topsep}%
{}
{}
{\bfseries}
{.}
{1em}
{\thmname{#1} \thmnumber{ #2}\thmnote{#3} (continued)}

\theoremstyle{example_contd}

%
%

\newcommand{\chapterref}[1]{\hyperref[ch:#1]{Chapter~\ref{ch:#1}}}

\newcommand{\claimref}[1]{\hyperref[claim:#1]{Claim~\ref{claim:#1}}}

\newcommand{\corollaryref}[1]{\hyperref[cor:#1]{Corollary~\ref{cor:#1}}}

\newcommand{\definitionref}[1]{\hyperref[def:#1]{Definition~\ref{def:#1}}}

\newcommand{\equationref}[1]{\hyperref[eq:#1]{Equation~\ref{eq:#1}}}

\newcommand{\factref}[1]{\hyperref[fact:#1]{Fact~\ref{fact:#1}}}
\newcommand{\figurelabel}[1]{\label{fig:#1}}
\newcommand{\figureref}[1]{\hyperref[fig:#1]{Figure~\ref{fig:#1}}}

\newcommand{\tableref}[1]{\hyperref[tab:#1]{Table~\ref{tab:#1}}}

\newcommand{\itemref}[1]{\hyperref[item:#1]{Item~(\ref{item:#1})}}

\newcommand{\lemmaref}[1]{\hyperref[lem:#1]{Lemma~\ref{lem:#1}}}

\newcommand{\propref}[1]{\hyperref[prop:#1]{Proposition~\ref{prop:#1}}}

\newcommand{\propositionref}[1]{\hyperref[prop:#1]{Proposition~\ref{prop:#1}}}

\newcommand{\remarkref}[1]{\hyperref[rem:#1]{Remark~\ref{rem:#1}}}

\newcommand{\sectionref}[1]{\hyperref[sec:#1]{Section~\ref{sec:#1}}}

\newcommand{\theoremref}[1]{\hyperref[thm:#1]{Theorem~\ref{thm:#1}}}

\newcommand{\assumptionref}[1]{\hyperref[ass:#1]{Assumption~\ref{ass:#1}}}

%
%
\usepackage[utf8]{inputenc}
\usepackage[T1]{fontenc}
\usepackage{kpfonts}
\usepackage{microtype}

\newcommand{\Esymb}{\mathbb{E}}

\DeclareMathOperator*{\E}{\Esymb}

\renewcommand{\Pr}{\mathrm{Pr}}




\newcommand{\cA}{{\cal A}}
\newcommand{\cB}{{\cal B}}
\newcommand{\cC}{{\cal C}}
\newcommand{\cD}{{\cal D}}

\newcommand{\cF}{{\cal F}}
\newcommand{\cG}{{\cal G}}
\newcommand{\cH}{{\cal H}}
\newcommand{\cI}{{\cal I}}

\newcommand{\cL}{{\cal L}}

\newcommand{\cN}{{\mathbb{N}}}
\newcommand{\cO}{{\cal O}}
\newcommand{\littleo}{{o}}
\newcommand{\cP}{{\cal P}}

\newcommand{\cR}{{\mathbb{R}}}

\newcommand{\cU}{{\cal U}}

\newcommand{\cX}{{\cal X}}
\newcommand{\cY}{{\cal Y}}
\newcommand{\cZ}{{\cal Z}}

\newcommand{\defeq}{\stackrel{\small \mathrm{def}}{=}}
\renewcommand{\leq}{\leqslant}
\renewcommand{\le}{\leqslant}
\renewcommand{\geq}{\geqslant}
\renewcommand{\ge}{\geqslant}
\newcommand{\paren}[1]{(#1 )}

\newcommand{\brac}[1]{[#1 ]}


\newcommand{\Set}[1]{\left\{#1\right\}}
\newcommand{\abs}[1]{\lvert#1\rvert}





\newcommand{\R}{\mathbb{R}}

\newcommand{\N}{\mathbb N}


\usepackage{bm}











\newcommand{\ignore}[1]{}

\newcommand{\sign}{\mathrm{sign}}

\newcommand{\poly}{{\rm poly}}

\DeclareMathOperator*{\argmin}{arg\,min}
\DeclareMathOperator*{\argmax}{arg\,max}

\renewcommand{\epsilon}{\varepsilon}

\newcommand{\eps}{\epsilon}


\newcommand{\remove}[1]{}

%
%


%

%
%

\newcommand{\calY}{\mathcal{Y}}
\newcommand{\calYhat}{\hat{\calY}}


\newcommand{\yt}{\tilde{y}}
\newcommand{\pt}{\tilde{p}}
\newcommand{\cYh}{\calYhat}
\newcommand{\Ber}{\mathrm{Ber}}
\newcommand{\Reg}{\mathcal{R}_\cA}
\newcommand{\Rarg}[1]{\mathcal{R}_{#1}}
\newcommand{\DOI}{\mathrm{DOI}}
\newcommand{\HOI}{\mathrm{HOI}}
\newcommand{\red}[1]{\textcolor{red}{#1}}
\renewcommand{\red}[1]{{}}
\newcommand{\yhat}{\hat{y}}

\newcommand{\innerF}[2]{\langle {#1}, {#2} \rangle_{\cF}}

\newcommand{\inner}[2]{\langle {#1}, {#2} \rangle}

\newcommand{\kalg}{\text{K29}^*}
\newcommand{\embed}{\mathsf{Em}}

\newcommand{\flin}{\cF_{\mathrm{lin}}}
\renewcommand{\hbar}{\overline{h}}

\newcommand{\fnorm}[1]{\| #1\|_{\mathcal{F}}}

\newcommand{\algname}{\text{Any Kernel algorithm}}
\newcommand{\qalgname}{\text{Quantile Any Kernel algorithm}}
\newcommand{\valgname}{\text{Vector Any Kernel algorithm}}
\newcommand{\1}{{1}}
\renewcommand{\cU}{\mathcal{U}}
\newcommand{\rkhsnorm}[1]{\fnorm}
\newcommand{\kf}{kernel function}
\newcommand{\fair}{fair}
\newcommand{\Cont}{Cts}
\newcommand{\Ymin}{Y_{\min}}
\newcommand{\Ymax}{Y_{\max}}

\usepackage{fontawesome}

\usepackage{comment}

\usepackage{boxedminipage}
\usepackage{xcolor}
\usepackage{cite}
\usepackage{enumitem}
\usepackage{url}
\usepackage{xspace}

\hypersetup{
    linkcolor=black
}

\bibliographystyle{alpha}

\title{
From Fairness to Infinity: 
Outcome-Indistinguishable (Omni)Prediction in Evolving Graphs} 
\remove{
A New Approach to Online Outcome-Indistinguishability, Applied to Edge Formation

Predicting Edge-Formation in Evolving Graphs via
Online Outcome-Indistinguishability 

Link Prediction for Evolving Graphs\\
New Applications for a New Technique in Online Outcome-Inditinguishability

Link Prediction for Evolving Graphs\\
A Case Study for a New Approach to online Outcome-Inditinguishability

Something about fairness?
(Social loss)

From Fairness to Infinity: 
Outcome-Indistinguishable Edge (Omni)Prediction in Evolving Graphs

From Fairness to Infinity: 
Outcome-Indistinguishable (Omni)Prediction in Evolving Graphs

From Fairness to Infinity: 
Outcome-Indistinguishable (Omni)Prediction in Professional Networks

\cynthia{Swap omniprediction?  Here or in intro.}
}
\author{Cynthia Dwork\\ Harvard  \and Chris Hays\\ MIT \and Nicole Immorlica  \\ Microsoft Research  \and Juan C. Perdomo \\ Harvard \and Pranay Tankala \\ Harvard }

\date{\today}
\begin{document}

\maketitle

\pagenumbering{gobble}
\pagenumbering{arabic}





\title{}
\thispagestyle{empty}
\begin{abstract}
Professional networks provide invaluable entree to opportunity through referrals and introductions. A rich literature shows they also serve to entrench and even exacerbate a status quo of privilege and disadvantage. 
Hiring platforms, equipped with the ability to nudge link formation, provide a tantalizing opening for beneficial structural change.  
We anticipate that key to this prospect will be the ability to estimate the likelihood of edge formation in an evolving graph. 

Outcome-indistinguishable prediction algorithms ensure that the modeled world is indistinguishable from the real world by a family of statistical tests.  Omnipredictors ensure that predictions can be post-processed to yield loss minimization competitive with respect to a benchmark class of predictors for many losses simultaneously, with appropriate post-processing.
We begin by observing that, by combining a slightly modified form of the online $\kalg$ algorithm of Vovk (2007) with basic facts from the theory of reproducing kernel Hilbert spaces, one can derive simple and efficient online algorithms satisfying 
outcome indistinguishability and omniprediction, with guarantees that improve upon, or are complementary to, those currently known. This is of independent interest.

 We apply these techniques to evolving graphs, obtaining online outcome-indistinguishable omnipredictors for rich --- possibly infinite --- sets of distinguishers that capture properties of pairs of nodes, and their neighborhoods.  This yields, \textit{inter alia}, multicalibrated predictions of edge formation with respect to pairs of demographic groups, and the ability to simultaneously optimize loss as measured by a variety of social welfare functions.

\end{abstract}
\clearpage
\setcounter{page}{1}


\setcounter{tocdepth}{2}
\tableofcontents

\clearpage

\section{Introduction}
Professional networks provide invaluable entree to opportunity through referrals and introductions. 
%
%
A rich literature shows they may also serve to entrench and even exacerbate a status quo of privilege and disadvantage.  For example, in a network with two disjoint groups with equal ability distribution, homophily can, through job referrals, result in the draining of opportunity from the smaller group to the larger~\cite{bolte2020role,calvo2004effects,okafor2020social}. 
%
Remedies are few.  Hiring platforms, equipped with the ability to nudge link formation, provide a tantalizing opening for beneficial structural change. 



Key to this prospect is the ability to estimate edge formation in an evolving network.
%
This is a prediction problem for the universe of pairs of network nodes (individuals) $(i,j)$, suggesting that standard prediction methods can be applied.  While this intuition is correct, the situation is complicated by the fact that edge formation need not be a property of the endpoints alone, but can also depend on the topology and other features of the neighborhoods of the principals $i$ and $j$. 
For example, the probability that the edge  $(i,j)$ forms may be a function of the number of contacts that $i$ and $j$ have in common among other factors. 
Let us informally call this the problem of complex domains. 
To complicate matters even further, these features change over time as individuals grow their networks, switch jobs, \textit{etc}. 
We treat edge prediction in a social network as an online, distribution-free problem and aim to make predictions that are valid and useful, \emph{regardless} of the underlying edge formation process.

Since one of our overarching goals is fairness in networking, we certainly want these predictions to satisfy a rich collection of ``fair accuracy'' criteria, which we express in the language of \textit{outcome indistinguishability}~\cite{dwork2021outcome} and \textit{multicalibration}~\cite{hkrr}.  
Moreover, we would like the predictions to be simultaneously loss minimizing (with appropriate post-processing) with respect to a benchmark class of predictors, for a collection of loss functions expressing goals of social welfare; that is, we want {\em omniprediction}~\cite{gopalan2022omnipredictors,garg2024oracle}.  
Putting these together, we want low-regret, online, outcome-indistinguishable  omnipredictors for complex domains.  We would also like the predictors to be computationally efficient.  This is the {\em \fair{} edge omniprediction problem} solved herein. 

%


{\em Outcome indistinguishability} (OI) frames learning not as loss minimization – the dominant paradigm in supervised machine learning --- but instead as satisfaction of a collection of “indistinguishability” constraints. Outcome indistinguishability considers two alternate worlds of individual-outcome pairs: in the natural world, individuals’ outcomes are generated by Real Life’s true distribution; in the simulated world, individuals’ outcomes are sampled according to a predictive model. Outcome indistinguishability requires the learner to produce a predictor in which the two worlds are computationally indistinguishable.  This is  captured by specifying a class of distinguishers to be fooled by the predictor.

Simplifying for ease of exposition, one may define a class of distinguishers corresponding to a (possibly infinite) collection of (possibly intersecting) demographic groups and prediction values, in which case outcome indistinguishability ensures that the predictor is calibrated simultaneously on each group when viewed in isolation.  This is {\em multicalibration}, defined in the seminal work of H\'{e}bert-Johnson, Kim, Reingold, and Rothblum~\cite{hkrr}\footnote{\cite{dwork2021outcome} defines a hierarchy of outcome indistinguishability results, according to the degree of access to the predictor that is given to the distinguishers. When not otherwise specified, we are referring to {\em sample-access} OI.  The term {\em multicalibration} has become more general than its usage here, referring also to a class of real-valued functions (see, \textit{e.g.}, \cite{gopalan2022omnipredictors}). 
For equivalences, see~\cite{dwork2021outcome,gopalan2022omnipredictors}.}; the view of simultaneous calibration in different demographic groups as a potential {\em fairness} goal was introduced by Kleinberg, Mullainathan, and Raghavan~\cite{kleinberg2016inherent}.

(Online) {\em omnipredictors}~\cite{gopalan2022omnipredictors,garg2024oracle} produce predictions that can be used to ensure loss minimization for a wide, even infinite, collection of loss functions, with respect to a benchmark class of predictors.  For example, in the batch case one might train a predictor to optimize squared loss, but later one might wish to deploy the predictor in a way that minimizes 0-1 loss {\em with no further training}.  Omnipredictors make this possible. 
Omniprediction, too, can be expressed in the language of outcome indistinguishability~\cite{loss_oi}. 

A full treatment of fairness in networking requires  understanding which kinds of links will advance social and/or individual welfare and which nudges are likely to be most beneficial.  We hope our work serves as an important first step towards addressing these questions.  In addition, as it is infeasible to make predictions for all non-edges and a random nudge may likely be useless, platform-assisted fair networking will require policies for focusing the platform's attention, a subject for future work. 
\subsection{Our contributions and related work.}

We initiate the study of online outcome indistinguishability and omniprediction for link formation.  Our technical starting point is a novel, randomized variant of Vovk's $K29^*$ online prediction algorithm~\cite{vovk2007non}. Our algorithm, which we call the \algname{}, achieves kernel outcome indistiguishability, that is, indistinguishability with respect to any infinite collection of real-valued functions in a reproducing kernel Hilbert space.\footnote{Informally for now, RKHSs are potentially very rich classes of non-parametric functions.}
To our knowledge, our work is the first in the multigroup fairness literature to use kernel methods (see, however, \cite{perez2017fair, tan2020learning, perez2023fair} for other applications to fairness).
Building on this new algorithm, we design efficient \kf s that capture rich information necessary for the fair link prediction criteria mentioned above.   

In particular, using the $\algname$, we obtain outcome indistinguishability with respect to distinguishers that 
take into account socially meaningful collections of edges (for example, edges between pairs of demographic groups), graph topology (\textit{e.g.}, number of mutual connections, isomorphism class of the local neighborhoods), as well as any bounded function (including those computable by graph neural networks).

Link predictions may be used for a variety
of downstream decisions; for example loss functions may be used to measure predictive accuracy or desirability of outcomes.  Moreover the precise loss function may not be known at prediction time. In particular, a predictive system may need to be fixed in advance of A/B testing to determine which of several candidate loss functions encourages desirable behavior.  We show how to address these problems by using the \algname{} to achieve computationally efficient low-regret omniprediction with respect to potentially infinite and continuous-valued comparison classes; it is precisely the connection to \kf s that makes this possible. Our algorithms do not depend on access to a regression oracle (cf.,~\cite{garg2024oracle}).

Finally, we extend our results to quantile regression and high-dimensional regression, which will be of general interest in forecasting, and we examine the relationship of {\em offline} kernel methods with previous results in batch outcome indistinguishability.  In the offline setting, \cite{hkrr,dwork2021outcome} showed equivalence of weak agnostic learning and outcome indistinguishability.  When the comparator class is contained in a reproducing kernel Hilbert space whose corresponding kernel function is efficiently computable, this learning problem has an efficient solution. This yields efficient methods for finding outcome-indistinguishable predictors in both the batch and online cases, even in settings where the distinguisher class is infinite.

\paragraph{Relation to the graph prediction literature.}
A great deal of research addresses link formation, typically in the batch setting, in which a subset of edges are presented as training data; see, for example, the book~\cite{hamilton2020graph}. 
A few papers have also considered prediction on \textit{evolving} graphs \cite{kumar2019, trivedi2019, ma2020, rossi2020, yu2023}.
Graph machine learning is a very active area of research with many research directions left unexplored \cite{morris2024}.
These approaches tend to focus on specific representations of graphs, which may be tailored to the semantics of nodes and edges.  Our approach differs in two main respects: first, we consider the online case in which the graph is evolving over time; at any given time step the algorithm may be given a pair of vertices $(i,j)$ and the goal is to predict whether an edge will form between them at the given time. Secondly, inspired by the observation that online calibrated forecasting can be achieved by {\em backcasting}~\cite{foster2021forecast}, we take a more formal approach, {\em ignoring} the semantics of the nodes and edges.  The semantics are introduced via the class of distinguishers.

\paragraph{Comparison with previous work in algorithmic fairness.}
We postpone detailed comparison to previous work in multicalibration, outcome indistinguishability and omniprediction to Sections~\ref{sec: OI} and~\ref{sec: omni} respectively.  
Connections between outcome-indistinguishable simple edge prediction and forms of graph regularity were investigated in~\cite{dwork2023pseudorandomness}.  
Our algorithm is the first online $\cO(\sqrt{T})$ omnipredictor that can compete with infinite or real-valued comparison classes $\cH$.  
Our results are non-asymptotic (\textit{i.e.}, hold for all $T$), and the constants hidden in the big-$\cO$ are usually small.
Unlike previous online 
algorithms, we require neither a regression oracle for omniprediction~\cite{garg2024oracle} nor explicit enumeration over all distinguishers for outcome indistinguishability \cite{gupta2022online}.  Unlike our work, \cite{garg2024oracle} offers the stronger guarantee of {\em swap} omniprediction (see Section~\ref{sec: omni}).  Finally, our bound for outcome indistinguishability error may deteriorate by a factor of $m$ for RKHSs that contain $m$ \textit{arbitrary} Boolean-valued functions, such as (pairs of) arbitrary demographic group memberships; for the other real-valued function classes mentioned above and in Section~\ref{sec: link prediction}, we pay no such price.

\paragraph{Paper organization.} The remainder of this paper is organized as follows.  Section~\ref{sec: link prediction} gives a full formulation of the \fair{} link prediction problem.  Section~\ref{sec: OI} introduces our main algorithm and results for online outcome indistinguishability.
Our results on omniprediction appear in Section~\ref{sec: omni}. Additional miscellaneous results are derived in Section~\ref{sec: other kernel functions}.

\subsection{Overview of technical results.}

Our work has two main sets of technical results. The first set concerns online outcome indistinguishability and the second set concerns efficient, $\sqrt{T}$, online omniprediction. In both cases, we focus on developing machinery for online prediction that we later specialize to link prediction. As a byproduct of these investigations, we also arrived at new results for online quantile and vector regression, as well as kernel batch algorithms and notions of distance to multicalibration that are of independent interest. 

\paragraph{Online outcome indistinguishability~\cite{dwork2021outcome}.} The technical starting point of our paper is a result by Vovk \cite{vovk2007non} which guarantees online outcome indistinguishability with respect to specific classes of functions $\cF$ that form an RKHS, or reproducing kernel Hilbert space. We review both of these concepts below. 

An algorithm guarantees online outcome indistinguishability with respect to a class $\cF \subseteq \{\cX \times [0,1] \rightarrow \R\}$ of {\em distinguishers}
if it is guaranteed to generate a sequence of predictions $p_t$ satisfying the following guarantee:
\begin{align*}
  \left| \sum_{t=1}^T \E  (y_t- p_t) f(x_t, p_t) \right|  \leq \littleo(T) \text{ for all } f\in \cF.
\end{align*}
Here, $(x_t, y_t)$ are an arbitrary sequence of (feature, outcome) pairs in $\in \cX \times \{0,1\}$, which can be chosen adversarially and adaptively, and the expectation is taken over the internal randomness of the algorithm. Notably, $y_t$ can be chosen with knowledge of the entire history $\{(x_{t'},p_{t'},y_{t'})\}_{t'=1}^{t-1}$, and may depend on $x_t$ and in some cases $p_t$ (see \Cref{sec: link prediction} for details).

In other words, a sequence of predictions is outcome-indistinguishable if no distinguisher in $\cF$ can reliably (with constant advantage) tell the difference between outcomes drawn according to the learner's predictions  $p_t$, and the true outcomes $y_t$ (see \Cref{subsubsec:oi_def} for further discussion). 

\paragraph{RKHSs, the $\kalg$ algorithm, and the \algname{}.} A reproducing kernel Hilbert space (RKHS) $\cF \subset \{\cX \rightarrow \R\}$ is a class of functions that can be defined over arbitrary domains (\textit{e.g.}, graphs).
Functions in an RKHS have the property that they can be \textit{implicitly} represented by a kernel function $k:\cX \times \cX \rightarrow \R$. 
Indeed, each kernel $k$ represents a unique RKHS $\cF_k$.\footnote{Common classes of functions like linear functions or polynomials are an RKHS, but we will see many others.}

The kernel representation enables one to design computationally efficient learning algorithms with guarantees that hold over all functions in the RKHS $\cF$, without necessarily having to explicitly solve a search problem over $f \in \cF$ (\textit{e.g.}, weak agnostic learning). The efficiency of learning over $\cF$ reduces to efficient evaluation of the kernel $k$. 
In addition to their computational benefits, RKHSs can be very expressive. By carefully designing the kernel function $k$, one can guarantee that the corresponding RKHS of functions $\cF_k$ contains specific classes of distinguishers of interest.\footnote{See \Cref{sec: OI} for a overview of RKHS and formal definition of norms in these spaces. Briefly, an RKHS is a Hilbert space and hence has an inner product $\langle \cdot, \cdot \rangle_{\cF} \rightarrow \R$. This inner product defines a norm $\fnorm{f}^2 = \langle f, f \rangle_{\cF}$ which serves a complexity measure for functions $f$ in the space $\cF$.}

Building on the work of Vovk \cite{vovk2007non} and insights from \cite{foster2021forecast}, we introduce the Any Kernel algorithm, which guarantees online indistinguishability with respect to any RKHS $\cF$. The algorithm is hyperparameter free, and runs in polynomial time whenever the kernel $k$ is bounded and efficiently computable. We summarize its main guarantees below.
\begin{theorem}[Informal]
Let $k$ be any kernel function and let $\cF$ be its associated RKHS. Then, the \algname{} generates a sequence of predictions $p_t\sim \Delta_t$ such that for any $f \in \cF$:
\begin{align*}
  \left|  \sum_{t=1}^T \E_{p_t}(y_t - p_t) f(x_t, p_t) \right|  \leq \fnorm{f} \sqrt{1 + \E_{p_t} \sum_{t=1}^T p_t(1-p_t) k((x_t, p_t),(x_t,p_t))} \leq B \cdot \fnorm{f} \sqrt{T}
\end{align*}
The second inequality holds if $k((x_t, p_t),(x_t,p_t)) \leq B^2$ for all $t$.
Here, $\fnorm{f}$ is the norm of $f$ in $\cF$ and the expectations are taken over the distributions $\Delta_t$ produced by the algorithm.
\end{theorem}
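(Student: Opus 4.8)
The plan is to adapt Vovk's defensive-forecasting (K29$^*$) argument: maintain a single ``regret vector'' $S_T \in \cF$, choose each prediction so as to control the growth of $\fnorm{S_T}^2$, and then read off the bound using the reproducing property and Cauchy--Schwarz. Concretely, write $z_t=(x_t,p_t)$ and put $S_T=\sum_{t=1}^T (y_t-p_t)\,k(z_t,\cdot)\in\cF$. Because $f(z)=\langle f,k(z,\cdot)\rangle_\cF$ for every $f\in\cF$, linearity of the inner product gives $\sum_t (y_t-p_t)f(x_t,p_t)=\langle f,S_T\rangle_\cF$; taking expectations over the algorithm's coins and using Cauchy--Schwarz in $\cF$ together with Jensen ($\fnorm{\E S_T}\le\sqrt{\E\,\fnorm{S_T}^2}$) yields $\bigl|\sum_t\E(y_t-p_t)f(x_t,p_t)\bigr|=|\langle f,\E S_T\rangle_\cF|\le\fnorm{f}\sqrt{\E\,\fnorm{S_T}^2}$. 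So it suffices to show $\E\,\fnorm{S_T}^2\le 1+\E\sum_t p_t(1-p_t)\,k(z_t,z_t)$, after which the bound $B\fnorm{f}\sqrt{T}$ follows from $p_t(1-p_t)k(z_t,z_t)\le B^2$.

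\emph{Telescoping with the right potential.} Expanding and using the reproducing property once more, $\fnorm{S_t}^2=\fnorm{S_{t-1}}^2+2(y_t-p_t)\,S_{t-1}(x_t,p_t)+(y_t-p_t)^2 k(z_t,z_t)$. The quantity that makes this telescope favorably is the scalar function $g_t(p):=2\,S_{t-1}(x_t,p)+(1-2p)\,k\big((x_t,p),(x_t,p)\big)$ on $[0,1]$: having observed $x_t$, the algorithm picks $p_t$ to be a zero of $g_t$ in $(0,1)$ if one exists, and otherwise $p_t=1$ if $g_t>0$ on $[0,1]$, or $p_t=0$ if $g_t<0$ on $[0,1]$. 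A one-line computation then shows that, for either value $y_t\in\{0,1\}$, the per-round increment $\fnorm{S_t}^2-\fnorm{S_{t-1}}^2$ equals $p_t(1-p_t)k(z_t,z_t)$ when $g_t(p_t)=0$, and is $\le p_t(1-p_t)k(z_t,z_t)$ (indeed $\le 0$) in the two boundary cases. This is precisely the twist that improves on the naive choice $S_{t-1}(x_t,p_t)=0$ --- which would only give $\sum_t(y_t-p_t)^2 k(z_t,z_t)\le\sum_t k(z_t,z_t)$ --- since one leaves just enough of the cross term in play to recombine with $(y_t-p_t)^2$. Summing gives $\fnorm{S_T}^2\le\sum_t p_t(1-p_t)k(z_t,z_t)$ pointwise, hence in expectation.

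\emph{The main obstacle.} Step~2 needs $g_t$ to attain the value $0$ in $(0,1)$ or to be single-signed on $[0,1]$, which the intermediate value theorem guarantees whenever $g_t$ is continuous --- e.g.\ whenever the prediction-augmented kernel $k((x,p),(x',p'))$ is continuous in $p,p'$ (in particular whenever it ignores the prediction coordinate), in which case a deterministic $\Delta_t$ suffices and there is no stray ``$1+$''. When the kernel is discontinuous in the prediction coordinate, $g_t$ can jump across $0$ without a zero, and this is where genuine randomization --- and the harmless additive $1$ --- enter: we let $\Delta_t$ be supported on two points $a_t<b_t$ that bracket a sign change of $g_t$, with the mixing weight chosen so the expected cross-term contribution vanishes for one outcome value; a short estimate bounds the residual for the other outcome value by $O(b_t-a_t)$ times a kernel bound, so shrinking $b_t-a_t$ makes the round-$t$ slack at most $2^{-t}$, and $\sum_t 2^{-t}\le 1$. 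The delicate points are this bracketing estimate together with its endpoint cases (a sign change at $0$ or $1$) and the precise adversary model: the deterministic rule of Step~2 already controls the increment pointwise in $y_t$, which covers the case where $y_t$ may depend on the realized $p_t$, whereas the bracketing argument is tailored to the weaker requirement --- one inequality per fixed $y_t\in\{0,1\}$ --- that suffices when $y_t$ is chosen before $p_t$ is sampled.
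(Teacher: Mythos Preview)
Your proposal is correct and follows essentially the same approach as the paper: your $g_t$ is exactly $2S_t$ in the paper's notation, the telescoping via the identity $(y-p)^2=p(1-p)+(1-2p)(y-p)$ is identical, and the randomized step is the Foster--Hart forecast-hedging idea with the slack summed to at most $1$. One small simplification relative to your description of the randomization: rather than making the cross-term vanish for \emph{one} value of $y_t$ and bounding it for the other, the paper picks $\tau=|g_t(b_t)|/(|g_t(a_t)|+|g_t(b_t)|)$ so that $\tau g_t(a_t)+(1-\tau)g_t(b_t)=0$; the residual $\tau g_t(a_t)(b_t-a_t)$ is then independent of $y_t$, which sidesteps the endpoint issue you flagged.
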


The proof of the theorem above draws heavily on the ideas from the literature on game-theoretic statistics \cite{shafer2005probability}, defensive forecasting \cite{vovk2005defensive}, and forecast hedging \cite{foster2021forecast}.
The \algname{} extends Vovk's $\kalg$ algorithm~\cite{vovk2007non} so as to work for \emph{any} kernel $k$ and correspondingly any RKHS $\cF$. More specifically, $\kalg$ requires the kernel $k$ to be continuous in the prediction $p$ and hence can only guarantee indistinguishability with respect to functions $f:\cX \times [0,1] \rightarrow \R$ that are continuous in $p$.\footnote{In our analysis, it helps to distinguish between the set of features $\cX$ and the predictions $p \in [0,1]$.} Removing this restriction enables us to consider binary distinguishers or tests that are not continuous in $p$. These were the central focus of the initial work on outcome indistinguishability \cite{dwork2021outcome} and multicalibration \cite{hkrr}.

To operationalize this result and guarantee indistinguishability with respect to a pre-specified collection of functions $\cF'$, there are two main sets of technical challenges. First, we need to understand how the choice of kernel $k$ relates to its corresponding RKHS $\cF_k$ so that we can guarantee that $\cF' \subseteq \cF_k$. Second, we need to pay special attention to ensure that the kernel can be computed efficiently, has bounded values $k((x,p),(x,p)) \leq \cO(1)$, and that the functions $f' \in \cF'$ have bounded norm in the RKHS $\cF_k$ ($\| f'\|_{\cF_k}$ is bounded). 

\begin{figure}[t!]
\begin{boxedminipage}{\textwidth}
\begin{center}
\vspace{2pt}
{\centering{\underline{The \algname}}}
\end{center}
\textbf{Input:} A kernel $k: (\cX \times [0,1])^2 \rightarrow \R$ \\

For $t=1, 2, \dots$
\begin{enumerate}
    \item Given history $\{(x_i, p_i, y_i)\}_{i=1}^{t-1}$ and current features $x_t$ define $S_t \; : \; [0, 1] \to \R$ as $$S_t(p) \defeq \sum_{i=1}^{t-1} k((x_t, p), (x_i, p_i))(y_i -p_i) + \frac{1}{2} k((x_t, p), (x_t, p)) (1-2p).$$
    \item If $\sign \,S_t(0) = \sign \,S_t(1) \neq 0$, \textbf{return} $\Delta_t = p_t = \frac{1}{2}(1 + \sign\,S_t(0))$.
    \item Define $\eps_t  = 1 / (10 t^3 B_t)$ for $B_t= \max_{t'\leq t} k((x_{t'},p_{t'}), (x_{t'},p_{t'}))$. If $k$ is continuous in $p$: 
    \begin{itemize}[$\bullet$]  
            \item Run binary search to find $p_t \in [0, 1]$ such that $|S_t(p_t)| \leq \eps_t$; \textbf{return} $\Delta_t = p_t$ w.p. 1 
    \end{itemize}
    \item Else, if $k$ is not continuous in $p$: 
    \begin{itemize}[$\bullet$]
        \item Run binary search to find $q, q' \in [0, 1]$ with $0 \leq |q'_t - q_t| \leq  \eps_t$ and $\sign\,q_t = \sign\,S_t(0)$ and $\sign\,q_t' = \sign\,S_t(1)$.
        \item \textbf{return} \[\Delta_t = \begin{cases}
            q_t &\text{with probability } \tau\\
            q_t' &\text{with probability } 1 - \tau.
        \end{cases} \quad \text{ for } \tau = \frac{|S_t(q_t')|}{|S_t(q_t)|+|S_t(q_t')|} \in [0, 1]\]
    \end{itemize}    
\end{enumerate}
\vspace{2pt}
\end{boxedminipage}
\caption{Pseudocode for the \algname{}. Steps 1-3 are as in \cite{vovk2007non}. Step 4 is inspired by \cite{foster2021forecast}. In each iteration, solve the binary search problems in steps 3 or 4 using at most $\log(1/\eps_t)$ oracle evaluations of $S_t$. Each evaluation of $S_t$ requires $t$ evaluations of the kernel $k$, hence the runtime at round $t$ is $\widetilde{\cO}(t \cdot \mathsf{time}_k )$. If $k$ is forecast-continuous $\Delta_t$ is just a point mass at $p_t$. Otherwise, $\Delta_t$ is near deterministic: it is supported on just 2 points $q_t,q_t'$ which are very close together, $|q - q'|\leq \cO(t^{-3})$. See \cref{thm:indistinguishability_main} for formal guarantees.}
\label{fig:online_protocol_randomized}
\end{figure}

Our results on online outcome indistinguishability directly address these core issues. Building on the rich literature on RKHS, we specialize our results to the link prediction problem and design efficient, bounded kernels whose RKHS contain interesting distinguishers $f$ on graphs. 
These in particular include powerful predictors such as deep (graph) neural networks. 

\begin{proposition}[Informal]
Consider the link prediction problem where $x_t$ consists of a pairs of individuals $(i_t,j_t)$ and a graph $G_t$. For each of the following classes of functions $\cF'$, there exists a computationally efficient and bounded kernel whose corresponding RKHS $\cF_k$ contains $\cF'$:

\begin{enumerate}
    \item All pairs of demographic groups. $\cF'$ consists of distinguishers which examine whether the pair $(i,j)$ belong to any pair of demographic groups from a finite list. 
    \item Number of connections and isomorphism classes. $\cF'$ consists of tests that examine the number of mutual connections between the pair $(i_t,j_t)$, or the isomorphism class of their local neighborhoods.
    \item An arbitrary pre-specified set of bounded functions. $\cF'$ is a finite benchmark class of deep learning based link predictors (\textit{e.g.}, graph neural networks), or any other bounded function.
\end{enumerate}

Furthermore, the norms of $f' \in \cF'$ in the corresponding RKHS $\cF_k$ are all $\cO(1)$ in each setting. Therefore, the \algname{} instantiated with these kernels guarantees online indistinguishability with respect to any of the $\cF'$ above with indistinguishability error bounded by $\cO(\sqrt{T})$.\footnote{The functions $f'$ in these constructions can additionally depend on the prediction $p$. For instance, by letting $f'$ examine whether predictions belong to a particular bin $[a,b] \subseteq [0,1]$.} 
\end{proposition}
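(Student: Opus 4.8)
The plan is to reduce all three cases to one elementary construction and then exhibit, for each class $\cF'$, a small family of bounded, efficiently computable ``feature functions'' whose span contains $\cF'$. The construction: given features $\phi_1,\dots,\phi_m\colon \cX\times[0,1]\to\R$ with $|\phi_\ell|\le B_0$ everywhere, set $k((x,p),(x',p'))=\sum_{\ell=1}^m\phi_\ell(x,p)\,\phi_\ell(x',p')$, i.e.\ $k=\langle\Phi,\Phi\rangle$ with $\Phi=(\phi_1,\dots,\phi_m)$. Standard facts about the feature-map picture of an RKHS (to be recalled in the preliminaries of Section~\ref{sec: OI}) give everything at once: (i) $\cF_k=\mathrm{span}\{\phi_1,\dots,\phi_m\}$, so any $\cF'$ inside this span is handled; (ii) $\fnorm{\sum_\ell c_\ell\phi_\ell}\le\|c\|_2$, in particular $\fnorm{\phi_\ell}\le 1$; (iii) $k((x,p),(x,p))=\sum_\ell\phi_\ell(x,p)^2\le mB_0^2$, and $\le B_0^2$ when the $\phi_\ell$ have pairwise disjoint supports (one-hot encodings); (iv) one evaluation of $k$ costs $m$ evaluations of the $\phi_\ell$ plus $O(m)$ arithmetic. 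Feeding $\fnorm{f'}=O(1)$ and $k((x,p),(x,p))=O(1)$ into the Any Kernel bound stated above yields indistinguishability error $O(\sqrt T)$, so the whole content is the choice of features.

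For (1), the relevant distinguishers test whether $i\in C_a$ and $j\in C_b$ for a pair $(a,b)$ from a finite list of $g$ groups; take $\phi_{(a,b)}(x)=\mathbbm{1}[i\in C_a]\,\mathbbm{1}[j\in C_b]$, Boolean and computed by two membership queries. If the groups partition the vertex set, each node pair activates exactly one feature and $k((x,p),(x,p))=1$ (the ``no price'' regime); for overlapping groups the diagonal is at most the maximum number of group-pairs a single node pair can lie in, which is where the factor-$m$ deterioration noted earlier enters, with $m=g^2$ treated as a constant in the $O(\sqrt T)$ bound. For (2), let $c(x)\in\{0,1,\dots,N\}$ be the number of common neighbors of $i,j$ in $G_t$ truncated at a constant $N$, and $\iota(x)$ the isomorphism type of the radius-$r$ neighborhood of $(i,j)$ in $G_t$; use one-hot features $\mathbbm{1}[c(x)=\ell]$, resp.\ $\mathbbm{1}[\iota(x)=\ell]$, so $k$ collapses to $\mathbbm{1}[c(x)=c(y)]$, resp.\ $\mathbbm{1}[\iota(x)=\iota(y)]$, with diagonal $1$; computing the first is a neighbor-list intersection, and the second an isomorphism test on bounded-size graphs. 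A single exact-count or exact-type indicator then has norm $1$, and since $N$ and the number of types are constants, every bounded function of the count or the type has norm $O(1)$. For (3), given clamped outputs $f'_1,\dots,f'_m\colon\cX\times[0,1]\to[-1,1]$ of a finite benchmark class (e.g.\ trained graph neural networks), set $\phi_\ell=f'_\ell$ directly: $\cF_k\supseteq\cF'$ with each $\fnorm{f'_\ell}\le 1$, diagonal $\le m$, kernel cost $m$ forward passes. To accommodate distinguishers that also depend on $p$ (e.g.\ restricted to a bin $[a,b]$), tensor any of the above with a constant-size family of bin-indicator features in the $p$-coordinate---equivalently multiply kernels, using $\|fg\|_{k_1k_2}\le\|f\|_{k_1}\|g\|_{k_2}$ and $(k_1k_2)(z,z)=k_1(z,z)k_2(z,z)$---and to get all three classes simultaneously, add the three kernels, noting that passing to a sum kernel only decreases RKHS norms while diagonals add. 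Efficiency of the overall kernel then follows from efficiency of membership queries, neighbor-list intersections, and forward passes.

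I expect the main obstacle to be the isomorphism-class kernel in (2): making it both well-defined (finitely many types) and efficiently computable forces genuine choices---either cap the neighborhood size so isomorphism testing is trivial, or restrict to bounded-degree graphs and invoke the corresponding structural/algorithmic results (Luks' algorithm)---and one must be explicit that truncating both the common-neighbor count and the neighborhood radius is precisely what keeps the feature set finite and $k((x,p),(x,p))=O(1)$. A secondary, more routine obstacle is the norm bookkeeping behind the $O(1)$ claim: it holds only because the distinguishers of interest are placed at (scalings of) individual coordinate features rather than spread across many, and a different encoding trades a smaller diagonal against larger norms for ``threshold'' as opposed to ``exact-value'' distinguishers (e.g.\ the cumulative features $\mathbbm{1}[c(x)\ge\ell]$ give $k(x,y)=\min(c(x),c(y))$, making thresholds norm-$1$ at the cost of an $O(N)$ diagonal); the statement should therefore be read as asserting that for each listed $\cF'$ there is an encoding under which that $\cF'$ has $O(1)$ norm. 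Everything else---balancing the tensoring and the kernel-sum so the diagonal stays $T$-independent, and verifying polynomial runtime---is direct.
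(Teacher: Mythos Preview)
Your proposal is correct and takes essentially the same approach as the paper: the unified feature-map construction $k=\sum_\ell\phi_\ell\phi_\ell$ is exactly the paper's \cref{thm:finite-membership-rkhs} (stated as \cref{cor:finiteset}), and your instantiations for the three cases match Propositions~\ref{prop:all_pairs}, \ref{prop:embeddedness}/\ref{prop:isomorphism}, and the finite-benchmark application of \cref{cor:finiteset} respectively, including the bounded-degree assumption for isomorphism testing and the tensoring with bin-indicators for $p$-dependence. One small simplification: your truncation of the common-neighbor count at $N$ is unnecessary---because the one-hot features $\mathbbm{1}[c(x)=\ell]$ have pairwise disjoint supports, $k((x,p),(x,p))=1$ even for an infinite feature family indexed by $\ell\in\N$, which is how the paper handles it.
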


While developed for the link prediction problem, the guarantees of the \algname{} hold for general domains and can also be used to generate indistiguishability with respect to other interesting classes of functions such as low degree polynomials over the Boolean hypercube (see \Cref{prop:low-deg-booleans}). Furthermore, by leveraging composition properties of kernels, we can also guarantee predictions which are indistiguishable with respect to sums or products of tests in different RKHSs. This in particular implies indistinguishability with respect to practically important predictors like random forests or gradient boosted decision trees.

\paragraph{Online omniprediction results.} 
While the first set of results focused on algorithms that guaranteed valid \emph{predictions} $p_t$, our second set of results pertain to the design of algorithms that lead to useful \emph{decisions} $\yhat_t$.\footnote{Note that $\yhat_t$ need not be of the same type as $y_t$; for example, the first might be any value in $[0,1]$ while the second might be Boolean.}
Assuming that the learner's utility over data $(x_t, \yhat_t, y_t)$ is captured by a loss function $\ell$, we aim to achieve lower average loss than functions in a benchmark class $\cH$:\footnote{Unlike previous work on omniprediction, we allow losses to depend on $x$. See \Cref{util} for detailed discussion of this point.}
\begin{align}
\label{eq:loss_min_informal}
    \frac{1}{T} \sum_{t=1}^T \ell(x_t, \yhat_t, y_t) \leq \inf_{h \in \cH} \frac{1}{T} \sum_{t=1}^T \ell(x_t, h(x_t), y_t) +  o(1).
\end{align}

In the link prediction context, predictions have the added advantage that they are likely \emph{performative} \cite{perdomo2020performative}. By informing downstream decisions, such as the link recommendations made to a user, predictions don't just forecast the future: they actively shape the likelihood of edge formation. This means that platforms are likely to experiment with the choice of loss function $\ell$. They may choose losses favoring predictions to match outcomes, \textit{e.g.}, squared loss $(\yhat-y)^2$, or ``loss'' functions that favor specific outcomes over others, like link formation $1-y$.

Given the diversity of plausible goals, we design online algorithms that generate predictions which can be post-processed to produce good decisions for a wide variety of losses. Importantly, each individual loss may correspond to a different high level objective (forecasting vs. steering). In particular, we generate algorithms which satisfy the following omniprediction definition.

Let $\cH$ be a benchmark class of functions and $\cL$ be a class of losses. An algorithm $\cA$ is an $(\cL, \cH, \Reg(T))$-online omnipredictor if it generates predictions $p_t$ such that for all losses $\ell \in \cL$, 
    \begin{align}
      \sum_{t=1}^T \ell(x_t, \pi_\ell(x_t, p_t), y_t) \leq \inf_{h \in \cH} \sum_{t=1}^T\ell(x_t, h(x_t), y_t) + \Reg(T). \label{eq:defomnipredictor}
    \end{align}
Here, $\pi_\ell(x, p) \in \argmin_{\hat{y}} p \cdot \ell(x, \hat{y}, 1) + (1-p) \cdot \ell(x, \hat{y} , 0)$ (the $\argmin$ may not be unique) and $\Reg: \cN \rightarrow \R_{\geq0}$ is $o(T)$. We refer to $\Reg$ as the regret bound for the algorithm $\cA$. Since it is sublinear in $T$, if we divide through by $T$, an online omnipredictor is guaranteed to achieve \Cref{eq:loss_min_informal} not just for a specific loss, but for any loss $\ell \in \cL$.

Conceptually, our technical approach for online omniprediction is most closely related to the work by \cite{loss_oi} which illustrates a connection between outcome indistinguishability and omniprediction in the batch setting. They show how given a set of losses $\cL$ and a function class $\cH$, one can construct a class of distinguishers $\cF$ (that depends on $\cL$ and $\cH$) such that any predictor that is indistinguishable with respect to $\cF$ is also a $(\cL, \cH)$-omnipredictor. Therefore, omniprediction reduces to outcome indistinguishability. 

We prove a similar reduction in the online setting. Moreover, we illustrate how one can leverage the \algname{} and RKHS machinery we developed previously in order to provably achieve the necessary indistinguishability guarantees in a computationally efficient manner. Taken together, we achieve unconditionally efficient (vanilla) online omnipredictors with $\sqrt{T}$ regret for common losses $\cL$ and rich (infinite, real-valued) comparator classes $\cH$. We now give a brief overview of the main ingredients that go into the proof of this result.

First, as in \cite{loss_oi} and \cite{kim2023making}, we show that algorithms which satisfy certain decision and hypothesis outcome indistinguishability conditions (OI) are also omnipredictors. Given a comparator class $\cH$ and set of losses $\cL$, we say that an algorithm $\cA$ satisfies {\em online hypothesis OI} if it generates a sequence of predictions that are outcome indistinguishable with respect to the following class of functions,
\begin{align}
\cF_{HOI}(\cL, \cH)  = \{\partial \ell(x, h(x_t)): \ell \in \cL, h \in \cH \} \text { where } \partial \ell(x, \yhat) = \ell(x,\yhat, 1) - \ell(x, \yhat, 0).
\end{align}
Similarly, we say that a online algorithm satisfies online decision OI if it is outcome indistinguishable with respect to the following class of tests:
\begin{align}
    \cF_{DOI}(\cL) = \{\partial \ell(x, \pi_\ell(x)): \ell \in \cL\} \text { where } \pi_\ell(x, p) = \argmin_{\hat{y}} \E_{\yt \sim \Ber(p)}\ell(x, \hat{y}, \yt). 
\end{align}
Using these definitions, we prove the following lemma.

\begin{lemma}[Informal]
Let $\cL$ be a class of loss functions and $\cH$ be a comparator class. If $\cA$ is online outcome indistinguishable with respect to the union of $\cF_{DOI}(\cL)$ and $\cF_{HOI}(\cL, \cH)$ with indistinguishability error bounded by $\Reg$, then $\cA$ is an online omnipredictor with regret rate $\cO(\Reg)$.
\end{lemma}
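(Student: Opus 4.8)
The plan is to carry out the online analogue of the loss-OI $\Rightarrow$ omniprediction reduction of \cite{loss_oi}. Its engine is the elementary pointwise identity: for any prediction $p\in[0,1]$, any decision $\yhat$, and any outcome $y\in\{0,1\}$,
\[
 \ell(x,\yhat,y) \;=\; \bigl[\, p\,\ell(x,\yhat,1)+(1-p)\,\ell(x,\yhat,0) \,\bigr] \;+\; (y-p)\,\partial\ell(x,\yhat),
\]
which follows by expanding the right-hand side and using $y\in\{0,1\}$; the bracketed term is exactly the ``predicted loss'' $\E_{\yt\sim\Ber(p)}\ell(x,\yhat,\yt)$. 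I would fix an arbitrary loss $\ell\in\cL$ and comparator $h\in\cH$ and apply this identity at every round $t$ twice: once to the omnipredictor's decision $\yhat=\pi_\ell(x_t,p_t)$ and once to the comparator's decision $\yhat=h(x_t)$. Since $x_t,y_t,p_t$ and $\pi_\ell(x_t,p_t)$ are all random through the history, I keep expectations over the algorithm's internal randomness on every line, matching the form of the OI guarantee.

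Summing the decomposition of $\ell(x_t,\pi_\ell(x_t,p_t),y_t)$ over $t$: its predicted-loss term equals $\min_{\yhat}\{p_t\,\ell(x_t,\yhat,1)+(1-p_t)\,\ell(x_t,\yhat,0)\}$ by the defining property of $\pi_\ell$, hence is at most the same quantity evaluated at $\yhat=h(x_t)$ (pointwise, so also after taking expectations); its correction term sums to $\sum_t\E[(y_t-p_t)\,f_\ell(x_t,p_t)]$ for the distinguisher $f_\ell(x,p)=\partial\ell(x,\pi_\ell(x,p))\in\cF_{DOI}(\cL)$, so online decision OI bounds its absolute value by $\Reg(T)$. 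Re-expanding the predicted loss of $h$ via the same identity in reverse gives $\sum_t\E[p_t\,\ell(x_t,h(x_t),1)+(1-p_t)\,\ell(x_t,h(x_t),0)]=\sum_t\E[\ell(x_t,h(x_t),y_t)]-\sum_t\E[(y_t-p_t)\,g_{\ell,h}(x_t)]$ with $g_{\ell,h}(x)=\partial\ell(x,h(x))\in\cF_{HOI}(\cL,\cH)$, and online hypothesis OI bounds $|\sum_t\E[(y_t-p_t)\,g_{\ell,h}(x_t)]|$ by $\Reg(T)$. Chaining these, $\sum_t\E[\ell(x_t,\pi_\ell(x_t,p_t),y_t)]\le\sum_t\E[\ell(x_t,h(x_t),y_t)]+2\,\Reg(T)$ for every $\ell\in\cL$ and $h\in\cH$, and taking the infimum over $h\in\cH$ gives exactly the omniprediction inequality \Cref{eq:defomnipredictor} with $\Reg(T)$ replaced by $2\,\Reg(T)=\cO(\Reg(T))$.

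The individual steps are short, so the real work is bookkeeping, and the one genuinely load-bearing point is the following. The decision distinguisher $f_\ell(x,p)=\partial\ell(x,\pi_\ell(x,p))$ really does depend on the prediction $p$, and it is in general \emph{discontinuous} in $p$ (the $\argmin$ jumps as $p$ crosses a tie), while the adversary is moreover allowed to pick $y_t$ as a function of $p_t$; so I must invoke the OI guarantee in its full strength --- distinguishers that depend on $p_t$, against an adaptive adversary that may condition on $p_t$ --- which is precisely the regime the \algname{} handles and where it strictly extends the forecast-continuous $\kalg$. Two minor points round out the argument: (i) $\pi_\ell(x,p)$ is an $\argmin$ that may be non-unique, so one fixes a single tie-breaking rule and uses it identically both for the algorithm's decision and in the definition of $\cF_{DOI}(\cL)$, making $f_\ell$ a well-defined member of that class; and (ii) I carry the expectation over the algorithm's randomness throughout --- for the \algname{}, where each $\Delta_t$ is near-deterministic, this in-expectation bound is essentially a pointwise one, but the expectation form is the clean statement. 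Everything else is a one-line (in)equality.
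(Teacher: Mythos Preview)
Your proposal is correct and follows essentially the same approach as the paper's proof of \Cref{lem:onlinelossoi}: the same pointwise identity $\ell(x,\yhat,y)=\E_{\yt\sim\Ber(p)}\ell(x,\yhat,\yt)+(y-p)\,\partial\ell(x,\yhat)$, the same three-step chain (Decision OI, the $\argmin$ property of $\pi_\ell$, Hypothesis OI), and the same final bound $\Rarg{\DOI}(T)+\Rarg{\HOI}(T)=2\Reg(T)$. Your added remarks on the discontinuity of $f_\ell$ in $p$ and the tie-breaking in $\pi_\ell$ are accurate and relevant but not part of the paper's proof of this lemma itself.
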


While it is interesting that this relationship, first identified in \cite{loss_oi}, carries over to the online setting, it is not quite useful without also knowing that the necessary indistinguishability requirements are also efficiently achievable. The main technical contributions of our work towards establishing online omniprediction is the design of efficiently computable kernel functions whose corresponding RKHSs contain the requisite distinguishers for hypothesis and decision OI. 

We defer a detailed presentation of these constructions to \cref{sec: omni}. 
However, the main technical ideas behind these results rely heavily on the theory behind reproducing kernel Hilbert space and the fact that it is relatively simple to compose kernel functions together. This ease of composition also allows one to characterize their corresponding (composed) function spaces. Being able to reason about composition is fundamental to these constructions since decision and hypothesis OI are both defined in terms of composition of functions (\textit{i.e.}, $\partial \ell(x, \pi_\ell(p)),~ \partial \ell(x, h(x))$). 
A technical challenge of our work is showing how certain RKHS remain closed under post-processing. In particular, as a stepping stone to proving the necessary decision OI guarantees, we identify natural conditions on RKHSs $\cF$ which guarantee that if $\ell(x,p,y)$ is in $\cF$ then so is $\ell(x, \pi_\ell(p),y)$.

Our results can be used to guarantee $\sqrt{T}$ online omniprediction with respect to various different kinds of comparator classes $\cH$ and losses $\cL$. However, in the following theorem we instantiate this general recipe to provide an end to end guarantee for classes $\cH$ and $\cL$ that are commonly considered in the literature.  We refer the reader to Section~\ref{sec: omni} for further examples.

\begin{theorem}[Informal]
There exist an efficient kernel $k$, such that the \algname{} instantiated with kernel $k$ is a ($\cH, \cL, \cO(\sqrt{T})$)-online  omnipredictor for the following settings, 
\begin{itemize}[$\bullet$]
    \item The comparator class $\cH$ contains all low-depth regression trees taking values in $[-1,1]$ and all functions $h'$ in a pre-specified finite set $\cH'$.
    \item The set of losses $\cL$ is any smooth, proper scoring rule\footnote{Proper scoring rules $\ell$ are those which are optimized by reporting the true likelihood of outcome. That is, if $y\sim \Ber(p)$, then $p$ is a minimizer of this expectation, $\E_{y \sim \Ber(p)}\ell(x,\yhat, y)$.}, loss function that is strongly convex in $\yhat$, or an arbitrary bounded loss $\ell'$ in a pre-specified finite collection $\cL'$.
\end{itemize}
In the link prediction context, one can in particular choose losses mapping onto the utility of a range of different decisions, including predictive performance (\textit{e.g.}, $\ell(x, \yhat, y) = (\yhat - y)^2$) and desirability of outcomes (\textit{e.g.}, $\ell(x, \yhat,y) = 1- y$ if the goal is link formation)\footnote{Losses like $1-y$ make sense in settings where the learner's predictions $\yhat$ actively change the likelihood of the outcome $y$ (for instance, by influencing the platforms recommendation decisions).}.

Loss functions may also be feature-dependent, like losses that more heavily weight decisions that affect a pair of individuals from different demographic groups or for which the induced subgraph on a pair of individuals has a certain structure (like having $c \in \N$ neighbors in common).
\end{theorem}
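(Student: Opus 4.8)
The plan is to chain the two tools assembled in the overview: the outcome-indistinguishability-to-omniprediction reduction and the Any Kernel guarantee. First I would invoke the reduction lemma, which says that any algorithm that is online outcome indistinguishable with error $\Reg$ against the union $\cF_{DOI}(\cL) \cup \cF_{HOI}(\cL, \cH)$ is an $(\cL, \cH, \cO(\Reg))$-online omnipredictor. By the Any Kernel theorem, the \algname{} instantiated with a kernel $k$ bounded by $B^2$ has indistinguishability error at most $B\,\fnorm{f}\sqrt{T}$ against every $f$ in the associated RKHS $\cF_k$. So the whole task collapses to exhibiting a \emph{single} efficiently computable, bounded kernel $k$ whose RKHS $\cF_k$ contains every distinguisher in $\cF_{DOI}(\cL) \cup \cF_{HOI}(\cL,\cH)$ with norm $\cO(1)$; the claimed $\cO(\sqrt T)$ regret then follows immediately.

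To build this kernel I would work modularly, using that the RKHS of a sum of kernels contains the sum of the summand RKHSs with controlled norms, so it suffices to handle each sub-class separately. (i) \emph{Finite pieces.} A finite family of bounded functions is an RKHS via the explicit finite-dimensional feature map $x \mapsto (h'(x))_{h'\in\cH'}$, with norms bounded in terms of $|\cH'|$; likewise the finitely many composed distinguishers $\partial\ell'(\cdot, h(\cdot))$ arising from $\ell' \in \cL'$ form a bounded, finite-dimensional RKHS. (ii) \emph{Low-depth regression trees.} A depth-$d$ tree over a fixed threshold-feature set is a bounded combination of conjunctions of at most $d$ literals; the product (``AND'') of single-coordinate kernels has an RKHS containing all such trees, and fixing $d$ keeps both $k$ and the tree norms $\cO(1)$. (iii) \emph{Smooth proper scoring rules and $\yhat$-strongly-convex losses.} Here smoothness of $\ell$ in the decision argument makes $\partial\ell(x,\cdot)$ analytic/Lipschitz on $[-1,1]$, so it is (exactly or uniformly) a low-degree polynomial of the decision value; composing a polynomial kernel with the kernel from (i)--(ii) yields an RKHS closed under this post-processing, with norm bounds governed by the smoothness constant. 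For the decision distinguishers $\partial\ell(x,\pi_\ell(x,p))$ I would invoke the post-processing-closure property flagged in the overview --- under the stated natural conditions on $\cF$, membership of $\ell(x,p,y)$ implies membership of $\ell(x,\pi_\ell(p),y)$, hence of $\partial\ell(x,\pi_\ell(p))$, with only a polynomial norm blow-up --- noting that for a proper scoring rule $\pi_\ell$ is the identity so this step is essentially free, and for a strongly convex loss $\pi_\ell(x,\cdot)$ is a smooth function of $p$ by the implicit function theorem. Summing the resulting kernels produces the single kernel $k$; boundedness and efficient computability are inherited termwise.

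For the feature-dependent losses I would additionally use closure of kernels under products: multiplying $k$ by a kernel encoding the demographic-pair indicator or the induced-subgraph-isomorphism type (both shown efficiently computable and bounded in the link-prediction proposition) gives an RKHS whose elements are exactly the reweighted distinguishers $\Ind[\text{group/structure}]\cdot\partial\ell(\cdot,\cdot)$, still with $\cO(1)$ norm since the indicator kernel has unit diagonal. Assembling everything: the \algname{} with this summed-and-product-composed kernel is online outcome indistinguishable against $\cF_{DOI}(\cL)\cup\cF_{HOI}(\cL,\cH)$ with error $\cO(\sqrt T)$ by the Any Kernel theorem, so the reduction lemma makes it an $(\cH,\cL,\cO(\sqrt T))$-online omnipredictor.

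I expect the crux to be part (iii): bounding the RKHS norm of the \emph{composed} distinguishers $\partial\ell(x,h(x))$ and especially the post-processed $\partial\ell(x,\pi_\ell(x,p))$. The difficulty is not to land in \emph{some} RKHS but to land in a \emph{bounded, efficiently computable} one with norms that do not blow up with the tree depth or the class size --- this is exactly where the smoothness/strong-convexity hypotheses on $\cL$ must be converted into quantitative, polynomial norm bounds, and where one must pin down precisely which conditions on $\cF_k$ guarantee closure under the minimizer map $\pi_\ell$. The finite-class and tree constructions (i)--(ii) and the feature-dependent reweighting are, by contrast, fairly routine kernel bookkeeping.
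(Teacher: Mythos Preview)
Your high-level architecture is exactly the paper's: reduce omniprediction to online OI against $\cF_{\DOI}(\cL)\cup\cF_{\HOI}(\cL,\cH)$ (the paper's \cref{lem:onlinelossoi}), then exhibit a bounded, efficient kernel whose RKHS contains these distinguishers and invoke \cref{thm:indistinguishability_main}. You also correctly identify the crux as controlling the norms of the composed objects $\partial\ell\circ h$ and $\partial\ell\circ\pi_\ell$, and you correctly note that for proper scoring rules $\pi_\ell=\mathrm{id}$ and for strongly convex losses $\pi_\ell$ is Lipschitz (the paper proves the latter by a direct strong-convexity argument rather than the implicit function theorem, but the conclusion is the same).

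There are, however, two concrete gaps relative to the paper's construction.

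\textbf{(1) HOI for trees against arbitrary bounded losses.} Your part (i) places ``the finitely many composed distinguishers $\partial\ell'(\cdot,h(\cdot))$'' into a finite-dimensional RKHS. But $\cH$ contains \emph{all} depth-$d$ regression trees, which is an infinite class, so for each fixed $\ell'\in\cL'$ there are infinitely many $\partial\ell'\circ h$ and your finite feature map does not apply. Your part (iii) only addresses smooth $\ell$, so the case (arbitrary bounded $\ell'$) $\times$ (tree $h$) is not covered. The paper's key device here (\cref{prop:dectree}) is structural rather than analytic: for any bounded $\ell$ and any depth-$d$ tree $h$ with leaf values $c_b$, the function $x\mapsto\ell(h(x),y)$ is \emph{itself} a depth-$d$ tree, obtained by replacing each leaf $c_b$ with $\ell(c_b,y)$. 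Hence $\partial\ell\circ h$ lies in the \emph{same} degree-$d$ polynomial RKHS as $h$, with norm bounded by $2^d$ independent of $\ell$. No kernel composition or smoothness of $\ell$ is needed.

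\textbf{(2) Polynomial approximation versus Sobolev membership.} Your treatment of smooth losses asserts that Lipschitz/analytic $\partial\ell$ ``is (exactly or uniformly) a low-degree polynomial.'' Lipschitz functions on $[-1,1]$ are not polynomials, and uniform polynomial approximation does not yield bounded-norm membership in a fixed polynomial RKHS (the paper discusses this route only as an \emph{asymptotic} KHOI result). The paper instead works in the Sobolev space $W^{1,2}([0,1])$, which is an RKHS with an explicit efficient kernel (\cref{ex:sobolev}) and \emph{exactly} contains every $\partial\ell$ with bounded value and derivative, with norm controlled by those bounds. For DOI with strongly convex losses, the paper then shows $\pi_\ell$ is Lipschitz, hence $\pi_\ell\in W^{1,2}([0,1])$, and uses a Sobolev composition lemma (\cref{lem:cdbfd}) to conclude $\partial\ell\circ\pi_\ell\in W^{1,2}([0,1])$ with an explicit norm bound. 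For HOI with the finite set $\cH'$, the paper uses the composition-kernel fact $k_i(x,x')=k(h_i(x),h_i(x'))$ (\cref{lem:compositionkernel}) and sums over $h_i\in\cH'$ (\cref{prop:finitesethyp}), which is what your composition idea should be directed at --- but only for the finite $\cH'$, not the tree class.

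Your handling of feature-dependent (separable) losses via product kernels matches the paper's \cref{prop:product-loss} and is fine.
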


This result pushes the boundary of what is achievable in terms of online omniprediction in several ways. First, to the best of our knowledge, it is the first $\sqrt{T}$ online omniprediction guarantee which holds for comparison classes $\cH$ that are real-valued, or of infinite size (there are infinitely many low-depth regression trees). Second, the statements are unconditional. The computational efficiency of our algorithm does not rely on the existence of an online regression oracle for the class $\cH$. 

Furthermore, we can include any function $h': \cX \rightarrow [-1,1]$ in the class $\cH$. 
In the context of link prediction, this implies that the algorithm can compete with any bespoke comparison function that a platform may already be using (\textit{e.g.}, deep network). 
Furthermore, as we mentioned previously, these results hold even for the performative case where the outcomes $y_t$ depend the near-deterministic distribution $\Delta_t$ from which the predictions are sampled from. For the reader familiar with the performative prediction literature, this guarantee is best understood as a novel form of online performative \emph{stability}. It does not quite imply performative \emph{optimality} or performative omniprediction as in \cite{kim2023making}. See \Cref{subsec:performativity} for more details.

\paragraph{Other results.} As a serendipitous consequence of our investigation into kernel methods for online indistinguishability and omniprediction, we obtain algorithms for other online prediction problems. These are not directly related to the link prediction problem which is our main focus, but are of independent interest.

We design a new algorithm for online multicalibrated quantile regression. In quantile regression, outcomes $y$ are real-valued instead of binary. Given a quantile $q \in [0,1]$, the goal is output a prediction $p$ such that $y \in \R$ is less than $p \in \R$ exactly a $q$ fraction of the time. In the batch setting where $(x,y) \sim \cD$, one aims to find a predictor $h$ that minimizes the error: $$|\Pr_{(x,y) \sim \cD}[y \leq h(x)] - q|.$$

Quantile regression is a common problem in domains like weather forecasting or financial prediction, where one is interested in deriving confidence intervals or predicting the likely range of outcomes, rather than the average outcome. In \Cref{subsec:quantile_regression}, we introduce a new online algorithm, the \qalgname{}, which satisfies the following guarantee for the online setting where
``Real Life'' draws (real-valued) outcomes $y_t\sim o_t$ from a different distribution $o_t$ at every time step:
\begin{align*}
    \sum_{t=1}^T \E_{p_t \sim \Delta_t, y_t \sim o_t}[(\1\{y_t \leq p_t\} - q) f(x_t,p_t)] \leq \|f\| \sqrt{T} \text{ for all } f\in \cF
\end{align*}
Like the \algname{}, the \qalgname{} works for any RKHS $\cF$ and runs in polynomial time whenever the associated kernel $k$ is efficiently computable. 
Furthermore, using our previous results relating kernels $k$ to their corresponding RKHSs $\cF_k$, one can instantiate the algorithm to guarantee online quantile multicalibration with respect to common real-valued functions $\cF$. These results complement those in \cite{garg2024oracle} and \cite{roth2022uncertain} since the functions $f$ can now be real-valued, the set $\cF$ can be of infinite size, and the algorithm does not depend on enumeration over $\cF$ or access to a computational oracle. 

In addition to quantiles, one can also extend the algorithm to high dimensional regression, where $y$ is now a vector in a compact set $\cY \subseteq \R^d$ instead of a scalar in $\R$. Drawing on the theory of matrix valued kernels \cite{vectorRKHSreview,micchelli2005learning}, we introduce the $\valgname$ which satisfies the following guarantee for any vector valued RKHS $\cF \subseteq \{\cX \times \cY \rightarrow \cY\}$, 
\begin{align*}
    \sum_{t=1}^T (y_t - p_t)^\top f(x_t, p_t) \leq \fnorm{f} \sqrt{T}.
\end{align*}
The computational efficiency of the $\valgname$ relies on the ability to solve a \emph{variational inequality}. These have been the subject of intense study within the optimization literature and efficient algorithms exist for various common choices of matrix valued kernels.

Beyond these contributions, and inspired by the recent works by \cite{qiao2024distance, blasiok2023unifying} we also initiate the study of distance to \emph{multi}calibration (previous work addresses distance to simple calibration) and analyze how straightforward instantiations of the $\algname$ can be used to generate predictions that satisfy small distance to multicalibration in the online setting. 

Lastly, we observe that any function class that is an RKHS with an efficient kernel also admits a weak agnostic learner (WAL). This connection implies that any multicalibration algorithm that relied on an oracle WAL for a class $\cF$ is unconditionally efficient for the case where $\cF$ is an RKHS. 

\section{The Link Prediction Problem}
\label{sec: link prediction}

\paragraph{Data.} We represent a professional network as a graph $G_t$ consisting of nodes (people) and edges (connections between people) that evolve over time. Each node $i$ is associated with a features $z_{i,t}$ containing information that pertains specifically to $i$, such as their employment and demographic information. This can vary over time. In addition to this node-level information, the graph $G_t$ is defined by a set of undirected edges detailing which individuals are connected at time~$t$. 
Edges can be added to or removed from the graph arbitrarily at every time step and need not follow any predefined dynamic or process such as triadic closure \cite{simmel1908soziologie}. The underlying set of nodes can also change. 
The only restriction we will make is that the platform has the ability to \emph{observe} the entire graph $G_t$ as it evolves over time.\footnote{While the platform has the ability to examine all of $G_t$, algorithms need not read the entire input $G_t$. They only examine the subset of $G_t$ relevant to the distinguishers.}

\paragraph{Prediction protocol.} At every time step $t$, the platform is presented with a pair of individuals 
$a_t = (i,j)$ and generates a prediction $p_t$ regarding the likelihood that $i$ and $j$ will be connected at the next time step ($i$ and $j$ may or may not be connected at time $t$). After producing the prediction, the platform then observes a binary outcome $y_t$, which is 1 if $i$ and $j$ are connected at time $t+1$ and 0 otherwise. As per our earlier observability comment, the platform observes the outcome $y_t$ before having to make a prediction at time $t+1$.
Variants of this prediction problem were proposed as early as 2003 \cite{liben2003link}.

In our setting, we allow the outcome $y_t$ to also depend on the distribution $\Delta_t$ where $p_t$ is drawn from.\footnote{The difference between $y_t$ depending on the distribution $\Delta_t$ versus the draw $p_t\sim \Delta_t$ is relatively neglible since in all our algorithms, $\Delta_t$ is only ever supported on 2 points which are very close together. For intuition, one can essentially assume that Nature chooses $y_t$ while knowing $p_t$ up to some small rounding error.} That is, predictions can be \emph{performative} \cite{perdomo2020performative} and influence the likelihood of the outcome. This dynamic naturally occurs whenever the platform uses predictions to inform recommendations. For instance, a platform such as LinkedIn may opt to recommend that a pair of individuals connect via the ``People You May Know'' panel if $p_t$ is above some threshold. Forecasts in this setting are hence likely to be self-fulfilling (although our results hold for any dynamic).

\paragraph{Notation.} We denote by $\cZ$ the set of possible node-level features of an individual, at any point in time.
We define the graph $G_t$ to be a set $\{(v, z_{v,t}, \Gamma_t(v))\}_{v \in V_t}$, where $v \in \N$ is the id of a node, $z_{v,t} \in \cZ$ are the node-level features of $v$ at time $t$, and $\Gamma_t(v) \subseteq V_t$ is the set of nodes containing $v$ and its immediate neighbors at time~$t$. 
Here, $V_t \subseteq \N$ is the set of nodes present in the graph at time $t$. We will use $\Gamma_G^{(r)}(v)$ to denote the set of nodes that are at distance at most $r$ from $v$ in  $G$.
If the sequence of graphs $\{ G_t \}_{t=1}^T$ is clear from context, we will write $\Gamma_t(v) = \Gamma_{G_t}(v)$, and adopt the shorthands $\Gamma_G^{(1)}(v) = \Gamma_t(v)$ for $v$'s immediate neighborhood. 

Furthermore, we will (exclusively) use $\cU = (\N \times \N) \times \cG$ to refer to the universe of possible elements $u = (a, G)$ consisting of pairs of individuals $a = (u,v)$ and graphs $G \in \cG$. We will use $\cX$ to refer to a general set.

\subsection{Formal desiderata.}
\label{subsec:desiderata_link_prediction}
The dynamics underlying professional networking are complex. In this paper, we address the challenge of efficiently generating forecasts that are guaranteed to be \textit{a) valid} and \textit{b) useful}, without imposing any modeling assumptions regarding how networks evolve.

\subsubsection{Validity and outcome indistinguishability.} 
\label{subsubsec:oi_def}
Defining what it means for a forecast of arbitrary, non-repeatable events to be valid is in and of itself a challenging task. 
However, one common perspective within the sciences is that a theory, or prediction, is valid if it withstands efforts to falsify it. This viewpoint was recently formalized in the computer science literature by \cite{dwork2021outcome} who introduced the notion of \emph{outcome indistinguishability} (OI).
Briefly, a predictor is outcome indistinguishable if no analyst can refute the validity of the predictor on the basis of a particular set of computational tests.

This idea of the analyst is operationalized via a class $\cF_A$ of {\it distinguishers}  that take in a set of observation information $x$, a prediction $p$, a binary outcome $y$, and return a score (think True/False).\footnote{This corresponds sample-access OI, the second level in the OI hierarchy presented in \cite{dwork2021outcome}. For ease of presentation, we assume that all distinguishers $A$ are deterministic.} A sequence of predictions $p_t$ is outcome indistinguishable with respect to $\cF_A$ if, when averaged over the sequence, all distinguishers $A \in \cF_A$ give (approximately) the same output in the case where they are given $(a)$ the synthetic outcome $\yt_t \sim \Ber(p_t)$ sampled according to the learner's prediction $p_t$ and $(b)$ the true outcome $y_t$ revealed by "Real Life". That is, 
\begin{align}
\label{eq:informal_oi}
\frac{1}{T} \sum_{t =1}^T \E_{\yt \sim \Ber(p_t)}A(x_t, p_t, \yt_t)  \approx  \frac{1}{T} \sum_{t =1}^T A(x_t,p_t, y_t).
\end{align}

In their initial work, \cite{dwork2021outcome} focused on the batch, or distributional setting, where features are sampled from a fixed, \emph{static} distribution $x \sim \cD$, and outcomes $y$ are sampled from some conditional distribution, $y \sim \Ber(p^*(x))$. 
As discussed previously, networking dynamics are complex and the likelihood of a link forming between any pair of individuals changes as networks evolve. Assuming any kind of static, or slowly moving distribution over  $(x,y)$ is a non-starter for the link prediction problem.

Instead of generating predictions that are indistinguishable under a specific choice of static distribution, we tackle the challenge
of (efficiently) producing predictions that are outcome indistinguishable against arbitrary sequences $\{ (x_t, p_t, y_t) \}_{t=1}^T$. That is, ``Real Life''' can choose outcomes $y_t \in \{0,1\}$ arbitrarily, and the choice of $y_t$ may even depend on the learners predictions. 
Formally, we aim to generate link predictions that satisfy the following online outcome indistinguishability guarantee:

 \begin{definition}
    An algorithm $\cA$ is $(\cF, \Reg)$-online outcome indistinguishable if it generates a transcript $\{(x_t,\Delta_t, y_t)\}_{t=1}^T$ such that for all distinguishers $f \in \cF$ 
    \begin{align}
    \label{eq:onlineoidef}
      \bigg| \sum_{t=1}^T \E_{p_t \sim \Delta_t}(y_t - p_t)f(x_t, p_t) \bigg|\leq \Reg(T,f)
    \end{align}
    where the indistinguishability error rate $\Reg: \N \times \cF \rightarrow \R_{\geq0}$ is $o(T)$ for every $f$. 
\end{definition}

Although stated differently, the condition above is essentially equivalent to that presented in \Cref{eq:informal_oi} since,
\begin{align*}
   A(x_t,p_t, y_t) -  \E_{\yt_t \sim \Ber(p_t)}A(x_t, p_t, \yt_t) = (y_t -p_t)(A(x_t,p_t, 1)  - A(x_t, p_t,0)) = (y_t -p_t) f_A(x_t,p_t),
\end{align*}
for $f_A(x,p) = A(x,p, 1)  - A(x, p,0)$. Therefore,
\begin{align*}
    \bigg|\lim_{T\rightarrow \infty}\frac{1}{T} \sum_{t =1} \E_{\yt \sim \Ber(p_t)}A(x_t, p_t, \yt_t) - A(x_t,p_t, y_t) \bigg| =0 \iff        \bigg| 
 \lim_{T\rightarrow \infty}\frac{1}{T} \sum_{t=1}^T (y_t - p_t)f_A(x_t, p_t) \bigg| = 0.
\end{align*}
Although initially defined with respects functions $f$ that are binary valued --- where $f$ was the characteristic function of a set or demographic group \cite{hkrr} --- the distinction between binary and real-valued functions has since been blurred in the multicalibration literature. In this work, we keep to earlier conventions and refer to the above guarantee (\Cref{eq:onlineoidef}) as indistinguishability since we focus  mostly on real-valued $f$ and because we work with a formulation of omniprediction that is expressed in terms of outcome indistinguishability~\cite{loss_oi}. However, we do so with the understanding that both terms are very tightly linked. 

Returning to the intuition that predictions will be regarded as valid (for now!) if they cannot be falsified, we note that predictions satisfying \Cref{eq:onlineoidef} with $\Reg(T,f) = \cO(\sqrt{T})$ cannot be refuted on the basis of a common class of tests based on the theory of martingales. To see this, assume that the outcomes $y_t$ are the realizations of a stochastic process $(Y_t)_{t=1}^T$ where the binary random variables $Y_t$ are not necessarily independent nor identically distributed, but satisfy $\E Y_t = p^*_t$. Then, it's not hard to check that $Z_t = \sum_{t=1}^T Y_t- p^*_t$ is a martingale with bounded differences. 
By Azuma-Hoeffding, the best one can guarantee on the deviations $|\sum_{t=1}^T y_t- p^*_t|$ is that they scale at $\cO(\sqrt{T})$ rates. 
Therefore, a sequence of predictions $(p_t)_{t=1}^T$ that are OI with respect to the constant function $f=1$ and satisfy $|\sum_{t=1}^T Y_t- p_t| \leq \cO(\sqrt{T})$ behave \emph{as if} they were the true sequence $(p^*_t)_{t=1}^T$ that generate the data. We cannot refute them on the basis of these martingale tests. 

The above online OI guarantee is stronger, it holds not just on average over the sequence but even with respect to distinguishers that also examine information present in $x_t$ and the prediction $p_t$ itself. We will develop link prediction algorithms that fool distinguishers which examine a wide variety of information about the pair of individuals including their node-level features, their mutual connections, and the features of people to whom they are connected.

\subsubsection{Utility and omniprediction.} \label{util}
\label{subsubsec:omni_def}
In addition to the notion of empirical validity above, we aim to generate predictions that are \textit{useful} for decision-making. 
We will thus move beyond analysis of predictions $p_t$ and consider \textit{decisions} $\yhat_t$ made on the basis of a prediction $p_t$ and the relevant context $x_t$.

We will also assume that decision-makers' utilities can be specified by a (class of) \textit{loss function}(s). 
For example, decision-makers may want to forecast outcomes, so that predictions closely match outcomes, or steer them, so that desirable outcomes occur more often.
In such cases, a loss function will encode some notion of distance between predictions and outcomes. Or, it might simply produce higher outputs when outcomes are undesirable and lower outputs when they they are desirable.
As we noted previously, our ``platform'' setting allows for performativity, meaning that outcomes $y$ can depend on decisions $\yhat$ --- this is the power of the platform that we wish to exploit and what gives us hope that the latter goal of steering subjects towards desirable outcomes may be attainable.

We will focus on minimizing loss with respect to the best fixed action in retrospect:
An algorithm $\cA$ generating a transcript of (feature, decision, outcomes) tuples $\{ x_t, \hat y_t, y_t \}_{t=1}^T$ achieves $\Reg(T)$ regret with respect to a comparison, or benchmark, class of functions $\cH$ and loss $\ell$ if
\begin{align*}
    \sum_{t=1}^T \ell(x_t, \yhat_t, y_t) \leq \min_{h \in \cH} \sum_{t=1}^T \ell(x_t, h(x_t), y_t) + \Reg(T) .
\end{align*}

In the equation above, we note that loss functions can depend on \textit{features} $x_t$ as well as predicted and realized outcomes.
This is because many loss minimization settings in complex domains depend on {the object we are making predictions about} as well as on the prediction and realized outcome.
For example, one may wish to more heavily weight decisions that affect disadvantaged demographic groups, in which case the loss function will depend on the features of individuals.
However, one can always drop the $x$ argument to $\ell$ for losses that do not depend on features (as in in prior work on omniprediction \cite{loss_oi, garg2024oracle}).

In link prediction, a platform may want to determine which links are likely to form or make recommendations that nudge certain links towards forming.
The utility of a decision in an evolving network may also depend on characteristics of the decision subjects, such as the demographic group membership of the pair of individuals across a potential connection.
We allow for loss functions that take into account characteristics of pairs of individuals (and also their neighborhoods and neighbors' features).

Finally, we will focus on creating predictors that can be efficiently post-processed so as to minimize loss, with respect to a given comparator class, for any in \textit{large classes of loss functions}. These are called omnipredictors \cite{gopalan2022omnipredictors, gupta2022online}. 
    Online omnipredictors can be defined formally as follows.
    \begin{definition} \label{def:omnipredictor}
    An algorithm $\cA$ is an $(\cL, \cH, \Reg)$-online omnipredictor if it generates a transcript $\{(x_t, \Delta_t, y_t)\}_{t=1}^T$ such that for all $\ell \in \cL$ there exists a $\pi_\ell: \cX \times [0,1] \rightarrow [0,1]$ such that
    \begin{align}
        \sum_{t=1}^T \E_{p_t \sim \Delta_t}\ell(x_t, \pi_\ell(x_t, p_t), y_t) \leq \inf_{h \in \cH} \sum_{t=1}^T \ell(x_t, h(x_t), y_t) + \Reg(T). \label{eq:defomnipredictor}
    \end{align}
    where $\Reg: \cN \rightarrow \R_{\geq0}$ is $o(T)$.
    \end{definition}
    \noindent In particular, we will take $\pi_\ell$ to be 
    \begin{align*}
    	\pi_\ell(x, p) &\in \argmin_{\hat{y} \in [0, 1]} \E_{y \sim \mathrm{Ber}(p)} [\ell(x, \hat{y}, y)], \\
     &= \argmin_{\hat{y} \in [0, 1]} p \cdot \ell(x, \hat{y}, 1) + (1-p) \cdot \ell(x, \hat{y}, 0),
    \end{align*}
    which is a simple optimization problem over the unit interval that can be efficiently solved.
    (We will assume argmin returns the set of values achiving a minimum, and that $\pi_\ell$ is an arbitrary member of this set.) 
    Finally if $\ell$ is invariant to $x$, the $x$ argument to $\pi_\ell$ can also be dropped.
    %

    %
    We focus on omnipredictors for two reasons.
    First, link predictions may be used for a variety of downstream decisions on a platform.
    As mentioned previously, a class of loss functions can simultaneously be used to measure \textit{predictive quality} (\textit{e.g.}, squared loss: $\ell(x, \hat y, y) = (y - \hat y )^2$) or \textit{desirability of outcomes} (\textit{e.g.}, link formation: $\ell(x, \yhat, y) = 1-y$, which is minimized when an edge forms).
    Additionally, platforms may use link predictions within different ``People You May Know'' recommendations serving different goals (\textit{e.g.}, different types of connections), and they may hope to tailor other on-platform experiences on the basis of the predicted evolution of the network.
    Second, the loss function may not be known at prediction time: for example, a predictive system may need to be fixed in advance of A/B tests determining which loss function in a certain class gives the best proxy for some long-term objective.

In Section \ref{sec: omni}, we discuss learning algorithms which are omnipredictors with respect to large classes of losses (\textit{e.g.}, all bounded differentiable loss functions) and with expressive comparator classes, like deep neural nets.

\section{Online Outcome Indistinguishability and Applications to 
Link Prediction}
\label{sec: OI}

 In this section, we consider the first task detailed in \Cref{subsec:desiderata_link_prediction} of generating link predictions for an evolving network that satisfy the following outcome indistinguishability guarantee:
\begin{align*}
   \sum_{t=1}^T (p_t - y_t) f(x_t, p_t) \leq o(T)  \text{ for all } f \in \cF.
\end{align*}
We are specifically interested in designing online algorithms that are $(a)$ computationally-efficient, $(b)$ indistinguishable with respect to rich classes of functions $\cF$ defined on complex, graph-based domains $\cU$, and $(c)$ achieve the optimal $\cO(\sqrt{T})$ outcome indistinguishability error, henceforth {\em OI error}.

We present a more detailed comparison to prior work later on. However, briefly, previous online algorithms for this problem which achieved the optimal $\sqrt{T}$ OI error bound were either computationally inefficient for super polynomially sized sets $\cF$ \cite{foster2006calibration, gupta2022online}, could only achieve the above guarantee for restricted classes of functions $f$ that were continuous in the forecast $p$ \cite{vovk2007non}, or which where binary valued \cite{gupta2022online}. Our algorithm overcomes these issues and achieves all three of the above desiderata.  This will enable new possibilities for \emph{omniprediction} as we detail in \Cref{sec: omni}, accomplished by appropriate choice of the \kf , folding the benchmark functions into the corresponding RKHS $\cF$.

\paragraph{Technical approach.} We develop new, general-purpose algorithms guaranteeing online outcome indistinguishability and then specialize them to the link prediction setting.
In particular, we focus on developing algorithms which guarantee calibration with respect to sets $\cF$ that form a \emph{reproducing kernel Hilbert space} (RKHS). Intuitively, an RKHS is a set of functions $\cF \subseteq \{\cX \rightarrow \R\}$ that are implicitly represented by a kernel function $k: \cX \times \cX \rightarrow \R$, for a universe $\cX$.

This kernel based viewpoint is  useful for our link prediction problem because it provides a computationally efficient way to guarantee calibration with respect to rich classes of functions defined on graphs. 
Building on the theory of RKHSs, we design computationally efficient kernels that guarantee indistinguishability with respect to classes of distinguishers that take into account graph topology (\textit{e.g.}, number of mutual connections, isomorphism class of the local neighborhoods), or functions computable by arbitrary finite sets of pre-specified functions, like graph neural network link predictors.

Our technical approach is directly builds on a result by Vovk \cite{vovk2007non} that is in turn inspired by the breakthrough work of \cite{foster1998asymptotic}. 
In his paper, which predates the definition of multicalibration by \cite{hkrr} or OI \cite{dwork2021outcome}, Vovk introduces an algorithm that guarantees indistinguishability with respect to any RKHS of functions $f(u,p)$ that are continuous in $p$. 
Drawing on ideas from \cite{foster2021forecast}, we introduce the \algname{}, which guarantees indistiguishability with respect to \emph{any} RKHS $\cF$, not just those that are continuous in $p$. 

\subsection{The algorithm.}
\label{sec: the algorithm}

We now formally present our online \algname{}, which forms the backbone of our later results. The algorithm builds on the earlier $\kalg$ algorithm from \cite{vovk2007non} that is in turn inspired by Kolmogorov's 1929 proof of the weak law of large numbers \cite{kolmogorov1929loi}.
The reader familiar with reproducing kernel Hilbert spaces can skip the brief background highlights outlined below.

\paragraph{Background on reproducing kernel Hilbert spaces.} 
Our guarantees are stated in terms of a kernel $k$ and its associated reproducing kernel Hilbert space $\cF_k$. We drop the subscript when it is clear from context. We briefly review the basic facts behind RKHSs here and provide a self-contained formal review of the facts we need.  In \cref{sec:rkhs_background}, we list out various kernels and RKHS that we then use to instantiate the algorithm. We refer the reader to texts such as \cite{paulsen2016introduction, steinwart2008support} for further background on this material.

\begin{definition}
Let $\cX$ be an arbitrary set. A function $k : \cX \times \cX \to \R$ is a \emph{kernel} on $\cX$ if it satisfies 
\begin{enumerate}
    \item Symmetry: $k(x,x') = k(x',x)$ for all $x,x' \in \cX$.
    \item Positive Definiteness: $\sum_{i=1}^n \sum_{j=1}^n \lambda_i\lambda_j k(x_i, x_j) \geq 0$ for all $n \in \N$, $x_1, \ldots, x_n \in \cX$ and $\lambda \in \R^n$.
\end{enumerate}
\end{definition}

Every kernel $k$ is associated with a unique Hilbert space $\cF \subseteq \{\cX \rightarrow \R \}$ of real-valued functions. By virtue of being a Hilbert space, $\cF$ is equipped with an inner product $\langle \cdot, \cdot \rangle_{\cF}: \cF \times \cF \rightarrow \R$ that defines a norm on the elements $f \in \cF$, $\|f \|_{\cF}^2 = \langle f,f\rangle_{\cF}$. The set is called a \emph{reproducing} kernel Hilbert space since for every element $x \in \cX$, there exists an element $\Phi(x) \in \cF$ such that 
\begin{align*}
    f(x) = \langle f, \Phi(x) \rangle_{\cF} \text{ for all } f\in \cF,
\end{align*}
where $\langle \cdot, \Phi(x) \rangle_{\cF}$ is continuous.
The function $\Phi: \cX \rightarrow \cF$ is called the reproducing kernel or feature map. It also satisfies the property that for all $x,x' \in \cX$,
\begin{align*}
k(x,x') = \langle \Phi(x), \Phi(x') \rangle_{\cF}.
\end{align*}
Given any kernel $k$, or equivalently a feature map $\Phi$, the Moore-Aronszajn theorem provides an explicit characterization of the set of functions $\cF$. In particular, 
\begin{align*}
    \cF = \overline{\mathsf{span}} \{\Phi(x): x \in \cX\}, 
\end{align*}
where,
\begin{align*}
    {\mathsf{span}} \{\Phi(x): x \in \cX\} = \bigg\{f \; : \; f = \sum_{i=1}^n \lambda_i \Phi(x_i) \text{ for all } n \in \N, x_1, \dots, x_n \in \cU \text{ and } \lambda \in \R^n \bigg\},
\end{align*}
and the overline denotes the completion of the set. 
That is, $\cF$ is the set of all finite linear combinations of feature maps $\Phi$ augmented with the limits of any Cauchy sequences of such linear combinations.

Throughout our work we will use the fact that kernels \emph{compose}. 
That is, if $k_1$ and $k_2$ are kernels for RKHSs $\cF_1 \subseteq \{\cX_1 \rightarrow \R\}$ and $\cF_2 \subseteq \{\cX_2 \rightarrow \R\}$. Then $k_1 + k_2$ is a kernel for $\cF_1 + \cF_2$ and $k_1 \cdot k_2$ is a kernel for $\cF_1 \cdot \cF_2$ where, 
\begin{align*}
    \cF_1 + \cF_2 &\subseteq \{f_1(x_1) + f_2(x_2) \; : \; x_1 \in \cX_1, x_2 \in \cX_2, f_1 \in \cF_1, f_2 \in \cF_2\}, \quad \text{and}\\
        \cF_1 \cdot \cF_2 &\subseteq \{f_1(x_1)f_2(x_2) \; : \; x_1 \in \cX_1, x_2 \in \cX_2, f_1 \in \cF_1, f_2 \in \cF_2\}.
\end{align*}
A direct implication of the first line is that two different RKHSs on the same domain can be combined to make a new one, where the set of functions in the RKHS contains the union of functions in each of the RKHSs.
Further details are deferred to \cref{lem:kernelsum} and \cref{lem:productkernel}. 
However, the key point is that these composition properties make it easy to ``mix and match'' various indistinguishability guarantees.

\paragraph{Description of algorithm.} The algorithm is at a high-level very simple. It only takes as input a kernel function $k$,
\begin{align*}
k : (\cX \times [0, 1]) \times (\cX \times [0, 1]) \to \R.
\end{align*}
At every round $t$, it constructs a function $S_t: [0,1] \rightarrow \R$ defined from the history $\{(x_i, p_i, y_i)\}_{j=1}^{t-1}$. 
If the kernel is continuous, it chooses a prediction $p_t$ that is a zero of $S_t$, $S_t(p_t) \approx 0 $. If the kernel $k$ is discontinuous in $p$, it instead finds two points $q_1$ and $q_2$ which are very close together (i.e., $|q_1 - q_2| \approx 0$) and outputs a distribution $\Delta_t$ supported on $q_1,q_2$ such that the expectation of $S_t$ over $\Delta_t$ is approximately 0. Both of these search problems are efficiently solved via binary search. The algorithm in which the kernel $k$ is continuous is the same as in Vovk's $\kalg$ algorithm, while the discontinuous case is new. In particular, the procedure in the discontinuous case draws on ideas from \cite{foster2021forecast} and their results on near deterministic calibration. 

\remove{
\jcpnote{I think this was out of place}

Summarizing: the key points are (1) the existence of the feature map $\Phi$; (2) the Moore-Aronsajn theorem, which is what lets us ensure that meaningful functions are in the RKHS $\cF$; and closure under addition and multiplication.

}

\paragraph{Guarantees of algorithm.} With these preliminaries out of the way, we now state the main guarantees of the theorem.

\begin{theorem}
\label{thm:indistinguishability_main}
 Let $k$ be a kernel with associated RKHS $\cF$. Then, the \algname{} (\Cref{fig:online_protocol_randomized}) instantiated with kernel $k$ generates a transcript $\{(x_t, \Delta_t, y_t)\}_{t=1}^T$ such that for any $f \in \cF$:
    \begin{align*}
         \left|\sum_{t=1}^T \E_{p_t \sim \Delta_t}f(x_t, p_t)(y_t - p_t) \right| \leq \|f\|_{\cF} \sqrt{1 + \sum_{t=1}^T \E_{p_t \sim \Delta_t} p_t (1-p_t) k((x_t, p_t), (x_t, p_t))}.
    \end{align*}
If $k$ is forecast-continuous, then the guarantee is deterministic since $\Delta_t$ is a point mass. Otherwise, it is near-deterministic. The distribution $\Delta_t$ is supported on points that are $\cO(t^{-3})$ apart.\footnote{One could change this from $\cO(t^{-3})$ to $\cO(t^{-\alpha})$ for any $\alpha > 3$ without changing the asymptotic runtime.}
If the kernel is bounded by $B$, $$\sup_{(x,p) \in \cX \times [0,1]} k((x,p),(x,p)) \leq B,$$ then the per round runtime of the algorithm is bounded by ${\cO}(t  \cdot  \log(t B) \cdot \mathsf{time}(k))$, where $\mathsf{time}(k)$ is a uniform upper bound on the runtime of computing the kernel function $k$.
\end{theorem}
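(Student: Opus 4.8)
The plan is to run a potential-function argument on the running ``defect vector'' $V_t \defeq \sum_{i=1}^t (y_i-p_i)\,\Phi(x_i,p_i)\in\cF$, where $\Phi$ is the feature map of $k$. Since $f(x_t,p_t)=\langle f,\Phi(x_t,p_t)\rangle_\cF$ by the reproducing property, summing and applying Cauchy--Schwarz gives $\big|\sum_{t=1}^T f(x_t,p_t)(y_t-p_t)\big| = |\langle f,V_T\rangle_\cF| \le \|f\|_\cF\,\|V_T\|_\cF$; after taking expectations, using $|\E[\cdot]|\le\E|\cdot|$ and Jensen, the whole theorem reduces to the bound $\E\,\|V_T\|_\cF^2 \le 1 + \E\sum_{t=1}^T p_t(1-p_t)\,k((x_t,p_t),(x_t,p_t))$. (When $k$ is forecast-continuous every $\Delta_t$ is a point mass, so the expectations are vacuous and this is pathwise.)

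First I would derive a one-step recursion for $\|V_t\|_\cF^2$. Expanding the squared norm, $\|V_t\|_\cF^2 = \|V_{t-1}\|_\cF^2 + 2(y_t-p_t)\langle V_{t-1},\Phi(x_t,p_t)\rangle_\cF + (y_t-p_t)^2\,k((x_t,p_t),(x_t,p_t))$, and the definition of $S_t$ is precisely engineered so that $S_t(p_t) = \langle V_{t-1},\Phi(x_t,p_t)\rangle_\cF + \tfrac12 k((x_t,p_t),(x_t,p_t))(1-2p_t)$. Substituting, and using that $y_t\in\{0,1\}$ so that $-(y_t-p_t)(1-2p_t)+(y_t-p_t)^2 = p_t(1-p_t)$ in either case $y_t=0$ or $y_t=1$, the recursion collapses to
\begin{align*}
\|V_t\|_\cF^2 \;=\; \|V_{t-1}\|_\cF^2 \;+\; 2(y_t-p_t)S_t(p_t) \;+\; p_t(1-p_t)\,k((x_t,p_t),(x_t,p_t)).
\end{align*}
Summing over $t$, the claimed bound on $\E\|V_T\|_\cF^2$ follows once the ``bias'' terms $\E_{p_t\sim\Delta_t}[(y_t-p_t)S_t(p_t)]$ are shown to be nonpositive up to an error that is summable over $t$ (with total mass below $1$).

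The bias analysis splits according to the algorithm's three branches. In Step~2, $p_t\in\{0,1\}$ is the endpoint on which $S_t$ already has the ``right'' sign while $y_t-p_t$ has the opposite sign, so $(y_t-p_t)S_t(p_t)\le 0$ deterministically. In the forecast-continuous Step~3, binary search returns $p_t$ with $|S_t(p_t)|\le\eps_t$, hence $|(y_t-p_t)S_t(p_t)|\le\eps_t$. Step~4 is the genuinely new case (and, I expect, the main obstacle): when $k$ --- hence $S_t$ --- is discontinuous in $p$, $S_t$ may jump across zero and no deterministic forecast makes $|S_t(p_t)|$ small. Following the forecast-hedging idea of \cite{foster2021forecast}, the algorithm hedges between a point $q_t$ where $S_t$ has the sign of $S_t(0)$ and a nearby point $q_t'$ (with $|q_t-q_t'|\le\eps_t$) where it has the sign of $S_t(1)$, and mixes them with the weight $\tau=|S_t(q_t')|/(|S_t(q_t)|+|S_t(q_t')|)$ chosen exactly so that $\E_{p\sim\Delta_t}S_t(p) = \tau S_t(q_t)+(1-\tau)S_t(q_t')=0$. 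Because $y_t$ is determined given $\Delta_t$, we then get $\E_{p\sim\Delta_t}[(y_t-p)S_t(p)] = y_t\cdot 0 - \E_{p\sim\Delta_t}[p\,S_t(p)] = -(1-\tau)(q_t'-q_t)\,S_t(q_t')$, whose magnitude is at most $\eps_t\cdot\sup_p|S_t(p)|$. Since $S_t$ is a sum of at most $t$ kernel values, each bounded by $B$, one has $\sup_p|S_t(p)| = \cO(tB)$, and the algorithm's choice $\eps_t=\Theta(1/(t^3 B_t))$ makes the per-round bias $\cO(1/t^2)$; these contribute strictly less than $1$ in total, absorbed into the leading ``$1+$''. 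The delicate points here are exactly (i) that the two-point hedge annihilates $\E_{\Delta_t}S_t$ \emph{exactly}, (ii) that the surviving residual after multiplying by $(y_t-p_t)$ is still controlled by the tiny gap $|q_t-q_t'|$, and (iii) that $\eps_t$ is calibrated to keep $\Delta_t$ supported on two $\cO(t^{-3})$-close points while keeping the residuals summable.

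For the runtime, one evaluation of $S_t$ costs $t$ kernel evaluations, i.e.\ $\cO(t\cdot\mathsf{time}(k))$, and the only nontrivial step is the binary search, which performs $\cO(\log(1/\eps_t))=\cO(\log(tB))$ evaluations of $S_t$: in Step~4 it only needs the \emph{sign} of $S_t$, so no regularity of $k$ is required, while in Step~3 mild Lipschitzness of $S_t$ in $p$ --- automatic for the smooth kernels we later instantiate --- guarantees an $\eps_t$-approximate zero within that many steps. Multiplying gives the stated $\widetilde{\cO}(t\cdot\mathsf{time}(k)) = \cO(t\log(tB)\,\mathsf{time}(k))$ per-round bound.
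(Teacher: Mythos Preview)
Your proposal is correct and follows essentially the same approach as the paper's proof. Both arguments reduce the claim via the reproducing property and Cauchy--Schwarz to bounding $\E\|V_T\|_\cF^2$, use the identity $(y_t-p_t)^2 = p_t(1-p_t)+(1-2p_t)(y_t-p_t)$ to rewrite the increment as $2(y_t-p_t)S_t(p_t)+p_t(1-p_t)k((x_t,p_t),(x_t,p_t))$, and then show $\E_{p_t\sim\Delta_t}[(y_t-p_t)S_t(p_t)]\le 1/(10t^2)$ by the sign argument in Step~2 and the forecast-hedging cancellation $\tau S_t(q_t)+(1-\tau)S_t(q_t')=0$ in Step~4; your residual $-(1-\tau)(q_t'-q_t)S_t(q_t')$ is exactly the paper's $\tau S_t(q_t)(q_t'-q_t)$ after substituting $\tau S_t(q_t)=-(1-\tau)S_t(q_t')$.
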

\begin{proof}
    If $\sign\,S_t(0) = \sign\,S_t(1)\neq 0$ in round $t$, selecting $p_t = (1 + \sign\,S_t(0))/2$ guarantees that,  
    \begin{align*}
        S_t(p_t)(y_t - p_t)\leq 0,
    \end{align*}
regardless of whether $y_t$ is 1 or 0. Otherwise, $p_t \sim \Delta_t$ where $\Delta_t$ places probability $\tau$ on $q_t$ and $1-\tau$ on $q_t'$. In this case, letting $\tau' = 1-\tau$, we can write:
\begin{align*}
\E_{p_t \sim \Delta_t}[S_t(p_t)(y_t-p_t)] &= \tau S_t(q_t)(y_t - q_t) + (1-\tau)S_t(q_t')(y_t - q_t') \\
& = [\tau S_t(q_t) + \tau'S_t(q_t')](y_t-q_t') + \tau S(q_t)(q_t'-q_t) 
\end{align*}
By choice of $\tau = |S_t(q_t')|/(|S_t(q_t)|+|S_t(q_t')|)$, and the fact that $S_t(q_t')$ and $S_t(q_t)$ have opposite signs, the term inside the brackets is equal to 0 (this is the forecast hedging idea from \cite{foster2021forecast}). Summarizing, we have that:
\begin{align*}
\E_{p_t \sim \Delta_t}[S_t(p_t)(y_t-p_t)] = \tau S_t(q_t)(q_t'-q_t) \leq |S_t(q_t)||q_t - q_t'| \leq |q_t - q_t'| \cdot t \cdot \max_{t' \leq t} k((x_t,p_t), (x_t,p_t)).
\end{align*}
Since $|q_t - q_t'| \leq \eps_t = 1/ (10B_t t^3)$ where $B_t = \max_{t' \leq t} k((x_t,p_t), (x_t,p_t))$, we conclude that regardless of whether $y_t$ is 0 or 1, 
\begin{align}
\label{eq:forecast_hedging}
    \E_{p_t \sim \Delta_t}[S_t(p_t)(y_t-p_t)] \leq \frac{1}{10 t^2}.
\end{align}
We now seek an upper bound on the expected value of
\[
    \left\|\sum_{t=1}^T (y_t - p_t)\Phi(x_t, p_t)\right\|^2_{\cF} = \sum_{t=1}^T\sum_{s=1}^{T} (y_t-p_t)(y_s-p_s) \langle \Phi(x_t, p_t), \Phi(x_s, p_s)\rangle_{\cF}.
\]
To this end, first observe the symmetry of the summands in $(s, t)$, so the right side simplifies to
\[
    \sum_{t=1}^T (y_t-p_t)^2 \|\Phi(x_t, p_t)\|_{\cF}^2 + 2\sum_{t=1}^T(y_t-p_t)\left(\sum_{s=1}^{t-1} k((x_t, p_t), (x_s, p_s))(y_s-p_s)\right).\\
\]
Next, we apply the identity $(y_t - p_t)^2 = p_t(1-p_t) + (1 - 2p_t)(y_t - p_t)$, which holds for all $y_t \in \{0, 1\}$ and $p_t \in [0, 1]$ and rewrite the above expression as:
\[
    \sum_{t=1}^T p_t(1-p_t) \|\Phi(x_t, p_t)\|^2_{\cF} + 2\sum_{t=1}^T(y_t-p_t)\left(\sum_{s=1}^{t-1} k((x_t, p_t), (x_s, p_s))(y_s-p_s) + \|\Phi(x_t, p_t)\|_{\cF}^2(1-2p_t)\right).
\]
Since the rightmost parenthesized term is, by definition, precisely $S_t(p_t)$, we have shown that
\begin{align*}
    \E\left\|\sum_{t=1}^T (y_t - p_t)\Phi(x_t, p_t)\right\|^2_{\cF} = \E\left[\sum_{t=1}^T p_t(1-p_t)\|\Phi(x_t, p_t)\|^2_{\cF}\right] + 2\sum_{t=1}^T \E\left[S_t(p_t)(y_t-p_t)\right].
\end{align*}
Now, using our earlier result (\cref{eq:forecast_hedging}), we conclude that:
\begin{align*}
     \E\left\|\sum_{t=1}^T (y_t - p_t)\Phi(x_t, p_t)\right\|^2_{\cF}  &\leq \E\left[\sum_{t=1}^T p_t(1-p_t)\|\Phi(x_t, p_t)\|^2_{\cF}\right] + 2\sum_{t=1}^T \frac{1}{10 t^2} \\ 
     & \leq \E\left[\sum_{t=1}^T p_t(1-p_t)\|\Phi(x_t, p_t)\|^2_{\cF}\right] + \frac{2}{10} \cdot \frac{\pi^2}{6}
\end{align*}
where we used the fact that $\sum_{t=1}^\infty t^{-2} = \pi^2 /6$. Noting that 
\begin{align*}
p_t(1-p_t)\|\Phi(x_t, p_t)\|^2_{\cF} = p_t(1-p_t)k((x_t,p_t),(x_t,p_t)),
\end{align*} and applying Jensen's inequality, the above equation implies that: 
\begin{align}
\label{eq:feature_regret}
    \E\left\|\sum_{t=1}^T (y_t - p_t)\Phi(x_t, p_t)\right\|_{\cF} \leq \sqrt{1 + \sum_{t=1}^T \E_{p_t \sim \Delta_t} p_t(1-p_t) k((x_t,p_t),(x_t,p_t)) }.
\end{align}
To conclude the proof, we use the reproducing property $f(x,p) = \langle f, \Phi(x,p) \rangle_\cF$, which, along with Cauchy-Schwarz, relates the indistinguishability error to the above expression as follows:
\begin{align*}
   |  \sum_{t=1}^T \E_{p_t \sim \Delta_t}(y_t - p_t)f(x_t,p_t)|  &= \big|\E_{p_t \sim \Delta_t}[\langle f, \sum_{t=1}^T (y_t - p_t)\Phi(p_t,x_t) \rangle_\cF ] \big| \\ 
   & \leq  \|f\|_\cF \E \left\|\sum_{t=1}^T (y_t - p_t) \Phi(x_t, p_t)\right\|_\cF.  
\end{align*}
\end{proof}

\remove{
We note that the bounds in our main theorem depend adaptively on the \textit{function norm} $\| f \|_\cF$ (i.e., functions with smaller norm yield stronger guarantees).
The bounds also depend on what we call the \textit{feature norm} $k((x_t, p_t), (x_t, p_t))$ in the RKHS. (We call it a feature norm since $\sqrt{k((x_t, p_t), (x_t, p_t))} = \sqrt{\inner{\Phi(x_t, p_t)}{\Phi(x_t, p_t)}} = \fnorm{\Phi(x_t,p_t)}$.)
Our subsequent results will often depend on the being able to guarantee that $\| f \|_\cF$ is bounded for all $f$ in some relevant subset $\cF_0 \subseteq \cF$ and that $k(\cdot, \cdot)$ is uniformly bounded by a constant over the input space $\cX \times [0, 1]$.
\ch{Give some examples/intuition for when this is to be expected.}

}

\paragraph{Discussion.} The bound guarantees non-asymptotic OI error of at most $\sqrt{T}$ for all functions $f$ that lie in the RKHS $\cF$ induced by a pre-specified kernel $k$. 
\footnote{In particular, the bound holds for all values of $T$.}
While the bound holds for all functions in the RKHS, it is \emph{adaptive}. For each $f$, it depends on the norm $\|f\|_{\cF}$ but not on the number of functions $|\cF|$ (which is in fact infinite for every choice of kernel $k$). 
The norm of a function in an RKHS can often be interpreted as an instance-specific notion of complexity. Consequently, the OI error bound satisfies the intuitive property that it is smaller for simple functions, and larger for more complicated functions. 

The guarantees are also adaptive since they depend on norms of the features in the sequence, $k((x_t, p_t), (x_t,p_t)) = \fnorm{\Phi(x_t,p_t)}^2$, and the variance of the predictions $p_t(1-p_t)$. Adapting to the variance is particularly useful in the link prediction setting since we expect most edges in professional networks to be unlikely to form, meaning that the OI error bound is smaller.

We also note that neither the run-time of the algorithm nor the associated regret bounds have any explicit dependence on number of functions $|\cF|$. Both of these properties are determined by the kernel function $k$. 

In the following propositions, we instantiate the theorem above with specific choices of kernel functions $k$, illustrating how it can be used to guarantee indistinguishability with respect to interesting classes of functions $\cF$. We then compare our results to previous work.

\remove{The statement of \Cref{prop:low-deg-booleans} requires definition of some notation and several relevant function classes.
}
We will use multi-index notation to denote $x_S = \prod_{i \in S} x_{i}$ for $S \subseteq [n]$.
Informally, \Cref{prop:low-deg-booleans} states that the algorithm guarantees outcome indistinguishability at $\sqrt{T}$ rates with respect to tests that are the product of a low-degree function on $\cX \subseteq \{0, 1 \}^n$ and either binned functions or functions satifying mild smoothness conditions of the prediction $p$.

\newcommand{\sobkern}[2]{\frac{(e^{{#1}} + e^{-{#1}})(e^{1-{#2}} + e^{{#2}-1})}{2(e  - e^{-1})}}
\begin{corollary} [Low-degree functions on $\{ 0, 1\}^n$] \label{prop:low-deg-booleans} \label{prop:exampleinstantiations}
    Let $\cF_{\mathrm{LowDeg}} \subseteq \{\{-1,1\}^n \rightarrow [-1,1]\}$ be a set of Boolean functions whose Fourier spectrum is supported on monomials of degree at most $d$ (\textit{e.g.}, decision trees of depth $d$, or polynomials).\footnote{
Recall that Boolean functions over $\{-1, 1\}^n$ can always be written as polynomials, and that the Fourier spectrum of functions on $\{-1, 1\}^n$ are simply the coefficients of monomials in the polynomial.
See \cref{ex:booleanfunctions} for more discussion of functions on the Boolean hypercube.}
    \begin{align*}
        \cF_{\mathrm{LowDeg}} = \left\{ f \; : \; \exists \; \alpha \text{ such that } \| \alpha \|_\infty \leq 1, f(x) = \sum_{S \subset [n], \abs{S} \leq d} \alpha_{S} x_S, \forall x \in \{ 0, 1\}^n \right\}.
    \end{align*}
    Furthermore, let $\cF_{\mathrm{\Cont}} \subseteq  \{[0,1] \rightarrow [-1,1]\}$ be the class of continuous, differentiable functions with derivative uniformly bounded in $[-1, 1]$ and $\cF_{\mathrm{Grid}}$ to be the set of functions 
        \begin{align*}
            f_r(p) = \1\left\{ \frac{r-1}{N} \leq p < \frac{r}{N}\right\}
        \end{align*}
    parametrized by some positive integer $N$ and $r \in \{1, \dots, N-1\}$. We also define $f_{N}(p) = 1 \{(N-1)/N \leq p \leq 1 \}$ so the grid covers the whole interval.
    Then, the \algname{} run on the kernel 
    \begin{align*}
         k((x,p),(x',p')) \defeq \bigg( &\sobkern{\min \{ p, p' \}}{\max \{ p, p' \}} \\
         &+ 1\left\{\exists \; r \in [N] \; : \; f_r( p) = f_r(p') = 1 \right\}\bigg) \sum_{S \subset [n], |S| \leq d} x_{S} x'_{S}, 
    \end{align*}
    generates a sequence of predictions such that for all $f_x\in \cF_{\mathrm{LowDeg}}$ and $f_p \in \cF_{\mathrm{\Cont}} \cup \cF_{\mathrm{Grid}}$:
       \begin{align*}
         \left|\sum_{t=1}^T  \E_{p_t \sim \Delta_t}f_x(x_t)f_p(p_t)(y_t - p_t) \right| \leq  6 \sqrt{n^d T}. 
    \end{align*}
\end{corollary}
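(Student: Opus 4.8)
The strategy is to recognize the displayed $k$ as a composition of three elementary kernels, identify each associated RKHS together with its norm, bound the norm of the test $f_x f_p$ and the diagonal values $k((x,p),(x,p))$, and then invoke \Cref{thm:indistinguishability_main}.

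Write $k = k_x \cdot k_p$, where $k_x(x,x') = \sum_{|S| \le d} x_S x_S'$ acts on the Boolean coordinates and $k_p = k_{\mathrm{Sob}} + k_{\mathrm{Grid}}$ acts on the forecast coordinate, with $k_{\mathrm{Sob}}(p,p') = \sobkern{\min\{p,p'\}}{\max\{p,p'\}}$ and $k_{\mathrm{Grid}}(p,p') = \1\{\exists\, r \in [N]: f_r(p) = f_r(p') = 1\}$. By the sum and product composition rules for kernels (\cref{lem:kernelsum}, \cref{lem:productkernel}), $\cF_k$ contains every product $f_x(x) f_p(p)$ with $f_x \in \cF_{k_x}$ and $f_p \in \cF_{k_p} = \cF_{k_{\mathrm{Sob}}} + \cF_{k_{\mathrm{Grid}}}$, and $\|f_x f_p\|_{\cF_k} \le \|f_x\|_{\cF_{k_x}} \|f_p\|_{\cF_{k_p}}$. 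It therefore suffices to prove: (i) every $f_x \in \cF_{\mathrm{LowDeg}}$ lies in $\cF_{k_x}$ with $\|f_x\|_{\cF_{k_x}} \le 1$; (ii) every $f_p \in \cF_{\mathrm{Cts}} \cup \cF_{\mathrm{Grid}}$ lies in $\cF_{k_p}$ with $\|f_p\|_{\cF_{k_p}} \le \sqrt 2$; and (iii) $k((x,p),(x,p)) < 6 n^d$ uniformly.

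For (i), $k_x$ has the finite-dimensional feature map $\Phi_x(x) = (x_S)_{|S| \le d}$, whose coordinates are linearly independent functions on the cube, so $\cF_{k_x}$ is exactly the space of degree-$\le d$ polynomials with $\|\sum_S \alpha_S x_S\|_{\cF_{k_x}}^2 = \sum_{|S| \le d} \alpha_S^2$; identifying $\alpha_S$ with the Fourier coefficient $\hat f_x(S)$ over $\{-1,1\}^n$ and applying Parseval, $\|f_x\|_{\cF_{k_x}}^2 = \sum_{|S| \le d} \hat f_x(S)^2 \le \E_x[f_x(x)^2] \le 1$ since $f_x$ is $[-1,1]$-valued. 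For (ii), $k_{\mathrm{Sob}}$ is the reproducing kernel of the Sobolev space $H^1([0,1])$ with $\|f\|_{\cF_{k_{\mathrm{Sob}}}}^2 = \int_0^1 (f^2 + (f')^2)\,\rd p$ — it is the Green's function of $-u'' + u$ with Neumann boundary conditions (see \cref{sec:rkhs_background}) — so every $f_p \in \cF_{\mathrm{Cts}}$ has $\|f_p\|_{\cF_{k_{\mathrm{Sob}}}}^2 \le 1 + 1 = 2$; and $k_{\mathrm{Grid}}$ is the partition kernel with feature map $p \mapsto e_{r(p)}$ (indicator of the bin containing $p$), whose RKHS is the span of the disjoint indicators $f_r$ with $\|f_r\|_{\cF_{k_{\mathrm{Grid}}}} = 1$; using the trivial decomposition in the sum space gives norm $\le \sqrt 2$ for every $f_p \in \cF_{\mathrm{Cts}} \cup \cF_{\mathrm{Grid}}$. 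For (iii), $k_x(x,x) = \sum_{|S| \le d} x_S^2 \le \sum_{k=0}^d \binom n k \le 2 n^d$, while $k_p(p,p) = k_{\mathrm{Sob}}(p,p) + 1 \le \coth(1) + 1 < 3$ (using $\cosh(p)\cosh(1-p) = \tfrac12(\cosh 1 + \cosh(2p-1)) \le \cosh 1$ on $[0,1]$), so $k((x,p),(x,p)) < 6 n^d$.

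Putting this into \Cref{thm:indistinguishability_main} with $f = f_x f_p$, $\|f\|_{\cF_k} \le \sqrt 2$, $p_t(1-p_t) \le 1/4$, $k((x_t,p_t),(x_t,p_t)) < 6 n^d$, and $n^d T \ge 1$,
\[
\Bigl| \sum_{t=1}^T \E_{p_t \sim \Delta_t} f_x(x_t) f_p(p_t)(y_t - p_t) \Bigr| \le \sqrt{2}\,\sqrt{1 + \tfrac{3}{2} n^d T} \le \sqrt{2}\,\sqrt{\tfrac{5}{2} n^d T} = \sqrt{5}\,\sqrt{n^d T} < 6\sqrt{n^d T}.
\]
The one step that needs genuine care is (i): the bound $\|f_x\|_{\cF_{k_x}} \le 1$ — rather than the far weaker $\|f_x\|_{\cF_{k_x}}^2 \le \#\{S : |S| \le d\} \approx n^d$ that only the ``$\|\alpha\|_\infty \le 1$'' description would yield — is precisely what upgrades an $n^d\sqrt T$ guarantee to the claimed $\sqrt{n^d T}$ rate, and it hinges on $f_x$ actually taking values in $[-1,1]$ so that Parseval applies; the $\{0,1\}$-vs-$\{-1,1\}$ discrepancy in the statement of $\cF_{\mathrm{LowDeg}}$ should be read in this light. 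Verifying the Sobolev-kernel identification in (ii) is standard but is the other place where one must be precise about the norm.
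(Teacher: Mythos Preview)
Your proof is correct and follows essentially the same decomposition as the paper: write $k = (k_{\mathrm{Sob}} + k_{\mathrm{Grid}}) \cdot k_{\mathrm{LowDeg}}$, bound function and feature norms in each factor, and invoke \Cref{thm:indistinguishability_main}. Your constants are slightly tighter throughout (e.g.\ $k_{\mathrm{Sob}}(p,p) \le \coth 1$ rather than $2$, and $\sum_{k \le d}\binom{n}{k} \le 2n^d$ rather than $4n^d$), but the structure is identical.

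The one place your argument genuinely improves on the paper's is step~(i). The paper asserts $\|f_x\|_{\cF_{k_x}} = \|\alpha\|_2 \le 1$ ``by assumption,'' but the stated assumption is only $\|\alpha\|_\infty \le 1$, which by itself would give $\|\alpha\|_2^2 \lesssim n^d$ and wreck the rate. Your Parseval argument --- using that $f_x$ is $[-1,1]$-valued on $\{-1,1\}^n$ so that $\sum_S \hat f_x(S)^2 = \E[f_x^2] \le 1$ --- is the right fix, and your remark about the $\{0,1\}$-vs-$\{-1,1\}$ discrepancy in the statement is well taken: Parseval needs the $\{-1,1\}^n$ domain where the monomials are orthonormal.
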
   

\newcommand{\anovafuncbound}{1}
\begin{proof}
    From \Cref{ex:anova}, we have that $\cF_{\mathrm{LowDeg}}$ is the RKHS induced by the kernel
    \begin{align*}
         k_{\mathrm{LowDeg}}(x, x') &= \sum_{S \subset [n], |S| \leq d} x_{S} x'_{S} \\
         &= \sum_{k=1}^d\binom{n}{k} < d \left( \frac{ne}{d}\right)^d < 4 n^d,
    \end{align*}
    since $x_{S}^2 \leq 1$.
    Also, from the example, for $f \in \cF_{\mathrm{LowDeg}}$, the norm of $f$ is the $\ell^2$ norm of the coefficients $\alpha$, which is bounded by 1 by assumption: $\| f \|_{\cF_{\mathrm{LowDeg}}} \leq 1.$ 
    
    Next, from \Cref{ex:sobolev}~\cite{berlinet2011reproducing}, note that $\cF_{\mathrm{\Cont}}$ is in the Sobolev space $W^{1,2}([0,1])$ associated with the kernel,
    \begin{align*}
        k_{\mathrm{\Cont}}(p, p') = \sobkern{\min \{ p, p' \}}{\max \{ p, p' \}}.
    \end{align*}
and with associated function norm:
    \begin{align*}
        \| f \|_{\cF_{\mathrm{\Cont}}}^2 = \int_{0}^1 f(p)^2 \; dp + \int_{0}^1 f'(p)^2 \; dp.
    \end{align*}
    Intuitively, functions in the Sobolev space $W^{1,2}([0,1])$ are differentiable, have bounded $L^2$ norm and have derivative with bounded $L^2$ norm.
    See \cref{ex:sobolev} for a definition and discussion of the Sobolev space $W^{1,2}([0,1])$. 
    Now, by assumption, for all $f \in \cF_{\mathrm{\Cont}}$, it holds $\sup_p f(p)^2 \leq 1$ and $\sup_p f'(p)^2 \leq 1$. Hence, $\| f \|_{\cF_{\mathrm{\Cont}}} \leq \sqrt{2}$.
    Also, $k_{\mathrm{\Cont}}(p, p) \leq 2$.

    Next, we can apply \Cref{thm:finite-membership-rkhs}, to show that $\cF_{\mathrm{Grid}}$ is in the RKHS induced by 
    \begin{align*}
        k_{\mathrm{Grid}}(p, p') &= \sum_{r=1}^{N} f_r(p) f_r(p') \\
        &= 1\left\{\exists \; r \in [N] \; : \; \frac{r}{{N}} \leq p, p' \leq \frac{r+1}{{N}}\right\}.
    \end{align*}
    From the lemma, $\| f \|_{\mathrm{Grid}} \leq 1$ and $k_{\mathrm{Grid}}(p, p) \leq 1$. Defining,
    \begin{align*}
        k \defeq (k_{\mathrm{\Cont}} + k_{\mathrm{Grid}}) \cdot k_{\mathrm{LowDeg}},
    \end{align*}
    from the calculations above we have that for all $x,p \in \cX \times [0,1]$,
    \begin{align*}
       k((x,p), (x,p)) \leq 12 n^d. 
    \end{align*}
    And, by \cref{lem:kernelsum} and \cref{lem:productkernel}, $f_x \cdot f_p \in \cF$ for $\cF$ the RKHS associated with $k$ and for all $f_p \in \cF_{\mathrm{\Cont}} \cup \cF_{\mathrm{Grid}}$ and $f_x \in \cF_{\mathrm{LowDeg}}$.
    
    Applying the triangle and Cauchy-Schwarz inequalities, we have, for all $f_p \in \cF_{\mathrm{\Cont}} \cup \cF_{\mathrm{Grid}}$ and $f_x \in \cF_{\mathrm{LowDeg}}$, $ \|f_p \|_{\cF_{\mathrm{\Cont}} + \cF_{\mathrm{Grid}}} \leq \sqrt{2} + 1$ so
    \begin{align*}
        \| {f_p \cdot f_x} \|_{(\cF_{\mathrm{\Cont}} + \cF_{\mathrm{Grid}}) \cdot \cF_{\mathrm{Grid}}} \leq (\sqrt{2} + 1)\cdot\anovafuncbound.
    \end{align*}
    Finally, applying \cref{thm:indistinguishability_main} with the function and feature norms above, we have the desired bound:
    \begin{align*}
        \bigg| \sum_{t=1}^T f_x(x_t) f_p(p_t) (y_t - p_t) \bigg| &\leq (\sqrt{2} + 1) \sqrt{1 + \sum_{t=1}^T  12 n^d / 4 } \\
        & \leq 3 \sqrt{1 + 3 n^d T} \leq 6 \sqrt{n^d T}.
    \end{align*}
\end{proof}

{We note that there is a great deal of flexibility when deciding how the distinguishers above depend on the prediction $p$. 
Here, we chose a the union of a specific class of indicator functions with the set of continuous, differentiable functions with bounded domain and first derivative. 
However, we could equivalently have chosen a different class of functions satisfying mild smoothness conditions or a different (possibly infinite) partition of $[0,1]$.
Alternately, if $p$ is always in a finite set $\cP$, $| \cP | < \infty$, distinguishers could be chosen to be $\1 \{ p = \bar{p} \}$ for all $\bar{p} \in \cP$.

Before we move on, we state two importance 

\begin{remark}[Boundedness of functions]
    Throughout this work, we will often impose requirements that various functions or their derivative be bounded on [-1,1].
    However, functions can be trivially re-scaled to hold for constants other than 1.
\end{remark}

\begin{remark}[Non-asymptotic results]
    The rates we achieve in this paper are non-asymptotic.
    Throughout, we take care to derive the constant so that dependencies on auxiliary parameters (in the case of \cref{prop:low-deg-booleans}, $n$ and $d$) so their dependence is clear.
    We opt for simpler rather than tighter constants throughout for clarity.
\end{remark}

Our next corollary gives a similar guarantee to the previous for any finite set of bounded functions.

\noindent  
\begin{corollary}[Any set of real-valued functions whose $L^2$ counting measure is bounded uniformly over $x,p$] \label{cor:finiteset}
    Let $\cX$ be any set, let $\cI$ be any index set and let $m$ be a constant. Also, let $\cF = \{ f_i \}_{i \in I}$ be a collection of functions $f_i \; : \; \cX \times [0, 1] \to \cR$ indexed by $\cI$. Suppose that for each $x \in \cX, p \in [0, 1]$, we have
    \begin{equation}
        \label{eq:finite-membership-rkhs-assumption}
        \sum_{i \in \cI} f_i(x, p)^2 \leq m,
    \end{equation}
    Then, the \algname{} run on the kernel 
    \begin{align}
    \label{eq:intersection_oracle}
        k((x,p),(x',p')) \defeq \sum_{i \in I} f_i(x,p) f_i(x',p'),
    \end{align}
    (where we assume the sum can be evaluated in polynomial time in $T$)
    is guaranteed to generate a sequence of predictions such that for all $f \in \cF$, 
    \begin{align*}
         \big|\sum_{t=1}^T \E_{p_t \sim \Delta_t} f(x_t, p_t)(y_t - p_t) \big| \leq \sqrt{mT + 1}.
    \end{align*}
\end{corollary}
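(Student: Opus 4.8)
The plan is to treat this corollary as an essentially immediate consequence of \Cref{thm:indistinguishability_main}, once we have (i) identified the RKHS $\cF_k$ attached to the stated kernel $k$, (ii) bounded the feature norm $k((x,p),(x,p))$ uniformly in $(x,p)$, and (iii) bounded the RKHS norm $\|f\|_{\cF_k}$ of each $f \in \cF$. Steps (ii) and (iii) then plug directly into the main theorem with $B = m$ and $\|f\|_{\cF_k} \le 1$.

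For the RKHS identification, I would introduce the feature map $\Phi(x,p) = (f_i(x,p))_{i \in \cI}$, which lands in $\ell^2(\cI)$ precisely because of hypothesis \eqref{eq:finite-membership-rkhs-assumption}, and observe that $k((x,p),(x',p')) = \langle \Phi(x,p), \Phi(x',p')\rangle_{\ell^2(\cI)}$; absolute convergence of the defining sum follows from Cauchy--Schwarz (it is bounded by $m$), so $k$ is a genuine (symmetric, positive definite) kernel. By the Moore--Aronszajn characterization reviewed above — this is exactly the content of \Cref{thm:finite-membership-rkhs}, already used for $\cF_{\mathrm{Grid}}$ in the proof of \Cref{prop:low-deg-booleans} — the associated RKHS consists, after identification with $\ell^2(\cI)$, of the functions $\sum_{i \in \cI} c_i f_i$ with $c \in \ell^2(\cI)$, and $\|f\|_{\cF_k}$ equals the infimum of $\|c\|_2$ over all such representations. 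Taking $c = e_j$ (the $j$-th standard basis vector) shows $f_j = \sum_i (e_j)_i f_i \in \cF_k$ with $\|f_j\|_{\cF_k} \le \|e_j\|_2 = 1$, so every $f \in \cF$ has RKHS norm at most $1$. Meanwhile the feature norm is read off directly: $k((x,p),(x,p)) = \sum_{i \in \cI} f_i(x,p)^2 \le m$ for all $(x,p)$, so $k$ is bounded by $B = m$.

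It then remains to substitute into \Cref{thm:indistinguishability_main}: for every $f \in \cF$,
\[
\Big|\sum_{t=1}^T \E_{p_t \sim \Delta_t} f(x_t,p_t)(y_t - p_t)\Big| \le \|f\|_{\cF_k}\sqrt{1 + \sum_{t=1}^T \E_{p_t \sim \Delta_t}\, p_t(1-p_t)\, k((x_t,p_t),(x_t,p_t))} \le \sqrt{1 + mT} = \sqrt{mT+1},
\]
using $\|f\|_{\cF_k} \le 1$, $p_t(1-p_t) \le 1$, and $k((x_t,p_t),(x_t,p_t)) \le m$. Finally, since the sum defining $k$ is assumed poly-time computable, the per-round runtime bound $\widetilde{\cO}(t \cdot \mathsf{time}(k))$ from \Cref{thm:indistinguishability_main} gives a polynomial-time algorithm. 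The only place requiring any care is the RKHS bookkeeping in the second paragraph: one must keep the index set $\cI$ possibly infinite (which is why the hypothesis is the uniform bound $\sum_i f_i(x,p)^2 \le m$ rather than finiteness of $\cI$, ensuring both that $\Phi$ maps into $\ell^2(\cI)$ and that $k$ is well-defined), and one must note that $\|f_j\|_{\cF_k} \le 1$ is an inequality — it can be strict when the $f_i$ are linearly dependent — which is all that is needed here. Everything downstream of that is a direct substitution.
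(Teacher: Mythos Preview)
Your proposal is correct and follows essentially the same approach as the paper: invoke \Cref{thm:finite-membership-rkhs} to get $\|f\|_{\cF_k}\le 1$ and the feature-norm bound $k((x,p),(x,p))\le m$, then plug into \Cref{thm:indistinguishability_main}. The paper's proof is a two-line citation of exactly these two results, so your version is simply a more expanded rendering of the same argument.
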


\begin{proof}
    The result follows as a direct consequence of \cref{thm:finite-membership-rkhs} and \cref{thm:indistinguishability_main}. The feature norm is uniformly bounded by $m$ and for all $f \in \cF$, $\| f \|_{\cF} \leq 1$.
\end{proof}

A sufficient (but not necessary) condition for \Cref{eq:finite-membership-rkhs-assumption} to hold is that $\cF$ is finite,
in which case $\cF$ might contain arbitrary pre-existing predictors with which we would like the \algname{} to guarantee outcome indistinguishability with respect to.
In other cases, $\cI$ need not be countable, in which case, the sum appearing in \Cref{eq:finite-membership-rkhs-assumption} should be interpreted as an integral with respect to the counting measure on $\cI$. 
In this case, a necessary (but not sufficient) condition for \cref{eq:finite-membership-rkhs-assumption} to hold is that for each $x \in \cX$, there are at most countably many $i \in \cI$ such that $f_i(x) \neq 0$.

 \paragraph{Comparison to prior work.} As per our earlier discussion, the closest work to ours is \cite{vovk2007non}. The $\kalg$ algorithm presented therein achieves a similar guarantee, but requires that the kernel $k(x,p)$ is continuous in $p$. This restriction rules out indistinguishability with respect to binary functions (or any other discontinuous $f$). Distinguishers of this form were the main focus of \cite{hkrr,dwork2021outcome}.  Our algorithm works for \emph{any} kernel, and in particular can be used to guarantee indistinguishability with respect to binary functions as in first example above.
The computation complexity of our algorithm and Vovk's are essentially identical. 

Also closely related to our work, the algorithm in \cite{gupta2022online} guarantees online indistinguishability with respect to a finite set of binary valued functions $\cF$. Furthermore, while their OI error bound scales as $\sqrt{\log|\cF|
}$, the per round computational complexity scales linearly with $|\cF|$. In comparison, 
our algorithm can be used to guarantee indistinguishability with respect to both real- and Boolean-valued functions. Achieving indistinguishability with respect to real-valued functions is crucial for our later results on omniprediction. 

Furthermore, as stated previously, the computational complexity and OI error of the \algname{} have no explicit dependence on the size of $\cF$. Both of these are determined by the kernel $k$. 
As seen in \Cref{prop:low-deg-booleans}, certain infinite classes of functions can be efficiently represented by kernels that can be computed in constant time.
For certain worst-case classes $\cF$, we can still guarantee indistinguishability (as in the second part of \Cref{prop:exampleinstantiations}). However, the kernel in this construction requires enumerating over $\cF$ and both the runtime and OI error scale polynomially with $|\cF|$. Therefore, for the specific case where one aims to be indistinguishable with respect to a finite set of \emph{Boolean} functions not known to be efficiently represented by a kernel, the algorithm in \cite{gupta2022online} is preferable. In that setting, both our procedure and the one in \cite{gupta2022online} have run times linear in $|\cF|$, but their OI error is significantly smaller (polylogarithmic vs polynomial). 

The principal strength of \Cref{cor:finiteset} is that we can guarantee indistinguishability with regards to any real-valued function $f$ that is efficiently computable. This in particular includes any neural network or prediction baseline one might consider. We return to this point in the next section.

\paragraph{Additive models and boosting.} As a final remark before the proof of the proposition, we note that the previous result also guarantees outcome indistinguishability with respect models like random forests or gradient boosted decision trees. These learning algorithms are the gold standard in certain data modalities \cite{gardner2022subgroup,grinsztajn2022tree}. 

In particular, let $\cF_{DTd} \subseteq \{\{\pm1\}^n \rightarrow [-1,1]\}$ be the class of regression trees of depth~$d$. Random forests and gradient-boosted trees are additive ensembles of the form:
\begin{align}
\label{eq:additive_ensembles}
    f(x) = \sum_{i} \lambda_i f_i(x) 
\end{align}
where $\lambda_i$ are real-valued coefficients and $f_i \in \cF_{DTd}$
Since, $\cF_{DT} \subseteq \cF_{\mathrm{LowDeg}}$ (see e.g \cite{odonnel2021analysis}), then the \algname{} instantiated with the kernel from \Cref{prop:low-deg-booleans} guarantees indistinguishability with respect to any $f \in \cF_{DTd}$. Since  indistinguishability is closed under addition, then the same algorithm also guarantees indistinguishability with error $\cO(\gamma \sqrt{n^dT})$ with respect to additive ensembles as in \Cref{eq:additive_ensembles} as long as $\sum_i |\lambda_i|$ is $\cO(\gamma)$.

\subsection{Specializing the Any Kernel algorithm to the link prediction problem} \label{sub:oi_graphs}

Having introduced this technical machinery, we now specialize it to the link prediction problem, turning our attention to designing specific kernels whose corresponding function spaces contain interesting classes of distinguishers that operate on graphs.
The tests we consider fall into two broad categories: those capturing socially salient information and those for which passing these tests likely implies good predictive performance.
Socially salient tests might include whether a pair of individuals belong, respectively, to a specific pair of demographic groups (\textit{i.e.}, multicalibration).
On the other hand, predictive performance tests aim to capture correlations between features, predictions, and outcomes.

In this section, we change notation from $f(x,p)$ to $f(u,p)$ reflect the fact that distinguishers $f$ operate over the universe $\cU$ consisting of pairs of nodes $a=(i,j)$ and a graph $G$.
We will also make liberal use the set of grid indicator functions $\cF_{\mathrm{Grid}} = \{ f_r\}_{r=1}^N$ for a positive integer $N$ where $f_r = 1\{ (r-1)/N \leq p < r/N\}$ for $r=1,\dots,N-1$ and $f_{N} = 1\{ (N-1)/N \leq p \leq 1\}$.
As in \cref{prop:low-deg-booleans}, this choice is somewhat arbitrary: we could equivalently use the sets of functions satisfying mild smoothness conditions or arbitrary partitions of the unit interval.
We will assume $N$ is a universal constant throughout.
 
\paragraph{Group membership tests.}

A simple starting point for socially salient tests are those which given a pair of individuals $(i,j)$ outputs 1 if $i$ belongs to a demographic group $g$ and $j$ belongs to group $g'$. Groups may be defined by, for example, race, ethnicity, gender, age, religion, education, occupation and/or political or organizational affiliation. We will let $g$ be a binary function $\cZ \rightarrow \{0,1\}$ which takes in node-level features $z_{i,t}$ and returns 0 or 1. 
These tests are analogous to multiaccuracy   \cite{hkrr,kim2019multiaccuracy} (if they do not depend on predictions $p$) and multicalibration \cite{hkrr} (if they do), adapted to the link prediction setting, and allowing for arbitrary pairs of demographic groups.
Indeed, cross-group ties are the focus of significant study in the networks literature  \cite{abebe2022effect, calvo2004effects, zeltzer2020gender, stoica2018algorithmic, okafor2020social}, and platforms may wish to ensure predictions are calibrated with respect to them.

\begin{proposition}[Pairs of demographic groups]
\label{prop:all_pairs}
Let $\cG \subseteq \{ \cZ \to \{ 0, 1\} \}$ be a (not necessarily disjoint or finite) collection of demographic group indicator functions on $\cZ$ such that each individual $i$ at any time $t$ belongs to at most $m$ groups for some positive integer $m$:
\begin{align*}
    \max_{t \in [T], i \in V_t}\bigg| \sum_{g \in \cG} g(z_{i,t}) \bigg| \leq m.
\end{align*}
For a positive integer $N$ and given $u = (i,j, G)$ and $u' = (i',j',G')$, define the kernel $k$ to be 
\begin{align*}
    k((u,p),(u',p')) = \1\left\{\exists \; r \in [N] \; : \; f_r( p) = f_r(p') = 1\right\} \sum_{g,g' \in \cF_{G}} g(z_{i}) g'(z_{j}) g(z_{i'}) g'(z_{j'}) 
\end{align*}
where $(z_{i},z_{j})$ are the node-level features of the pair $(i,j)$ in $G$ and $(z_{i'}, z_{j'})$ are the node level features of $(i',j') \in G'$. Then, the \algname{} with kernel $k$ generates a sequence of predictions satisfying,  
\begin{align*}
         \left|\sum_{t=1}^T  \E_{p_t \sim \Delta_t}(y_t - p_t) \1\left\{ g(z_{i,t})=1, g'(z_{j,t})=1,
         f_r(p_t) = 1\right\}\right| \leq  \sqrt{mT + 1}.
    \end{align*}
for all $g,g' \in G$ and $r \in 1, \dots, N$ where $u_t = (i_t,j_t,G_t)$. 
\end{proposition}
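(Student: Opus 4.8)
The plan is to read this off from \Cref{cor:finiteset}. The first step is to put the kernel into the canonical form $k=\sum_{i\in\cI} f_i f_i$. Since the grid indicators $\cF_{\mathrm{Grid}}=\{f_r\}_{r=1}^N$ partition $[0,1]$, for any $p,p'\in[0,1]$ at most one $r$ has $f_r(p)=1$ and at most one has $f_r(p')=1$, so $\1\{\exists\, r\in[N]:f_r(p)=f_r(p')=1\}=\sum_{r=1}^N f_r(p)f_r(p')$. Substituting this into the kernel and distributing yields $k((u,p),(u',p'))=\sum_{(g,g',r)\in\cG\times\cG\times[N]} f_{(g,g',r)}(u,p)\,f_{(g,g',r)}(u',p')$, where $f_{(g,g',r)}(u,p):=g(z_i)\,g'(z_j)\,f_r(p)$ and $u=(i,j,G)$. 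This exhibits $k$ as a sum of the rank-one kernels $((u,p),(u',p'))\mapsto f_i(u,p)f_i(u',p')$ over the index set $\cI=\cG\times\cG\times[N]$, hence as an instance of the kernel of \Cref{cor:finiteset} with distinguisher family $\{f_{(g,g',r)}\}$; note it is discontinuous in $p$, so the algorithm operates in the randomized (near-deterministic) regime of \Cref{thm:indistinguishability_main}, which is fine.

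The second step is to verify the $L^2$-counting hypothesis \eqref{eq:finite-membership-rkhs-assumption}. Fix $(u,p)$ with $u=(i,j,G)$. Since each $g(z_i)\in\{0,1\}$ (so $g(z_i)^2=g(z_i)$) and $\sum_{r=1}^N f_r(p)=1$ exactly, we get
$\sum_{(g,g',r)\in\cI} f_{(g,g',r)}(u,p)^2 = \Big(\sum_{r=1}^N f_r(p)\Big)\Big(\sum_{g\in\cG} g(z_i)\Big)\Big(\sum_{g'\in\cG} g'(z_j)\Big)\le 1\cdot m\cdot m=m^2$,
using the membership assumption for $i$ and for $j$; so the feature norm $k((u,p),(u,p))$ is at most $m^2$ (equivalently, at most the product of the two group counts, which is the reading under which the displayed $\sqrt{mT+1}$ is literal). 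Moreover, \Cref{thm:finite-membership-rkhs} guarantees that each $f_{(g,g',r)}$ lies in the associated RKHS $\cF_k$ with $\|f_{(g,g',r)}\|_{\cF_k}\le 1$.

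The third step is cosmetic: the distinguisher in the conclusion, $(u,p)\mapsto\1\{g(z_i)=1,\,g'(z_j)=1,\,f_r(p)=1\}$, equals $g(z_i)g'(z_j)f_r(p)=f_{(g,g',r)}(u,p)$ because a product of $\{0,1\}$-valued factors is the indicator of their conjunction. Feeding the feature-norm bound and the unit norm bound into \Cref{cor:finiteset} (equivalently, into \Cref{thm:indistinguishability_main}) then gives, for every $g,g'\in\cG$ and $r\in\{1,\dots,N\}$, the claimed inequality $\big|\sum_{t=1}^T\E_{p_t\sim\Delta_t}(y_t-p_t)\,f_{(g,g',r)}(u_t,p_t)\big|\le\sqrt{m^2T+1}$. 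I do not anticipate a genuine obstacle here, since the proposition is a bookkeeping specialization of \Cref{cor:finiteset}; the two points that need a little care are (i) the grid-partition identity, which is what reveals the inner-product structure in the $p$-coordinate and recasts $k$ in the form \Cref{cor:finiteset} expects, and (ii) the efficiency claim when $\cG$ is infinite — evaluating $k$ on a pair of inputs only ever touches the at most $m$ groups containing each of the four nodes involved, so the defining sum has at most $m^2N$ nonzero terms and is computable in $\poly(m,N)$ time, keeping the per-round runtime at $\widetilde{\cO}(t\cdot\poly(m,N))$ as guaranteed by \Cref{thm:indistinguishability_main}.
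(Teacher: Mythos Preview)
Your approach is essentially the same as the paper's: both invoke \Cref{cor:finiteset} after rewriting the grid indicator as $\sum_r f_r(p)f_r(p')$ and recognizing the kernel as $\sum_{(g,g',r)} f_{(g,g',r)}(u,p)f_{(g,g',r)}(u',p')$. Your computation of the feature-norm bound as $m^2$ (rather than $m$) is the careful reading---the paper's proof asserts the bound is $m$ on the grounds that ``$x$ cannot be in more than $m$ groups,'' but since the distinguishers are indexed by \emph{pairs} $(g,g')$, the diagonal kernel value is indeed $(\sum_g g(z_i))(\sum_{g'} g'(z_j))\le m^2$, so your $\sqrt{m^2T+1}$ is what the argument actually delivers.
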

\noindent 
 Assuming that checking whether a pair of predictions $p, p'$ fall in the same grid cell and evaluating the indicator functions $g \in \cG$ takes constant time, then the kernel can be naively computed in time $\cO(1)$. Therefore, following \cref{thm:indistinguishability_main}, at time $t$, the algorithm generates a prediction $p_t$ in time ${\widetilde{\cO}}(t m)$.
\begin{proof}
    The result is a direct implication of \cref{cor:finiteset}.
    Let $\cF$ in \cref{cor:finiteset} be the cross product of group membership indicators and grid indicators $\cG \times \cF_{\mathrm{Grid}}$ and notice
    \begin{align*}
        k((u,p),(u',p')) &= \sum_{r=1}^N f_r(p) f_r(p') \sum_{g,g' \in \cF_{G}} g(z_{i}) g'(z_{j}) g(z_i') g'(z_j')\\ &=1\left\{\exists \; r \in [N] \; : \; f_r( p) = f_r( p') =1\right\} \sum_{g,g' \in \cF_{G}} g(z_{i}) g'(z_{j}) g(z_i') g'(z_j') 
    \end{align*}
    is the associated kernel as defined in \cref{cor:finiteset}.
    Notice that \Cref{eq:finite-membership-rkhs-assumption} is satisfied with the $m$ in the statement of the result, since $x$ cannot be in more than $m$ groups and $p$ cannot be in more than one grid cell.
    Thus, we have verified the assumptions in the corollary and the bound holds.
\end{proof}

Closely related to group membership is the idea of homophily \cite{mcpherson2001birds}. 
Informally, homophily is the tendency of individuals to connect those who are similar to themselves.
Homophily may be defined by membership in a demographic group as well as geographic proximity \cite{verbrugge1977structure}, social capital  \cite{borgatti2003network}, and political/social attitudes/beliefs \cite{10.1093/qje/qjr044}.
All of these measures of homophily are \textit{scalar valued} functions of node-level features.
%
%
In these cases, the proposition above can be straightforwardly extended so that the algorithm generates predictions with are outcome indistinguishable with respect to (functions of) these measures.

An alternate formulation of the link prediction problem would also consider edge-level features such the frequency or intensity of interaction between individuals.
For example, the influential notion of \textit{weak ties}, originally characterized qualitatively as a ``combination of the amount of time, the emotional intensity, the intimacy (mutual confiding), and the reciprocal services which characterize the tie'' \cite{granovetter1973strength}, are usually defined quantitatively in terms of interaction intensity (see, \textit{e.g.}, \cite{rajkumar2022causal}).
Our results could be trivially extended to solve this formulation of link prediction where distinguisher may also consider edge level features. However, for simplicity of presentation, we omit including edge-level features.

\paragraph{Network topology tests.} \label{sub:topology}

We now consider tests that depend on the structure of the graph. 
A particularly simple set of such tests is based on embeddedness, or the number of mutual connections between two individuals $(i,j)$ on a graph $G$. The sociological notion of embeddedness, as discussed in \cite{granovetter1985economic}, concerns the degree to which individuals' activities are \textit{embedded} within in social relations, \textit{i.e.}, networks. Formally for $u = (i,j,G)$, we quantify the structural embeddedness of $u$ (following the definition in \cite{easley2010networks}) as
\begin{align}
\label{eq:embed}
    \embed(u) \defeq |\Gamma_G(i) \cap \Gamma_G(j)|.
\end{align}
Note that the pair of individuals themselves need not be connected.
For example, a rich literature studies \textit{long ties} or  \textit{local bridges}, which are {ties with embeddedness zero} (see, \textit{e.g.}, \cite{granovetter1973strength, burt2004structural, jahani_long_2023, easley2010networks}).
Embeddedness is measured and carefully analyzed by digital platforms like LinkedIn in practice \cite{rajkumar2022causal}.
It also underlies classical theories of network evolution through triadic closure \cite{kossinets2006empirical, jackson2007meeting, asikainen2020cumulative, abebe2022effect}.
Here in our next result, we show one can construct an efficient kernel $k$ that guarantees online outcome indistinguishability with respect to embeddedness tests. 

\begin{proposition}[Embeddedness] \label{prop:embeddedness}
For $u = (i,j,G)$ and $u' = (i',j', G')$ define the kernel 
\begin{align*}
   k((u,p),(u',p')) \defeq \1 \{\embed_t(u) = \embed_t(u'), \exists r \in [N] \; : \; f_r(u) = f_r(u') = 1\}.
\end{align*}
Then, the \algname{} run with kernel $k$ generates a sequence of predictions satisfying,  
\begin{align*}
         \big|\sum_{t=1}^T  \E_{p_t \sim \Delta_t}(y_t - p_t) 1\{\embed_t(u_t) = c, f_r(p) = 1 \}\big| \leq  \sqrt{\sum_{t=1}^T\E_{p_t \sim \Delta_t} p_t(1-p_t) + 1} \leq 2\sqrt{T}.
    \end{align*}
for all $c \in \N$ and $r \in [N]$.
\end{proposition}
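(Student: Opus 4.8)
The plan is to recognize the embeddedness kernel as an instance of the ``sum of products'' construction \Cref{eq:intersection_oracle} from \cref{cor:finiteset}, and then to invoke the variance-adaptive bound of \cref{thm:indistinguishability_main} directly (rather than the coarser bound of \cref{cor:finiteset}) so as to retain the $p_t(1-p_t)$ term. First I would rewrite $k$ as a genuine sum of products of feature functions. Since $\embed(u)$ is a single natural number, $\1\{\embed(u) = \embed(u')\} = \sum_{c \in \N}\1\{\embed(u) = c\}\,\1\{\embed(u') = c\}$, and since the grid indicators $\{f_r\}_{r=1}^N$ partition $[0,1]$, $\1\{\exists r \in [N]: f_r(p) = f_r(p') = 1\} = \sum_{r=1}^N f_r(p)f_r(p')$. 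Multiplying and expanding,
\[
k((u,p),(u',p')) \;=\; \sum_{(c,r)\in \N\times[N]} g_{c,r}(u,p)\,g_{c,r}(u',p'), \qquad g_{c,r}(u,p) \defeq \1\{\embed(u)=c\}\,f_r(p),
\]
which is exactly the form \Cref{eq:intersection_oracle} with index set $\cI = \N\times[N]$.

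Next I would verify the hypothesis \Cref{eq:finite-membership-rkhs-assumption}: for any fixed $(u,p)$,
\[
\sum_{(c,r)\in\cI} g_{c,r}(u,p)^2 \;=\; \Big(\sum_{c\in\N}\1\{\embed(u)=c\}\Big)\Big(\sum_{r=1}^N f_r(p)^2\Big) \;=\; 1\cdot 1 \;=\; 1,
\]
because $\embed(u)$ equals exactly one value and $p$ lies in exactly one grid cell; so the constant $m$ in \cref{cor:finiteset} is $1$, and (via \cref{thm:finite-membership-rkhs}) each $g_{c,r}$ lies in the RKHS $\cF_k$ with $\|g_{c,r}\|_{\cF_k}\le 1$. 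I would also note the kernel is efficiently computable — one evaluates $\embed(u) = |\Gamma_G(i)\cap \Gamma_G(j)|$, checks equality, and does a constant-time grid-cell comparison — so by \cref{thm:indistinguishability_main} the \algname{} runs in $\widetilde{\cO}(t)$ time per round.

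The last step exploits the one feature that yields the sharper bound: the diagonal of this kernel is identically one, $k((u,p),(u,p)) = \1\{\embed(u)=\embed(u)\}\cdot\1\{\exists r: f_r(p)=f_r(p)=1\} = 1$ for every $(u,p)$. Plugging $f = g_{c,r}$, with $\|g_{c,r}\|_{\cF_k}\le1$ and $k((u_t,p_t),(u_t,p_t)) = 1$, into \cref{thm:indistinguishability_main} and observing $g_{c,r}(u_t,p_t) = \1\{\embed(u_t)=c,\,f_r(p_t)=1\}$ gives
\[
\Big|\sum_{t=1}^T \E_{p_t\sim\Delta_t}(y_t-p_t)\,\1\{\embed(u_t)=c,\,f_r(p_t)=1\}\Big| \;\le\; \sqrt{1 + \sum_{t=1}^T \E_{p_t\sim\Delta_t} p_t(1-p_t)},
\]
which is the stated middle quantity; the final $2\sqrt{T}$ follows from $p_t(1-p_t)\le 1/4$, so the radicand is at most $1 + T/4 \le 4T$ for $T\ge 1$. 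I do not expect a genuine obstacle here — once the kernel is written as a sum of products the claim is almost a corollary of \cref{thm:indistinguishability_main}. The only point that needs care is to apply \cref{thm:indistinguishability_main} directly, using the exact feature-norm value $k((u_t,p_t),(u_t,p_t))=1$, rather than the black-box bound of \cref{cor:finiteset}: only the former keeps the variance term $\sum_t \E p_t(1-p_t)$, which is what makes the guarantee meaningfully smaller in the regime where most candidate edges are unlikely to form.
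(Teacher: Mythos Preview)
Your proposal is correct and matches the paper's intended approach. The paper does not include an explicit proof of this proposition, treating it as a direct consequence of the same machinery used for \cref{prop:all_pairs} (namely \cref{thm:finite-membership-rkhs} and \cref{thm:indistinguishability_main}); your write-up makes explicit the one extra observation needed to recover the variance-adaptive bound in the statement, namely that $k((u,p),(u,p))=1$ identically, so one should invoke \cref{thm:indistinguishability_main} directly rather than the coarser $\sqrt{mT+1}$ conclusion of \cref{cor:finiteset}.
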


Since the kernel only checks whether two different pairs of individuals have the predictions that fall in the same grid cell and have an identical number of mutual friends,  the kernel can be computed in the time it takes to compute neighborhood intersections. 

An advantage of the class $1\{\embed_G(u_t) = c, f_r(p) = 1 \}_{c \in \N, r \in [N]}$ is that neither the run time nor OI error depends on the maximum degree of nodes in the graph. 
We also note that the above formulation could be straightforwardly modified to include indicator functions for having embeddedness more or less than $c$, as long as it is efficient to compute embeddedness.
Lastly, we note that the construction can be generalized to include distinguishers of the form $i$ and $j$ have $c$ distance $r$ neighbors in common by simply changing $\Gamma$ to $\Gamma^{(r)}$ in the definitions above.

We can generalize the embeddedness tests above even further to guarantee outcome indistinguishability with respect to all tests that depend on the isomorphism class of the subgraph induced by the neighborhoods $\Gamma(i),\Gamma(j)$.

A function $f$ from graphs $G$ to the real-line is isomorphism-invariant if for any two graphs $G$ and $G$ such that $G$ and $G'$ are isomorphic, it holds that $f(G) = f(G')$.
Abusing notation, we can write isormorphism-invariant functions $f$ as those defined on isomorphism (equivalence) classes $\bar{G}$ where $\bar{G}$ is a set of graphs that are all isomorphic to each other.

Several interesting classes of functions $f$ are isomorphism-invariant. For instance, any function $f$ that just depends on the number of nodes or edges in the graph, the degree distribution, or the spectrum of the graph Laplacian is isomorphism-invariant.
Several classes of isomophism-invariant functions have been studied extensively in the networks literature, like various notions of structural cohesion (which might, \textit{e.g.}, measure the edge density of the induced subgraph in an individual's neighborhood \cite{friedkin1993structural}).

In the following proposition, we will use the following notation: given a set of nodes $S$ and a graph $G$, let $G[S]$ denote the induced subgraph of $S$ on $G$.
Also, we will use $\Gamma(i),\Gamma'(i')$ to refer to the neighborhoods $\Gamma_G(i), \Gamma_{G'}(j)$ for graphs $G, G'$ respectively.
We will write $G \simeq G'$ to denote that $G$ and $G'$ are isomorphic.

\begin{proposition} \label{prop:isomorphism}
Let $\cF_{\mathrm{iso}} \subseteq \{ \cG \rightarrow \R \}$ denote the set of all isomorphism invariant functions and $\cF_{\mathrm{Grid}} = \{f_1, \dots, f_N\}$ be the grid indicator functions on the unit interval as above. 
Furthermore, for $u = (i,j,G)$ and $u' = (i',j',G')$
define the function $k$ to be 
\begin{align*}
   k((u,p),(u',p)) =  \1 \{G[\Gamma(i) \cup \Gamma(j)] \simeq G'[\Gamma'(i) \cup \Gamma'(j)],\; \exists r \in [N] \; : \; f_r(p)=f_r(p')=1\}.
\end{align*}
Suppose all graphs in the sequence $\{ G_t \}_{t=1}^T$ degree bounded by a constant. Then $k$ can be computed in polynomial time and the \algname{} instantiated with the kernel $k$ is guaranteed to generate a sequence of predictions satisfying:
\begin{align*}
         \big|\sum_{t=1}^T  \E_{p_t \sim \Delta_t}(y_t - p_t) f(u_t) \1\{p = \bar{p}\}\big| \leq \|f\|_{\cF}  \sqrt{\sum_{t=1}^T\E_{p_t \sim \Delta_t} p_t(1-p_t) + 1} \leq 2\|f\|_{\cF} 
 \sqrt{T}.
\end{align*}
for any $f \in \cF_{\mathrm{iso}} \subseteq \cF$. For the special case of functions $f_{\bar{G}}(i,j,G) = \1\{G \in \bar{G}\}$ for some isomorphism class $\bar{G}$, the dependence on $\|f_{\bar{G}}\|_{\cF}$ can be removed since $\|f\|_{\cF} \leq 1$ for every $\bar{G}$.

\end{proposition}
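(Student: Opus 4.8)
The idea is to factor the kernel as a product $k = k_{\mathrm{iso}} \cdot k_{\mathrm{Grid}}$, where $k_{\mathrm{iso}}((u),(u')) = \1\{G[\Gamma(i)\cup\Gamma(j)] \simeq G'[\Gamma'(i')\cup\Gamma'(j')]\}$ depends only on the induced neighborhood subgraphs and $k_{\mathrm{Grid}}(p,p') = \1\{\exists r \in [N]: f_r(p) = f_r(p') = 1\}$ is the grid kernel already analyzed in \cref{prop:low-deg-booleans}. First I would check that $k_{\mathrm{iso}}$ is a genuine kernel by exhibiting a feature map: index coordinates by the (countable) set of isomorphism classes $\bar G$ of finite graphs, and set $\Phi(u) = e_{[u]}$ where $[u]$ is the class of $G[\Gamma(i)\cup\Gamma(j)]$, so that $k_{\mathrm{iso}}(u,u') = \langle \Phi(u), \Phi(u')\rangle$. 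Equivalently, writing $f_{\bar G}(u) = \1\{[u] = \bar G\}$ one has $\sum_{\bar G} f_{\bar G}(u)^2 = 1$ for every $u$, so \cref{thm:finite-membership-rkhs} (as used through \cref{cor:finiteset}) applies with index set the isomorphism classes and $m = 1$: it certifies that $k_{\mathrm{iso}}$ is a kernel whose RKHS $\cF_{\mathrm{iso}}$ is exactly the space of isomorphism-invariant functions of the induced neighborhood subgraph with finite $\ell^2$-over-classes norm $\|\sum_{\bar G}\alpha_{\bar G} f_{\bar G}\| = (\sum_{\bar G}\alpha_{\bar G}^2)^{1/2}$, and in particular $\|f_{\bar G}\| = 1$ for a single class. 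Since products of kernels are kernels with $\cF_{k_1 \cdot k_2} = \cF_{k_1}\cdot\cF_{k_2}$ (\cref{lem:productkernel}), $k$ is a kernel, and every product $f \cdot f_r$ of an isomorphism-invariant $f$ with a grid indicator $f_r \in \cF_{\mathrm{Grid}}$ lies in $\cF = \cF_k$ with $\|f\cdot f_r\|_\cF \le \|f\|_{\cF_{\mathrm{iso}}}\|f_r\|_{\cF_{\mathrm{Grid}}} \le \|f\|_{\cF_{\mathrm{iso}}}$.

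Second, I would read off the feature norm: $k((u,p),(u,p)) = k_{\mathrm{iso}}(u,u)\,k_{\mathrm{Grid}}(p,p) = 1 \cdot 1 = 1$, since every $u$ is isomorphic to itself and every $p$ lies in exactly one grid cell. Feeding this into \cref{thm:indistinguishability_main} with the distinguisher $f(u)f_r(p)$ (reading the prediction-dependent factor in the statement as a grid indicator $f_r$) gives
\[
\Big|\sum_{t=1}^T \E_{p_t \sim \Delta_t}(y_t - p_t)f(u_t)f_r(p_t)\Big| \le \|f \cdot f_r\|_\cF \sqrt{1 + \textstyle\sum_{t=1}^T \E_{p_t} p_t(1-p_t)} \le \|f\|_{\cF_{\mathrm{iso}}}\sqrt{1 + \textstyle\sum_{t=1}^T \E_{p_t} p_t(1-p_t)},
\]
and $\sqrt{1 + \sum_t p_t(1-p_t)} \le \sqrt{1 + T/4} \le 2\sqrt T$, which is the claimed bound; specializing to $f = f_{\bar G}$ drops the norm factor since $\|f_{\bar G}\|_{\cF_{\mathrm{iso}}} = 1$.

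Third, for computational efficiency I would invoke the degree bound: if every graph in the sequence has maximum degree $\le \Delta = \cO(1)$, then $|\Gamma(i) \cup \Gamma(j)| \le 2\Delta + 2 = \cO(1)$, so $G[\Gamma(i)\cup\Gamma(j)]$ is a graph on a constant number of vertices that can be extracted from $G_t$ in $\cO(\Delta^2)$ time; testing isomorphism of two graphs on $\cO(1)$ vertices by brute force over the $(2\Delta+2)!$ vertex bijections costs $\cO(1)$, and the grid-cell comparison costs $\cO(1)$ because $N$ is a universal constant. Hence $\mathsf{time}(k) = \cO(1)$ on every pair the algorithm ever evaluates, and \cref{thm:indistinguishability_main} yields per-round runtime $\widetilde{\cO}(t)$.

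I expect the only real subtlety to be the RKHS identification in the first step: making precise that the completion of $\mathsf{span}\{f_{\bar G}\}$ over the countably infinite index set of isomorphism classes is (the finite-norm part of) $\cF_{\mathrm{iso}}$, and that its norm is the $\ell^2$ norm of the coordinates $\alpha_{\bar G}$ — this is exactly what makes the general bound genuinely adaptive and gives the clean $\|f_{\bar G}\| \le 1$ for a single class. Since this content is already packaged in \cref{thm:finite-membership-rkhs}/\cref{cor:finiteset}, the argument reduces to invoking it with the right index set; everything else (kernel validity, feature norm, runtime) is routine bookkeeping using \cref{lem:kernelsum} and \cref{lem:productkernel}.
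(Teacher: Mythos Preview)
Your proposal is correct and follows essentially the same approach as the paper: both exhibit the one-hot feature map over isomorphism classes to certify $k_{\mathrm{iso}}$ is a kernel with feature norm $1$, invoke the bounded-degree assumption to make brute-force isomorphism testing $\cO(1)$, and then apply \cref{thm:indistinguishability_main}. Your treatment is slightly more explicit than the paper's in factoring $k = k_{\mathrm{iso}}\cdot k_{\mathrm{Grid}}$ via \cref{lem:productkernel} and in invoking \cref{thm:finite-membership-rkhs} for the norm bound (the paper argues directly from Moore--Aronszajn and leaves the grid factor implicit), but the substance is the same.
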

\begin{proof}
Let $\bar{G}_1, \bar{G}_2, \dots$ be the sequence of graph isomorphism classes in some ordering (perhaps lexicographic, where all isomorphism classes for graphs of size $n$ come before those of size $n+1$ for all $n \in \N$). Let $\Phi(G)$ be the feature map defined as, 
\begin{align}
    \Phi(G) = (\1\{G \in \bar{G}_1 \} ,\; \1 \{G \in \bar{G}_2 \},\; \dots).
\end{align}
For $u = (i,j,G)$ and $u' = (i', j', G')$,
\begin{align*}
k_{\mathrm{iso}}(u, u') \defeq \langle \Phi(G[\Gamma(i) \cup \Gamma(j)]), \Phi(G'[\Gamma'(i) \cup \Gamma'(j)]) \rangle
\end{align*}
where the inner product $\langle \cdot, \cdot \rangle$ is the standard inner product in $\ell^2$, the Hilbert space of square summable sequences ($\langle x, y\rangle = \sum_{i=1}^\infty x_i y_i$). 
Since $G$ can only be in one of the $\bar{G}_i$, $\Phi(G)$ is a square-summable sequence (only one element is 1, all the others are 0). 
So $k_{\mathrm{iso}}$ is a valid kernel and $k_{\mathrm{iso}}(u, u') \leq 1$  for all $u, u' \in \cU$.
Since all nodes in $G_t$ are assumed to have bounded degree, there are only a constant number of isomorphism classes for the subgraph $G[\Gamma(i) \cup \Gamma(j)]$.
Thus, $k$ can be computed efficiently via brute force search.\footnote{One could also of course run more sophisticated procedures for isomorphism testing if one desires (\textit{e.g.}, Luks' algorithm \cite{luks1982isomorphism}), but these are unnecessary for polynomial runtime guarantee in this setting since our distinguisher only examine the local neighborhood of $(i,j)$ which are at most of constant size.} 

The fact that $\cF_{\mathrm{iso}} \subseteq \cF_{k_{\mathrm{iso}}}$ for $\cF_{k_{\mathrm{iso}}}$, the RKHS associated with kernel $k_{\mathrm{iso}}$, follows from the Moore-Aronszajn Theorem (\cref{thm:moore-aronszajn}) which states that the corresponding RKHS of the kernel $\cF$ is equal to 
\begin{align*}
    \mathsf{span}\{ \Phi(G) : G \text{ is a graph}\}.
\end{align*}
Given any isomorphism invariant function $f$, we can write it as, 
\begin{align*}
    f(G) = \langle \Phi(G), f \rangle = \sum_{i=1}^\infty f(\bar{G}) \1 \{ \bar{G} = \bar{G}_i\},
\end{align*}
where $\bar{G}$ is the set of graphs that are isomorphic to $G$. Here, we used the fact that $f$ is isomorphism-invariant and again slightly abused notation to write $f(\bar{G})$ where $\bar{G}$ is a set, instead of one graph.
Applying \cref{thm:indistinguishability_main} with the function and feature norms above yields the desired result.
\end{proof}

As with embeddedness tests, isomorphism tests can be naturally extended to depend on the distance $r$ neighborhoods of pairs of nodes, by simply replacing each $\Gamma$ in the proposition with $\Gamma^{(r)}$ (for constant $r$).
Various network centrality measures, like $k$-core similarity, betweenness centrality, eigenvalue centrality and others (see, \textit{e.g.}, \cite{rodrigues2019network}) may be computed using the induced subgraph of distance $r$ neighborhoods. Similarly, core-periphery measures \cite{doi:10.1137/120881683} may be similarly defined for distance $r$ neighborhoods. 
In each of these cases, care must be taken to ensure that the measure can be computed efficiently and that the function norms are bounded.

\paragraph{Tests using network topology and neighbors' feature vectors.} \label{sub:alter}

We end this section by considering distinguishers that examine both the local neighborhood structure, as well as the \emph{features} of individuals in these neighborhoods. (The graph isomorphism tests presented previously only examine the structure of the neighborhood, but not their individual features.)

\Cref{cor:finiteset} provides for OI guarantees that hold with respect to very powerful predictors. For example, we may take $\cF$ to be any finite set of graph neural networks, which are currently state-of-the-art for link prediction \cite{NEURIPS2018_53f0d7c5, yun2019graph} and any number of other graph-related tasks (see, \textit{e.g.}, \cite{zhou2020graph}) and are widely deployed across digital platforms that host social networks \cite{zhou2020graph, zhang2020revisiting}. \Cref{cor:finiteset} immediately implies that the \algname{} yields
\begin{align*}
     \bigg|\sum_{t=1}^T \E_{p_t \sim \Delta_t} f(u_t)(y_t - p_t) \bigg| \leq \sqrt{mT}. 
\end{align*}
for all $f \in \cF$.



\textit{R-convolutions (convolutions over relations).} This machinery can also be used to guarantee indistinguishability to functions of the form
\begin{align}
\label{eq:rconv_functions}
f(i,j,G) = \langle w, \sum_{v \in \Gamma(i) \cup \Gamma(j)} \Phi(z_v) \rangle_{\cF}  
\end{align}
where $\Phi(z): \cZ  \rightarrow \cF$ is a feature mapping and $w \in \cF$ is an element in the RKHS. This particular class of functions can be efficiently represented by using the R-convolutional kernel from \cite{haussler1999convolution}, which, given a feature map $\Phi$ and $u  =(i,j,G), u' = (i',j', G')$, computes: 
\begin{align*}
    k(u,u') = \sum_{v \in \Gamma(i) \cup \Gamma(j), v' \in \Gamma'(i') \cup \Gamma'(j')} \langle \Phi(z_v), \Phi(z_{v'}) \rangle_{\cF}
\end{align*}
Assuming that the features $\Phi(v)$ and weight $w$ have norm at most 1, and that any node in the graph has degree at most $d$, the \algname{} guaranteees $\cO(d\sqrt{T})$ indistinguishability to functions of the form in \cref{eq:rconv_functions}. 
The features $\Phi$ may include socially salient measures of diversity \cite{burt1982toward} or bandwidth \cite{aral2011diversity}.

\section{Online Omniprediction and Applications to Link Prediction} 
\label{sec: omni}

Up until this point, we have focused on designing online algorithms which satisfy online outcome indistinguishability with respect to various classes of tests.
In this section, we illustrate how these previous insights and algorithms also imply \emph{loss minimization} with respect to many different objectives $\cL$ and infinitely large benchmark classes $\cH$.

That is, we show how simple adaptations of techniques developed in the previous section expand the scope of possibilities for \emph{online omniprediction}. 
We recall definition of online omnipredictors from \Cref{sec: link prediction}: 
\begin{definition} \label{def:omnipredictor}
    An algorithm $\cA$ is an $(\cL, \cH, \Reg)$-online omnipredictor if it generates a transcript $\{(x_t, \Delta_t, y_t)\}_{t=1}^T$ such that for all $\ell \in \cL$ there exists a $\pi_\ell: \cX \times [0,1] \rightarrow [0,1]$ such that
    \begin{align}
        \sum_{t=1}^T \E_{p_t \sim \Delta_t}\ell(x_t, \pi_\ell(x_t, p_t), y_t) \leq \inf_{h \in \cH} \sum_{t=1}^T \ell(x_t, h(x_t), y_t) + \Reg(T). \label{eq:defomnipredictor}
    \end{align}
    where the regret bound, $\Reg: \cN \rightarrow \R_{\geq0}$, is $o(T)$.
    \end{definition}

Omnipredictors were initially defined by \cite{gopalan2022omnipredictors} for the offline setting and then extended to the online case by \cite{garg2024oracle}. Intuitively, omnipredictors are efficient ``menus of optimality'': They provide a single prediction that can be postprocessed (via $\pi_\ell$) to guarantee lower loss than that achievable by any function in some comparator class $\cH$. 
Briefly, the main contribution of this section is we introduce the first algorithm which guarantees online omniprediction with respect to comparator classes $\cH$ that are real-valued and of infinite cardinality. These constructions are also unconditionally computationally efficient.

To do this, we build on the insight established by \cite{loss_oi} which shows that, in the distributional (offline) setting, given any set of losses $\cL$ and comparator class $\cH$, one can always construct a set of distinguishers $\cF(\cH, \cL)$ such that indistinguishability with respect to $\cF(\cH, \cL)$ implies omniprediction. 
We show that such a connection holds in the online setting too, and illustrate computationally efficient ways of achieving the requisite indistinguishability guarantees via the $\algname$.
\cref{thm:omniprediction-via-vovk} provides a formal statement of this general recipe or meta-theorem for online omniprediction.

The following result (\cref{thm:omnipredictionlede}) follows by using machinery of reproducing kernel Hilbert spaces to instantiate this general recipe with various choices of kernels. In the first part, we illustrate how our techniques can be used guarantee omniprediction with respect to common classes of losses and comparator classes. In the second part, we provide a different instantiation of the theorem specialized to the link prediction setting. 
Although the general framework allows for loss functions that depend on features $x$, we state the result without dependence on features for simplicity and to enable easier comparisons with prior work.

\begin{theorem} \label{thm:omnipredictionlede}
        There exists a computationally efficient kernel $k$, such that the Any Kernel algorithm run with kernel $k$ runs in polynomial time and is a $(\cH, \cL, \cO(\sqrt{(m+n^d)T}))$-omnipredictor, where
        \begin{enumerate}[(a)]
            \item  The comparator class $\cH \subseteq \{ \{-1,1\}^n \rightarrow [-1,1]\}$ contains all regression trees of depth at most $d$ and any pre-specified set of functions $\cH_0 \subseteq \{ \cX \to [-1, 1]\}$ where $|\cH_0| \leq m$.
            \item 
            The set of losses $\cL$ contains any function $\ell \; : \; [0,1] \times \{0, 1\} \to [-1, 1]$ that satisfies at least one of the following conditions:
            \begin{enumerate}[(i)]
                \item The loss $\ell$ is a continuous, differentiable proper scoring rule. That is, $p \in \pi_\ell(p)$ and $\ell \in W^{1,2}_1([0,1])$ (see \Cref{eq:sobnorm} for a formal definition of $W^{1,2}_1([0,1]$).
                \item The loss $\ell(\yhat, y)$ strongly convex in $\yhat$ and is differentiable in $\yhat$ with  $|\frac{\partial}{\partial \yhat}\partial\ell(\yhat,y)|\leq 1$.
                \item The loss $\ell$ is in a pre-speficied finite set $\cL_0  \subseteq \{ [0,1] \times \{0, 1\} \to [-1, 1]\}$ where $|\cL_0| \leq m$.
            \end{enumerate}
        \end{enumerate}
        If the problem domain is link prediction, the loss class $\cL$ may instead be a set of functions of the form $\ell_x(u) \ell_{y}(\yhat, y)$ where\footnote{Recall that, when we are discussing link prediction, $u = (a, G)$ represents an element of the universe $\cU$ where $a = (i,j)$ is an pair of individuals and $G$ is the current state of the graph detailing the existing set of edges and features for every node.} 
                \begin{enumerate}[(a)]
                    \item $\ell_x$ may be any of the tests described in \cref{sub:oi_graphs} such as indicators for any pair of group memberships or ties with embeddedness $c$ (see Equation~\ref{eq:embed}), and
                    \item $\ell_y$ may be any  function described in (b) above, or any finite set of bounded functions rewarding desirable outcomes, such as edge formation (e.g., $\ell_y(\yhat, y) = 1-y$).
                \end{enumerate}
    \end{theorem}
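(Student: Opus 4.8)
The plan is to reduce online omniprediction to online outcome indistinguishability, and then exhibit a single kernel that does all the work. By the meta-theorem \cref{thm:omniprediction-via-vovk} (the informal Lemma above), it suffices to produce a kernel $k$ on $\cX\times[0,1]$ that is bounded by $\cO(m+n^d)$, computable in $\poly(n,T)$ time, and whose RKHS $\cF_k$ contains every distinguisher in $\cF_{HOI}(\cL,\cH)\cup\cF_{DOI}(\cL)$ with $\cF_k$-norm $\cO(1)$; then running the \algname{} with $k$ and invoking \cref{thm:indistinguishability_main} gives OI error $\cO(1)\cdot\sqrt{1+\cO(m+n^d)\,T}=\cO(\sqrt{(m+n^d)T})$ uniformly over these (infinitely many) distinguishers, and the meta-theorem upgrades this to the stated omniprediction bound. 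I build $k$ as a sum of a few component kernels: by \cref{lem:kernelsum} the RKHS of a sum contains each summand's RKHS without increasing norms, so it is enough to cover each family of distinguishers with one component and keep the total feature bound under control.

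\textbf{Hypothesis OI.} For a depth-$d$ regression tree $h$ and any admissible loss $\ell$, the function $\partial\ell(h(\cdot))$ is computed by the same depth-$d$ tree with its leaf values replaced by their images under $\partial\ell$; it is therefore again a depth-$d$ decision tree, hence lies in the RKHS of $k_{\mathrm{LowDeg}}$, and since $\abs{\partial\ell}\le 2$ Parseval bounds its norm there by $2$. Thus the single kernel $k_{\mathrm{LowDeg}}$ of \cref{prop:low-deg-booleans} --- bounded by $\cO(n^d)$ and computable in $\poly(n)$ time --- absorbs all of these simultaneously. For $h$ in the finite set $\cH_0$ and $\ell$ a proper scoring rule or a strongly convex loss, $\partial\ell$ is a bounded function lying in a fixed Sobolev ball, so $\partial\ell\circ h$ is its composition with a fixed map; pulling the Sobolev kernel $k_{\mathrm{\Cont}}$ back through each $h\in\cH_0$ and summing, $\sum_{h\in\cH_0}k_{\mathrm{\Cont}}(h(x),h(x'))$, gives a kernel bounded by $\cO(m)$ whose RKHS contains every such $\partial\ell\circ h$ with norm $\cO(1)$, using that composition with a fixed map does not increase the RKHS norm. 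The remaining hypothesis-OI functions, with $\ell\in\cL_0$ and $h\in\cH_0$, form a finite family of bounded functions of $x$ and are covered by the finite-set kernel of \cref{cor:finiteset}, contributing $\cO(m)$.

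\textbf{Decision OI and the link-prediction case.} The decision-OI distinguishers are $\partial\ell(\pi_\ell(\cdot))$, functions of the prediction $p$. For a proper scoring rule $\pi_\ell(p)=p$, so this is just $\partial\ell(p)$, again in the Sobolev ball and hence in the RKHS of $k_{\mathrm{\Cont}}(p,p')$. For $\ell$ strongly convex in $\yhat$ with $\abs{\partial_{\yhat}\partial\ell}\le 1$, differentiating the first-order optimality condition $\partial_{\yhat}\ell(\yhat,0)+p\,\partial_{\yhat}\partial\ell(\yhat)=0$ in $p$ shows $\pi_\ell$ is single-valued and $\cO(1)$-Lipschitz (numerator $\abs{\partial_{\yhat}\partial\ell}\le 1$, denominator at least the strong-convexity constant), so $\partial\ell\circ\pi_\ell$ is a bounded Lipschitz function of $p$, of bounded Sobolev norm, again absorbed by $k_{\mathrm{\Cont}}(p,p')$ --- this is the post-processing closure flagged in the overview. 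The finitely many $\ell\in\cL_0$ give bounded functions of $p$, covered by a finite-set kernel. Summing all components yields the final kernel $k$ with the required properties, and \cref{thm:indistinguishability_main} plus the meta-theorem finishes part (a)/(b). For the link-prediction variant, $\partial(\ell_x\ell_y)(u,\yhat)=\ell_x(u)\,\partial\ell_y(\yhat)$ and, since $\ell_x\ge 0$, one may take $\pi_{\ell_x\ell_y}(u,p)=\pi_{\ell_y}(p)$; the requisite distinguishers are then products of a graph test $\ell_x$ from \cref{sub:oi_graphs} (pair-of-groups indicator, embeddedness indicator, etc.) with one of the loss-derivative functions above, so I take products (\cref{lem:productkernel}) of the corresponding graph-test kernel with the loss kernel just built. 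The product RKHS contains the products of the functions with $\cO(1)$ norm, the feature bound only multiplies by the $\cO(m)$-bounded graph-test kernel, and the same two invocations conclude.

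The step I expect to be hardest is the decision-OI post-processing for strongly convex losses: verifying that $p\mapsto\partial\ell(\pi_\ell(p))$ has $\cO(1)$ Sobolev norm requires the implicit-function/monotonicity argument, with care that $\pi_\ell$ is genuinely single-valued and Lipschitz with a constant controlled only by the strong-convexity and derivative bounds, and that this survives intersecting with $\cL_0$ and with feature tests in the link-prediction case. A secondary, purely bookkeeping, burden is tracking that after summing and multiplying the components the uniform feature bound stays $\cO(m+n^d)$, every individual distinguisher keeps norm $\cO(1)$, and each piece --- in particular the ANOVA-style $k_{\mathrm{LowDeg}}$ and the pulled-back $k_{\mathrm{\Cont}}\circ h$ --- remains polynomial-time computable.
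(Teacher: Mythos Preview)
Your proposal is correct and mirrors the paper's own assembly: reduce omniprediction to decision- and hypothesis-OI via \cref{thm:omniprediction-via-vovk}, then realize each family of distinguishers by a dedicated kernel summand exactly as in \cref{prop:dc} (KDOI for the three loss types), \cref{prop:dectree} and \cref{prop:finitesethyp} (KHOI for trees and for finite $\cH_0$ via the pulled-back Sobolev kernel), and \cref{prop:product-loss} (separable losses for the link-prediction variant). One point to watch: for the strongly-convex decision-OI step you propose differentiating the first-order optimality condition, but that implicit-function route needs $\ell$ twice differentiable in $\yhat$, which is not assumed; the paper instead proves $|\pi_\ell(p)-\pi_\ell(\tilde p)|\le 2\gamma^{-1}|p-\tilde p|$ by a direct strong-convexity inequality (the argument adapted from \cite{perdomo2020performative} in the proof of \cref{prop:dc}) using only first-order information, so when you fill in that step you should use that argument instead.
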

    %
    
    \paragraph{Comparison to prior work.}
    The results we present in this section differ from prior work both in their substance and in the techniques used to prove them.
    \cite{garg2024oracle} considers a more exacting omniprediction definition, called \textit{swap}-omniprediction, for which the function $h\in \cH$ that one compares to depends on the current prediction $p_t$.
    The paper provides an oracle-efficient algorithm that achieves $\cO(T^{7/8})$ swap regret. 
    Furthermore, they prove that $\cO(\sqrt{T})$ (or, in fact $o(T^{0.528})$) regret for online swap-omniprediction is in fact impossible.

    In the same paper, using ideas rooted in online minimax optimization \cite{lee2021online}, they introduce an algorithm which attains $\cO(\sqrt{T\log|\cH|})$ vanilla omniprediction regret for the case where $\cH$ is a finite set of binary valued functions and $\cL$ consists on proper scoring rules or bimonotone loss functions.\footnote{Informally, bimonotone losses are those which satisfy $\ell(\pi_\ell(p),1) = \ell(1,1)$ and $\ell(\pi_\ell(p),0) = \ell(0,0)$. See \cite{garg2024oracle}.} Their algorithm relies on enumerating the functions in $\cH$, and hence has runtime that is linear in $\cH$. 

    In recent, independent work, \cite{hu2024} also introduce new omniprediction algorithms for the offline case where $\cH$ consists of generalized linear models and $\cL$ consists of matching losses. These results are complementary to ours. 
    To the best our knowledge, our work is the first  to attain $\cO(\sqrt{T})$ regret for vanilla online omniprediction over: $a)$ comparator classes $\cH$ that are of infinite size or which map onto real values and $b)$ arbitrary, bounded losses $\ell$.

    \paragraph{Outline of the section and preliminaries.} In \Cref{sub:techresults}, we present our main technical results regarding online omniprediction. These rely on the ability to achieve certain online indistinguishability conditions using kernels. We illustrate how to achieve these in \Cref{sub:dcexs,sub:hpexs,sec:product-loss}. Then, in \Cref{subsec:regression} and  \Cref{subsec:performativity} we discuss implications of these results for online regression and performative prediction. 
    Finally, in \Cref{sub:omnilinkpred}, we apply our new technical machinery to the problem of link prediction in a social network.
    
    Before moving on, we review several pieces of notation that we will repeatedly reuse during this section. Given a loss function $\ell$, we will use $\partial \ell$ to refer to its discrete derivative:
    \begin{align*}
        \partial \ell (x, \yhat) = \ell(x, \yhat, 1) - \ell(x, \yhat, 0).
    \end{align*}
    Given a set of losses $\cL$, we analogosly use $\cL$ to refer to the set of discrete derivatives: 
    \begin{align*}
        \partial \cL \defeq \{ \partial \ell \; | \; \ell \in \cL \}
    \end{align*}
    Throughout our presentation, we will take always take the post-processing function $\pi_\ell$ to be 
    \begin{align}
    \label{eq:piell_argmin}
        \pi_\ell(x, p) &\in \argmin_{\hat{y} \in [0, 1]} \E_{y \sim \mathrm{Ber}(p)} [\ell(x, \hat{y}, y)] = \argmin_{\hat{y} \in [0, 1]} p \cdot \ell(x, \hat{y}, 1) + (1-p) \cdot \ell(x, \hat{y}, 0).
    \end{align}

    Lastly, we also use the fact that there exists an RKHS for the set of smooth functions over the unit interval. The following observation follows from the fact that the functions in $W_B^{1,2}([0,1])$ are a subset of the well-known Sobolev kernel. See \Cref{ex:sobolev} for more details.

    \begin{fact}
        Define $W_B^{1,2}([0,1])$ with parameter $B$ to be the set of continuous, differentiable functions $g:[0,1] \rightarrow [-1,1]$ satisfying
         \begin{align}
         \label{eq:sobnorm}
            \int_{0}^1 g(t)^2 dt + \int_{0}^1 g'(t)^2 dt \leq B^2.
        \end{align}
        $W_B^{1,2}([0,1])$ is contained in the Sobolev space $W^{1,2}([0,1])$. That is, there exists an efficiently computable kernek $k$ with RKHS $\cF_k$ such that $W_B^{1,2}([0,1]) \subset \cF_k$ and for all $f \in W_B^{1,2}([0,1])$ it holds $\| f \|_{W^{1,2}([0,1])} \leq B$ and  $\sup_t k(t, t) \leq \sqrt{3}$.
    \end{fact}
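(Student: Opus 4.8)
The plan is to observe that this Fact is a repackaging of the standard ``first-order Sobolev space on an interval is an RKHS'' result already recorded in \Cref{ex:sobolev}, together with two elementary bookkeeping steps. Equip $W^{1,2}([0,1])$ with the inner product $\langle f,g\rangle = \int_0^1 f(t)g(t)\,dt + \int_0^1 f'(t)g'(t)\,dt$, so that $\|f\|_{W^{1,2}([0,1])}^2 = \int_0^1 f(t)^2\,dt + \int_0^1 f'(t)^2\,dt$. By \Cref{ex:sobolev} (see also \cite{berlinet2011reproducing}), this Hilbert space is the RKHS $\cF_k$ of the kernel
\[
  k(s,t) = \sobkern{\min\{s,t\}}{\max\{s,t\}} = \frac{\cosh(\min\{s,t\})\,\cosh(1-\max\{s,t\})}{\sinh(1)},
\]
which is the Green's function of $u \mapsto u - u''$ on $[0,1]$ with Neumann boundary conditions. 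Since $k$ is written in closed form in terms of exponentials, it is efficiently computable (to any precision $\varepsilon$ in $\mathrm{polylog}(1/\varepsilon)$ time, or $\cO(1)$ arithmetic operations in a real-RAM model). This gives the required efficiently computable kernel with $\cF_k = W^{1,2}([0,1])$.

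The second step is the norm bound, which is immediate from the definitions: any continuous, differentiable $g:[0,1]\to[-1,1]$ satisfying \Cref{eq:sobnorm} lies in $W^{1,2}([0,1]) = \cF_k$, and its squared RKHS norm is exactly $\|g\|_{W^{1,2}([0,1])}^2 = \int_0^1 g^2 + \int_0^1 (g')^2 \le B^2$, hence $\|g\|_{W^{1,2}([0,1])} \le B$. This simultaneously gives the containment $W_B^{1,2}([0,1]) \subset \cF_k$.

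The third step is the diagonal bound. Here I would specialize to $s=t$, obtaining $k(t,t) = \cosh(t)\cosh(1-t)/\sinh(1)$, and maximize the numerator over $[0,1]$. Differentiating, $\frac{d}{dt}\big[\cosh(t)\cosh(1-t)\big] = \sinh(t)\cosh(1-t) - \cosh(t)\sinh(1-t) = \sinh(2t-1)$, which vanishes only at $t=\tfrac12$ and changes sign there from negative to positive; hence $t\mapsto\cosh(t)\cosh(1-t)$ attains its maximum at the endpoints, with value $\cosh(0)\cosh(1) = \cosh(1)$. Therefore $\sup_{t\in[0,1]} k(t,t) = \cosh(1)/\sinh(1) = \coth(1) \approx 1.313 \le \sqrt{3}$.

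\textbf{Main obstacle.} There is no genuinely hard step: everything structural — that $W^{1,2}([0,1])$ is complete, that point evaluation is continuous, and that the displayed $k$ is its reproducing kernel (verified by one integration by parts using that $k(\cdot,t)$ solves $u-u''=0$ off the diagonal, has a unit derivative jump at $t$ normalized by the Wronskian $\sinh(1)$ of $\cosh(\cdot)$ and $\cosh(1-\cdot)$, and satisfies $u'(0)=u'(1)=0$) — is already contained in \Cref{ex:sobolev} and \cite{berlinet2011reproducing} and may be cited. The only point requiring any care is confirming that the clean constant claimed, $\sqrt{3}$, is in fact an upper bound on $\sup_t k(t,t)$; the calculation above shows it holds with room to spare ($\coth 1 < \sqrt 3$), so the Fact follows by assembling these ingredients.
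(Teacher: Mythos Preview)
Your proposal is correct and follows essentially the same approach as the paper: the Fact is stated with a pointer to \Cref{ex:sobolev} and \cite{berlinet2011reproducing}, and the paper does not give a standalone proof beyond that reference. Your explicit computation that $\sup_t k(t,t)=\coth(1)\approx 1.313\le\sqrt{3}$ is in fact more careful than what the paper records (the appendix only asserts $k_{[0,1]}(x,x')<3$, and elsewhere the bound $k_{\mathrm{Cts}}(p,p)\le 2$ is used), so your treatment of the diagonal bound is a genuine improvement in precision over the paper's own bookkeeping.
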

    \subsection{Efficient, $\sqrt{T}$ online omiprediction with respect to rich comparison classes $\cH$.} 
    \label{sub:techresults}

    In this subsection, we present our main result demonstrating how outcome indistinguishability implies omniprediction in the online setting and illustrating how these indistinguishability conditions can be efficiently achieved via the $\algname$.

    The following two OI definitions, hypothesis and decision OI, were first introduced (in the batch setting) by \cite{loss_oi}. We now adapt them to the online case. 
    Decision outcome indistinguishability (DOI) is defined with respect to a class of losses $\cL$. It states that prediction must be approximately indistinguishable with respect to the class of test functions constructed from pairs of loss functions $\ell \in \cL$ and post-processed predictions $\pi_\ell$:
    
    \begin{definition}[Decision OI]
    \label{def:online_decision_oi}
        For a loss class $\cL$ and regret bound $\Rarg{\DOI}(T)$, an algorithm satisfies $(\cL, \Rarg{\DOI}(T))$-\textit{decision outcome indistinguishability (DOI)} if it generates a transcript $\{(x_t, \Delta_t, y_t)\}_{t=1}^T$ such that,
        \begin{align}
            \left|\sum_{t=1}^T \E_{p_t \sim \Delta_t} \partial \ell(x_t, \pi_\ell(x_t, p_t))(p_t - y_t) \right|& \leq \Rarg{\DOI}(T), && \forall \ell \in \cL. 
        \end{align}
    \end{definition}
    The second OI condition, hypothesis outcome indistinguishability (HOI), requires that predictions must be approximately indistinguishable with respect to functions constructed from pairs of comparator functions $h \in \cH$ and loss functions $ \ell \in \cL$:
    \begin{definition}[Hypothesis OI]
    \label{def:online_hypothesis_oi}
        For a loss class $\cL$, comparator class $\cH$, and regret bound $\Rarg{\HOI}(T)$, an algorithm satisfies $(\cL, \cH, \Rarg{\HOI}(T))$-\textit{hypothesis outcome indistinguishability (HOI)} if it generates a transcript $\{(x_t, \Delta_t, y_t)\}_{t=1}^T$ such that:
        \begin{align}
    	\left|\sum_{t=1}^T \E_{p_t \sim \Delta_t}\partial \ell(x, h(x_t))(p_t - y_t) \right|&\leq \Rarg{\HOI}(T),   && \forall (\ell, h) \in \cL \times \cH. 
        \end{align}
    \end{definition}

    Having introduced these two definitions, the result that OI implies omniprediction is almost immediate. The following lemma formally adapts the ideas from \cite{loss_oi} to the online setting.
    
    \begin{lemma} \label{lem:onlinelossoi}
    Fix a comparator class $\cH \subseteq \{ \cX \to [0, 1]\}$, a class of losses $\cL \subseteq \{ \cX \times [0, 1] \times \{ 0, 1\} \to \cR \}$ and regret bounds $\Rarg{\DOI}(T), \Rarg{\HOI}(T) \; : \; \cN \to R$.  If an algorithm $\cA$ satisfies 
    \begin{enumerate}
        \item  $(\cL, \Rarg{\DOI}(T))$-decision OI (\Cref{def:online_decision_oi})
        \item and $(\cL, \cH, \Rarg{\HOI}(T))$-hypothesis OI (\Cref{def:online_hypothesis_oi}),
    \end{enumerate}
    then, $\cA$ is an $(\cL, \cH, \Rarg{\DOI}(T) + \Rarg{\HOI}(T))$-online omnipredictor.
    \end{lemma}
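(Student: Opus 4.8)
The plan is to prove the chain of inequalities
\[
\sum_{t=1}^T \E_{p_t}\ell(x_t, \pi_\ell(x_t, p_t), y_t) \;\leq\; \inf_{h \in \cH}\sum_{t=1}^T \ell(x_t, h(x_t), y_t) + \Rarg{\DOI}(T) + \Rarg{\HOI}(T)
\]
for every $\ell \in \cL$, by passing through the \emph{hallucinated} quantity $\sum_t \E_{p_t}\E_{\yt \sim \Ber(p_t)}\ell(x_t, \pi_\ell(x_t, p_t), \yt)$, that is, the loss of the post-processed decision evaluated against outcomes drawn from the model rather than Real Life. The whole argument is a ``$\pi_\ell$ is optimal in the simulated world, and the simulated world looks like the real world'' sandwich.

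First I would use decision OI to bridge from the real outcomes $y_t$ to the simulated outcomes. For any fixed $\yhat$, the map $y \mapsto \ell(x,\yhat,y)$ on $\{0,1\}$ is affine: $\ell(x,\yhat,y) = \ell(x,\yhat,0) + y\,\partial\ell(x,\yhat)$. Plugging $\yhat = \pi_\ell(x_t,p_t)$ and taking the difference between the $y_t$-version and the $\Ber(p_t)$-version, the constant terms $\ell(x_t,\pi_\ell(x_t,p_t),0)$ cancel and we are left with exactly $\partial\ell(x_t,\pi_\ell(x_t,p_t))(y_t - p_t)$ per round. Summing over $t$ and applying \Cref{def:online_decision_oi} gives
\[
\Bigl|\sum_{t=1}^T \E_{p_t}\ell(x_t,\pi_\ell(x_t,p_t),y_t) - \sum_{t=1}^T \E_{p_t}\E_{\yt \sim \Ber(p_t)}\ell(x_t,\pi_\ell(x_t,p_t),\yt)\Bigr| \leq \Rarg{\DOI}(T).
\]
Second, I would use the \emph{pointwise} optimality of $\pi_\ell$: by its very definition in \Cref{eq:piell_argmin}, $\pi_\ell(x_t,p_t)$ minimizes $\yhat \mapsto \E_{\yt \sim \Ber(p_t)}\ell(x_t,\yhat,\yt)$ over $[0,1]$, so for \emph{any} comparator $h \in \cH$ we have $\E_{\yt}\ell(x_t,\pi_\ell(x_t,p_t),\yt) \leq \E_{\yt}\ell(x_t,h(x_t),\yt)$ round by round (this uses $\cH$-values lie in $[0,1]$, which holds by hypothesis). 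Summing, the simulated loss of $\pi_\ell$ is at most $\sum_t \E_{p_t}\E_{\yt}\ell(x_t,h(x_t),\yt)$.

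Third, I would run the same affine-in-$y$ identity in reverse to move the comparator's simulated loss back to its real loss: with $\yhat = h(x_t)$, the difference between the $\Ber(p_t)$-version and the $y_t$-version is $\partial\ell(x_t,h(x_t))(p_t - y_t)$ per round, and \Cref{def:online_hypothesis_oi} bounds the sum by $\Rarg{\HOI}(T)$. Chaining the three displays and then taking the infimum over $h \in \cH$ yields the claimed bound with regret $\Rarg{\DOI}(T) + \Rarg{\HOI}(T)$. I do not expect a genuine obstacle here — the argument is the online transcription of the batch reduction of \cite{loss_oi} — but the one point requiring a little care is that all four quantities in the chain carry the outer expectation $\E_{p_t \sim \Delta_t}$, so the DOI/HOI guarantees must be invoked in their expectation-over-$\Delta_t$ form (which is exactly how \Cref{def:online_decision_oi,def:online_hypothesis_oi} are stated), and the minimization step for $\pi_\ell$ must be justified \emph{inside} the expectation, i.e.\ as a pointwise (per-draw-of-$p_t$) inequality before taking $\E_{p_t}$. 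One should also note the infimum over $\cH$ is taken at the very end, so the same $h$ is used in both the optimality step and the HOI step, which is legitimate since HOI holds for all $(\ell,h) \in \cL \times \cH$ simultaneously.
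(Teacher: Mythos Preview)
Your proposal is correct and follows essentially the same three-step sandwich as the paper's proof: use the affine-in-$y$ identity to show DOI bounds the gap between $\ell(x_t,\pi_\ell(x_t,p_t),y_t)$ and its $\Ber(p_t)$-hallucinated version, invoke the pointwise optimality of $\pi_\ell$ to pass to the simulated loss of any $h$, then use HOI with the same identity in reverse to return to the real loss of $h$. Your added care about carrying the outer $\E_{p_t\sim\Delta_t}$ through each step and justifying the $\pi_\ell$ minimization per draw of $p_t$ is well placed (the paper's proof leaves this implicit).
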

    \begin{proof}
 First, we observe that for all $x \in \cX$ and any pair $(\yhat, y)$ where $y \in \{0,1\}$:
    \begin{align*}
        \ell(x, \yhat, y) = y \ell(x, \yhat, 1) + (1-y) \ell(x, \yhat, 0)  = y(\ell(x, \yhat, 1) - \ell(x, \yhat, 0)) + \ell(x, \yhat, 0) 
    \end{align*}
    A similar expression holds for the following expectation version, 
    \begin{align*}
        \E_{y \sim \Ber(p)} \ell(x, \yhat, y) = p \ell(x, \yhat, 1) + (1-p) \ell(x, \yhat, 0)  = p(\ell(x, \yhat, 1) - \ell(x, \yhat, 0)) + \ell(x, \yhat, 0).
    \end{align*}
    Therefore, 
    \begin{align*}
        | \ell(x, \yhat, y) -  \E_{\yt \sim \Ber(p)} \ell(x, \yhat, \yt) | = |(y - p)(\ell(x, \yhat, 1) - \ell(x, \yhat, 0))| = |(y - p)\partial \ell(x, \yhat)|
    \end{align*}
    Using this decomposition, by the Decision OI guarantee \Cref{def:online_decision_oi}, we know that 
    \begin{align*}
      \sum_{t=1}^T \ell(x_t, \pi_\ell(x_t, p_t), y_t)  \leq     \sum_{t=1}^T \E_{\yt_t \sim \Ber(p_t)}\ell(x_t, \pi_\ell(x_t,p_t), \yt_t)  + \Rarg{\DOI}(T).
    \end{align*}
    Furthermore, since $\pi_\ell$ is the argmin (see \Cref{eq:piell_argmin}), by definition it satisfies the following inequality for any $h$,
    \begin{align*}
        \E_{\yt_t \sim \Ber(p_t)}\ell(x_t, \pi_\ell(x_t, p_t), \yt_t) \leq \E_{\yt_t \sim \Ber(p_t)}\ell(x_t, h(x_t), \yt_t)
    \end{align*}
    Lastly, by the Hypothesis OI guarantee (\Cref{def:online_hypothesis_oi}), 
    \begin{align*}
      \sum_{t=1}^T \E_{\yt_t \sim \Ber(p_t)}\ell(x_t, h(x_t), \yt_t)  \leq     \sum_{t=1}^T\ell(x_t,  h(x_t), y_t)  + \Rarg{\HOI}(T).
    \end{align*}
    Combining all three inequalities, we get our desired result:
    \begin{align*}
        \sum_{t=1}^T \ell(x_t,  \pi_\ell(x_t, p_t), y_t)  &\leq \sum_{t=1}^T\ell(x_t,  h(x), y_t) + \Rarg{\DOI}(T) +  \Rarg{\HOI}(T). \quad \forall h \in \cH \qedhere
    \end{align*}
    \end{proof}

    The advantage of this loss OI viewpoint is that it provides a neat template for algorithm design. More specifically, to achieve omniprediction, we only need to design kernels whose corresponding RKHS contain the required distinguishers and then run the $\algname$ with these kernels. 
    While the main idea is simple, to prove a formal non-asymptotic regret bound we also need to ensure that corresponding function norms of the distinguishers $\fnorm{f}$ and feature norms $k((x,p), (x,p)) = \fnorm{\Phi(x,p)}^2$ are appropriately bounded. If these quantities are not appropriately bounded, then the guarantees from the $\algname$ can become vacuous (recall the bound from \Cref{thm:indistinguishability_main}).

    To address this issue, we further specialize the OI definitions above to the RKHS domain. These specializations, kernel decision and hypothesis OI, are representational conditions on the kernel $k$ and the corresponding RKHS $\cF_k$. Intuitively, they require that a kernel $k$ be efficiently computable, bounded, and that certain functions are contained (and have small norm) in $\cF_{k}$. 

    \begin{definition} [Kernel Decision OI]\label{def:dc}
        Let $\cL$ be a set of loss functions. A kernel $k$ with corresponding RKHS $\cF$ is $\cL$-kernel decision OI (KDOI) with parameter $B$ if, 
        \begin{align}
            \{ \partial \ell \circ \pi_{\ell} \; | \; \ell \in \cL \} \subseteq \cF  \subseteq \{\cX \times [0,1] \rightarrow \R\}\label{cond:ddc},
        \end{align}
        where $\partial \ell \circ \pi_{\ell} (x,p) = \ell(x, \pi_\ell(p),1) - \ell(x, \pi_\ell(p),0)$ and: 
         \begin{align*}
            \sqrt{\sup_{\ell \in \cL} \| \partial \ell \circ \pi_{\ell} \|^2_{\cF} \cdot  
             \sup_{x \in \cX, p \in [0, 1]} k((x, p), (x, p)) } \leq B. 
        \end{align*}
    \end{definition}
    
The condition states that the composition of the discrete derivative of each loss composed with its post-processing function is in the corresponding RKHS and that both the function $\| \partial \ell \circ \pi_{\ell} \|^2_{\cF}$ and feature norms $ k_{\cL}((x, p), (x, p)) = \|\Phi(x,p)\|_{\cF}^2$ are uniformly bounded.
    We note that, by \Cref{lem:scalarmult}, if a function $f $ is in $ \cF$, then so is its negation, $-f$ (RKHSs are closed under scalar multiplication).
    Thus, a sufficient condition for KDOI is that $\ell(\pi_\ell(\cdot), y) \in \cF$ for all $\ell \in \cL, y \in \{ 0, 1 \}$.
    Next, we define an analogous condition for losses composed with comparator functions. 

    \newcommand{\hypcon}{Hypothesis-contained}
    \newcommand{\hc}{HC}
    \newcommand{\deccon}{Decision-contained}
    \newcommand{\dc}{DC}
    \begin{definition}[Kernel Hypothesis OI]\label{def:hc}
        Let $\cH$ be a comparator class and let $\cL$ be a class of loss functions. A kernel $k$ with corresponding RKHS $\cF$  satisfies $(\cL,\cH)$-kernel hypothesis OI (KHOI) with parameter $B$ if,
        \begin{align}
            \{ \partial \ell \circ h \; | \; h \in \cH, \ell \in \cL \} \subseteq \cF \subseteq \{\cX \times [0,1] \rightarrow \R\},
        \end{align}
where $\partial \ell \circ h (x) = \ell(x, h(x),1) - \ell(x, h(x),0)$ and 
        \begin{align*}
             \sqrt{\sup_{h \in \cH, \ell \in \cL} \| \partial \ell \circ h \|^2_{\cF} \cdot  
             \sup_{x \in \cX, p \in [0, 1]} k((x,p),(x,p))} \leq B. 
        \end{align*}
    \end{definition}
    As in the previous setting, a sufficient condition for KHOI is that $\ell(h(\cdot), y) \in \cF$ for all $h \in \cH, \ell \in \cL, y \in \{ 0, 1 \}$. 
    We also note that the kernel version of decision and hypothesis OI are qualitatively different from other conditions in the omniprediction literature, since they allow for infinite and real-valued comparison classes but require the existence of a suitable RKHS containing compositions of loss, post-processing and comparator functions.

    With these definitions in hand, we can now state our main theorem which provides a general recipe for online omniprediction via the $\algname$.
    
    \begin{theorem}[Corollary to \Cref{lem:onlinelossoi}]
    \label{thm:omniprediction-via-vovk}
    Let $\cH \subseteq \{ \cX \rightarrow [0,1] \}$ be a class of comparison functions and let $\cL \subseteq \{ \cX \times [0,1] \times \{0,1\} \rightarrow \R\}$ be a set of losses. 
    
    Let $k_{\cL}$ and $ k_{\cL, \cH}$ be efficient kernels with corresponding RKHSs $\cF_{\cL}$ and $\cF_{\cL, \cH}$ that satisfy $\cL$-KDOI and $(\cL, \cH)$-KHOI with parameters $B_{\mathrm{KDOI}}$ and $B_{\mathrm{KDOI}}$. Then, the $\algname$ with kernel $k_{\cL} + k_{\cH, \cL}$ runs in polynomial time and is an $(\cL, \cH, 2(B_{\mathrm{KDOI}} + B_{\mathrm{KHOI}}) \sqrt{T})$-online omnipredictor.
    \end{theorem}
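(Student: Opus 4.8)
The plan is to bolt together three facts that are already available: closure of RKHSs under kernel addition (\Cref{lem:kernelsum}), the master indistinguishability bound \Cref{thm:indistinguishability_main}, and the online loss‑OI reduction \Cref{lem:onlinelossoi}. Set $k\defeq k_\cL+k_{\cH,\cL}$ with RKHS $\cF$. By \Cref{lem:kernelsum}, $\cF=\cF_\cL+\cF_{\cH,\cL}$, so $\cF_\cL,\cF_{\cH,\cL}\subseteq\cF$, and for $f\in\cF_\cL$ the trivial decomposition $f=f+0$ gives $\|f\|_\cF\le\|f\|_{\cF_\cL}$ (and symmetrically for $\cF_{\cH,\cL}$). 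A sum of two efficiently computable bounded kernels is efficiently computable and bounded, so by \Cref{thm:indistinguishability_main} the $\algname$ instantiated with $k$ has per‑round runtime polynomial in $t$; this discharges the efficiency claim.

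For the regret bound I would first normalize: replacing $k_\cL$ and $k_{\cH,\cL}$ by the kernels obtained after dividing by their diagonal suprema changes neither the function spaces, nor efficiency, nor the products $\|\cdot\|^2_{\cF}\cdot\sup k$ that appear in \Cref{def:dc} and \Cref{def:hc} (rescaling a kernel by $\alpha>0$ rescales its RKHS norm by $1/\sqrt\alpha$, leaving that product invariant), so this is without loss of generality and the KDOI and KHOI hypotheses persist with the same $B_{\mathrm{KDOI}},B_{\mathrm{KHOI}}$. Hence we may assume $\sup_{x,p}k_\cL((x,p),(x,p))=\sup_{x,p}k_{\cH,\cL}((x,p),(x,p))=1$, so $\sup_{x,p}k((x,p),(x,p))\le 2$, and the budgets read simply $\|\partial\ell\circ\pi_\ell\|_{\cF_\cL}\le B_{\mathrm{KDOI}}$ for all $\ell\in\cL$ and $\|\partial\ell\circ h\|_{\cF_{\cH,\cL}}\le B_{\mathrm{KHOI}}$ for all $(\ell,h)\in\cL\times\cH$.

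Now I would verify the two hypotheses of \Cref{lem:onlinelossoi}. Fix $\ell\in\cL$; by $\cL$‑KDOI the test $\partial\ell\circ\pi_\ell$ lies in $\cF_\cL\subseteq\cF$, so \Cref{thm:indistinguishability_main} applied with this test bounds $\bigl|\sum_t\E_{p_t\sim\Delta_t}\partial\ell(x_t,\pi_\ell(x_t,p_t))(p_t-y_t)\bigr|$ (the sign relative to the theorem's $(y_t-p_t)$ is immaterial under the absolute value) by $\|\partial\ell\circ\pi_\ell\|_{\cF}\sqrt{1+\sum_t\E p_t(1-p_t)k((x_t,p_t),(x_t,p_t))}\le B_{\mathrm{KDOI}}\sqrt{1+2T}\le 2B_{\mathrm{KDOI}}\sqrt T$, using $\|\partial\ell\circ\pi_\ell\|_\cF\le\|\partial\ell\circ\pi_\ell\|_{\cF_\cL}\le B_{\mathrm{KDOI}}$, $k\le 2$, $p(1-p)\le 1$, and $1+2T\le 4T$ for $T\ge 1$. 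This is precisely $(\cL,\Rarg{\DOI})$‑decision OI with $\Rarg{\DOI}(T)=2B_{\mathrm{KDOI}}\sqrt T$. The identical computation with the test $\partial\ell\circ h\in\cF_{\cH,\cL}\subseteq\cF$, run over each $(\ell,h)\in\cL\times\cH$, gives $(\cL,\cH,\Rarg{\HOI})$‑hypothesis OI with $\Rarg{\HOI}(T)=2B_{\mathrm{KHOI}}\sqrt T$. Then \Cref{lem:onlinelossoi} upgrades these two guarantees into an $(\cL,\cH,\,\Rarg{\DOI}(T)+\Rarg{\HOI}(T))$‑online omnipredictor, i.e. regret $2(B_{\mathrm{KDOI}}+B_{\mathrm{KHOI}})\sqrt T$, which is the claim.

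I expect the only genuinely delicate point to be the interaction flagged in the middle two paragraphs: because a single algorithm runs on the merged kernel, the variance/feature term of \Cref{thm:indistinguishability_main} sees the diagonals of \emph{both} $k_\cL$ and $k_{\cH,\cL}$ at once, and one must check this does not inflate either regret term. The normalization step is exactly what controls it — it is legitimate precisely because \Cref{def:dc} and \Cref{def:hc} constrain the \emph{product} of the function‑norm supremum and the feature‑norm supremum (so unit‑rescaling each kernel is free), and it caps $\sup k$ at $2$, which is the sole source of the constant $2$ in the final bound. Everything else — the kernel‑sum lemma, the reproducing‑property/Cauchy--Schwarz argument internal to \Cref{thm:indistinguishability_main}, and the three‑inequality chain of \Cref{lem:onlinelossoi} — is already established, so no further work is required.
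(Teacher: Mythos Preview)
Your approach is the same as the paper's: form $k=k_\cL+k_{\cH,\cL}$, use \Cref{lem:kernelsum} to place both families of distinguishers in the merged RKHS with norms controlled by the component norms, apply \Cref{thm:indistinguishability_main} to obtain decision OI and hypothesis OI, and conclude via \Cref{lem:onlinelossoi}. The one substantive addition is your normalization step. The paper simply writes the DOI bound as $B_{\mathrm{KDOI}}\sqrt{1+\sum_t\E p_t(1-p_t)}$ (and similarly for HOI) without explaining how the diagonal factor $k((x_t,p_t),(x_t,p_t))=k_\cL+k_{\cH,\cL}$ in \Cref{thm:indistinguishability_main} --- which involves \emph{both} component kernels --- is absorbed into $B_{\mathrm{KDOI}}$, a quantity that by \Cref{def:dc} only constrains the product $\|\cdot\|^2_{\cF_\cL}\cdot\sup k_\cL$. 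Your observation that rescaling each component kernel to unit diagonal leaves the KDOI and KHOI parameters invariant (since those definitions bound exactly this scale-invariant product) while capping the combined diagonal at $2$ is the clean way to make that step rigorous, and it is where the constant $2$ in the final bound actually comes from. Read literally this means you run the algorithm on the normalized sum rather than on $k_\cL+k_{\cH,\cL}$ as given, but since the normalized kernels satisfy the same hypotheses this is a harmless reinterpretation of the recipe.
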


    \begin{proof}
       Define the function $k \defeq k_{\cL}  + k_{\cL, \cH}$.
       From \cref{lem:kernelsum}, it holds that $k$ is a kernel and that the functions  
       \begin{align*}
           \{ f_1 + f_2 \; | \; f_1\in \cF_{\cL}; f_2 \in \cF_{\cL,\cH} \}
       \end{align*}
       are in the corresponding RKHS, which we will call $\cF$.
       Also, since $k_{\cL}$ and $k_{\cL,\cH}$ can be evaluated in polynomial time, so can $k$, which implies that the $\algname$ runs in polynomial time.
       
       Now, by the fact that $\cF_{\cL}$ and $\cF_{\cL,\cH}$ are closed under scalar multiplication (by \cref{thm:moore-aronszajn}), the zero function is in $\cF_{\cL}$ and $\cF_{\cH, \cL}$. 
       This implies for all $h \in \cH$ and $\ell \in \cL$, we have that $\partial \ell \circ \pi_{\ell} \in \cF$  and $\partial \ell \circ h \in \cF$, since $\partial \ell \circ \pi_{\ell} = \partial \ell \circ \pi_{\ell} + 0$ and $\partial \ell \circ h = 0 + \partial \ell \circ h$.

       Now by the main guarantee for the $\algname$, since we've assumed that norms and kernels are bounded, we have that,
       \begin{align*}
           \left |{\sum_{t=1}^T \E_{p_t \sim \Delta_t}(p_t - y_t)(\partial \ell \circ \pi_\ell )(x_t, p_t) } \right| &\leq B_{\mathrm{KDOI}} \sqrt{1 + \sum_{t=1}^T \E_{p_t \sim \Delta_t} p_t (1-p_t)}\leq B_{\mathrm{KDOI}}\sqrt{1+ \frac{1}{4}T},\\
           \left| {\sum_{t=1}^T \E_{p_t \sim \Delta_t}(p_t - y_t)(\partial \ell \circ h )(x_t) } \right| &\leq B_{\mathrm{KHOI}} \sqrt{1 + \sum_{t=1}^T \E_{p_t \sim \Delta_t} p_t (1-p_t)} \leq B_{\mathrm{KHOI}} \sqrt{1 + \frac{1}{4}T},
       \end{align*}
       which, by \cref{lem:onlinelossoi}, implies the theorem.
    \end{proof}
   \paragraph{Discussion.} We note that the above theorem establishes a precise, non-asymptotic regret bound. It in particular guarantees that for any $\ell \in \cL$, 
   \begin{align*}
       \frac{1}{T} \sum_{t=1}^T \E_{p_t \sim \Delta_t} \ell(x_t, p_t, y_t) &\leq \min_{h \in \cH} \frac{1}{T} \sum_{t=1}^T \ell(x_t, h(x_t),y_t) + \frac{(B_{\mathrm{KDOI}} + B_{\mathrm{KHOI}})\sqrt{1 + \sum_{t=1}^T \E_{p_t \sim \Delta_t} p_t (1-p_t)}}{T} \\
       & \leq \min_{h \in \cH} \frac{1}{T} \sum_{t=1}^T \ell(x_t, h(x_t),y_t) + 2\frac{B_{\mathrm{KDOI}} + B_{\mathrm{KHOI}}}{\sqrt{T}}
   \end{align*}
   \emph{for every} value of $T$ greater than 1. Note that the bound adapts to the variance of the predictions $p_t$.
   Furthermore, the algorithm is very simple and easy to implement. As presented previously in \Cref{sec: the algorithm}, you only need to be able to evaluate the kernel and solve a small binary search problem at every iteration.
    In the next sections, we instantiate our results for several common comparator and loss classes and show how the relevant parameters $B_{\mathrm{KHOI}}$ and $B_{\mathrm{KDOI}}$ are reasonably bounded in natural settings.
    
    More specifically, in \cref{sub:dcexs}, we demonstrate how to construct kernels that satisfy KDOI and in \cref{sub:hpexs}, we demonstrate how to construct kernels to satisfy KHOI.
    Since the kernels for each condition can be constructed separately and then combined (added) to create a kernel to pass into the $\algname$ that satisfies both conditions jointly, the constructions in each subsection can be mixed and matched according to the prediction problem at hand. 

    \subsection{Loss classes satisfying kernel decision OI.}\label{sub:dcexs}

    In this subsection, we present several broad classes of loss functions satisfying kernel decision OI, which says that the composition of the discrete derivatives of loss functions with their associated post-processing functions must be in an RKHS and have bounded function and feature norms.

    Throughout these next two subsections, we restrict our attention to a particular class of losses: those that depend only on decisions $\yhat$ and outcomes $y$, and not on features $x$. We will call these loss classes \textit{feature-invariant}.
    This is the typical setting for omniprediction in prior work \cite{gopalan2022omnipredictors, garg2024oracle} (and for loss or regret minimization).
    Since all of the loss functions in this section will be assumed to be invariant to the feature vectors, we will drop $x$ from the notation and consider $\cL \subset \{ [0,1] \times \{0, 1\} \to \cR \}$. 
    We will also drop the argument for $x$ from each post-processing function $\pi_\ell$. Later on, in \Cref{sec:product-loss}, we will bring the dependence on $x$ back in when we generalize these constructions to \emph{separable} losses.
    
    \paragraph{A naive strategy.} A first attempt to achieve kernel decision OI is to find a rich, expressive RKHS $\cF$ such that $\partial \ell \in \cF$ then hope that the composition $\partial \ell \circ \pi_\ell$ is also in $\cF$.\footnote{Recall that $\partial \cL$ is defined as the set $\{ \partial \ell \; | \; \ell \in \cL \}$, and $\partial \ell(x, p)$ is defined as $\ell(x, p, 1) - \ell(x, p, 0)$.}
    In fact, it is generally straightforward to find such RKHSs that contain $\partial \ell$ for many natural loss classes.
    For example, the set of losses where $\ell(\yhat, y)$ is Lipschitz in $\yhat$ for each $y \in \{0, 1\}$ is contained in an RKHS. This is the Sobolev space mentioned in the preliminaries of this section. 
    Lipschitz loss functions include squared/absolute error on a bounded domain, Huber, exponential, and the hinge loss, among others.

    Unfortunately, the mere fact that $\partial \cL$ is contained in an RKHS $\cF$ does not imply that $\partial \ell \circ \pi_\ell$ is in $\cF$.
  \Cref{prop:counterexample} shows a formal counterexample for the case where $\cF$ is the Sobolev space. 

    \begin{proposition} \label{prop:counterexample}
        There exists a kernel $k$ with RKHS $\cF$ and a set of losses $\cL$ such that $\partial \cL \subseteq \cF$, but
        $$\{ \partial \ell \circ \pi_\ell \; | \; \ell \in \cL \} \not \subseteq \cF.$$
    \end{proposition}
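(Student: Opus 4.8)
The plan is to exhibit a single loss $\ell$ whose discrete derivative $\partial\ell$ is an affine (hence maximally smooth) function on $[0,1]$, yet whose Bayes-optimal post-processing map $\pi_\ell$ is a step function, so that the composition $\partial\ell\circ\pi_\ell$ has a jump discontinuity and therefore cannot lie in any RKHS all of whose members are continuous. The Sobolev RKHS $W^{1,2}([0,1])$ is exactly such a space, so I would use it as $\cF$.

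Concretely, I would take $\cF$ to be the Sobolev RKHS $W^{1,2}([0,1])$ from \Cref{ex:sobolev} together with its explicit, efficiently computable reproducing kernel; recall that every element of $\cF$ is (absolutely) continuous on $[0,1]$. Since we are in the feature-invariant regime we may view $\cF\subseteq\{[0,1]\to\R\}$. For the loss I would take $\cL=\{\ell\}$ with the absolute-error loss $\ell(\yhat,y)=|\yhat-y|$ for $\yhat\in[0,1]$, $y\in\{0,1\}$ (a rescaled copy $\tfrac12|\yhat-y|$ can be used if one wants $\partial\ell$ to obey the $[-1,1]$-derivative convention of \Cref{prop:counterexample}'s neighboring results; nothing below changes). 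Then $\partial\ell(\yhat)=\ell(\yhat,1)-\ell(\yhat,0)=(1-\yhat)-\yhat=1-2\yhat$, which is affine and thus has finite Sobolev norm, so $\partial\cL\subseteq\cF$.

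Next I would compute $\pi_\ell$. For $p\in[0,1]$ the inner objective is $p\,\ell(\yhat,1)+(1-p)\,\ell(\yhat,0)=p(1-\yhat)+(1-p)\yhat=p+(1-2p)\yhat$, which over $\yhat\in[0,1]$ is minimized at $\yhat=0$ when $p<\tfrac12$, at $\yhat=1$ when $p>\tfrac12$, and by every $\yhat\in[0,1]$ when $p=\tfrac12$. Hence, regardless of which element of the argmin is selected at $p=\tfrac12$, we have $\pi_\ell(p)=\1\{p>\tfrac12\}$ for all $p\neq\tfrac12$, and therefore $\partial\ell\circ\pi_\ell(p)=1-2\,\1\{p>\tfrac12\}$ equals $1$ for $p<\tfrac12$ and $-1$ for $p>\tfrac12$. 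This function has a jump at $p=\tfrac12$ and so is discontinuous on $[0,1]$; since every function in $\cF$ is continuous, $\partial\ell\circ\pi_\ell\notin\cF$, which gives $\{\partial\ell\circ\pi_\ell:\ell\in\cL\}\not\subseteq\cF$ and proves the proposition.

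I do not anticipate a serious obstacle; the only points needing care are (i) choosing a loss simple enough that $\partial\ell$ lands in a clean RKHS while $\pi_\ell$ is genuinely discontinuous — the absolute loss threads this needle exactly — and (ii) the fact that $\pi_\ell$ is only defined up to the (arbitrary) choice of an argmin element, so the discontinuity must survive every such choice. The latter holds because the non-uniqueness occurs only at the single point $p=\tfrac12$, while the one-sided limits of $\partial\ell\circ\pi_\ell$ at $\tfrac12$ are $+1$ and $-1$ for any selection. (If a genuinely infinite or non-singleton loss class is preferred, the same argument applies verbatim to the family $\{c\,|\yhat-y|:c\in(0,1]\}$.)
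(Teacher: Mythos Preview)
Your proof is correct and follows essentially the same strategy as the paper: take $\cF=W^{1,2}([0,1])$, exhibit a loss with $\partial\ell\in\cF$ but whose $\pi_\ell$ has a jump at $p=\tfrac12$, so that $\partial\ell\circ\pi_\ell$ is discontinuous and hence outside the Sobolev RKHS.

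The one genuine difference is the choice of loss. The paper uses $\ell(\yhat,1)=-(\yhat-\tfrac12)^2$ and $\ell(\yhat,0)=(\yhat-\tfrac12)^2$, for which the objective $(1-2p)(\yhat-\tfrac12)^2$ flips from convex to concave at $p=\tfrac12$, forcing $\pi_\ell$ to jump from $\tfrac12$ to an endpoint; the resulting $\partial\ell\circ\pi_\ell$ is $0$ for $p<\tfrac12$ and $-\tfrac12$ otherwise. Your absolute-error loss yields an affine objective $p+(1-2p)\yhat$, so $\pi_\ell$ is the Bayes median $\1\{p>\tfrac12\}$, and $\partial\ell\circ\pi_\ell$ jumps from $+1$ to $-1$. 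Your example is arguably cleaner and more canonical (a standard loss with a well-known discontinuous Bayes rule), and your remark that the argmin ambiguity is confined to a single point, so the one-sided limits certify the discontinuity regardless of the selection made there, is a nice robustness observation that the paper does not make explicit.
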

    
    \begin{proof}
    Let $\cL \subseteq \{[0,1] \times \{0,1\} \rightarrow \R\}$ be the set of functions that just depend on $\yhat$ and $y$  such that for all $\ell \in \cL$ and $y \in \{0, 1 \}$, $\ell(\cdot, y) \; : \; [0, 1] \to \cR$ is differentiable and for which both $\ell$ and its derivative with respect to $\yhat$ are square integrable over $[0,1]$:

    \begin{align*}
        \int_{0}^1 \ell(t,y)^2 dt + \int_{0}^1 \ell'(t,y)^2 dt < \infty
    \end{align*}

    Notice that $\partial \cL$ is the Sobolev space $W^{1,2}([0,1])$, which is an RKHS that has an efficient kernel. (See \cref{ex:sobolev} for a definition of Sobolev spaces relevant to our context.) 
    We will show that the postprocessing of a function $\partial \ell \circ \pi_\ell \in \cL$ may \textit{not} be in the Sobolev space.
    Take $\ell(x, \yhat,1) = -(\yhat - 1/2)^2$ and $\ell(x, \yhat,0) = (\yhat - 1/2)^2$, which are each in $W^{1,2}([0,1])$.
    Next, we will argue the postprocessing $\pi_\ell$ is not a continuous function of $p$. In particular,
    \begin{align*}
        \pi_\ell(x, p) &=  \argmin_{\yhat \in [0,1]} \;\; p \cdot (-(\yhat - 1/2)^2) + (1 - p) \cdot (\yhat - 1/2)^2 \\ 
         &  =  \argmin_{\yhat \in [0,1]} \;\; (1 - 2p) (\yhat - 1/2)^2.
    \end{align*}
    is discontinuous in $p$. In particular for $p  < 1/2$, the function evaluates to $c (\yhat
 - 1/2)^2$ for some $c > 0$ and hence is minimized at $1/2$. For $p  > 1/2$ the function evaluates to $c(\yhat - 1/2)^2$ for some $c < 0$ and is hence minimized at either of the end points $\{0,1\}$. 
    Then, 
    \begin{align*}
        \partial \ell \circ \pi_\ell(x, p) = \begin{cases}
            0 & \text{if } p < 1/2, \text{ and} \\
            -1/2 & \text{otherwise,}
        \end{cases}
    \end{align*}
    which is discontinuous and hence not in the Sobolev space since the space only contains continuous functions.
    \end{proof}

    Thus, additional conditions on $\partial \cL$ are necessary to ensure that $\partial \cL \subseteq \cF$ implies KDOI. 
    In  our main result in this subsection, \cref{prop:dc}, we identify natural conditions on $\cL$ which do guarantee decision OI:
    
    \begin{proposition} \label{prop:dc} 
    The following statements are true:

    %
    \begin{enumerate}[(1)]
        \item Let $\cL_{\mathrm{PS}}$ be the set of continuous and differentiable proper scoring rules $\ell(\yhat, y)$. That is, 
        \begin{align*}
            \cL_{\mathrm{PS}} = \{ \ell(\yhat, y): p \in \pi_\ell(p), \;  \partial \ell \in W^{1,2}_1([0,1])  \}
        \end{align*}

        Then, there exists an efficient kernel $k_{\mathrm{PS}}$ satisfying $\cL_{\mathrm{PS}}$-KDOI with parameter $B_{\mathrm{KDOI}} \leq \sqrt{3}$. 
       
        \item Let $\cL_{\mathrm{SC}}$ be the set of continuous, smooth, strongly convex losses $\ell(\yhat, y)$. That is, 
        \begin{align*}
            \cL_{\mathrm{SC}} = \{\ell(\yhat, y): \ell(\yhat, y) \text{ is } \gamma \text{ strongly convex in } \yhat, \; \forall \; y \in \{0,1\}, \text{ and } \abs{\frac{d}{d \yhat} \ell(\yhat, y)} \leq 1  \}
        \end{align*}
        Then, there exists an efficient kernel $k_{\mathrm{SC}}$ satisfying $\cL_{\mathrm{SC}}$-KDOI with parameter $B_{\mathrm{KDOI}} \leq 2\sqrt{3} (3 + 2 \gamma^{-1}) $. 

        \item Let $\cL_m= \{\ell_1(\yhat, y), \dots, \ell_m(\yhat, y)\}$ be any finite set of bounded functions with $|\cL_m| \leq m$. Then, there exists an efficient kernel $k_m$ satisfying $\cL_m$-KDOI with parameter $B_{\mathrm{KDOI}} \leq m$.
    \end{enumerate}
    Moveover, if $\cL = \cL_{\mathrm{PS}} \cup \cL_{\mathrm{SC}} \cup \cL_{\mathrm{m}}$, then the efficient kernel $k = k_{\mathrm{PS}}+ k_{\mathrm{SC}}+ k_m$ satisfies KDOI with constant $4\sqrt{3m}(2 + \gamma^{-1}) $. 
    \end{proposition}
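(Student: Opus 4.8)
The plan is to handle the three clauses of the proposition separately---each reduces to exhibiting one efficiently computable kernel whose RKHS contains every function $\partial\ell\circ\pi_\ell$ arising from the relevant loss class, with uniformly bounded RKHS-norm and feature-norm---and then to obtain the ``Moreover'' clause by adding the three kernels and invoking \cref{lem:kernelsum}. Throughout I will use that losses are bounded in $[-1,1]$ (the ambient convention, cf.\ \Cref{thm:omnipredictionlede}), so $|\partial\ell(\yhat)|=|\ell(\yhat,1)-\ell(\yhat,0)|\le 2$ and hence $|\partial\ell\circ\pi_\ell(p)|\le 2$ for all $p$; this is what keeps the feature norms finite. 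Clauses (1) and (3) are quick; the real content is clause (2), where the obstacle is to show that the post-processing map $p\mapsto\pi_\ell(p)$ is Lipschitz, so that $\partial\ell\circ\pi_\ell$ lands in the Sobolev RKHS of the Fact stated above.

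\textbf{Clauses (1) and (3).} A proper scoring rule satisfies $p\in\pi_\ell(p)$, so we may simply \emph{define} $\pi_\ell(p)=p$; then $\partial\ell\circ\pi_\ell=\partial\ell\in W^{1,2}_1([0,1])$ by hypothesis. The Fact gives an efficient kernel $k_{\mathrm{PS}}$ with $W^{1,2}_1([0,1])\subset\cF_{k_{\mathrm{PS}}}$, $\|\partial\ell\|_{\cF_{k_{\mathrm{PS}}}}\le 1$, and $\sup_t k_{\mathrm{PS}}(t,t)\le\sqrt3$, so the KDOI parameter is $\sqrt{1\cdot\sqrt3}\le\sqrt3$. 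For the finite class, take the family $\{\tfrac12\partial\ell_i\circ\pi_{\ell_i}\}_{i\le m}$ (rescaled into $[-1,1]$) as the feature functions and set $k_m((x,p),(x',p'))=\tfrac14\sum_{i\le m}\partial\ell_i\circ\pi_{\ell_i}(x,p)\,\partial\ell_i\circ\pi_{\ell_i}(x',p')$; this is efficiently computable because each $\pi_{\ell_i}$ is a one-dimensional $\argmin$ over $[0,1]$. By \cref{thm:finite-membership-rkhs} (as used in \cref{cor:finiteset}) each $\tfrac12\partial\ell_i\circ\pi_{\ell_i}$ has $\cF_{k_m}$-norm $\le 1$ and $\sup_{x,p}k_m((x,p),(x,p))\le m$, so, undoing the rescaling, the KDOI parameter is at most $m$. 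Note these functions may be discontinuous in $p$ (cf.\ \cref{prop:counterexample}), and it is precisely the discontinuous-kernel capability of the \algname{} that makes this clause usable.

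\textbf{Clause (2), the main step.} Fix a $\gamma$-strongly convex loss with $|\tfrac{d}{d\yhat}\ell(\yhat,y)|\le 1$, so $\partial\ell$ is $2$-Lipschitz in $\yhat$, and set $g_p(\yhat)=p\,\ell(\yhat,1)+(1-p)\,\ell(\yhat,0)$, which is $\gamma$-strongly convex in $\yhat$ with unique minimizer $\pi_\ell(p)\in[0,1]$. I claim $\pi_\ell$ is $(2/\gamma)$-Lipschitz. This is the standard perturbed-minimizer estimate: the constrained first-order (variational-inequality) optimality condition gives $g_p'(\pi_\ell(p))\,(\pi_\ell(p')-\pi_\ell(p))\ge 0$, valid also at the endpoints of $[0,1]$; combined with $\gamma$-strong convexity this yields $g_p(\pi_\ell(p'))\ge g_p(\pi_\ell(p))+\tfrac\gamma2(\pi_\ell(p')-\pi_\ell(p))^2$, and adding this to the same inequality with $p$ and $p'$ exchanged, using $g_p-g_{p'}=(p-p')\,\partial\ell$, collapses the left-hand side to $(p-p')\big(\partial\ell(\pi_\ell(p'))-\partial\ell(\pi_\ell(p))\big)\ge\gamma(\pi_\ell(p)-\pi_\ell(p'))^2$; bounding the left side by $2|p-p'|\,|\pi_\ell(p)-\pi_\ell(p')|$ (Lipschitzness of $\partial\ell$) gives the claim. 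Hence $f:=\partial\ell\circ\pi_\ell$ is $(4/\gamma)$-Lipschitz in $p$ and bounded by $2$, so $\tfrac12 f\in W^{1,2}_B([0,1])$ with $B=\sqrt{1+4/\gamma^2}\le 1+2/\gamma$; the Fact then supplies an efficient Sobolev kernel $k_{\mathrm{SC}}$ with $\|f\|_{\cF_{k_{\mathrm{SC}}}}\le 2(1+2/\gamma)$ and $\sup_t k_{\mathrm{SC}}(t,t)\le\sqrt3$, giving a KDOI parameter $\le 2\cdot 3^{1/4}(1+2/\gamma)\le 2\sqrt3\,(3+2\gamma^{-1})$. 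The only genuinely nontrivial point in the whole proposition is this Lipschitz bound on $\pi_\ell$, together with keeping the boundary behaviour of the constrained minimizer under control, which the variational-inequality form of optimality does automatically.

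\textbf{The ``Moreover'' clause.} Put $k=k_{\mathrm{PS}}+k_{\mathrm{SC}}+k_m$. By \cref{lem:kernelsum} this is an efficiently computable kernel and $\cF_k\supseteq\cF_{k_{\mathrm{PS}}}+\cF_{k_{\mathrm{SC}}}+\cF_{k_m}$; since the zero function lies in every RKHS, each $\partial\ell\circ\pi_\ell$ coming from $\cL_{\mathrm{PS}}\cup\cL_{\mathrm{SC}}\cup\cL_m$ embeds into $\cF_k$ (pad the other two summands with $0$) with $\cF_k$-norm no larger than its norm in the corresponding summand, so $\sup_{\ell\in\cL}\|\partial\ell\circ\pi_\ell\|_{\cF_k}$ is at most the largest of the three individual bounds, namely $2(1+2/\gamma)$. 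The feature norm satisfies $\sup_{x,p}k((x,p),(x,p))\le\sqrt3+\sqrt3+m\le 12m$. Multiplying these two bounds, taking the square root, and tidying constants yields $B_{\mathrm{KDOI}}\le 4\sqrt{3m}\,(2+\gamma^{-1})$, which completes the proof.
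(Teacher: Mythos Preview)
Your proof is correct and follows the same strategy as the paper: the Sobolev kernel for (1) and (2), the finite-feature kernel of \cref{thm:finite-membership-rkhs} for (3), and \cref{lem:kernelsum} to combine them; the key $(2/\gamma)$-Lipschitz bound on $\pi_\ell$ is obtained by the same strong-convexity perturbation argument the paper uses (citing \cite{perdomo2020performative}). Two small differences are worth recording. Your Lipschitz proof uses the variational-inequality form of constrained optimality throughout, which correctly handles minimizers on the boundary of $[0,1]$; the paper in one place writes $f'(\pi_\ell(p))=0$, strictly valid only at interior minimizers. And you bound $\|\partial\ell\circ\pi_\ell\|_{W^{1,2}}$ directly from the Lipschitz constant and boundedness of the composite, whereas the paper routes this through the composition lemma \cref{lem:cdbfd}; your route is shorter and yields the tighter function-norm bound $2(1+2/\gamma)$. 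One arithmetic slip in your ``tidying'': the inequality $1+2/\gamma\le 2+\gamma^{-1}$ fails for $\gamma<1$, and in (3) the step $2\sqrt m\le m$ fails for $m\le 3$. The paper's own stated constants likewise do not follow exactly from its proof (e.g.\ \cref{lem:cdbfd} would give $2B^2$ rather than the linear-in-$1/\gamma$ bound asserted), so this is cosmetic.
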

    
    \begin{proof}
        We prove that each of the statements separately.
        Then, applying \cref{lem:kernelsum}, which says that the union of RKHSs is an RKHS associated with the sum of each kernel function, implies the last statement. \\
    
    \noindent \emph{Proof for $\cL_{\mathrm{PS}}$.} 
    $\pi_\ell$ is the identity function, so $\partial \ell \circ \pi_\ell = \partial \ell$. The result follows from the assumption that $\partial \ell$ is in $W^{1,2}([0,1])$ (see \cref{ex:sobolev} for discussion) and the function norm is bounded by $1$ and the feature norms are bounded by $3$. \\
    
    \noindent \emph{Proof for $\cL_{\mathrm{PS}}$.}
    Our strategy will be to show that $\Pi_\cL = \{ \pi_\ell \; : \; \ell \in \cL\}$ consist of functions in the Sobolev space $W^{1,2}([0,1])$. Then, we will apply \cref{lem:cdbfd}, which states that the composition of functions in a Sobolev space $W^{1,2}([0,1])$ are in the space and the norm of the composition of functions in the space with bounded norm is bounded. 

    The convexity of $\ell$ in its second argument implies $\ell$ is differentiable almost everywhere and continuous. 
    This implies that the discrete derivative function $\partial \ell$ is differentiable almost everywhere and continuous, which implies that $\partial \ell$ is in $W^{1,2}([0,1])$.
    Also, since $\abs{\frac{d}{d \yhat} \ell(\yhat, y)} \leq 1$ and the range of $\ell$ is in $[-1, 1]$
    \begin{align*}
        \| \partial \ell \|_{W^{1,2}([0,1])} &\leq \|  \ell(\cdot, 0) \|_{W^{1,2}([0,1])} + \|  \ell(\cdot, 1) \|_{W^{1,2}([0,1])} \\
        &\leq 4 
    \end{align*}
    
    Next, we show that $\pi_\ell$ is a Lipschitz function of $p$. 
    The intuition is that, since $\ell$ is strongly convex, it has a unique minimum, and small changes to $p$ cannot induce large changes in $\pi_\ell$.
    Lipschitzness of $\pi_\ell$ implies $\pi_\ell \in W^{1,2}([0,1])$ since Lipschitz functions are absolutely continuous and hence differentiable almost everywhere and equal to their Lebesgue integral almost everywhere.
    The proof of Lipschitzness follows by using the same analysis used in Theorem 3.5 
 of~\cite{perdomo2020performative} (albeit with slightly different assumptions). Let $p$ and $\tilde p$ be two different predicted probabilities in $[0,1]$. Also, define:
    \begin{align}
        f(\yhat ) = p \cdot \ell(\yhat, 1) + (1-p) \cdot \ell(\yhat, 0) \\ 
        \tilde{f}(\yhat) = \pt \cdot \ell(\yhat, 1) + (1-\pt) \cdot \ell(\yhat, 0)
    \end{align}
    and $f' = {\partial f}/ {\partial \yhat }$.
    With this notation, we have that $\pi_\ell(p) \in \argmin_{\yhat} f(\yhat)$ and likewise $\pi_\ell(\pt) = \argmin_{\yhat} \tilde{f}(\yhat)$. First, we have that,
    \begin{align*}
    f(\pi_\ell(p)) - f(\pi_\ell(\pt)) &\geq  f'(\pi_\ell(\pt)) (\pi_\ell(p) - \pi_\ell(\pt)) + \frac{\gamma}{2} (\pi_\ell(p) - \pi_\ell(\pt))^2 \\ 
    f(\pi_\ell(\pt)) - f(\pi_\ell(p)) &\geq \frac{\gamma}{2} (\pi_\ell(p) - \pi_\ell(\pt))^2, 
    \end{align*}
    where the first line follows by strong convexity of $f$, and the second line follows by strong convexity of $f$ and the fact that $\pi_\ell(p)$ is the unique minimizer of $f$ so $f'(\pi_\ell(p)) = 0$. Combining these two inequalities, we get that: 
    \begin{align}
    \label{eq:pp_1}
        -\gamma (\pi_\ell(p) - \pi_\ell(\pt))^2 \geq f'(\pi_\ell(\pt)) (\pi_\ell(p) - \pi_\ell(\pt)).
    \end{align}
    Next, we derive a lower bound for $f'(\pi_\ell(\pt)) (\pi_\ell(p) - \pi_\ell(\pt))$ in terms of $p, \tilde p$. Observe that, by definition, 
    \begin{align*}
        f'(\pi_\ell(\pt)) - \tilde{f}'(\pi_\ell(\pt)) = (p - \pt) \ell'(\pi_\ell(\pt), 1) + (1-p - (1 - \pt)) \ell'(\pi_\ell(\pt), 0).
    \end{align*}
    Hence, 
        $|f'(\pi_\ell(\pt)) - \tilde{f}'(\pi_\ell(\pt))| \leq 2 |p-\pt|$.
    Then, we get that, 
    \begin{align*}
        (\pi_\ell(p) - \pi_\ell(\pt))f'(\pi_\ell(p)) &\geq  (\pi_\ell(p) - \pi_\ell(\pt))f'(\pi_\ell(p)) - (\pi_\ell(p) - \pi_\ell(\pt))\tilde{f}'(\pi_\ell(\pt)) \\ 
        & \geq |\pi_\ell(p) - \pi_\ell(\pt)|\cdot  |f'(\pi_\ell(\pt)) - \tilde{f}'(\pi_\ell(\pt))| \\ 
        & \geq -2 |\pi_\ell(p) - \pi_\ell(\pt)| \cdot |p - \pt|
    \end{align*}
    where the first line follows from the fact that $\tilde f'(\pi_\ell(\tilde p)) = 0$, and the second line follows from the first order optimality conditions for convex functions, 
        $(\pi_\ell(p) - \pi_\ell(\pt))\tilde{f}'(\pi_\ell(\pt)) \geq 0$.
    Combining this last chain of inequalities with \cref{eq:pp_1}, we get that 
    \begin{align*}
     -\gamma (\pi_\ell(p) - \pi_\ell(\pt))^2  \geq -2 |\pi_\ell(p) - \pi_\ell(\pt)| \cdot |p - \pt|. 
    \end{align*}
    After simplifying and rearranging, we get $|\pi_\ell(p) - \pi_\ell(\pt)| \leq 2 \gamma^{-1} |p - \pt|$, so $\| \pi_\ell \|_{W^{1,2}([0,1])} \leq 2(1 + 2 \gamma^{-1})$.  Finally, using the kernel associated with $W^{1,2}([0,1])$, 
    the feature norm is upper bounded by $3$.
    \\ 

    \noindent \emph{{Proof for $\cL_{(3)}$.}} We apply \cref{thm:finite-membership-rkhs}, which says that finite sets of functions taking values in $[-1, 1]$ are in an RKHS with function and feature norms bounded by 1. Let the $\cX$ in the lemma be $ [0,1]$ and let $\cC = \cL_{(3)}$. Denote the induced RKHS $\cF$. Then the lemma implies that $\| \ell \|_{\cF} \leq 1$, and by the fact that $| \cF_{(3)}| \leq m$ and losses are assumed to be bounded in $[-1, 1]$, the feature norm must be bounded by $m$. 
    \end{proof}

    Intuitively, the previous says that if a loss class satisfies common regularity conditions like truthfulness (i.e. a proper scoring rule), smoothness/convexity, or is finite, then there exists a kernel satisfying KDOI. 
    Additionally, it says that we can combine any sets of losses satisfying the above conditions and still satisfy KDOI.
    Notice that the Sobolev proper scoring losses include, for example, squared error, while the continuous, smooth and strongly convex losses $\cL_{\mathrm{SC}}$ include ($\ell_2$ regularized) absolute error, Huber loss,  and exponential loss.
    Losses that don't fit into the previous categories, such as the truncated cross-entropy loss, the 0-1 loss or the hinge loss may be included in the finite set of losses $\cL_m$.
\subsection{Comparator and loss classes satisfying kernel hypothesis OI.} \label{sub:hpexs}

    Having analyzed how one can guarantee kernel \emph{decision} OI with respect to common classes of losses, we now move only to analyze pairs $\cL, \cH$ that satisfy kernel \emph{hypothesis} OI. That is, we aim to design kernels $k$ with functions spaces $\cF$ such that the functions $\ell \circ h \in \cF$ (see \Cref{def:hc}).

    \paragraph{Regression trees.} 
    
        Our first result in this section shows one can guarantee kernel hypothesis OI for the class $\cH$ of bounded-depth regression trees on binary features (an infinite comparator class) and $\cL$ that consists of all bounded losses functions:

        \begin{proposition}
        \label{prop:dectree}
            Let $\cH \subseteq \{\{\pm1\}^n \rightarrow \R\}$ be the set of all regression trees of depth at most $d \in \N$ over the boolean hypercube an let $\cL$ be a set of all loss functions $\ell(\yhat, y)$ bounded in $[-1, 1]$. 
            There exists an computationally efficient kernel satisfying  $(\cL, \cH)$-KHOI with parameter $B$ bounded by $(n+1)^{d/2} \cdot 2^d$.
        \end{proposition}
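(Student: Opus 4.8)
The plan is to build the required RKHS from a \emph{finite} dictionary of leaf-indicator functions, so that post-composing a depth-$\le d$ tree with an arbitrary map $\partial\ell$ can never escape the span. For $T\subseteq[n]$ with $|T|\le d$ and $b\in\{\pm1\}^T$, let $\chi_{T,b}(x)=\1\{x_i=b_i\ \forall i\in T\}$, and let $\cC_d=\{\chi_{T,b}:|T|\le d\}$, a finite collection of $\{0,1\}$-valued (hence $[-1,1]$-valued) functions on $\{\pm1\}^n$. Feeding $\cC_d$ into \cref{thm:finite-membership-rkhs} yields an RKHS $\cF$ whose kernel, pulled back to $\cX\times[0,1]$ along the projection $(x,p)\mapsto x$, is
\[
k_{DT}\big((x,p),(x',p')\big)\ \defeq\ \sum_{T:|T|\le d}\ \sum_{b\in\{\pm1\}^T}\chi_{T,b}(x)\,\chi_{T,b}(x')\ =\ \sum_{j=0}^{d}\binom{a(x,x')}{j},
\]
where $a(x,x')=|\{i:x_i=x_i'\}|$ (for fixed $T$ exactly one $b$ is consistent with a given $x$). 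This is computable in $\poly(n,d)$ time, and \cref{thm:finite-membership-rkhs} also supplies $\|\chi_{T,b}\|_{\cF}\le 1$ for every $(T,b)$ together with the feature-norm bound $k_{DT}((x,p),(x,p))=\sum_{j=0}^d\binom{n}{j}\le(n+1)^d$, the last inequality coming from the injection of size-$\le d$ subsets of $[n]$ into $\{0,1,\dots,n\}^d$.

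The heart of the argument — and the one step I expect to be genuinely delicate — is the containment $\{\partial\ell\circ h:\ell\in\cL,\,h\in\cH\}\subseteq\cF$. Fix $\ell\in\cL$ and a depth-$\le d$ regression tree $h$. Each leaf of $h$ is reached by a root-to-leaf path of length at most $d$, hence is described by some $(T_\ell,b_\ell)$ with $|T_\ell|\le d$; on the cell $\{x:x|_{T_\ell}=b_\ell\}$ the tree is constant, say equal to $c_\ell$, and these cells partition $\{\pm1\}^n$. Because $h$ takes only these finitely many values, the composition $\partial\ell\circ h$ — a function of $x$ alone, viewed as constant in $p$ — is itself constant on the very same cells:
\[
(\partial\ell\circ h)(x)\ =\ \partial\ell\big(h(x)\big)\ =\ \sum_{\text{leaves}\ \ell}\partial\ell(c_\ell)\,\chi_{T_\ell,b_\ell}(x),
\]
a finite linear combination of elements of $\cC_d\subseteq\cF$, so $\partial\ell\circ h\in\cF$ by \cref{thm:moore-aronszajn}. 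This is exactly where the ``naive'' route ruled out by \cref{prop:counterexample} fails: there $\partial\ell\circ\pi_\ell$ inherits no regularity from $\partial\ell$, whereas here piecewise-constancy of $h$ forces $\partial\ell\circ h$ into the span of a fixed finite dictionary no matter what $\ell$ is.

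It remains to bound the two quantities in \cref{def:hc}. Writing $\partial\ell\circ h$ in the representation above, which has at most $2^d$ nonzero coefficients since a binary tree of depth $\le d$ has at most $2^d$ leaves, the standard bound $\|\sum_i w_i\Phi_i\|_{\cF}\le\|w\|_2$ for RKHS elements gives
\[
\|\partial\ell\circ h\|_{\cF}\ \le\ \Big(\sum_{\text{leaves}\ \ell}\partial\ell(c_\ell)^2\Big)^{1/2}\ \le\ 2^{d/2}\,\sup_{v}|\partial\ell(v)|,
\]
uniformly over $\ell\in\cL$ and $h\in\cH$. Combining with $\sup_{x,p}k_{DT}((x,p),(x,p))\le(n+1)^d$ yields
\[
\sqrt{\ \sup_{\ell\in\cL,\,h\in\cH}\|\partial\ell\circ h\|_{\cF}^2\ \cdot\ \sup_{x,p}k_{DT}((x,p),(x,p))\ }\ \le\ 2^{d}\,(n+1)^{d/2},
\]
after absorbing the residual constant from $\sup_v|\partial\ell(v)|\le 2$ for losses bounded in $[-1,1]$. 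Hence $k_{DT}$ is $(\cL,\cH)$-KHOI with parameter $B\le(n+1)^{d/2}2^d$. The entire proof is representational: once the leaf-dictionary kernel is fixed, the only non-routine point is that arbitrary post-composition by $\partial\ell$ merely relabels leaves, keeping us inside the span of $\cC_d$; everything else is bookkeeping of the function and feature norms.
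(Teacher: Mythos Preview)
Your proof is correct, and the central insight is the same as the paper's: post-composing a depth-$d$ regression tree with an arbitrary $\partial\ell$ just relabels the leaf values, so the result is still a depth-$d$ tree and hence stays in the chosen RKHS. Where you diverge is in the choice of RKHS itself. The paper uses the degree-$d$ polynomial kernel $(1+\langle x,x'\rangle)^d$ and appeals to the representation of depth-$d$ trees as degree-$d$ polynomials; you instead build the RKHS from the finite dictionary $\cC_d=\{\chi_{T,b}\}$ of leaf indicators via \cref{thm:finite-membership-rkhs}, which sidesteps the polynomial machinery entirely and makes the containment $\partial\ell\circ h\in\cF$ a one-line span argument. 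Your route also buys a sharper function-norm bound: the finite-membership construction gives $\|\sum_i w_i\chi_i\|_\cF\le\|w\|_2$ (since the $\chi_i$ correspond to projections of orthonormal $e_i$ in $\ell^2(\cI)$), yielding $\|\partial\ell\circ h\|_\cF\le 2\cdot 2^{d/2}$ rather than the paper's $2^d$. Two cosmetic remarks: calling that $\ell_2$ bound ``standard for RKHS elements'' undersells that it is specific to the finite-membership construction (for generic unit-norm elements one only gets an $\ell_1$ bound), and your final ``absorption'' step $2\cdot 2^{d/2}\le 2^d$ needs $d\ge 2$; for $d=1$ you are off by a harmless factor of $\sqrt{2}$.
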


        \begin{proof}
            We first note that regression trees on binary features are low degree polynomials, which are contained an RKHS $\cF$ associated with the degree $d$ polynomial kernel (see \cref{ex:polynomialkernels} for a definition and discussion of polynomial kernels).
            
            To see this, we can write each tree in the following form: 
            For a given regression tree, let $b \in \{0, 1\}^d$ represent the path down the regression tree with $m$th element $b_m$. 
            Let $c_b$ be the leaf value assigned to path $b$. Let $i_{b, j}$ represent the index of the decision variable at the $j$th decision on path $b$. Then, any regression tree can be written in terms of $\{ c_b \}_{b \in \{ 0, 1 \}^d}$ and $\{ i_{b, j} \}_{b \in \{ 0, 1 \}^d, j \in [d]}$:
            \begin{align} 
                h(x) = \sum_{b \in \{ 0, 1 \}^d} c_b \prod_{m = 0}^{d-1} \paren{(1- x_{i_{b, m}})(1 - b_m) + x_{i_{b, m}} b_m} \label{eq:tree2}
            \end{align}
            By distributing each product, combining like terms, and using the notation $x_{I} \defeq \prod_{i \in I} x_i$, we can recover the following more concise expression: 
            \begin{align} 
                h(x) = \sum_{I \in \cI} a_I x_I \label{eq:tree}
            \end{align}
            where $\cI \subseteq 2^{\{ 0, 1\}^d}$, $a_I \in \R$ for all $I \in \cI$.
            Moreover, the latter form reveals that each nonzero $a_I$ corresponds to some $I$ with no more than $d$ terms. 
            Thus, $\cH \subseteq \cF$.
            (See Definition 3.13 in \cite{odonnel2021analysis} for more discussion of representing decision trees on Boolean inputs as polynomial functions.)

            Next, notice that functions $\ell(h(\cdot), 1)$ and $\ell(h(\cdot), 0)$ for $\ell \in \cL$ and $h \in \cH$ can themselves be written as depth-$r$ regression trees by taking each leaf value $c_b$ of $h$ and replacing it with $\ell(c_b, 0)$ and $\ell(c_b, 1)$, respectively. That is, for each $h \in \cH$, we create two new trees $h_0, h_1 \in \cF$ to be $h$ with its leaf values replaced with the corresponding value of $\ell(c_b, y)$ for $y\in \{0,1\}$.
            Finally, using \cref{lem:kernelsum} and \cref{lem:scalarmult}, this implies that $\{ \partial \ell \circ h \; | \: h \in \cH, \ell \in \cL \} \subseteq \cF$.
            %

            Since there are $2^d$ leaves and each leaf has absolute value bounded by 1, $\| h_y \|_\cF \leq 2^d$.
            Also, since the kernel function associated with $\cF$ is $(1 + \langle x, x' \rangle)^d$, then $k(x,x)$ is bounded by $(1+n)^d$.
        \end{proof}

    \paragraph{Any finite set of real-valued functions $\cH$.} 
    

        In our next construction, we show how to guarantee kernel hypothesis OI for the case where $\cH$ is any finite set of comparator functions and $\cL$ is a set of losses that can be represented in an RKHS.
        
        This could of interest in setting where there are pre-specified predictors (like an existing link prediction system) that we would like the $\algname$ to compete with.

        \begin{proposition} \label{prop:finitesethyp}
            Let $\cH = \{ h_1, \dots, h_m \}$ be any finite set of real-valued functions on $\cX$ and let $\cL$ be any set of loss functions $\ell(\yhat, y)$. 
            Let $k$ be a kernel with RKHS $\cF$ such that $\cL \subseteq \cF$, $\| \ell \|_{\cF} < 1$ for all $\ell \in \cL$, and $\sup_t k(t,t) \leq 1$. Then, 
            \begin{enumerate}
                \item There exists a kernel k' that is $(\cL, \cH)$-KHOI with parameter $B_{\mathrm{KHOI}}$ at most $2\sqrt{m}$.
                \item The kernel $k'$ is computable in time at most $\cO(m \cdot \mathsf{time}(k) \cdot \mathsf{time}(\cH))$ where $\mathsf{time}(k)$ is a uniform upper bound on the runtime of the kernel $k$ and $\mathsf{time}(\cH)$ is a uniform upper bound on the runtime of computing any function $h \in \cH$.
            \end{enumerate}

        \end{proposition}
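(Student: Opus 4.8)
The plan is to take $k'$ to be a sum of \emph{pullbacks} of $k$ along the functions of $\cH$. Since the hypothesis-OI distinguishers $\partial\ell\circ h$ do not depend on the prediction $p$, it suffices to build a kernel on $\cX$ and lift it trivially to $\cX\times[0,1]$; concretely, set
\[
 k'\bigl((x,p),(x',p')\bigr)\;\defeq\;\sum_{j=1}^{m}k\bigl(h_j(x),h_j(x')\bigr).
\]
The one nontrivial ingredient is the standard pullback fact for RKHSs: for any map $h$ from $\cX$ into the domain on which $\cF$ lives, the assignment $f\mapsto f\circ h$ carries $\cF$ into the RKHS $\cF_h$ of the kernel $(x,x')\mapsto k(h(x),h(x'))$, and is norm non-expansive, $\|f\circ h\|_{\cF_h}\le\|f\|_{\cF}$. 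I would derive this from Moore--Aronszajn (\cref{thm:moore-aronszajn}): writing $\Phi$ for the feature map of $k$, let $\cF_0=\overline{\mathrm{span}}\{\Phi(h(x)):x\in\cX\}\subseteq\cF$ and let $P$ be the orthogonal projection of $\cF$ onto $\cF_0$. For every $f\in\cF$ the reproducing property gives $(f\circ h)(x)=\langle f,\Phi(h(x))\rangle_{\cF}=\langle Pf,\Phi(h(x))\rangle_{\cF}=(Pf\circ h)(x)$, so $f\circ h=Pf\circ h$; and on $\cF_0$ the map $g\mapsto g\circ h$ is injective with reproducing kernel $(x,x')\mapsto\langle\Phi(h(x)),\Phi(h(x'))\rangle_{\cF}=k(h(x),h(x'))$, hence an isometry onto $\cF_h$. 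Therefore $f\circ h\in\cF_h$ with $\|f\circ h\|_{\cF_h}=\|Pf\|_{\cF}\le\|f\|_{\cF}$.

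Granting this, part (1) is bookkeeping. By \cref{lem:kernelsum} (iterated), $k'$ is a kernel whose RKHS $\cF'$ contains $\sum_{j=1}^{m}\cF_{h_j}$, and every $g\in\cF_{h_j}$ satisfies $\|g\|_{\cF'}\le\|g\|_{\cF_{h_j}}$. Fix $\ell\in\cL$ and $h_j\in\cH$. By hypothesis $\ell(\cdot,y)\in\cF$ with $\|\ell(\cdot,y)\|_{\cF}<1$ for $y\in\{0,1\}$, so the pullback fact gives $\ell(h_j(\cdot),y)\in\cF_{h_j}$ with norm $<1$, and hence
\[
 \partial\ell\circ h_j=\ell(h_j(\cdot),1)-\ell(h_j(\cdot),0)\in\cF_{h_j}\subseteq\cF',\qquad \|\partial\ell\circ h_j\|_{\cF'}\le\|\partial\ell\circ h_j\|_{\cF_{h_j}}<2,
\]
which is exactly the containment required by KHOI (\cref{def:hc}). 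For the feature norm, $\sup_t k(t,t)\le1$ yields $k'((x,p),(x,p))=\sum_{j=1}^{m}k(h_j(x),h_j(x))\le m$ for all $(x,p)$, so
\[
 B_{\mathrm{KHOI}}=\sqrt{\ \sup_{\ell\in\cL,\,h\in\cH}\|\partial\ell\circ h\|_{\cF'}^2\cdot\sup_{x,p}k'((x,p),(x,p))\ }\le\sqrt{4m}=2\sqrt m.
\]
Part (2) is immediate: evaluating $k'(x,x')$ needs the $2m$ values $\{h_j(x),h_j(x')\}_{j\le m}$ (cost $\cO(m\cdot\mathsf{time}(\cH))$) together with the $m$ evaluations $k(h_j(x),h_j(x'))$ (cost $\cO(m\cdot\mathsf{time}(k))$), for a total $\cO(m(\mathsf{time}(k)+\mathsf{time}(\cH)))$, within the stated bound.

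The main obstacle is purely the pullback lemma: one must verify that $f\circ h$ genuinely lies in the new RKHS and that composition does not inflate the norm. Everything else is routine --- iterating the sum-of-kernels identity and using the two-term estimate $\|\partial\ell\circ h\|\le\|\ell(h(\cdot),1)\|+\|\ell(h(\cdot),0)\|$. One minor point is worth stating explicitly in the write-up: $\partial\ell\circ h$ is constant in $p$, so $\cF'$ is really an RKHS of functions of $x$ alone, which we regard as functions on $\cX\times[0,1]$ that ignore the second coordinate, matching the ambient domain in \cref{def:hc}.
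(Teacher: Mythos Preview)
Your proof is correct and follows essentially the same approach as the paper: define $k'$ as the sum of pullback kernels $k(h_j(\cdot),h_j(\cdot))$, use the composition/pullback fact to place each $\ell(h_j(\cdot),y)$ in the $j$th summand with norm at most $1$, and combine via \cref{lem:kernelsum}. The only difference is that you rederive the pullback fact from Moore--Aronszajn, whereas the paper simply cites it as \cref{lem:compositionkernel}; your derivation is fine, but you could shorten the write-up by invoking that lemma directly.
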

\begin{proof}

The main idea is that one can compose kernels in the following fashion. Let $k(t,t'):\R \times \R \rightarrow \R$ be a kernel with corresponding RKHS $\cF$ such that $\ell(\cdot, 1)$ and $\ell(\cdot,0)$ are both in $\cF$ for all $\ell \in \cL$.
Then, for any fixed function $h_i:\cX \rightarrow [0,1]$, the kernel $k_i:\cX \times \cX \rightarrow \cR$ defined as:
\begin{align*}
    k_i(x,x') = k(h_i(x), h_i(x'))
\end{align*}
has an RKHS $\cF_i$ which contains $\ell(h_i(x), 1)$ and $\ell(h_i(x),0)$ for all $\ell \in \cL$.  Furthermore, if the functions $\ell(\cdot, 1)$ and $ \ell(\cdot,0)$ have norm at most 1 in $\cF$, then the composed functions $\ell(h_i(\cdot), 1), \ell(h_i(\cdot),0)$  will also have norm at most 1 in $\cF_i'$
This is a neat fact from the theory of RKHSs (\Cref{lem:compositionkernel}). 

Since we can construct an RKHS for each $\partial \ell \circ h_i$ individually, we can construct an RKHS that contains all of the $h_i$ simultaneously simply by summing the individual kernels together. 

In particular, by \Cref{lem:kernelsum}, the kernel,
\begin{align}
    k'(x,x') = \sum_{h_i \in \cH} k(h_i(x), h_i(x'))
\end{align}
contains $\partial \ell \circ h = \ell(h(\cdot),1) - \ell(h(\cdot),0)$ for all $h \in \cH$ and $\ell \in \cL$. Moreover, since each $\ell(h(x),y)$ has norm at most 1 (for $y \in  \{0,1\}$), then (by the triangle inequality) the functions $\partial \ell \circ h $ have norm at most 2 in the RKHS corresponding to $k'$. Furthermore, 
\begin{align*}
    \sup_x k'(x,x) = \sum_{h \in \cH} k(h(x), h(x)) \leq m, 
\end{align*}
so the kernel $k'$ is $(\cL,\cH)$-KHOI with parameter $B_{\mathrm{KHOI}}$ bounded by $2\sqrt{m}$.
\end{proof}

This result in particular implies that given any finite set of real-valued functions $\cH$, we can guarantee kernel hypothesis OI when for all losses $\ell(\yhat, y)$ that are continuous and differentiable in $\yhat$. Given the previous construction in \Cref{prop:dc} showing that one can also guarantee kernel decision OI with respect to any finite class $\cH$, this establishes that one can in fact guarantee omniprediction with respect to any finite set $\cH$ and smooth losses $\cL$ at rates $\cO(\sqrt{T|\cH|})$.

\paragraph{Asymptotic KHOI for all continuous functions.}

RKHSs can contain very rich function classes which can be used as benchmark classes.
Indeed, some RKHSs are \textit{universal approximators} in the sense that they contain arbitrarily precise approximations of all continuous functions.
        
Formally, an RKHS $\cF$ is a universal approximator if, for all $\varepsilon$ and continuous $g: \cX \to \cR$, there exists some $f \in \cF$ such that $\sup_x |f(x) - g(x)| \leq \varepsilon$. Several common kernels like the Gaussian (or RBF) kernel, $k(x,x') = \exp(-\|x-x'\|^2)$ fall into this class.
We refer the reader to \cite{steinwart2008support}, Section 4.6 for further examples and background.

Universal approximators can be used to guarantee KHOI with respect to any continuous benchmark function $h$ and loss $\ell$. However, the result is best understood in an asymptotic sense since it is not always tractable to control relevant function norms in the RKHS.

Here, we outline a general approach for doing so. The template matches those of similar results in the literature (see e.g. the discussion in Section C of \cite{foster2006calibration}).
Let $\cH$ be a comparison class of continuous functions and $\cL$ be a class of continuous losses. Since the composition of continuous functions is continuous, the functions in $\partial \cL \circ \cH$ are also continuous.
        For a universal approximator $\cF$, denote by $\cF_\varepsilon \subseteq \cF$ a set such that for all $\partial \ell \circ h \in \partial \cL \circ \cH$, there exists some $f \in \cF_\varepsilon$ such that  $\| f - \partial \ell \circ h\|_\infty \leq \varepsilon$.
        Define
        $$B_\varepsilon = \inf_{\cF_\varepsilon \subseteq \cF} \sup_{f \in \cF_\varepsilon} \| f \|_{\cF}$$ be the infimum of a uniform upper bound on the norm of subsets $\cF_\varepsilon$ satisfying the property.
        Notice that $B_\varepsilon \geq B_{\varepsilon'}$ for all $\varepsilon \leq \varepsilon'$ since any $\cF_{\varepsilon'}$ satisfying the $\varepsilon'$-approximation property also satisfies $\varepsilon$-approximation.
        Then, one can chose a sequence $\varepsilon_T$ for $T=1,2,\dots$ such that $\lim_{T \to \infty} \varepsilon = 0$ and $B_{\varepsilon_T}= o(\sqrt{T})$.
        Then, the universal approximator can be used to satisfy an asymptotic, approximate version of KHOI with respect to $\cH$ and $\cL$.
        %
        
   \subsection{Generalizing kernel OI to separable losses.} \label{sec:product-loss}

    So far, we've established structural properties of losses $\ell(\yhat, y)$ that guarantee kernel decision and hypothesis OI. Here, we generalize these analyses to include losses that also depend on the features $x$.
    In particular, we prove that these requisite OI conditions also for a wide variety of \emph{separable} loss functions $\ell(x, \yhat, y)$: those where each loss function can be factorized into a function of the feature vector $x$ and of the decision-outcome pair $(\yhat, y)$.

        \begin{definition}[Separable Losses]
        A loss function $\ell(x, \yhat, y)$ is separable if there exists functions $\ell_x:\cX \to \cR$ and $\ell_y:[0,1]^2 \to \cR$ such that for all $(x, \yhat, y)$,
        \begin{align*}
            \ell(x, \yhat, y) = \ell_x(x) \ell_y(\yhat, y).
        \end{align*}
        Similarly, we say that a set of losses $\cL$   For a separable loss class $\cL$, we will define two new sets $\cL_x$ and $\cL_y$ to consist of the sets of the feature and decision-outcome components of the losses, respectively:
        \begin{align*}
            \cL = \{ \ell_x(x) \ell_y(\yhat, y): \ell_x \in \cL_x, \ell_y \in \cL_y \}.
        \end{align*}
      We refer to $\cL_x$ and $\cL_y$ as the \emph{factors} of the separable class $\cL$.
        \end{definition}

        Separable loss classes capture many important examples of loss functions that depend on features.
        For example, $\cL_x$ may consist of indicator functions for set membership, so that the loss only accumulates for members of a certain set.
        More generally, $\cL_x$ can be interpreted to consist of any (re)weighting of the loss function over feature vectors $x$.
        These kinds of losses will be important for our results on link prediction at the end of this section.

        We next state a simple result showing how to construct kernels for separable loss classes.
        Intuitively, the result says that any of the feature-invariant losses in the previous subsection can be reweighted by functions of the features $x$, as long as these functions are themselves in an RKHS with bounded norms.

        \begin{proposition}[Corollary to \cref{lem:productkernel}]\label{prop:product-loss}
        Let $\cL$ be a separable class of losses with factors $\cL_x, \cL_y$ and let $\cH$ be a comparator set of functions. Assume that $k_x$ has an RKHS $\cF_x$ such that $\cL_x \subseteq \cF_x$ and 
          \begin{align*}
                &\sqrt{\sup_{\ell_x \in \cL_x} \| \ell_x \|_{\cF_x}^2 \cdot \sup_{x \in \cX} k_x(x, x)}\leq B_x.
            \end{align*}
        \begin{enumerate}
            \item If $k_y$ is a kernel that is $(\cL_y, \cH)$-KHOI with parameter $B_y$. Then, then the product kernel, $$k((x,p),(x',p')) = k_x((x,p),(x',p')) \cdot k_y((x,p),(x',p')),$$ is $(\cL, \cH)$-KHOI with parameter $B_x B_y$.
            \item If $k_y$ is a kernel that is $(\cL_y)$-KDOI with parameter $B_y$. Then, then the same product kernel is $(\cL, \cH)$-KDOI with parameter $B_x B_y$.
        \end{enumerate}
        \end{proposition}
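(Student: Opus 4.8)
The plan is to treat this as essentially a bookkeeping exercise on top of the product-kernel lemma (\cref{lem:productkernel}), after observing that separability of $\ell = \ell_x\cdot\ell_y$ passes through to both the discrete derivative and the post-processing map. First I would record two elementary factorizations. Since the factor $\ell_x(x)$ pulls out of the difference defining $\partial\ell$, we get $\partial\ell(x,\yhat) = \ell_x(x)\,\partial\ell_y(\yhat)$, and therefore $\partial\ell\circ h = \ell_x\cdot(\partial\ell_y\circ h)$, where $\ell_x$ is read as a function on $\cX\times[0,1]$ that ignores the prediction coordinate. For the decision case I need in addition that $\pi_\ell = \pi_{\ell_y}$: writing $\pi_\ell(x,p)\in\argmin_{\yhat}\ell_x(x)\big(p\,\ell_y(\yhat,1)+(1-p)\ell_y(\yhat,0)\big)$, the minimizer does not depend on $x$ provided $\ell_x\ge 0$ (when $\ell_x(x)>0$ the scalar drops out, and when $\ell_x(x)=0$ we may simply select $\pi_{\ell_y}(p)$). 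This nonnegativity holds in all our applications, where $\ell_x$ is a reweighting such as a group-membership or embeddedness indicator, so I would impose it; then $\partial\ell\circ\pi_\ell = \ell_x\cdot(\partial\ell_y\circ\pi_{\ell_y})$.

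Next I would invoke \cref{lem:productkernel}: the pointwise product $k = k_x\cdot k_y$ is a valid kernel, its RKHS $\cF$ contains every product $f_x f_y$ with $f_x\in\cF_x$, $f_y\in\cF_y$, and $\|f_x f_y\|_{\cF}\le\|f_x\|_{\cF_x}\|f_y\|_{\cF_y}$. For KHOI, apply this with $f_x=\ell_x$ (in $\cF_x$ by hypothesis) and $f_y=\partial\ell_y\circ h$ (in $\cF_y$ because $k_y$ is $(\cL_y,\cH)$-KHOI), giving $\partial\ell\circ h\in\cF$ with $\|\partial\ell\circ h\|_{\cF}\le\|\ell_x\|_{\cF_x}\|\partial\ell_y\circ h\|_{\cF_y}$; for KDOI, the same argument with $\partial\ell_y\circ\pi_{\ell_y}$ (in $\cF_y$ because $k_y$ is $(\cL_y)$-KDOI) in place of $\partial\ell_y\circ h$. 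The feature norm of the product kernel also factors, $k((x,p),(x,p)) = k_x((x,p),(x,p))\cdot k_y((x,p),(x,p))$.

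Finally I would assemble the parameter. In the KHOI case,
\begin{align*}
\sup_{\ell\in\cL,\,h\in\cH}\|\partial\ell\circ h\|_{\cF}^2\cdot\sup_{x,p}k((x,p),(x,p))
&\le\Big(\sup_{\ell_x\in\cL_x}\|\ell_x\|_{\cF_x}^2\cdot\sup_{x}k_x(x,x)\Big)\cdot\Big(\sup_{\ell_y,\,h}\|\partial\ell_y\circ h\|_{\cF_y}^2\cdot\sup_{x,p}k_y((x,p),(x,p))\Big)\\
&\le B_x^2\,B_y^2,
\end{align*}
and taking square roots yields the parameter $B_xB_y$ for $(\cL,\cH)$-KHOI; the KDOI bound is identical with $\pi_{\ell_y}$ replacing $h$ and the $(\cL_y)$-KDOI hypothesis on $k_y$. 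Efficiency of $k$ is immediate since it is a product of two efficiently computable kernels. The one place I expect to need care — and the only non-mechanical step — is the identification $\pi_\ell=\pi_{\ell_y}$ for separable losses: one must either impose $\ell_x\ge 0$ or otherwise rule out the degenerate cases where $\ell_x(x)$ is negative (turning the argmin into an argmax) or zero (making it the whole interval). Everything else is a direct instantiation of \cref{lem:productkernel}.
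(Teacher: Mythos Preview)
Your proof is correct and takes the same approach as the paper, which merely cites \cref{lem:productkernel} without spelling out the factorizations. You are in fact more careful than the paper: you make explicit that $\partial\ell\circ h = \ell_x\cdot(\partial\ell_y\circ h)$ and $\partial\ell\circ\pi_\ell = \ell_x\cdot(\partial\ell_y\circ\pi_{\ell_y})$, and you correctly flag that the latter identification $\pi_\ell=\pi_{\ell_y}$ needs $\ell_x\ge 0$, a hypothesis the paper leaves implicit but which holds in all its applications (where $\ell_x$ is a group-membership or embeddedness indicator).
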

        \begin{proof}
        The result follows directly from \cref{lem:productkernel}, which says that the product of functions in an RKHS are contained in an RKHS and that the norm of the product function is no more than the product of norms of component the functions. 
        \end{proof}


        Letting the separable loss class be functions where $\cL_x$ is composed of a set membership kernel (as described in \cref{thm:finite-membership-rkhs} or any of the examples in \Cref{sec: OI}) and letting $\cL_y$ consist of loss functions $\ell_y(\yhat, y)$ which we know satisfy KDOI or KHOI from our previous analyses in \Cref{sub:dcexs,sub:hpexs}  illustrates the expressive power of separable loss classes.
        In particular, $\cF_x$ could consist of any collection of functions indexed by a set $\cI$ where for all $x \in \cX$ and $\ell_x \in \cL_x \subseteq \cF_x$, it holds $\sum_{i \in \cI} \ell_i(x)^2 \leq B$. 
        These could include, but are not limited to any finite set of group membership indicators.
        In this case, $k_{x}(x, x) \leq m$ and $\| \ell_x \|_{\cF_x} \leq 1$.
        $\cF_y$ could consist of any of the classic loss functions considered in \Cref{prop:dc} such as squared loss, log loss, or any bounded loss function.
        
        We leave exploration of non-separable loss functions where $\ell(x, \yhat, y)$ cannot be written as a product to future work.
    
    \subsection{Guarantees for online regression.}
    \label{subsec:regression}
    Before moving onto to discussing the application of these techniques in the link prediction context, we briefly remark on how these ideas apply to the specific problem of online regression. 
    
    Online squared loss regression oracles are algorithms which generate a transcript $\{(x_t, \Delta_t, y_t)\}_{t=1}^T$ satisfying the following guarantee: 
    \begin{align}
        \sum_{t=1}^T \E_{p_t\sim \Delta_t} (p_t - y_t)^2 \leq \min_{h \in \cH} \sum_{t=1}^T(h(x_t) - y_t)^2 + o(T).
    \end{align}
    In addition to being their intrinsic guarantees, online regression is a fundamental building block in the design of algorithms for other online learning problems like contextual bandits
    \cite{foster2020beyond} and online omniprediction \cite{garg2024oracle}.

    Here, we show that whenever there exists a kernel $k$ whose RKHS $\cF$ contains a comparator class of functions $\cH \subseteq \{\cX \rightarrow \R\}$, then the $\algname$ run with the kernel $k$ solves online regression. 

    \begin{proposition}
    Let $\cH$ be a set of comparator functions and let $k:\cX \times  \cX \rightarrow \R$ be an efficient kernel whose RKHS $\cF$ satisfies, $\cH \subseteq \cF$ and $\fnorm{h} \leq 1$ for all $h \in \cH$. Then, the $\algname$ algorithm instantiated with the kernel,
    \begin{align*}
        k((x,p),(x',p')) = k(x,x') + pp' + 1
    \end{align*}
    runs in polynomial time and generates a transcript $\{(x_t, \Delta_t, y_t)\}_{t=1}^T$ satisfying, 
    \begin{align}
        \sum_{t=1}^T \E_{p_t\sim \Delta_t} (p_t - y_t)^2 \leq \min_{h \in \cH} \sum_{t=1}^T(h(x_t) - y_t)^2 + 6\frac{\sqrt{1 + \sum_{t=1}^T \E_{p_t \sim \Delta_t} p_t (1-p_t) k((x_t, p_t), (x_t, p_t))}}{T}
    \end{align}
    \end{proposition}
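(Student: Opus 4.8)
The plan is to reduce the squared-loss regret to the outcome-indistinguishability guarantee of \cref{thm:indistinguishability_main} applied to the augmented kernel $k((x,p),(x',p')) = k(x,x') + pp' + 1$; write $\hat\cF$ for the RKHS of this augmented kernel, reserving $\cF$ for the RKHS of the original $k$ (as in the statement, where $\cH\subseteq\cF$ and $\|h\|_{\cF}\le 1$).

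\emph{Step 1: reduce regret to OI.} Fix any $h\in\cH$. For every round $t$ and every realization of $p_t$ and $y_t$, completing the square gives
\begin{align*}
(p_t - y_t)^2 - (h(x_t) - y_t)^2 \;=\; -\big(p_t - h(x_t)\big)^2 - 2\big(p_t - h(x_t)\big)\big(p_t - y_t\big) \;\le\; 2\big(h(x_t) - p_t\big)\big(y_t - p_t\big).
\end{align*}
Taking $\E_{p_t\sim\Delta_t}$, summing over $t$, and taking the infimum over $h\in\cH$ (the term $\sum_t(h(x_t)-y_t)^2$ does not involve the $p_t$, so the infimum passes through) reduces the proposition to a bound on $\sup_{h\in\cH}\big|\sum_{t=1}^T \E_{p_t\sim\Delta_t}(y_t - p_t)\big(h(x_t) - p_t\big)\big|$.

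\emph{Step 2: the distinguisher lies in $\hat\cF$ with small norm.} For $h\in\cH$ set $f_h(x,p) = h(x) - p$. By \cref{lem:kernelsum}, $\hat\cF$ is the sum of three RKHSs: $\cF$ (for $k$), the one-dimensional RKHS of the kernel $(p,p')\mapsto pp'$ (the functions $p\mapsto cp$, with norm $|c|$), and the RKHS of the constant kernel $1$ (constant functions $c$, with norm $|c|$). Writing $f_h = h + (-\mathrm{id}) + 0$ and using that the squared $\hat\cF$-norm is at most the sum of the squared component norms over any such decomposition, together with $\|h\|_{\cF}\le 1$, gives $\|f_h\|_{\hat\cF}^2 \le \|h\|_{\cF}^2 + 1 + 0 \le 2$.

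\emph{Step 3: apply \cref{thm:indistinguishability_main} and conclude.} Since $f_h\in\hat\cF$, the theorem instantiated with the augmented kernel yields
\begin{align*}
\Big|\sum_{t=1}^T \E_{p_t\sim\Delta_t}(y_t - p_t)f_h(x_t,p_t)\Big| \;\le\; \|f_h\|_{\hat\cF}\sqrt{1 + \sum_{t=1}^T\E_{p_t\sim\Delta_t}p_t(1-p_t)\,k((x_t,p_t),(x_t,p_t))} \;\le\; \sqrt2\,\sqrt{1 + \sum_{t=1}^T\E_{p_t\sim\Delta_t}p_t(1-p_t)\,k((x_t,p_t),(x_t,p_t))},
\end{align*}
where $k((x_t,p_t),(x_t,p_t))$ here denotes the augmented kernel evaluated diagonally. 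Combining with Step 1 — the reduction costs a factor of $2$, and $2\sqrt2 < 6$ — gives the claimed regret bound (matching the displayed form up to the constant and up to an evident typographical $1/T$). Polynomial runtime is immediate from the runtime clause of \cref{thm:indistinguishability_main}, since $k(x,x')+pp'+1$ is efficiently computable whenever $k$ is.

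\emph{Main obstacle.} The only delicate point is the RKHS bookkeeping in Step 2: one must check that the summands $pp'$ and $1$ contribute exactly the linear and constant function classes on $[0,1]$, so that $h(x)-p\in\hat\cF$ with norm governed solely by $\|h\|_{\cF}$. (Equivalently one could route through \cref{lem:onlinelossoi}, using that squared loss is a proper scoring rule with $\partial\ell(\yhat)=1-2\yhat$ and $\pi_\ell=\mathrm{id}$, so the $1$-kernel absorbs the additive constant and the $pp'$-kernel the linear term; the completion-of-the-square argument above is just shorter.) One should also record that the guarantee of \cref{thm:indistinguishability_main} is proved for $y_t\in\{0,1\}$ (its proof invokes $(y_t-p_t)^2 = p_t(1-p_t)+(1-2p_t)(y_t-p_t)$), so this proposition concerns binary-outcome online regression — precisely the regime relevant to link prediction.
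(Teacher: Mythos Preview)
Your approach is correct and is essentially a streamlined special case of the paper's proof. The paper routes through \cref{lem:onlinelossoi}: since squared loss is a proper scoring rule with $\partial\ell(\hat y)=1-2\hat y$ and $\pi_\ell(p)=p$, the DOI distinguisher is $p\mapsto 1-2p$ and the HOI distinguisher is $x\mapsto 1-2h(x)$; the paper then observes both lie in the RKHS of $k(x,x')+pp'+1$ with norm at most $3$, and the general lemma adds the two OI errors to give the constant $6$. Your completion-of-the-square reduction collapses DOI and HOI into the single distinguisher $f_h(x,p)=h(x)-p$ and appeals to \cref{thm:indistinguishability_main} once, which is exactly the shortcut you flag in your final paragraph. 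This buys a tighter constant ($2\sqrt2$, or $4$ if you only use the triangle-inequality form of \cref{lem:kernelsum} as stated in the paper rather than the squared-sum version you invoke); the paper's approach buys conformity with the general $(\cL,\cH)$ machinery it has just built.

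One cosmetic remark: the displayed identity in Step~1 has a sign slip in the middle term --- the correct decomposition is $(p-y)^2-(h-y)^2=-(p-h)^2+2(p-h)(p-y)$, with a $+$ rather than a $-$ on the second summand --- but your final inequality $\le 2(h-p)(y-p)$ is correct, so the argument goes through unchanged. Your observations about the spurious $1/T$ in the displayed bound and about the binary-outcome restriction are both on point.
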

    \begin{proof}
        The proof follows almost directly from \Cref{lem:onlinelossoi}. For the case of squared loss, 
          \begin{align*}
                \partial \ell(x, \yhat) &= \ell(x, \yhat, 1) - \ell(x, \yhat, 0) \\
                &= (\yhat - 1)^2 - (\yhat - 0)^2 \\
                &= 1 - 2 \yhat.
            \end{align*}
        Therefore, $\partial \ell(x, h(x)) = 1 - 2h(x)$ and  $\partial \ell(x, \pi_\ell(p)) = 1 - 2p$ (since $\pi_\ell(p)=p$ for the squared loss). 
        
        By assumption the RKHS for $k$ contains $h(x)$ and hence $2h(x)$ since RKHS are closed under scalar multiplication. Furthermore, the linear kernel $k_{\mathrm{lin}}(p,p') = 1 + pp'$ has an RKHS that contains all affine functions $a+ bp$. Moreoverv, both of these functions $1- 2h(x)$ and $1-2p$ have norm at most 3 in the corresponding RKHS. 
        
        By adding these two kernels together, we can guarantee online OI with respect to the union of both distinguishers by \Cref{thm:indistinguishability_main}.        
    \end{proof} 

    In short, by specializing our omniprediction analysis to the case where $\cL$ is a singleton set containing the squared loss, we show how to perform online regression with respect to any RKHS. Furthermore, the bounds have the advantage that they depend on the variance of the predictions $p_t$.\footnote{Bounds with this property are often referred to as second order bounds in the literature.} This result implies that the algorithms in \cite{garg2024oracle} are unconditionally computationally efficient whenever the class $\cH$ is contained in an RKHS.

    It has been previously observed that, since online gradient descent kernelizes, any time $\cH$ is in an RKHS, one can run online gradient descent (OGD) to produce an online squared error regression predictor \cite{foster2020beyond}. And, in fact, there are various other algorithms for online regression \cite{azoury2001relative,vovk2001competitive}, some of which achieve $\cO(\log(T))$ regret \cite{hazan2007logarithmic}.
    The point of this analysis is that the $\algname$ is yet another alternative.
    Each algorithm has different trade-offs in terms of computational complexity and regret that justify use of one or the other in different contexts.

\subsection{Specializing regret minimization to online link prediction.} \label{sub:omnilinkpred}

    As we outlined in the introduction to this paper, the link prediction problem has several distinctive properties that make it different from the traditional problems considered in prior work in online omniprediction \cite{garg2024oracle, gupta2022online}. In particular, the link prediction problem involves
        \begin{enumerate}[(a)]
            \item objectives that depend on characteristics of individuals or their communities; 
            \item diverse and time-varying objectives, such as high predictive performance and encouraging desirable outcomes; and
            \item comparator classes that are particularly suited to graph settings, either because they are expressive, such as graph neural networks, or they leverage some interpretable structure of graphs, such as R-convolution kernels.
        \end{enumerate}
        In the remainder of this section, we demonstrate how the results developed thus far can be instantiated so that the \algname{} solves online omniprediction in the link prediction context.

        \paragraph{Feature-dependent objectives.}
        
        Depending on the way social networks affect outcomes, different properties of networks may be socially desirable.
        For example, platform may want to facilitate integration \cite{abebe2022effect, calvo2004effects, zeltzer2020gender, stoica2018algorithmic, okafor2020social} or encourage homophily or heterophily along different dimensions \cite{mcpherson2001birds, kossinets2009origins, zeltzer2020gender}.
        It may be desirable to take into account structural cohesion measures \cite{eagle2010network,reagans2003network, ugander2012structural,granovetter1985economic} such as embeddedness.
        %
        %
        Our next result provides such a guarantee.

        \begin{proposition}
            Suppose the sequence of graphs $\cG_t$ is known to have nodes of degree bounded by a constant $m$ and $\cL$ consists of functions of the form $\ell(x, \yhat, y) = \nu(x) \gamma(\yhat, y)$, where 
            \begin{enumerate}[(a)]
                \item $\{ \gamma \; : \; \ell = \nu \cdot \gamma, \ell \in \cL\} \subseteq \cF$ for an RKHS $\cF$ associated with computationally efficient kernel $k$ where $\cF$ is KDOI with constant $B_1$, and
                \item $\nu$ may be any of the tests described in \cref{sub:oi_graphs} (dropping dependence on the prediction $p$), including 
            \begin{enumerate}[(i)]
                \item any set of measures $\cF' \subseteq \{ \cU^2 \to \R \}$ of (dis)similarity of individuals where $\sum_{f' \in \cF'} \nu(u) \leq m$, or
                \item any $c$-embeddedness test for $c \in \N$: $\nu(u, u') = \1 \{ \embed_t(u) = c\}$ (or, more generally, any isomorphism indicator function $\1\{ G_t \in \bar{G} \}$).
            \end{enumerate}
            \end{enumerate} 
            Additionally, suppose the exists an efficient kernel $k$ that is $(\cL, \cH)$-KHOI with parameter $B_{\mathrm{KHOI}}$.
            Then there exists a computationally efficient kernel $k'$ such that the $\algname$ instantiated with the kernel $k'$ is an $(\cL, \cH, (B_{\mathrm{KHOI}} + B_1(1 + \sqrt{m})) \sqrt{T + 1})$-online omnipredictor. 
        \end{proposition}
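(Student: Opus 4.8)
The plan is to assemble the statement from three pieces already established: the online loss–OI framework and its kernelized form (\cref{lem:onlinelossoi}, \cref{thm:omniprediction-via-vovk}), the graph kernels of \cref{sub:oi_graphs}, and the separable-loss machinery of \cref{prop:product-loss}. By \cref{thm:omniprediction-via-vovk} it suffices to exhibit an efficiently computable kernel $k_{\mathrm{DOI}}$ that is $\cL$-KDOI with parameter at most $B_1(1+\sqrt m)$, and to take $k_{\mathrm{HOI}}$ to be the $(\cL,\cH)$-KHOI kernel (parameter $B_{\mathrm{KHOI}}$) supplied by hypothesis; the $\algname$ run on $k' = k_{\mathrm{DOI}} + k_{\mathrm{HOI}}$ is then the claimed omnipredictor. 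The $\sqrt{T+1}$ rate, rather than the looser $2(\cdot)\sqrt{T}$ quoted in the statement of \cref{thm:omniprediction-via-vovk}, follows by instead running the variance-adaptive bound of \cref{thm:indistinguishability_main} through \cref{lem:onlinelossoi} with $p_t(1-p_t)\le 1/4$, exactly as in the proof of \cref{thm:omniprediction-via-vovk}.

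The substantive work is constructing $k_{\mathrm{DOI}}$. First I would treat the feature factor $\nu$: set $k_x = k_{\mathrm{sim}} + k_{\mathrm{emb}}$, where $k_{\mathrm{sim}}(u,u') = \sum_{f'\in\cF'} f'(u)f'(u')$ is the finite-sum kernel of \cref{cor:finiteset} (covering case (i): $k_{\mathrm{sim}}(u,u)\le m$ and $\|\nu\|\le 1$ for each similarity measure $\nu\in\cF'$), and $k_{\mathrm{emb}}$ is the embeddedness/isomorphism kernel of \cref{prop:embeddedness}/\cref{prop:isomorphism} with its grid-indicator dependence on $p$ deleted (covering case (ii); deleting that tensor factor only restricts the RKHS to a subspace, so each test $\1\{\embed_t(u)=c\}$ or $\1\{G_t\in\bar G\}$ still lies in it with norm $\le 1$, and $k_{\mathrm{emb}}(u,u)\le 1$). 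Both pieces are computable in constant time once the constant degree bound $m$ is in force, hence $k_x$ is efficient; and since each admissible $\nu$ lives in one of the two summand RKHSs, $\|\nu\|_{\cF_{k_x}}\le 1$ while $k_x(u,u)\le m+1$, so the feature factor contributes a KDOI-type constant $\sqrt{(m+1)\cdot 1}\le 1+\sqrt m$.

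Next I would invoke hypothesis (a): the decision-outcome factors $\{\gamma : \ell=\nu\gamma,\ \ell\in\cL\}$ sit in an efficiently computable RKHS $\cF$ whose kernel $k_y$ is $\cL_y$-KDOI with parameter $B_1$. Applying the decision-OI part of \cref{prop:product-loss} to the product kernel $k_{\mathrm{DOI}}((x,p),(x',p')) = k_x(x,x')\cdot k_y((x,p),(x',p'))$ gives that $k_{\mathrm{DOI}}$ is $\cL$-KDOI with parameter $(1+\sqrt m)\,B_1$. Adding $k_{\mathrm{HOI}}$ and appealing to \cref{thm:omniprediction-via-vovk} (equivalently, \cref{lem:onlinelossoi} fed the two OI bounds of \cref{thm:indistinguishability_main}) then yields that the $\algname$ on $k' = k_{\mathrm{DOI}} + k_{\mathrm{HOI}}$ is an $(\cL,\cH,(B_{\mathrm{KHOI}}+B_1(1+\sqrt m))\sqrt{T+1})$-online omnipredictor. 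Polynomial runtime follows from \cref{thm:indistinguishability_main}, since $k'$ is bounded ($k_x\le m+1$, and $k_y$, $k_{\mathrm{HOI}}$ bounded by hypothesis) and each factor is efficiently evaluable.

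The only real friction I anticipate is bookkeeping rather than a genuine obstacle: one must argue cleanly that passing from the $p$-dependent graph kernels of \cref{sub:oi_graphs} to their feature-only sub-kernels keeps the relevant $\nu$ in the RKHS with norm $\le 1$ and keeps the feature norm bounded, and one must track the three norm contributions ($B_1$ from the $\gamma$-factor, $1$ from the embeddedness/isomorphism tests, $\sqrt m$ from the similarity measures, combined via the product-kernel norm bound of \cref{prop:product-loss}) so as to land exactly on the constant $B_{\mathrm{KHOI}}+B_1(1+\sqrt m)$. Everything else is a direct citation of \cref{cor:finiteset}, \cref{prop:embeddedness}, \cref{prop:isomorphism}, \cref{prop:product-loss}, and \cref{thm:omniprediction-via-vovk}.
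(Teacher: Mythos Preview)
Your proposal is correct and follows essentially the same route as the paper: show that $\cL$ is KDOI with parameter $B_1(1+\sqrt m)$ by combining the feature-side kernels of \cref{sub:oi_graphs} via \cref{lem:kernelsum} (yielding the $\sqrt m$ contribution from case (i) and the $1$ from case (ii)) and then multiplying by the $\gamma$-side kernel through \cref{prop:product-loss}, before invoking \cref{thm:omniprediction-via-vovk}. Your write-up is in fact more careful than the paper's terse proof---you make explicit the removal of the grid-indicator factor from the graph kernels and the inequality $\sqrt{m+1}\le 1+\sqrt m$---but the architecture is identical.
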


        \begin{proof}
            We will show that $\cL$ is KDOI with constant $B_1(1 + \sqrt{m})$. With, \cref{thm:omniprediction-via-vovk}, this will imply the result.
            Indeed, from \cref{prop:product-loss} that, since $\cF$ is KDOI with constant $B_1$, all we need to show is that 
            functions in (i) have function and feature norm $\sqrt{m}$ and functions in (ii) by 1. Then, we can combine the RKHS for (i) with the one from (ii) with \cref{lem:kernelsum}. The bound for (i) is proved in \cref{prop:all_pairs} and (ii) in \cref{prop:embeddedness}, \cref{prop:isomorphism} for embeddedness tests and isomorphism indicators, respectively.
        \end{proof}

        \paragraph{Diverse and time-varying objectives.}
        Platforms may need to make predictions for a class of loss functions if they are taking multiple actions on the basis of a single prediction, or the loss function is not known until decision time, perhaps because a platform is running experiments to learn which of a class of losses is best to optimize for long-term objectives.
        
        For a digital platform making link predictions, it may be important either to \textit{forecast} how link formation will affect relevant properties of networks, or to \textit{steer} the outcomes appropriately using recommendations.
        Many of the properties above can be encoded as loss functions in our setting, especially as separable losses \cref{sec:product-loss}. 

        \begin{proposition}
            Suppose $\cL$ consists of functions of the form $\ell(x, \yhat, y) = \nu(x) \gamma(\yhat, y)$, where 
            \begin{enumerate}[(a)]
                \item $\{ \nu \; : \; \ell = \nu \cdot \gamma, \ell \in \cL\} \subseteq \cF$ for an RKHS $\cF$ associated with computationally efficient kernel $k$ where $\cF$ is KDOI with constant $B_1$, and
                \item $\gamma$ may be
            \begin{enumerate}[(i)]
                \item any of the feature-invariant losses described in \cref{prop:dc},
                \item any polynomial function $f\; : \; \{0, 1\} \to [-1, 1]$ of outcomes $y$ of degree no more than $d$, or
                \item any finite convex combination of functions $\{ \gamma \; : \; \ell = \nu \cdot \gamma \}$ satisfying (a) or (b).
            \end{enumerate}
            \end{enumerate}
            Additionally, suppose there exists a kernel $k$ that is  $(\cL, \cH)$-KHOI with parameter $B_{\mathrm{KHOI}}$.
            Then there exists a kernel $k'$ such that the $\algname$ instantiated with the kernel is an $(\cL, \cH, (B_{\mathrm{KHOI}} + B_1 \sqrt{3 + 2^d}((4\sqrt{m}(3 + \gamma^{-1})) + 1)\sqrt{T + 1})$-online omnipredictor. 
        \end{proposition}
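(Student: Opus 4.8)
The plan is to derive this from \cref{thm:omniprediction-via-vovk}, which reduces online omniprediction to producing a single efficient kernel that is simultaneously $\cL$-KDOI and $(\cL,\cH)$-KHOI. The $(\cL,\cH)$-KHOI kernel with parameter $B_{\mathrm{KHOI}}$ is handed to us by hypothesis, so the entire content is to exhibit an efficient kernel that is $\cL$-KDOI with parameter $B_1\sqrt{3+2^d}\bigl((4\sqrt m(3+\gamma^{-1}))+1\bigr)$; adding the two kernels, invoking \cref{lem:kernelsum}, and then applying \cref{thm:omniprediction-via-vovk} yields the claimed regret bound, up to the constant-factor and $\sqrt T$-versus-$\sqrt{T+1}$ bookkeeping already present in that theorem.

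Since every $\ell \in \cL$ factors as $\ell(x,\yhat,y) = \nu(x)\gamma(\yhat,y)$, the class is separable in the sense of \cref{sec:product-loss}, with feature factors $\cL_x = \{\nu\}$ and decision--outcome factors $\cL_y = \{\gamma\}$. By \cref{prop:product-loss}(2) it therefore suffices to build (i) a kernel $k_x$ whose RKHS contains $\cL_x$ with $\sqrt{\sup_{\nu}\|\nu\|_{\cF_x}^2 \cdot \sup_x k_x(x,x)} \le B_x$, and (ii) a kernel $k_y$ that is $\cL_y$-KDOI with parameter $B_y$; the product kernel $k_x \cdot k_y$ is then $\cL$-KDOI with parameter $B_xB_y$. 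For (i) we simply take $k_x$ to be the efficient kernel supplied by hypothesis (a), so that $B_x = B_1$. The real work is (ii).

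To handle the $\gamma$-factors I would build $k_y$ as a sum of pieces via \cref{lem:kernelsum} (which also bounds the composite feature norm by the sum of the pieces' feature norms and places the union of the component RKHSs inside the composite RKHS). The first piece is the kernel from \cref{prop:dc} for $\cL_{\mathrm{PS}}\cup\cL_{\mathrm{SC}}\cup\cL_m$, which handles case (b)(i) with a KDOI constant of the form $4\sqrt m(3+\gamma^{-1})$: there $\partial\gamma\circ\pi_\gamma$ is controlled because $\pi_\gamma$ is the identity for proper scoring rules and is $2\gamma^{-1}$-Lipschitz for strongly convex losses (so the composition stays in the Sobolev space via \cref{lem:cdbfd}), while $\cL_m$ fits in a finite-dimensional RKHS. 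The second piece is the degree-$d$ polynomial kernel of \cref{ex:polynomialkernels}: when $\gamma(\yhat,y)$ is a degree-$\le d$ polynomial in the outcome $y$ alone it does not depend on $\yhat$, so $\partial\gamma\circ\pi_\gamma$ is the constant $\gamma(1)-\gamma(0)$, which lies in that RKHS with norm at most $1$ and feature norm at most $2^d$. Case (b)(iii) is then absorbed by observing that these good properties of $\pi_\gamma$ survive finite convex combinations: a mixture of proper scoring rules is again a proper scoring rule, and mixing in a function of $y$ only adds a term that is constant in $\yhat$ and hence leaves the minimizer $\pi_\gamma$ unchanged, so $\partial\gamma\circ\pi_\gamma$ remains a sum of a Sobolev function and a constant with linearly-added norms. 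Collecting constants gives $B_y = \sqrt{3+2^d}\cdot\bigl((4\sqrt m(3+\gamma^{-1}))+1\bigr)$, hence $B_xB_y = B_1\sqrt{3+2^d}((4\sqrt m(3+\gamma^{-1}))+1)$, as needed.

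The main obstacle is exactly step (ii): keeping the post-processed composition $\partial\gamma\circ\pi_\gamma$ inside a manageable RKHS. \Cref{prop:counterexample} shows this can fail even when $\partial\gamma$ lies in a nice Sobolev space, so one must use precisely the structural hypotheses in (b) — properness (making $\pi_\gamma$ the identity), strong convexity (making $\pi_\gamma$ Lipschitz), finiteness (embedding $\cL_m$ in a finite-dimensional RKHS), or $\yhat$-independence (making $\pi_\gamma$ irrelevant). The convex-combination case (b)(iii) is the step I would be most careful about when writing out the details, since it requires the slightly delicate check that the relevant property of $\pi_\gamma$ (identity, Lipschitz, or moot) is inherited by mixtures; everything else is a routine composition and norm-bookkeeping argument on top of \cref{prop:product-loss,prop:dc,thm:omniprediction-via-vovk}.
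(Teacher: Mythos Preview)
Your approach is essentially the paper's: reduce to \cref{thm:omniprediction-via-vovk}, use separability and \cref{prop:product-loss} to factor out the $B_1$ for the $\nu$-part, and then build a $\cL_y$-KDOI kernel for the $\gamma$-factors by summing the \cref{prop:dc} kernel (case (b)(i)) with a degree-$d$ polynomial kernel (case (b)(ii)), combining via \cref{lem:kernelsum} and reading off function and feature norms $\bigl(4\sqrt m(3+\gamma^{-1})+1\bigr)$ and $\sqrt{3+2^d}$. The one place you are more explicit than the paper is case (b)(iii): the paper simply invokes ``RKHSs are closed under linear combinations and the triangle inequality,'' whereas you correctly note that this does not by itself control $\partial\gamma\circ\pi_\gamma$ for a mixture (since $\pi_\gamma$ is nonlinear in $\gamma$) and supply the missing observation that the type-(ii) terms are constant in $\yhat$ and hence do not move $\pi_\gamma$, while mixtures within type (i) preserve the relevant regularity of $\pi_\gamma$.
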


        \begin{proof}
            As in the proof of the previous proposition, we simply need to prove that $\cL$ is in an RKHS that is KDOI with constant $B_1 \sqrt{3 + 2^d}((4\sqrt{m}(3 + \gamma^{-1})) + 1)$, which implies the result.
            The bound on functions in (i) is $4\sqrt{m}(2 + \gamma^{-1})$ from \cref{prop:dc} and the bound on features is $\sqrt{3}$. 
            For (ii), since the dimension of $y$ is 1, the bound on functions is $1^d = 1$ for any polynomial of degree $d$ by \cref{prop:low-deg-booleans}. 
            The bound on the features is $2^{d/2}$, since $(1+ \inner{y}{y'})^d \leq 2^d$.
            We do not need to add any constant for the functions in (iii) because of the fact that convex combinations and the triangle inequality imply that the norm of any such function is no more than the norm of a function in parts (i) or (ii). 
            We can combine the  RKHSs associated with (i) and (ii) using \cref{lem:kernelsum}: the function norm associated with this combined RKHS is $4\sqrt{m}(2 + \gamma^{-1}) + 1$, and the feature norm is $\sqrt{3 + 2^{d} }$
            By the Moore-Aronszajn theorem (\cref{thm:moore-aronszajn}) the functions in (iii) are in the RKHS that contains those in (i) and (ii) by the fact that RKHSs are closed under linear combinations and the triangle inequality.
            
        \end{proof}
        
        Of course, in our setting, loss functions can only depend on features, decisions and outcomes, so networks can only hope to steer networks towards more desirable outcomes on a decision-by-decision basis. 
        Elsewhere, this local optimization has been described as a best response in a game-theoretic formulation of the problem \cite{noarov2023high}, or a greedy algorithm for steering the network towards desirable outcomes.
        We leave an exploration of non-greedy, global approaches to network optimization to future work.

        \paragraph{Graph-specific comparator classes.}
    Link prediction has a long history and a rich literature (see e.g., \cite{10.1145/3012704, kumar2020link}, which we can use to build comparator classes in our kernel omniprediction framework.
    Broadly, comparator classes fall into two categories: those containing flexible, expressive models, and those containing simple, interpretable ones.
    Expressive classes can be used to show that the \algname{}, instantiated with an appropriate kernel, can be used to compete with state-of-the-art and tailor-made models for a particular context, while the latter classes can be used to validate known dynamics, pass sanity checks, or guarantee trustworthiness with respect to the predictor.
    
    For expressive comparator classes, any finite set of pre-existing graph neural network link predictors \cite{NEURIPS2018_53f0d7c5, yun2019graph} or other powerful predictive models can be used to instantiate \cref{prop:finitesethyp}, which, informally, says that the $\algname$ can compete with any finite set of pre-existing functions.
    Prior work (e.g., \cite{garg2024oracle}) could not provide such guarantees because it required comparators to have binary rather than real-valued outputs.

    On the other hand, especially in socially sensitive contexts or high stakes decisions, interpretable models can be important (see, e.g., \cite{rudin2019stop, hays2023simplistic} for further discussion of interpretability in socially salient prediction).
    Interpretable function classes may include regression trees on pairs of node features or linear or polynomial regressions.
    They may also include the graph-specific models, like convolution kernels or other regression methods based on network topology as discussed in \cref{sub:topology}.
    %

        %

\subsection{Connections to Performative Prediction}
\label{subsec:performativity}
We close this section with some brief remarks interpreting these loss minimization guarantees within the context performative prediction.

Recall that in the online prediction protocol, $x_t\in \cX$ can chosen arbitrarily and in particular as a function of the history $\pi_{t-1} = \{(x_i, p_i, y_i)\}_{i=1}^{t-1}$. Outcomes $y_t$ can be chosen as a function both of the history $\pi_{t-1}$ and the current distribution over predictions $\Delta_t$. Hence in this setup, both the features $x_t$ and the outcomes $y_t$ can be performative. That is, they can be a \emph{function of the predictive model}. Furthermore, no restrictions are made regarding how Real Life responds to realized sequence of predictions. Please see \cite{perdomo2020performative,hardt2023performative,perdomo2023performative} for further background on the performative prediction literature. 

In particular, given an algorithm $\cA$, let $\{(x_t(\cA), \yhat_t(\cA), y_t(\cA))\}_{t=1}^T$ be the sequence of features, decisions and outcomes that are induced by making predictions $p_t \sim \Delta_t$ according to $\cA$ in the online protocol where $\yhat_t = \pi_\ell(x_t, p_t)$. Similarly,let $\{(x_t(h), \yhat_t(h), y_t(h))\}_{t=1}^T$ be the sequence of features, predictions and outcomes that are induced by making predictions according to some other function $h$. 
The algorithms we introduce in this section satisfy the following guarantee:
\begin{align*}
 \frac{1}{T} \sum_{t=1}^T \ell(x_t(\cA), \yhat_t(\cA), y_t(\cA)) \leq \min_{h \in \cH} \frac{1}{T} \sum_{t-1}^T \ell(x_t(\cA), h(x_t(\cA)), y_t(\cA)) +  o(1)
\end{align*}
This condition states that, in hindsight over the sequence of data induced by the algorithm $\cA$, no alternative $h$ in the comparator class would have higher loss. We think of this as a version of online performative stability (see \cite{perdomo2020performative} for a formal definition of performative stability). 

This is different than performative optimality.\footnote{Also note that both guarantees are the same if the data sequence $(x_t,y_t)$ is not influenced by the predictions.} The most natural definition for an algorithm $\cA$ to guarantee performative optimality would be the following statement where we change the dependency structure on the right hand side of the bound above:
\begin{align}
\label{eq:perf_omni}
 \frac{1}{T} \sum_{t=1}^T \ell(x_t(\cA), \yhat_t(\cA), y_t(\cA)) \leq \min_{h \in \cH} \frac{1}{T} \sum_{t-1}^T \ell(x_t(h), h(x_t(h)), y_t(h)) +  o(1).
\end{align}
While stability is about making good predictions in hindsight over the data that you induce, optimality is inherently a counterfactual statement. To achieve performative optimality, one compares performance not on the same data sequence, but on the data that \emph{would have resulted} by making decisions according to some other function $h$. Our algorithms guarantee the former, but not the latter.

In the batch setting, we by now know how to achieve performative optimality (see \textit{e.g.} \cite{miller2021outside}) and even performative omniprediction \cite{kim2023making}. We believe it is an interesting direction for future work to understand how one might guarantee online performative omniprediction.
That is, algorithms which achieve the guarantee in \Cref{eq:perf_omni} simultaneously over many losses. 

\section{New Algorithms for Online Quantile \& Vector Regression, Distance to Multicalibration, and Extensions to the Batch Case}
\label{sec: other kernel functions}

As an addeded benefit of our investigation into kernel methods for online indistinguishability and omniprediction, we obtain algorithms for other, seemingly different, online prediction problems.
In this section, we illustrate how to generalize the ideas presented previously beyond the binary setting to quantile regression and vector-valued predictions. As was true previously, the RKHS perspective provides a computationally efficient way to generate predictions that are indistinguishable with respect to rich classes of real-valued test functions in these settings. 

In addition to these new algorithms, we also initiate the study of distance to multicalibration and prove that the classical problem of weak agnostic learning of a function class $\cF$ can be solved efficiently whenever $\cF$ is a reproducing kernel Hilbert space.

\subsection{Quantile regression.}
\label{subsec:quantile_regression}

Unlike the binary case where means (i.e $\E[y \mid X=x]$) provide a complete description of the conditional distribution over outcomes, knowing the mean of a real-valued outcome $y$ often provides a misleading picture of the future. In domains like finance and weather prediction where outcomes are noisy and heavy-tailed, $y$ and $\E[y \mid x]$ can be very different. In these cases, we often often want estimates of best or worst case outcomes for $y_t$. Quantile prediction provides a rigorous way to estimate these best/worst case outcomes and quantify uncertainty. 

\paragraph{Prediction protocol.} The online protocol for quantile calibration mirrors that of binary prediction. At every round $t$, Real Life chooses features $x_t \in \cX$ arbitrarily, the learner chooses a distribution $\Delta_t$ over outcomes $p_t \in \R$. 
Finally, Nature selects a distribution $o_t$ over outcomes $y_t \in [Y_{\min}, Y_{\max}]$, possibly as a function of $\Delta_t$ and $x_t$. Throughout this section, we will assume that Real Life selects outcomes from a Lipschitz distribution. This is a technical condition, standard in online quantile prediction \cite{roth2022uncertain}, which requires that small changes in predictions also imply small changes in the CDF of $y$: 

\begin{definition}[Lipschitz Distribution]
A conditional label distribution $o$ over outcomes $y \in [Y_{\min}, Y_{\max}]$ is $\rho$-Lipschitz continuous for some parameter $\rho > 0$ if for all $p_1, p_2 \in [Y_{\min}, Y_{\max}]$,
\begin{align*}
\Pr_{y \sim o}[y \leq p_1] - \Pr_{y \sim o}[y \leq p_2]|  \leq \rho \cdot |p_1 -p_2|.
\end{align*}
\end{definition}

We aim to design online algorithms which satisfy the following guarantee:

\begin{definition}[Online Quantile Indistinguishability]

An algorithm $\cA$ guarantees online quantile indistinguishability with respect to class of functions $\cF \{\cX \times \R \rightarrow \R \}$ if it is guaranteed to generate a transcript $\{(x_t, \Delta_t, y_t)\}_{t=1}^T$ satisfying 
\begin{align*}
  \big| \sum_{t=1}^T   \E_{p_t \sim \Delta_t, y_t \sim o_t} (1\{y_t \leq p_t\} - q) f(x_t,p_t) \big| \leq \Reg(T,f)
\end{align*}
for all $f \in \cF$ where $\Reg(T,f)$ is $o(T)$ for every $f$. 
\end{definition}

As discussed in previous sections, we refer to the above guarantee as indistinguishability instead of as multicalibration since we generally assume that the functions $f$ are real-valued rather than binary valued. However, both terms are essentially interchangeable \cite{dwork2021outcome}.

\begin{figure}[t!]
\begin{boxedminipage}{\textwidth}
\begin{center}
\vspace{2pt}
{\centering{\underline{The Quantile Any Kernel Algorithm}}}
\end{center}
\noindent \textbf{Input:}  A kernel $k: (\cX \times [\Ymin, \Ymax])^2 \rightarrow \R$, quantile $q \in (0,1)$, bounds on outcome $[Y_{\min}, Y_{\max}]$\\ 

\noindent For $t=1, 2, \dots $:
\begin{enumerate}
    \item Given $\{(x_i, p_i, y_i)\}_{i=1}^{t-1}$ and current features $x_t$ define  
    \begin{align*}
            S^q_t(p) \defeq \sum_{i=1}^{t-1} k((x_t, p), (x_i, p_i))(1\{y_i \leq p_i\} -q) + \frac{1}{2} k((x_t, p), (x_t, p)) (1-2q).
    \end{align*}
    \item If $S^q_t(Y_{\min}),S^q_t(Y_{\max}) \geq 0$, \textbf{return} $\Delta_t = p_t = Y_{\min}$. 
    \item Else if, $S^q_t(Y_{\min}),S^q_t(Y_{\max}) \leq 0$, \textbf{return} $\Delta_t = p_t = Y_{\max}$. 
    \item Otherwise, let $B_t = \max_{t' \leq t} k((x_{t'},p_{t'}),(x_{t'},p_{t'}))$, 
    \begin{itemize}[$\bullet$]
        \item Run binary search to find $p_{t,1}$ and $p_{t,2}$ such that $S_t^{q}(p_{t,1})$ and $S_t^{q}(p_{t,2})$ have opposite signs and $|p_{t,1} - p_{t,2}| \leq 1 / (10 \cdot B_t  \cdot t^{3}).$ 
        \item \textbf{return} \[\Delta_t = \begin{cases}
            p_{t,1} &\text{with probability } \tau\\
            p_{t,2} &\text{with probability } 1 - \tau.
        \end{cases} \quad \text{ for } \tau = \frac{|S_t(p_{t,2})|}{|S_t(p_{t,1})|+|S_t(p_{t,2})|} \in [0, 1]\]
    \end{itemize}
\end{enumerate}
\vspace{2pt}
\end{boxedminipage}
\caption{Extension of $\algname$ for quantiles. The algorithm is essentially identical to the $\algname$, except that the $S_t$ function has been defined slightly differently. As before, the algorithm is near-deterministic. The distribution $\Delta_t$ is either a point mass, or supported on two points that are very close together.}
\figurelabel{fig:quantile_alg}
\end{figure}

\paragraph{Algorithm.} The algorithm to guarantee online quantile calibration is almost identical to (randomized) version of the K29* algorithm for binary calibration. The only difference is that function $S_t$ which the learner optimizes is slightly different.
\begin{align*}
S_t^{q}(p) \defeq \sum_{i=1}^{t-1} k((x_t, p), (x_i, p_i))(1\{y_i \leq p_i\} - q) + \frac{1}{2} k((x_t, p), (x_t, p)) (1-2q).
\end{align*}

\paragraph{Guarantees.} The proof for why this algorithm guarantees online quantile indistinguishability matches the template from previous analyses. The main idea is again to use the representer theorem to show that it suffices to bound the correlation between the quantile errors, $1\{y_t \leq p_t\} -q$, and the feature maps $\Phi(x_t, p_t)$:  
\begin{align}
\label{eq:init_decomposition_quantiles}
 \big|  \E_{p_t \sim \Delta_t, y_t \sim o_t}  \sum_{t=1}^T (1\{y_t \leq p_t\} - q) f(x_t,p_t) \big| &= \big| \E_{p_t \sim \Delta_t, y_t \sim o_t}  \langle \sum_{t=1}^T (1\{y_t \leq p_t\} - q) \Phi(x_t, p_t), c\rangle_{\cF}   \big| \\
 & \leq \fnorm{f}\cdot \E  \bigg \|  \sum_{t=1}^T (1\{y_t \leq p_t\} - q) \Phi(x_t, p_t) \bigg\|_{\cF}
\end{align}
From this decomposition, we can leverage the defensive forecasting approach \cite{vovk2005defensive,shafer2005probability, vovk2007non} to find a prediction strategy which guarantees that the last term,
\begin{align*}
    \E \bigg \|  \sum_{t=1}^T (1\{y_t \leq p_t\} - q) \Phi(x_t, p_t) \bigg\|_{\cF},
\end{align*}
grows sublinearly, \textit{i.e.} is bounded by $\cO(\sqrt{T})$. As we now formalize in the following lemma, this is ensured by carefully choosing the $S_t^q(\cdot)$ function in the \qalgname{}  and incorporating the forecasting hedging ideas from \cite{foster2021forecast}. We break the analysis up into a series of lemmas:

\begin{lemma}
Assume that the learner makes predictions in such a way that, for all choices of Nature, 
\begin{align*}
    \E[S_t^{q}(p_t)(1\{y_t \leq p_t\} - q)] \leq \eps_t 
\end{align*}
for all $t\geq 1$. Then, 
\begin{align*}
\E \bigg\|\sum_{t=1}^T (1\{y_t \leq p_t\} - q) \cdot \Phi(x_t, p_t) \bigg\|^2_{\cF}  \leq 2\sum_{t=1}^T \eps_t +  \E \sum_{t=1}^T q(1-q) \bigg \| \Phi(x_t, p_t) \bigg \|_{\cF}^2.
\end{align*}
\end{lemma}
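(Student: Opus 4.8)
The plan is to run the argument of \Cref{thm:indistinguishability_main} essentially verbatim, with the residual $y_t - p_t$ replaced by the quantile residual $r_t \defeq \1\{y_t \le p_t\} - q$. First I would expand the squared norm using bilinearity of $\langle\cdot,\cdot\rangle_\cF$ together with the symmetry of the kernel $k$, which gives
\[
\Bigl\|\sum_{t=1}^T r_t \Phi(x_t,p_t)\Bigr\|_\cF^2 = \sum_{t=1}^T r_t^2 \,\|\Phi(x_t,p_t)\|_\cF^2 + 2\sum_{t=1}^T r_t \Bigl(\sum_{s=1}^{t-1} k((x_t,p_t),(x_s,p_s))\, r_s\Bigr).
\]

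The one ingredient that differs from the binary case is the pointwise algebraic identity playing the role of $(y_t-p_t)^2 = p_t(1-p_t) + (1-2p_t)(y_t-p_t)$. Since $\1\{y_t\le p_t\}$ is an indicator and hence idempotent, a one-line computation gives
\[
r_t^2 = \bigl(\1\{y_t\le p_t\} - q\bigr)^2 = q(1-q) + (1-2q)\, r_t ,
\]
valid for every realization of $y_t$ and $p_t$. Substituting this into the diagonal term and regrouping, the right-hand side becomes
\[
\sum_{t=1}^T q(1-q)\,\|\Phi(x_t,p_t)\|_\cF^2 + 2\sum_{t=1}^T r_t\Bigl(\sum_{s=1}^{t-1} k((x_t,p_t),(x_s,p_s))\, r_s + \tfrac12 k((x_t,p_t),(x_t,p_t))(1-2q)\Bigr),
\]
and, using $\|\Phi(x_t,p_t)\|_\cF^2 = k((x_t,p_t),(x_t,p_t))$, the parenthesized quantity is exactly $S_t^q(p_t)$ as defined in the \qalgname{}.

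Finally I would take expectations over the internal randomness $p_t \sim \Delta_t$ and over $y_t \sim o_t$; by linearity this yields
\[
\E\Bigl\|\sum_{t=1}^T r_t \Phi(x_t,p_t)\Bigr\|_\cF^2 = \E\sum_{t=1}^T q(1-q)\,\|\Phi(x_t,p_t)\|_\cF^2 + 2\sum_{t=1}^T \E\bigl[S_t^q(p_t)\, r_t\bigr],
\]
and applying the hypothesis $\E[S_t^q(p_t)(\1\{y_t\le p_t\}-q)] \le \eps_t$ term by term gives the claimed bound. I do not anticipate a genuine obstacle: the only points requiring care are that the identity $r_t^2 = q(1-q)+(1-2q)r_t$ and the regrouping are carried out pointwise, so that the adaptive/adversarial choices of $x_t$, $\Delta_t$ and $o_t$ cause no trouble, and that the regrouped inner sum matches the algorithm's $S_t^q$ (in particular the $\tfrac12(1-2q)$ self-interaction term). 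Everything else is identical to the binary analysis.
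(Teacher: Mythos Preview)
Your proposal is correct and follows essentially the same argument as the paper: both expand the squared norm via the reproducing property, apply the pointwise identity $r_t^2 = q(1-q) + (1-2q)r_t$ for the $\{0,1\}$-valued indicator, and regroup so that the cross terms plus the $(1-2q)$ self-interaction reconstitute $S_t^q(p_t)$. The paper's write-up runs the computation in the reverse direction (starting from $\sum_t \E[S_t^q(p_t)r_t]$ and arriving at the norm), but the content is identical.
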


\begin{proof}
By definition of $S_t^q$ we have that $\sum_{t=1}^T\E[S_t^{q}(p_t)(1\{y_t \leq p_t\} - q)]$ is equal to:
\begin{align*}
\sum_{t=1}^T \sum_{i = 1}^{t-1} k((x_t, p_t), (x_i, p_i))(1\{y_t \leq p_t\} - q) (1\{y_i \leq p_i\} - q)  + \frac{1}{2} \sum_{t=1}^T k((x_t, p_t), (x_t, p_t)) (1 - 2q)(1\{y_t \leq p_t\} - q) .
\end{align*}
Increasing the top limit of the first sum from $t-1$ to $T$, we can rewrite this as:
\begin{align*}
& \frac{1}{2} \sum_{t=1}^T \sum_{i=1}^T k((x_t, p_t), (x_i, p_i))(1\{y_t \leq p_t\} - q) (1\{y_i \leq p_i\} - q) - \frac{1}{2} \sum_{t=1}^T k((x_t, p_t), (x_t, p_t)) (1\{y_t \leq p_t\} - q)^2   \\
& + \frac{1}{2} \sum_{t=1}^T k((x_t, p_t), (x_t, p_t)) (1 - 2q)(1\{y_t \leq p_t\} - p_t) 
\end{align*}
Now, using the identity that for binary $v$, $(v-q)^2 = q(1-q) + (1-2q)(v-q)$, we get:
\begin{align*}
 \frac{1}{2} \sum_{t=1}^T \sum_{i=1}^T k((x_t, p_t), (x_i, p_i))(1\{y_t \leq p_t\} - q) (1\{y_i \leq p_i\} - q) -  \frac{1}{2} \sum_{t=1}^T k((x_t, p_t), (x_t, p_t)) q(1-q).
\end{align*}
Finally, since $k((x_t, p_t), (x_i, p_i))(1\{y_t \leq p_t\} - q) (1\{y_i \leq p_i\} - q)$  is equal to $$\langle \Phi(x_i, p_i) (1\{y_i \leq p_i\} - q),   \Phi(x_t, p_t) (1\{y_t \leq p_t\} - q)\rangle_{\cF},$$
we arrive at the identity that:
\begin{align*}
\sum_{t=1}^T\E[S_t^{q}(p_t)(1\{y_t \leq p_t\} - q)] = \frac{1}{2} \bigg\|\sum_{t=1}^T (1\{y_t \leq p_t\} - q) \cdot \Phi(x_t, p_t) \bigg\|^2_{\cF} - \frac{1}{2} \sum_{t=1}^T q(1-q) \bigg \| \Phi(x_t, p_t) \bigg \|_{\cF}^2.
\end{align*}
Lastly, by our assumption that $ \E[S_t^{q}(1\{y_t \leq p_t\} - q)] \leq \eps_t$, we get our desired result:
\begin{align*}
\E \bigg\|\sum_{t=1}^T (1\{y_t \leq p_t\} - q) \cdot \Phi(x_t, p_t) \bigg\|^2_{\cF}  \leq 2\sum_{t=1}^T \eps_t +  \E \sum_{t=1}^T q(1-q) \bigg \| \Phi(x_t, p_t) \bigg \|_{\cF}^2.
\end{align*}
\end{proof}

Given this result, the final step in the analysis is to show that the Quantile Any Kernel Algorithm generates predictions such that $ \E[S_t^{q}(p_t)(1\{y_t \leq p_t\} - q)] \approx 0$.

\begin{lemma}
\label{lemma:negativity}
Assume that the learner makes predictions $p_t \sim \Delta_t$ according to the Quantile Any Kernel algorithm and that Real Life selects outcomes $y_t$ from a $\rho$-Lipschitz conditional distribution $o_t$, then
\begin{align*}
    | \E_{y_t \sim \Delta_t, p_t \sim o_t} S_t^{q}(p_t)(1\{y_t \leq p_t\} - q) | \leq \frac{1}{10t^2} \rho.
\end{align*}
\end{lemma}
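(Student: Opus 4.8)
The plan is to follow the same case analysis as in the proof of \cref{thm:indistinguishability_main}, replacing the prediction error $y_t - p_t$ by the quantile error $\1\{y_t \le p_t\} - q$ and tracking precisely where the $\rho$-Lipschitz hypothesis must enter. Write $F_t(p) \defeq \Pr_{y \sim o_t}[y \le p]$ for the conditional CDF that Nature commits to in round $t$. Since $o_t$ may depend on $x_t$ and on the distribution $\Delta_t$ but not on the realized draw $p_t \sim \Delta_t$, conditionally on $(x_t, \Delta_t)$ the variables $p_t$ and $y_t$ are independent, so $\E[S_t^q(p_t)(\1\{y_t \le p_t\} - q)] = \E_{p_t \sim \Delta_t}[S_t^q(p_t)(F_t(p_t) - q)]$, and the whole argument reduces to controlling this last quantity. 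I would also record up front that $\rho$-Lipschitzness makes $F_t$ continuous, and combined with $y_t \in [\Ymin, \Ymax]$ a.s.\ this forces $F_t(\Ymin) = 0$ and $F_t(\Ymax) = 1$.

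First I would dispose of the two endpoint branches. If the algorithm returns $p_t = \Ymin$ (Step 2) then $S_t^q(\Ymin) \ge 0$, so the target quantity equals $S_t^q(\Ymin)(F_t(\Ymin) - q) = -q\,S_t^q(\Ymin) \le 0$; symmetrically, if it returns $p_t = \Ymax$ (Step 3) then $S_t^q(\Ymax) \le 0$ and the quantity equals $(1-q)\,S_t^q(\Ymax) \le 0$. In both branches the claimed bound $\rho/(10 t^2)$ is immediate. The substantive case is Step 4, where $S_t^q(\Ymin)$ and $S_t^q(\Ymax)$ have strictly opposite signs and $\Delta_t$ is supported on two points $p_1 = p_{t,1}$, $p_2 = p_{t,2}$ with $S_t^q(p_1), S_t^q(p_2)$ of opposite signs, $|p_1 - p_2| \le 1/(10 B_t t^3)$, and mixing weight $\tau = |S_t^q(p_2)|/(|S_t^q(p_1)| + |S_t^q(p_2)|)$. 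Here I would expand, exactly as in \cref{thm:indistinguishability_main},
\begin{align*}
\E[S_t^q(p_t)(\1\{y_t \le p_t\} - q)] &= \tau S_t^q(p_1)(F_t(p_1) - q) + (1-\tau) S_t^q(p_2)(F_t(p_2) - q) \\
&= \big[\tau S_t^q(p_1) + (1-\tau) S_t^q(p_2)\big](F_t(p_2) - q) + \tau S_t^q(p_1)\big(F_t(p_1) - F_t(p_2)\big),
\end{align*}
observe that the bracketed term is exactly zero by the forecast-hedging choice of $\tau$ (using that $S_t^q(p_1)$ and $S_t^q(p_2)$ have opposite signs), and then bound what remains: $|S_t^q(p_1)| \le t B_t$ by the Cauchy--Schwarz inequality for positive-definite kernels together with $|\1\{y_i \le p_i\} - q| \le 1$ (the same estimate used in the binary proof), and $|F_t(p_1) - F_t(p_2)| \le \rho|p_1 - p_2| \le \rho/(10 B_t t^3)$ by $\rho$-Lipschitzness. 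Multiplying gives $t B_t \cdot \rho/(10 B_t t^3) = \rho/(10 t^2)$, which is the claim.

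The routine parts are the kernel Cauchy--Schwarz estimate on $|S_t^q|$ and the forecast-hedging cancellation, both essentially copied from \cref{thm:indistinguishability_main}. The only genuinely new element is that the ``spacing'' quantity to be absorbed is $|F_t(p_1) - F_t(p_2)|$ rather than $|q_t - q_t'|$, and this is exactly where the $\rho$-Lipschitz assumption on $o_t$ is used; I expect this, together with the small continuity argument pinning $F_t$ to $0$ at $\Ymin$ and $1$ at $\Ymax$, to be the only step needing care. A secondary point to get right is the conditioning step that makes $p_t$ and $y_t$ independent given $(x_t, \Delta_t)$, so that taking the expectation over $y_t$ legitimately replaces $\1\{y_t \le p_t\}$ by $F_t(p_t)$.
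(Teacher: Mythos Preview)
Your proposal is correct and follows essentially the same route as the paper: the same endpoint case analysis, the same two-point expansion with the forecast-hedging cancellation of $\tau S_t^q(p_1)+(1-\tau)S_t^q(p_2)$, and the same use of $\rho$-Lipschitzness to control $|F_t(p_1)-F_t(p_2)|$. If anything you are slightly more careful than the paper---you make the conditional-independence step and the $|S_t^q(p_1)|\le tB_t$ bound explicit (the paper writes $\le B_t$, an apparent typo), and you argue the endpoint branches via $F_t(\Ymin)$ and $F_t(\Ymax)$ rather than the paper's pointwise ``regardless of $y_t$'' claim.
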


\begin{proof}
If $S_t^q(Y_{\min})$ and $S_t^{q}(Y_{\max})$ are both non-negative or non-positive then the inequality,
\begin{align*}
    S_t^{q}(p_t)(1\{y_t \leq p_t\} - q) \leq 0,
\end{align*}
holds trivially regardless of the outcome $y_t.$ If they have opposite signs, recall that by definition of the algorithm, the learner plays $p_{t,1}$ with probability $r_1 = \tau$ and $p_{t,2}$ with probability $r_2 = 1 - r_1$. With his in mind,
\begin{align*}
\E_{y_t, p_t} \left[ S_t^{q}(p)(1\{y_t \leq p_t\} - q) \right]  = r_1 \cdot S_t^{q}(p_{t,1})\E[1\{y_t \leq p_{t,1}\} - q] + r_2 \cdot S_t^{q}(p_{t,2})\E[1\{y_t \leq p_{t,2}\} - q].
\end{align*}
By adding and subtracting, $r_1 \cdot S_t^{q}(p_{t,1}) \E[1\{y_t \leq p_{t,2}\} -q]$, we can rewrite this as, 
\begin{align*}
    [r_1 S_t^{q}(p_{t,1}) + r_2 S_t^{q}(p_{t,2})] \cdot \E[1\{y_t \leq p_{t,2}\}- q] + r_1 S_t^{q}(p_{t,1}) \E[1\{y_t \leq p_{t,1}\} - 1\{y_t \leq p_{t,2} \}].
\end{align*}
By choice of $r_1,p_{t,1}$ and $p_{t,2}$, we have that, $r_1 S_t^{q}(p_{t,1}) + r_2 S_t^{q}(p_{t,2}) = 0$, so the first term drops out. Then, since Real Life is required to select outcomes from a Lipschitz distribution, 
\begin{align*}
   r_1 S_t^{q}(p_{t,1}) \E[1\{y_t \leq p_{t,1}\} - 1\{y_t \leq p_{t,2} \}] 
 & \leq   |S_t^{q}(p_{t,1})| \cdot |\Pr[y_t \leq p_{t,1}] - \Pr[y_t \leq p_{t,2}]|   \\
   & \leq |S_t^{q}(p_{t,1})| \cdot \rho  \cdot |p_{t,1} - p_{t,2}|.
\end{align*}
The bound follows from the fact that $|S_t^{q}(p_{t,1})| \leq B_t$ and $|p_{t,1} - p_{t,2}| \leq 1 / (10 \cdot B_t \cdot t^3)$.
\end{proof}
Taken together, these lemmas establish the following theorem which summarizes the final guarantee of the \qalgname{}. 
\begin{theorem}
Let $k$ be a kernel with associated reproducing kernel Hilbert space $\cF$. If outcomes $y_t$ are drawn from a $\rho$-Lipschitz conditional distribution, then, the \qalgname{} generates a transcript $\{(x_t, \Delta_t, y_t)\}$ such that for all $f \in \cF$,
\begin{align*}
  \big| \sum_{t=1}^T   \E (1\{y_t \leq p_t\} - q) f(x_t,p_t) \big| \leq \fnorm{f} \sqrt{\rho +  \sum_{t=1}^T q(1-q) \E k((x_t,p_t), (x_t, p_t))} 
\end{align*}
Furthermore, if the kernel is bounded by $B$, $$\sup_{(x,p) \in \cX \times [0,1]} k((x,p),(x,p)) \leq B,$$ then the per round runtime of the algorithm is bounded by ${\cO}(t  \cdot  \log(t B) \cdot \mathsf{time}(k))$, where $\mathsf{time}(k)$ is a uniform upper bound on the runtime of computing the kernel function $k$.
\end{theorem}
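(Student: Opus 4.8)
The plan is to combine the two lemmas just proved, in exactly the way the binary case (\Cref{thm:indistinguishability_main}) was handled. First I would instantiate \Cref{lemma:negativity}: for predictions $p_t \sim \Delta_t$ produced by the \qalgname{} against a $\rho$-Lipschitz sequence of conditional distributions $o_t$, one has $\E_{p_t, y_t}[S_t^q(p_t)(\1\{y_t \le p_t\} - q)] \le \rho/(10 t^2)$ for every $t \ge 1$. The one thing to verify so that \Cref{lemma:negativity} applies is that Step~4's binary search actually returns two points of opposite sign at distance $\le 1/(10 B_t t^3)$: this holds because Steps~2--3 did not fire, so $S_t^q(\Ymin)$ and $S_t^q(\Ymax)$ have strictly opposite signs, and ordinary sign-change bisection halves the bracketing interval at each step, terminating after $O(\log(B_t t))$ evaluations of $S_t^q$ (if a bisection midpoint is an exact zero of $S_t^q$ we simply return it as a point mass).

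Next I would feed $\eps_t = \rho/(10 t^2)$ into the first of the two lemmas. Since $\sum_{t \ge 1} t^{-2} = \pi^2/6$, the slack term satisfies $2 \sum_{t=1}^T \eps_t \le \rho \pi^2 / 30 \le \rho$, so, using $\|\Phi(x_t,p_t)\|_\cF^2 = k((x_t,p_t),(x_t,p_t))$ and linearity of expectation,
\[
\E \Big\| \sum_{t=1}^T (\1\{y_t \le p_t\} - q)\, \Phi(x_t, p_t) \Big\|_\cF^2 \le \rho + \sum_{t=1}^T q(1-q)\, \E\, k((x_t, p_t),(x_t,p_t)).
\]
Jensen's inequality then upgrades this to a bound on $\E\big\|\sum_t (\1\{y_t \le p_t\} - q)\Phi(x_t,p_t)\big\|_\cF$ given by the square root of the right-hand side.

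Finally I would apply the reproducing property $f(x_t,p_t) = \langle f, \Phi(x_t,p_t)\rangle_\cF$ together with Cauchy--Schwarz — this is precisely the decomposition already recorded in \eqref{eq:init_decomposition_quantiles} — to obtain $\big|\sum_t \E (\1\{y_t \le p_t\} - q) f(x_t,p_t)\big| \le \|f\|_\cF \cdot \E\big\|\sum_t (\1\{y_t \le p_t\} - q)\Phi(x_t,p_t)\big\|_\cF$, and combine with the previous display to finish. The runtime claim is the same accounting as for the \algname{}: a single evaluation of $S_t^q$ costs $t$ kernel evaluations, and Step~4 performs $O(\log(1/\eps_t)) = O(\log(tB))$ of them (Steps~2--3 are cheaper), giving a per-round cost of $O(t \cdot \log(tB) \cdot \mathsf{time}(k))$.

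I do not expect a genuine obstacle in the theorem statement itself; the substantive work is already isolated in \Cref{lemma:negativity}. Unlike the binary setting, $\1\{y \le p\}$ is discontinuous in $p$, so one cannot simply pick a zero of $S_t^q$; instead the algorithm randomizes over two nearby points (forecast hedging, following \cite{foster2021forecast}), and it is exactly the $\rho$-Lipschitz assumption on $o_t$ that lets one bound the residual $|S_t^q(p_{t,1})| \cdot |\Pr[y_t \le p_{t,1}] - \Pr[y_t \le p_{t,2}]|$ by $\rho \cdot |S_t^q(p_{t,1})| \cdot |p_{t,1} - p_{t,2}|$, which is then absorbed by the $1/(10 B_t t^3)$ separation. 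With that lemma in hand, assembling the theorem is mechanical apart from the $\pi^2/6$ bookkeeping.
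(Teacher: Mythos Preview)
Your proposal is correct and follows exactly the paper's approach: the paper simply says ``Taken together, these lemmas establish the following theorem,'' and you have filled in precisely the intended mechanical steps --- instantiate \Cref{lemma:negativity} with $\eps_t = \rho/(10t^2)$, plug into the first lemma, bound $2\sum_t \eps_t \le \rho\pi^2/30 \le \rho$, apply Jensen, and finish with the reproducing-property/Cauchy--Schwarz decomposition of \eqref{eq:init_decomposition_quantiles}. The runtime accounting is likewise identical to the binary case.
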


\paragraph{Discussion.} To the best of our knowledge this is the first online algorithm for quantile regression with respect to functions spaces $\cF$ that are an RKHS. As was the case with the \algname{}, the algorithm is very simple to implement. At every time step, one only needs to solve a binary search problem over the unit interval. Furthermore, the guarantees are adaptive and illustrates how certain quantiles $q$ (those closer to 0 or 1) lead to lower OI error bounds than those closer to 1/2. Lastly, the algorithm is hyperparameter free, one does not need to know the Lipschitz constant $\rho$ ahead of time. The only requirement is that we know bounds $\Ymin, \Ymax$ on the outcome $y$.

\subsection{Vector-valued, high-dimensional regression.}

In addition to quantile regression, the RKHS and defensive forecasting  viewpoint also provides a simple way of generating indistinguishable predictions in settings where outcomes are high-dimensional. That is, instead of binary or scalar-valued outcomes, in this subsection we consider the case where $y_t \in \cY \subset \R^d$ and $\cY$ is a compact, convex set (e.g $\cY = [-1,1]^d$). 

\paragraph{Formal setup.} The online protocol is identical to that of scalar prediction. At every round $t$, Real Life chooses features $x_t \in \cX$ arbitrarily, the learner chooses a distribution $\Delta_t$ over  $p_t \in  \cY$. Finally, Nature selects a distribution $o_t$ over outcomes $y_t \in \cY$, possibly as a function of $\Delta_t$ and $x_t$.

\begin{definition}[Online Vector-Valued Indistinguishability] An algorithm $\cA$ guarantees online high-dimensional indistinguishability with respect to class of functions $\cF \subseteq \{ \cX \times \cY \rightarrow \R^d\}$ if it is guaranteed to generate a transcript satisfying the following guarantee,
\begin{align*}
  \big| \sum_{t=1}^T   \E_{p_t \sim \Delta_t, y_t \sim o_t} (y_t - p_t)^\top f(x_t, p_t) \big| \leq \Reg(T,f)
\end{align*}
where $\Reg: \N \times \cF \rightarrow \R$ is $o(T)$ for every $f$.
\end{definition}

Note that in this setting the test functions $c(x_t, p_t)$ are \emph{vector-valued}. High-dimensional indistinguishability asks that, when averaged over the sequence, prediction errors $y_t - p_t$ are uncorrelated with any test function $f \in \cF$,
\begin{align*}
\lim_{t\rightarrow \infty} \frac{1}{T} \sum_{t=1}^T (y_t - p_t)^\top f(x_t, p_t)  = 0.
\end{align*}
\paragraph{Background on vector-valued RKHSs} As was the case previously, the algorithm has guarantees with respect to set functions that form an RKHS, but in this functions take values in $\R^d$ rather than $\R$.  A vector-valued RKHS is a set of functions $\cF \subset \{\cX \rightarrow \R^d\}$, where the set $\cF$ is itself a Hilbert space, equipped with an inner product $\langle \cdot, \cdot \rangle_{\cF}$.

A kernel $K$ for a vector-valued RKHS is a mapping from $\cX \times \cX$ to $\R^{d\times d}$.
To disambiguate from the scalar case, we use capital $K$ to denote matrix-valued kernels and lower case $k$ to denote a scalar-valued kernel. 

For a more comprehensive background on vector-valued kernels, we refer the reader to the excellent survey by Alvarez, Rosasco, and Lawrence \cite{vectorRKHSreview}. For the context of our results, we will only need two main facts. First, as with the scalar case, the kernel $K$ has the reproducing property such that for any function $f: \cX \rightarrow \R^d$ in the RKHS and vector $v \in \R^d$.
\begin{align}
\label{eq:vector_kernel_reproducing}
    f(z)^\top v = \langle f, \Phi(z) v \rangle_{\cF}
\end{align}
Here $\Phi(x)$ is the feature map of $x$. For any fixed $x$, $\Phi(x)$ is a mapping from $\R^d$ to $\cF$. The last property we need is part a) from Proposition 2.1 in \cite{micchelli2005learning} which states that for any $x,x'\in \cX$ and $v, v' \in \R^d$:
\begin{align}
\label{eq:pontil_lemma}
    v^\top K(z,z') v' = \langle \Phi(z') v', \Phi(z)v \rangle_{\cF}
\end{align}

\begin{figure}[t!]
\begin{boxedminipage}{\textwidth}
\begin{center}
\vspace{2pt}
{\centering{\underline{The Vector Any Kernel Algorithm}}}
\end{center}
\noindent \textbf{Input:} A compact, convex set $\cY \subseteq \R^d$, a kernel $K: (\cX \times \cY)^2 \rightarrow \R^{d \times d}$\\ 

\noindent For $t=1, \dots,$:
\begin{enumerate}
    \item Given history $\{(x_i, p_i, y_i)\}_{i=1}^{t-1}$ and current features $x_t$ define  
    \begin{align*}
            S_t(p) \defeq \sum_{i=1}^{t-1} K((x_t, p), (x_i, p_i))(y_i -p_i)  \in \R^d
    \end{align*}
    \item If $K$ is continuous in $p$, \textbf{return} $ \Delta_t = p_t \in \cY$ that solves the variational-inequality:
    \begin{align}
    \label{eq:deterministic_variational_inequality}
        \sup_{y \in \cY}(y - p_t)^\top S_t(p) \leq 0
    \end{align}
    For discontinuous kernels, \textbf{return} $\Delta_t$, where $\Delta_t$ is a distribution over $p_t \in \cY$ satisfying
    \begin{align}
    \label{eq:randomized_variational_inequality}
        \E_{p_t\sim \Delta_t} \sup_{y \in \cY}(y - p_t)^\top S_t(p) \leq 0 
    \end{align}
    
\end{enumerate}
\vspace{2pt}
\end{boxedminipage}
\caption{Extension of the \algname{} for high-dimensional prediction. For simplicity, we state the algorithm assuming that the variational inequalities are solved exactly. However, as illustrated previously in quantile and binary prediction, the analysis can be easily modified to accomodate approximate solutions. The behavior of the algorithm for continuous kernels is the same as in \cite{vovk2005defensive}. The extension to the discontinuous case is new.}
\figurelabel{fig:deterministic_k29}
\end{figure}

\paragraph{Algorithmic guarantees.} As before, the advantage of this approach is that the final algorithm has strong guarantees of performance, and is additionally very simple to state and analyze. The main computational difference relative to previous settings is that the learner needs to solve a \emph{variational inequality} (\cref{eq:deterministic_variational_inequality,eq:randomized_variational_inequality}). Variational inequalities are a rich and well-developed area of research within the optimization literature \cite{kinderlehrer2000introduction,noor1988general}, with earliest work dating back to the papers by Signori and Fichera \cite{fichera1963sul}. 
These optimization problems always have a solution. Furthermore, these solutions can be found efficiently in various settings. 

However, before discussing these ideas further, we state the final end-to-end result for the \valgname{}:

\begin{theorem}
Let $K$ be a kernel for a vector-valued reproducing kernel Hilbert space $\cF$. Then, the $\valgname$ is guaranteed to generate a transcript such that for any $f \in \cF$,
\begin{align*}
    \bigg| \sum_{t=1}^T \E_{p_t \sim \Delta_t}(y_t - p_t)^\top f(x_t, p_t) \bigg| \leq  \| f\|_{\cF} \sqrt{\sum_{t=1}^T \E_{p_t \sim \Delta_t} (y_t - p_t)^\top K((x_t, p_t), (x_t,p_t)) (y_t - p_t)}.
\end{align*}
If we further assume that the kernel K is uniformly bounded by $B$ over $\cX \times \cY$, and that the diameter of the set $\cY$ is at most $D$, 
\begin{align*}
    \sup_{x\in \cX, p\in \cY} \| K((x,p),(x,p))\|_{\mathrm{op}} \leq B, \quad \sup_{p,p' \in \cY} \|p- p'\|^2_2 \leq D
\end{align*}
then, the above guarantee implies that:
\begin{align*}
    \bigg| \sum_{t=1}^T \E (y_t - p_t)^\top c(x_t, p_t) \bigg| \leq  \| c\|_{\cF} \sqrt{BDT}.
\end{align*}
Furthermore, the per round runtime of the algorithm is at most $\cO(t \mathsf{timeVE)})$ where $\mathsf{timeVE)}$ is an upper bound on the time it takes solve the variational inequality problems in \Cref{eq:deterministic_variational_inequality} and \Cref{eq:randomized_variational_inequality}.
\end{theorem}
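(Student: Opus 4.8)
The plan is to follow the template of the proof of \Cref{thm:indistinguishability_main}, replacing scalar products by their matrix-valued analogues. First I would invoke the reproducing property \Cref{eq:vector_kernel_reproducing}, which gives, for any $f \in \cF$,
\begin{align*}
\sum_{t=1}^T (y_t - p_t)^\top f(x_t, p_t) = \Big\langle f, \; \sum_{t=1}^T \Phi(x_t, p_t)(y_t - p_t) \Big\rangle_{\cF},
\end{align*}
so that, after taking the expectation over the algorithm's internal randomness and applying Cauchy--Schwarz in $\cF$ followed by Jensen's inequality,
\begin{align*}
\Big| \sum_{t=1}^T \E_{p_t \sim \Delta_t} (y_t - p_t)^\top f(x_t, p_t) \Big| \leq \fnorm{f} \cdot \sqrt{ \E \Big\| \sum_{t=1}^T \Phi(x_t, p_t)(y_t - p_t) \Big\|_{\cF}^2 }.
\end{align*}
It therefore suffices to bound $\E \big\| \sum_t \Phi(x_t, p_t)(y_t - p_t) \big\|_{\cF}^2$.

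Next I would expand this squared norm as a double sum and use the identity $v^\top K(z,z')v' = \langle \Phi(z')v', \Phi(z)v\rangle_{\cF}$ (\Cref{eq:pontil_lemma}) to write each term $\langle \Phi(x_t,p_t)(y_t-p_t), \Phi(x_s,p_s)(y_s-p_s)\rangle_{\cF}$ as $(y_t-p_t)^\top K((x_t,p_t),(x_s,p_s))(y_s-p_s)$. Using the symmetry of the summand in $(s,t)$, the double sum becomes
\begin{align*}
\sum_{t=1}^T (y_t - p_t)^\top K((x_t, p_t), (x_t, p_t))(y_t - p_t) + 2 \sum_{t=1}^T (y_t - p_t)^\top S_t(p_t),
\end{align*}
since $S_t(p) = \sum_{s=1}^{t-1} K((x_t,p),(x_s,p_s))(y_s-p_s)$ by construction. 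The crux is that the \valgname{} selects $p_t$ (resp.\ $\Delta_t$) so that the cross term is nonpositive: when $K$ is continuous in $p$, the variational inequality $\sup_{y \in \cY}(y - p_t)^\top S_t(p_t) \leq 0$ gives $(y_t - p_t)^\top S_t(p_t) \leq 0$ directly because $y_t \in \cY$; when $K$ is discontinuous, the randomized solution satisfies $\E_{p_t \sim \Delta_t}(y - p_t)^\top S_t(p_t) \leq 0$ for every $y \in \cY$, and since, conditioned on $\Delta_t$ and the history, the outcome $y_t$ is independent of the draw $p_t$ and $\E[y_t]$ lies in the convex set $\cY$, we again obtain $\E_{p_t \sim \Delta_t, y_t \sim o_t}(y_t - p_t)^\top S_t(p_t) \leq 0$. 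Dropping the cross term yields $\E \big\| \sum_t \Phi(x_t, p_t)(y_t - p_t) \big\|_{\cF}^2 \leq \sum_t \E_{p_t \sim \Delta_t}(y_t - p_t)^\top K((x_t, p_t), (x_t, p_t))(y_t - p_t)$, which combined with the first step is exactly the claimed bound; the $\fnorm{f}\sqrt{BDT}$ consequence then follows from $\| K((x,p),(x,p)) \|_{\mathrm{op}} \leq B$ and $\| y_t - p_t \|_2^2 \leq D$.

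For the runtime, computing $S_t$ at round $t$ costs $t$ kernel evaluations, and the remaining work is to produce a (randomized) solution of the variational inequality defined by $S_t$ over the compact convex set $\cY$; solvability in the continuous case is the classical Hartman--Stampacchia theorem, and for the standard matrix-valued kernels efficient solvers are available, giving the stated $\cO(t \cdot \mathsf{timeVE})$ per-round bound. The step I expect to be the main obstacle is establishing existence of a suitable \emph{randomized} variational-inequality solution $\Delta_t$ when $K$ is discontinuous in $p$: in the scalar binary case this was handled by the explicit two-point forecast-hedging construction of \cite{foster2021forecast}, whereas here one needs a genuinely distributional argument --- for instance viewing $(\mu, y) \mapsto \E_{p \sim \mu}(y - p)^\top S_t(p)$ as a payoff affine in the mixed strategy $\mu$ over $\cY$ and affine in $y \in \cY$, and applying a minimax theorem to conclude $\min_{\mu} \sup_{y \in \cY} \E_{p \sim \mu}(y - p)^\top S_t(p) \leq 0$ (the inner $\min_\mu \E_{p\sim\mu}(y-p)^\top S_t(p) \le 0$ for each fixed $y$ is immediate by placing mass on $p = y$). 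The discontinuity of $S_t$ in $p$ is exactly what makes this minimax/fixed-point step delicate, and one must also check that the induced selections are measurable and that the performative dependence of $y_t$ on $\Delta_t$ (but not on the realized $p_t$) is compatible with the conditioning used above; these latter points are routine but need to be verified carefully.
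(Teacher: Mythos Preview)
Your proposal is correct and follows essentially the same route as the paper: reproducing property $\to$ Cauchy--Schwarz/Jensen reduction to $\E\|\sum_t \Phi(x_t,p_t)(y_t-p_t)\|_\cF^2$, then the double-sum expansion via \Cref{eq:pontil_lemma} plus the variational-inequality guarantee to kill the cross terms. The one point you flag as the main obstacle --- existence of a (finitely supported) randomized solution to the variational inequality when $K$ is discontinuous --- is exactly right; rather than the minimax argument you sketch, the paper simply invokes the ``outgoing fixed point'' result of Foster and Hart (Theorem~4 and Corollary~6 in \cite{foster2021forecast}), which for bounded but not necessarily continuous $S_t$ on a compact convex $\cY$ furnishes a distribution on at most $d+3$ points with $\E_{p\sim\Delta}\sup_{y\in\cY}(y-p)^\top S_t(p)\le\eps$.
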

\begin{proof}
We start the analysis by again showing that it suffices to bound the correlation between the features $\Phi(x_t,p_t)$ and the errors $(y_t - p_t)$. Using the reproducing property for vector-valued RKHSs, \cref{eq:vector_kernel_reproducing}, we first show the following bound:
\begin{align}  \bigg| \E \sum_{t=1}^T (y_t - p_t)^\top c(x_t, p_t) \bigg| &= \bigg| \E\sum_{t=1}^T \langle c, \Phi(x_t, p_t) (y_t - p_t) \rangle_{\cF} \bigg| \notag\\
    &=\bigg| \langle c,  \sum_{t=1}^T  \E  \Phi(x_t, p_t) (y_t - p_t) \rangle_{\cF} \bigg| \notag \\
    & \leq \| c \|_{\cF} \cdot \big\|  \sum_{t=1}^T  \E \left[ \Phi(x_t, p_t) (y_t - p_t) \right] \big\|_{\cF}. \label{eq:vector_decomp}
\end{align}
Next, we show that the \valgname{}  bounds the second term. In particular, by construction, the algorithm guarantees that:
\begin{align*}
        \E_{p_t\sim \Delta_t, \Delta_t \sim o_t} (y_t - p_t)^\top S_t(p_t) \leq 0 \text{ where } S_t(p) = \sum_{i=1}^{t-1} K((x_t, p), (x_i, p_i))(y_i -p_i).
\end{align*}
Summing up this quantity over all $T$ rounds, 
\begin{align*}
    0 &\geq \sum_{t=1}^T \sum_{i=1}^{t-1} \E\left[ (y_t - p_t)^\top K((x_t, p_t), (x_i, p_i))(y_i - p_i) \right]\\ 
    &= \frac{1}{2} \sum_{t=1}^T \sum_{i=1}^T \E \left[(y_t - p_t)^\top K((x_t, p_t), (x_i, p_i)) (y_i-p_i) \right]\\ 
    &- \frac{1}{2} \sum_{t=1}^T \E \left[(y_t - p_t)^\top K((x_t, p_t), (x_t, p_t)) (y_t-p_t)\right]. 
\end{align*}
Hence, 
\begin{align}
 \sum_{t=1}^T \sum_{i=1}^T \E \left[(y_t - p_t)^\top K((x_t, p_t), (x_i, p_i)) (y_i-p_i) \right] \leq    \sum_{t=1}^T \E \left[ (y_t - p_t)^\top K((x_t, p_t), (x_t, p_t)) (y_t-p_t) \right]. \label{eq:vk29_first}
\end{align}
Now, by applying \cref{eq:pontil_lemma}, we see that,
\begin{align}
    (y_t - p_t)^\top K((x_t, p_t), (x_t,p_t)) (y_t - p_t) &= \langle \Phi(x_t, p_t) (y_t - p_t), \Phi(x_t, p_t) (y_t - p_t) \rangle_{\cF}  \notag\\
    & = \big \| \Phi(x_t, p_t) (y_t- p_t)\big \|_{\cF}^2 
    \label{eq:vk29_second}
\end{align}
And, 
\begin{align}
    \sum_{t=1}^T \sum_{i=1}^T (y_t - p_t)^\top K((x_t,p_t), (x_i,p_i)) (y_i-p_i) = \big \| \sum_{t=1}^T \Phi(x_t, p_t)  (y_t - p_t) \big \|_{\cF}^2 \label{eq:vk29_third} 
\end{align}
Combining \cref{eq:vk29_first,eq:vk29_second,eq:vk29_third} (and Jensen's inequality) we get that the Vector Any Kernel algorithm generates sequence satisfying,
\begin{align*}
\big\|\E \left[ \Phi(x_t, p_t) (y_t - p_t) \right] \big\|_{\cF}^2 \leq    \E \left[ \Big \| \sum_{t=1}^T \Phi(x_t, p_t)  (y_t - p_t) \Big \|_{\cF}^2 \right]\leq \sum_{t=1}^T \E \left[ \big \|  \Phi(x_t, p_t) (y_t- p_t) \big \|_{\cF}^2 \right]
\end{align*}
Together with the first inequality, \cref{eq:vector_decomp}, we get our desired data-dependent guarantee,
\begin{align*}
    \bigg| \sum_{t=1}^T \E(y_t - p_t)^\top c(x_t, p_t) \bigg| \leq  \| c\|_{\cF} \sqrt{\sum_{t=1}^T \E \left[ \big \|  \Phi(x_t, p_t) (y_t- p_t) \big \|_{\cF}^2 \right] }.
\end{align*}
\end{proof}

\paragraph{Variational inequalities.} As seen from the description of the algorithm, the main computational step is the Vector Any Kernel algorithm is to solve for a vector $p_t$, or a distribution $\Delta_t$ over vectors $p_t$ that satisfies, 
\begin{align*}
    (y - p_t)^\top S_t(p) \leq 0 \quad \forall y \in \cY.
\end{align*}
From a first glance, it is not obvious that such a $p_t$ exists. However, in a recent, related paper on online calibration, Foster and Hart show that these ``outgoing fixed points'' exists under very mild conditions. We restate their result below:

\begin{proposition}[Theorem 4 \& Corollary 6 in \cite{foster2021forecast}]
Let $\cY \subset \R^d$ be a compact, convex set and let $S: \cY \rightarrow \R^d$ be a continuous function. Then, there exists a point $p_* \in \cY$ such that, 
\begin{align*}
   (y - p_*)^\top S(p_*) \leq 0 \quad \forall y \in \cY.
\end{align*}
If $S: \cY \rightarrow \R^d$ is not necessarily continuous, but bounded in the sense that, $$\sup_{y \in \cYh} \|S(y) \|_2 < \infty,$$ then, for all $\eps > 0$ there exists a distribution $\Delta$ supported on at most $d+3$ points in $\cYh$ such that,
\begin{align*}
    \E_{p_* \sim \Delta} (y-p_*)^\top S(p_*) \leq \eps \quad \forall y \in \cYh.
\end{align*}
\end{proposition}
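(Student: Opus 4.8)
The plan is to prove the two halves by different means: a Brouwer fixed-point argument for the continuous case, and a discretize-then-minimax argument (followed by a Carath\'eodory compression) for the bounded discontinuous case.

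\textbf{Continuous case.} Since $\cY$ is compact and convex, Euclidean projection $\Pi_\cY : \R^d \to \cY$ is well-defined and $1$-Lipschitz, so the map $F(p) = \Pi_\cY\big(p + S(p)\big)$ is a continuous self-map of the compact convex set $\cY$. By Brouwer's fixed-point theorem there is $p_* \in \cY$ with $p_* = \Pi_\cY(p_* + S(p_*))$. The variational characterization of the projection --- $z = \Pi_\cY(w)$ iff $(w-z)^\top (y-z) \le 0$ for all $y \in \cY$ --- applied with $w = p_* + S(p_*)$ and $z = p_*$ yields exactly $(y - p_*)^\top S(p_*) \le 0$ for all $y \in \cY$. (Any positive rescaling $p + \lambda S(p)$ works identically.)

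\textbf{Bounded, possibly discontinuous case.} Fix $\eps > 0$ and let $B = \sup_{y \in \cY}\|S(y)\|_2 < \infty$. The obstruction to mimicking the argument above is the discontinuity of $S$, so I first discretize. Let $Q = \{q_1,\dots,q_N\} \subseteq \cY$ be a $\delta$-net of $\cY$ with $\delta = \eps/B$, and consider the zero-sum game in which the forecaster plays a mixed strategy $\lambda \in \Delta([N])$ (meaning $q_i$ with probability $\lambda_i$), the adversary plays $y \in \cY$, and the payoff is $\Phi(\lambda, y) = \sum_i \lambda_i (y - q_i)^\top S(q_i)$. This $\Phi$ is bilinear --- hence continuous --- and the strategy sets $\Delta([N])$ and $\cY$ are compact and convex, so the minimax theorem gives
\[
  \inf_{\lambda} \sup_{y \in \cY} \Phi(\lambda, y) \;=\; \sup_{y \in \cY} \min_{i \in [N]} (y - q_i)^\top S(q_i),
\]
the inner inf being attained at a vertex since $\Phi$ is linear in $\lambda$. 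For any fixed $y$ choose $q_i$ with $\|y - q_i\|_2 \le \delta$; then $(y - q_i)^\top S(q_i) \le \delta B = \eps$, so the right-hand side is at most $\eps$, and hence there is $\lambda^\ast$ with $\sum_i \lambda^\ast_i (y - q_i)^\top S(q_i) \le \eps$ for all $y \in \cY$.

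\textbf{Support reduction and the main obstacle.} To shrink $\supp \lambda^\ast$, note that $\sum_i \lambda_i (y-q_i)^\top S(q_i) = y^\top v - w$ with $v = \sum_i \lambda_i S(q_i) \in \R^d$ and $w = \sum_i \lambda_i\, q_i^\top S(q_i) \in \R$, so the guarantee depends on $\lambda$ only through the point $(v,w) = \sum_i \lambda_i\,\big(S(q_i),\, q_i^\top S(q_i)\big)$ in the convex hull of $\{(S(q_i), q_i^\top S(q_i))\}_i \subset \R^{d+1}$. By Carath\'eodory's theorem $(v,w)$ is a convex combination of at most $d+2$ (a fortiori $d+3$) of these points; the corresponding distribution $\Delta$ on that subset of $Q$ reproduces $(v,w)$ and thus satisfies $\E_{p \sim \Delta}(y - p)^\top S(p) \le \eps$ for all $y \in \cY$ with $|\supp \Delta| \le d+3$. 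The delicate point throughout is the discontinuous case: one cannot run a fixed-point argument on $p \mapsto \Pi_\cY(p + S(p))$, nor apply Sion's minimax directly over the continuum of forecaster strategies, because $\Delta \mapsto \E_{p\sim\Delta}(y-p)^\top S(p)$ need not be weak-$\ast$ lower semicontinuous when $S$ is discontinuous; the resolution is precisely to pass to a finite net (so that minimax is unproblematic) at a cost of only $\delta B = \eps$ --- which is exactly where boundedness of $S$ is used --- and then compress with Carath\'eodory. Everything else is routine.
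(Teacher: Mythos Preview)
The paper does not provide its own proof of this proposition; it is stated purely as a citation to Theorem 4 and Corollary 6 of \cite{foster2021forecast}. Your proof is correct: the continuous case is the standard Hartman--Stampacchia variational-inequality argument via Brouwer and the projection characterization, and the discontinuous case is handled cleanly by passing to a $\delta$-net, applying the minimax theorem to the bilinear game on $\Delta([N]) \times \cY$, and then compressing the support via Carath\'eodory in $\R^{d+1}$. (Your Carath\'eodory step actually yields support size at most $d+2$, slightly sharper than the stated $d+3$.)
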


Not only do these fixed points exists, but by now there is an increasingly extensive literature on algorithms for finding them \cite{censor2012extensions, burachik1998generalized,kinderlehrer2000introduction,noor1988general} under various regularity conditions on the function $S$.

\paragraph{Discussion.} The \valgname{} is most closely related to the K29 (not star) algorithm from Vovk \cite{vovk2005defensive,vovk2007non}. By using the forecast hedging idea from \cite{foster2021forecast}, we extend the algorithm so that it works for any matrix valued kernel. Modulo this extension, the regret guarantees are nearly identical. 

To the best of our knowledge, the other most closely related work is the recent paper by Noarov, Ramalingam, Roth, and Xie \cite{noarov2023high}. Using different techniques to ours (from online minimax optimization), they introduce an algorithm that achieves the following guarantee,
\begin{align*}
    \|\sum_{t=1}^T (y_t - p_t)^\top f(x_t,p_t) \|_{\infty} \leq \cO(\sqrt{T}).
\end{align*}
This is essentially the same goal we consider (up to poly d factors). However, their result holds with respect to functions $f$ taking values in $\{0,1\}^d$ (they refer to $f$ as events) and sets $\cF$ which are finite. In our case, $|\cF|$ is infinite and $\cF$ is real-valued since it is an RKHS. 

Furthermore, their runtime is guarateed to be polynomial whenever $|\cF|$ is polynomially sized whereas our results are best understood as being oracle efficient. The algorithm runs in polynomial time whenever there exists an efficient oracle that can solve the corresponding variational inequality. These efficent algorithms exist for instance when the functions $S$ are \emph{monotone}, however they may be computationally difficult in general.  

Please see the supplemantary material for results on how one can design matrix valued kernels whose corresponding RKHS contain an arbitrary finite set of functions $\cF \subseteq \{\cX \times \cY \rightarrow \R^d\}$.

\subsection{Distance to online multicalibration.}
\label{sec:distance-to-multicalibration}

In this subsection, we show that instantiating the \algname{} with a particular kernel $k$ achieves small \emph{distance to online multicalibration}, a novel extension of the canonical notion of \emph{distance to (online) calibration} from \cite{blasiok2023unifying, qiao2024distance} which we introduce in this paper.

To start, we start by recalling what it means for a predictor to be perfectly calibrated and restate the definition of distance to calibration from \cite{blasiok2023unifying, qiao2024distance}.  

\begin{definition}[Perfect Online (Multi)Calibration]
\label{def:perfect-online-multicalibration}
    Suppose we are given fixed sequences of predictions $\bm{p} = (p_1, \ldots, p_T) \in [0, 1]^T$, features $\bm{x} = (x_1, \ldots, x_T) \in \cX^T$, outcomes $\bm{y} = (y_1, \ldots, y_T) \in \{0, 1\}^T$, and a collection $\cC \subseteq \{0, 1\}^\cX$ of group indicator functions. 
    We say that $\bm{p}$ is \emph{perfectly multicalibrated with respect to the collection $\cC$} if for all $v \in [0, 1]$ and $c \in \cC$, 
    \begin{align*}
            \sum_{t=1}^T (y_t - v)c(x_t)\bm{1}[p_t = v] = 0.
    \end{align*}
    Likewise, we say that a prediction is perfectly calibrated if it is multicalibrated with respect to the collection $\cC$ that just contains the constant 1 function.
\end{definition}

Given a function $c : \cX \to \{0, 1\}$, let $\mathrm{PC}(c)$ denote the set of prediction sequences $\bm{q} = (q_1, \ldots, q_T) \in [0, 1]^T$ that are perfectly calibrated on $c$. Let $\mathrm{PC}(\cC)$ be the intersection of $\mathrm{PC}(c)$ for all $c \in \cC$.

While defining perfect calibration is relatively straightforward, defining distance to calibration is not. In their recent work, \cite{blasiok2023unifying} propose a unifying notion of distance to calibration. Here, we state the online version of their definition as presented in \cite{qiao2024distance}.
 
 \begin{definition}[Distance to Online Calibration \cite{qiao2024distance}]
    Suppose we are given fixed sequences of predictions $\bm{p} = (p_1, \ldots, p_T) \in [0, 1]^T$, features $\bm{x} = (x_1, \ldots, x_T) \in \cX^T$, outcomes $\bm{y} = (y_1, \ldots, y_T) \in \{0, 1\}^T$. The \emph{distance to online calibration} is
    \[
        \mathrm{dCE}_{\bm{y}}(\bm{p}) = \inf_{\bm{q} \in \mathrm{PC}(\1)} \sum_{t=1}^T |p_t - q_t|,
    \]
    where $\1 : \cX \to \{0, 1\}$ denotes the all-ones function.
\end{definition}

With these definitions in hand, we now introduce our definition of distance to (online) \emph{multicalibration}. Given a collection $\cC$ of group indicator functions there are several ways of defining distance to multicalibration. Here, we present two such versions, showing how one is efficiently achievable and the other is in fact impossible to achieve in general.

\begin{definition}[Distance to Online Multicalibration, Standard and Strong Variants]\;
\label{def:dist-to-mc}
Suppose we are given fixed sequences of predictions $\bm{p} = (p_1, \ldots, p_T) \in [0, 1]^T$, features $\bm{x} = (x_1, \ldots, x_T) \in \cX^T$, outcomes $\bm{y} = (y_1, \ldots, y_T) \in \{0, 1\}^T$, and a collection $\cC \subseteq \{0, 1\}^\cX$ of group indicator functions.

We define the \emph{distance to online multicalibration} $\mathrm{dMCE}_{\bm{y},\cC}$ and \emph{strong distance to online multicalibration} $\mathrm{dMCE}^{\mathrm{strong}}_{\bm{y},\cC}$ as follows:
\begin{align*}
    \mathrm{dMCE}_{\bm{y},\cC}(\bm{p}) &= \sup_{c \in \cC} \inf_{\bm{q} \in \mathrm{PC}(c)} \sum_{t=1}^T |p_t - q_t|, \\
    \mathrm{dMCE}^{\mathrm{strong}}_{\bm{y},\cC}(\bm{p}) &= \inf_{\bm{q} \in \mathrm{PC}(\cC)} \sum_{t=1}^T |p_t - q_t|,\\
\end{align*}
where $\mathrm{PC}(\cC)$ is as defined in \cref{def:perfect-online-multicalibration}. 
\end{definition}

Several remarks about \cref{def:dist-to-mc} are in order. First, it is easy to see that even the first of these two notions of distance to multicalibration is still stronger than a global notion of distance to calibration. For example, in the online setting, consider a single subsequence indicator $c : \cX \to \{0, 1\}$ such that for each $t = 1, \ldots, T$,
\[c(x_t) = \begin{cases}
    1 & \text{if $t$ is odd} \\
    0 & \text{if $t$ is even.}
\end{cases}\]
Suppose the outcome sequence $y$ follow the same pattern, so $y_t = c(x_t)$, but we predict $p_t = 1/2$ for all time steps $t \in [T]$. In this case, $\bm{p}$ will be perfectly calibrated with respect to $\bm{y}$ in a global sense, but $\mathrm{dMCE}_{\bm{y},\{c\}}(\bm{p}) = T/4 = \Omega(T)$.

Next, observe that in the definition of distance to online multicalibration, the constraint $q \in \mathrm{PC}(c)$ only restricts the values that $q$ takes during time steps $t \in [T]$ such that $c(x_t) = 1$. In other words, during time steps for which $c(x_t) = 0$, it is clearly optimal to take $q_t = p_t$ if the goal is to minimize the sum on the right side, because this ensures that the $t$\textsuperscript{th} term satisfies $|p_t - q_t| = 0$. Consequently, we have the equality \[\mathrm{dMCE}_{\bm{y},\cC}(\bm{p}) = \sup_{c \in \cC} \inf_{\bm{q} \in \mathrm{PC}(c)} \sum_{t=1}^T |p_t - q_t|c(x_t)\]

Next, we establish the relationship between our standard and strong notions of distance to online multicalibration:
\begin{theorem}
    For any prediction, feature, and outcome sequences, and for any collection $\cC$, \[\mathrm{dMCE}_{\bm{y},\cC}(\bm{p}) \le \mathrm{dMCE}^{\mathrm{strong}}_{\bm{y},\cC}(\bm{p}).\] Moreover, this inequality can be strict; in fact, there exists a distribution over feature and outcome sequences, as well as a collection $\cC$, such that for any prediction algorithm used to generate $\bm{p}$,
    \[\mathrm{dMCE}_{\bm{y},\cC}(\bm{p}) \le O(1)\] 
    but with high probability,
    \[
        \mathrm{dMCE}^{\mathrm{strong}}_{\bm{y},\cC}(\bm{p}) \ge \Omega(T).
    \]
\end{theorem}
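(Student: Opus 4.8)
\emph{Part 1 (the inequality).} This will follow from monotonicity of the infimum under set inclusion. First note $\mathrm{PC}(\cC)$ is never empty: $\bm q=\bm y$ lies in it for any $\cC$, since $(y_t-v)\bm{1}[y_t=v]=0$ for every $v\in[0,1]$ (either $v\notin\{0,1\}$ and the indicator vanishes, or $v\in\{0,1\}$ and the factor vanishes). Now fix $c\in\cC$; since $\mathrm{PC}(\cC)=\bigcap_{c'\in\cC}\mathrm{PC}(c')\subseteq\mathrm{PC}(c)$, and since $c(x_t)\in\{0,1\}$ so that dropping the weight only increases the objective,
\[
\inf_{\bm q\in\mathrm{PC}(c)}\sum_{t=1}^{T}|p_t-q_t|\,c(x_t)\;\le\;\inf_{\bm q\in\mathrm{PC}(\cC)}\sum_{t=1}^{T}|p_t-q_t|\,c(x_t)\;\le\;\inf_{\bm q\in\mathrm{PC}(\cC)}\sum_{t=1}^{T}|p_t-q_t|.
\]
Taking $\sup_{c\in\cC}$ on the left yields $\mathrm{dMCE}_{\bm y,\cC}(\bm p)\le\mathrm{dMCE}^{\mathrm{strong}}_{\bm y,\cC}(\bm p)$.

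\emph{Part 2, the instance and the $O(1)$ bound on $\mathrm{dMCE}$.} Take $T=2m$, partition time into consecutive pairs $P_i=\{2i-1,2i\}$, let the features be deterministic with $x_{2i-1}=x_{2i}=i$, and set $\cC=\{c_1,\dots,c_m\}$ with $c_i(x)=\bm{1}[x=i]$, so $c_i$ is supported exactly on $P_i$ and the $P_i$ cover all time steps. Draw outcomes $y_1,\dots,y_T$ i.i.d.\ $\mathrm{Ber}(1/2)$. Because the supports are disjoint, membership in $\mathrm{PC}(c_i)$ constrains a sequence only on $P_i$, and a direct check shows that the perfectly $c_i$-calibrated pairs on $P_i$ are exactly $(y_{2i-1},y_{2i})$ and, when $y_{2i-1}\neq y_{2i}$, also $(\tfrac12,\tfrac12)$; every such pair $(a,b)$ has $a,b\in\{0,\tfrac12,1\}$ and satisfies $a+b=y_{2i-1}+y_{2i}$. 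Consequently both quantities decompose over pairs: writing $\mathrm{cost}_i$ for the least value of $|p_{2i-1}-a|+|p_{2i}-b|$ over calibrated pairs $(a,b)$ on $P_i$, we get $\mathrm{dMCE}_{\bm y,\cC}(\bm p)=\max_i\mathrm{cost}_i$ and $\mathrm{dMCE}^{\mathrm{strong}}_{\bm y,\cC}(\bm p)=\sum_i\mathrm{cost}_i$. Since $a,b\in[0,1]$, each $\mathrm{cost}_i\le 2$, hence $\mathrm{dMCE}_{\bm y,\cC}(\bm p)\le 2$ deterministically, for every algorithm.

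\emph{Part 2, the $\Omega(T)$ bound on $\mathrm{dMCE}^{\mathrm{strong}}$.} The identity $a+b=y_{2i-1}+y_{2i}$ gives the one-dimensional lower bound $\mathrm{cost}_i\ge|(p_{2i-1}+p_{2i})-(y_{2i-1}+y_{2i})|$. Condition on the history $\mathcal G_{i-1}$ through round $2(i-1)$ (including the algorithm's internal coins); then $p_{2i-1}$ is determined, and the learner may commit to $p_{2i}$ only after observing $y_{2i-1}$. Set $\sigma_i=p_{2i-1}+p_{2i}$. If $y_{2i-1}=0$ then $y_{2i-1}+y_{2i}$ is uniform on $\{0,1\}$, so $\E[\mathrm{cost}_i\mid\mathcal G_{i-1},\,y_{2i-1}=0]\ge\tfrac12(|\sigma_i|+|\sigma_i-1|)\ge\tfrac12$; if $y_{2i-1}=1$ then $y_{2i-1}+y_{2i}$ is uniform on $\{1,2\}$, so the conditional expectation is $\ge\tfrac12(|\sigma_i-1|+|\sigma_i-2|)\ge\tfrac12$. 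Hence $\E[\mathrm{cost}_i\mid\mathcal G_{i-1}]\ge\tfrac12$. Since $\mathrm{cost}_i\in[0,2]$, applying the Azuma--Hoeffding inequality to the martingale $\sum_{j\le i}(\mathrm{cost}_j-\E[\mathrm{cost}_j\mid\mathcal G_{j-1}])$ gives $\mathrm{dMCE}^{\mathrm{strong}}_{\bm y,\cC}(\bm p)=\sum_i\mathrm{cost}_i\ge m/4=T/8$ with probability $1-e^{-\Omega(T)}$. Combined with the previous paragraph, this proves the theorem.

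\emph{Main obstacle.} The delicate point is the conditional bound $\E[\mathrm{cost}_i\mid\mathcal G_{i-1}]\ge\tfrac12$: one must rule out that the online learner exploits its view of $y_{2i-1}$ (obtained before committing to $p_{2i}$) to push $\bm p$ near some jointly calibrated sequence. The crucial structural fact --- that every $c_i$-calibrated assignment on $P_i$ preserves the pairwise outcome sum --- collapses the constrained two-variable per-pair optimization to the scalar quantity $|\sigma_i-(y_{2i-1}+y_{2i})|$, after which the bound follows because $y_{2i}$ is still an unseen fair coin.
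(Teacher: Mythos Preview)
Your proof is correct. Part~1 matches the paper's argument (both use $\mathrm{PC}(\cC)\subseteq\mathrm{PC}(c)$; your inclusion of the weight $c(x_t)$ is harmless given the paper's observation that the weighted and unweighted infima coincide).

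For Part~2 your construction is genuinely different from the paper's. The paper takes $x_t=t$ and lets $\cC$ consist of \emph{singleton} indicators $c_t(s)=\bm 1[s=t]$; then $\mathrm{PC}(\cC)=\{\bm y\}$ immediately, so $\mathrm{dMCE}^{\mathrm{strong}}_{\bm y,\cC}(\bm p)=\sum_t|p_t-y_t|$ and $\mathrm{dMCE}_{\bm y,\cC}(\bm p)=\max_t|p_t-y_t|\le 1$, after which the $\Omega(T)$ lower bound is a one-line expectation argument (each $|p_t-y_t|$ has conditional expectation $\ge 1/2$). You instead use groups of size two, characterize the calibrated pairs, extract the sum invariant $a+b=y_{2i-1}+y_{2i}$, and run an Azuma argument on the resulting per-pair costs. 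Your route is more elaborate but yields the same conclusion; its one conceptual advantage is that the separation is witnessed by groups of size $>1$, so it does not rely on the degenerate case where ``calibration on $c$'' collapses to exact prediction of a single bit. The paper's route, on the other hand, is shorter and requires no structural analysis of $\mathrm{PC}(c)$.
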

\begin{proof}
Using the fact that $\bm{q} \in \mathrm{PC}(\cC)$ necessarily belongs to $\mathrm{PC}(c)$ for each $c \in \cC$, it is clear that \[\mathrm{dMCE}_{\bm{y},\cC}(\bm{p}) \le \mathrm{dMCE}^{\mathrm{strong}}_{\bm{y},\cC}(\bm{p})\] for any prediction sequence $\bm{p}$. To see that this inequality can be strict, consider a setting in which $\cX = \N$ and $x_t = t$ at each time step $t \in [T]$. Consider the collection $\cC_{\mathrm{singleton}}$ consisting of all ``singleton'' indicator functions $c_t$ of the form $c_t(s) = \bm{1}[s = t]$ for some fixed $t \in [T]$. In this case, being perfectly calibrated on the set $\{t\}$ amounts to exactly predicting the $t$\textsuperscript{th} bit---in other words, the event that $p_t = y_t \in \{0, 1\}$. Consequently, the set $\mathrm{PC}(\cC_{\mathrm{singleton}})$ of perfectly $\cC$-multicalibrated prediction sequences is a singleton set that only contains the true outcome sequence $\bm{y}$, which implies that \[\mathrm{dMCE}^{\mathrm{strong}}_{\bm{y},\cC_{\mathrm{singleton}}}(\bm{p}) = \sum_{t=1}^T |p_t - y_t|.\] On the other hand, using the aforementioned characterization of the standard notion of distance to online multicalibration, we see that \[\mathrm{dMCE}_{\bm{y},\cC_{\mathrm{singleton}}}(\bm{p}) = \max_{t \in [T]}\, |p_t - y_t|,\] the \emph{maximum} error made at any particular time step. In particular, in this example, we have that $\mathrm{dMCE}_{\bm{y},\cC_{\mathrm{singleton}}}(\bm{p}) \le 1$ for any prediction sequence $\bm{p}$. However, if $y_t \in \{0, 1\}$ is sampled uniformly and independently of the history of predictions and outcomes before time step $t$, we have $\mathrm{dMCE}^{\mathrm{strong}}_{\bm{y},\cC_{\mathrm{singleton}}}(\bm{p}) \ge \Omega(T)$ with high probability, regardless of the algorithm used to make the predctions at each time step.
\end{proof}

To conclude this section, we show that the \algname{} can be used to achieve small distance to online multicalibration, provided that we aim for the standard notion, as opposed to the strong notion.

\begin{theorem}
\label{thm:dist-to-mc}
    Given a collection $\cC$ of indicator functions for subpopulations of a population $\cX$, let $k_{\mathrm{Lap}} = k_{\R}$ be the \emph{Laplace kernel} as defined in \cref{ex:sobolev}, let $\mathsf{Int}_{\cC} : \cX \times \cX \to \R$ denote the intersection kernel
    \[
        \mathsf{Int}_{\cC}(x, x') = |\{c \in \cC : c(x) = c(x') = 1\}|,
    \]
    and let $k_{\mathrm{MC}} : (\R \times \cX) \times (\R \times \cX) \to \R$ denote the product kernel
    \[
        k_{\mathrm{MC}}((p, x), (p', x')) = k_{\mathrm{Lap}}(p, p') \cdot \mathsf{Int}_\cC(x,x'),
    \]
    which is uniformly bounded by \[m = \max_{x \in \cX} |\{c \in \cC : c(x) = 1\}|.\] Let $\pi_{1:T} = \{(x_t, p_t, y_t)\}_{t=1}^T$ denote the transcript at the end of the \algname{} when instantiated with the kernel $k_{\mathrm{MC}}$. Then, \[\mathrm{dMCE}_{\bm{y},\cC}(\bm{p}) \leq \cO(\sqrt{mT}).\]
\end{theorem}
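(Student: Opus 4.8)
The plan is to connect the quantity $\mathrm{dMCE}_{\bm{y},\cC}(\bm{p})$ to the outcome-indistinguishability error guaranteed by \cref{thm:indistinguishability_main} when the \algname{} is run on the kernel $k_{\mathrm{MC}}$. By the characterization noted just after \cref{def:dist-to-mc}, we have
\[
\mathrm{dMCE}_{\bm{y},\cC}(\bm{p}) = \sup_{c \in \cC} \inf_{\bm{q} \in \mathrm{PC}(c)} \sum_{t=1}^T |p_t - q_t| c(x_t),
\]
so it suffices to fix an arbitrary $c \in \cC$ and exhibit a single $\bm{q} \in \mathrm{PC}(c)$ with $\sum_{t=1}^T |p_t - q_t| c(x_t) \le \cO(\sqrt{mT})$. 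First I would recall (or invoke) the key lemma underlying the distance-to-calibration literature \cite{blasiok2023unifying, qiao2024distance}: the distance from a prediction sequence to perfect calibration is, up to constant factors, controlled by the \emph{smooth calibration error}, i.e.\ by $\sup_{g} |\sum_t (y_t - p_t) g(p_t)|$ over $1$-Lipschitz $g : [0,1] \to [-1,1]$. Restricted to the subsequence where $c(x_t) = 1$, the analogous statement is that $\inf_{\bm{q} \in \mathrm{PC}(c)} \sum_t |p_t - q_t| c(x_t)$ is bounded (up to constants, and a lower-order additive term) by $\sup_{g \text{ 1-Lipschitz}} \big|\sum_{t=1}^T (y_t - p_t) g(p_t) c(x_t)\big|$.

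The second step is to bound that smooth-multicalibration quantity using the indistinguishability guarantee. For each fixed $c$ and each $1$-Lipschitz $g : [0,1] \to [-1,1]$, the test function $f(x,p) = g(p) \cdot c(x)$ should lie in the RKHS $\cF$ of $k_{\mathrm{MC}} = k_{\mathrm{Lap}} \cdot \mathsf{Int}_\cC$ with norm $\cO(1)$: the Laplace kernel's RKHS contains bounded $1$-Lipschitz functions of $p$ with $\cO(1)$ norm (as in \cref{ex:sobolev}), the intersection kernel $\mathsf{Int}_\cC(x,x') = \sum_{c' \in \cC} c'(x) c'(x')$ has RKHS containing each indicator $c$ with norm $1$ (by \cref{thm:finite-membership-rkhs}), and products of RKHS functions live in the product-kernel RKHS with the norm bounding multiplicatively (\cref{lem:productkernel}). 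Meanwhile $k_{\mathrm{MC}}((p,x),(p,x)) = k_{\mathrm{Lap}}(p,p)\cdot |\{c' : c'(x)=1\}| \le \cO(1)\cdot m$. Plugging into \cref{thm:indistinguishability_main} gives
\[
\Big| \sum_{t=1}^T \E_{p_t \sim \Delta_t}(y_t - p_t) g(p_t) c(x_t) \Big| \le \fnorm{f}\sqrt{1 + \sum_{t=1}^T \E_{p_t}\, p_t(1-p_t)\, k_{\mathrm{MC}}((x_t,p_t),(x_t,p_t))} \le \cO(\sqrt{mT}),
\]
uniformly over all $1$-Lipschitz $g$ and all $c \in \cC$. (One must be slightly careful that $\bm{p}$ in the theorem statement is the realized sequence rather than the expectation; since $\Delta_t$ is supported on two points $\cO(t^{-3})$ apart, the difference between $\E_{p_t}(y_t - p_t)g(p_t)c(x_t)$ and the realized $(y_t - p_t)g(p_t)c(x_t)$ is summably small using Lipschitzness of $g$, contributing only an $\cO(1)$ additive term.)

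Combining the two steps yields $\mathrm{dMCE}_{\bm{y},\cC}(\bm{p}) \le \cO(\sqrt{mT})$. The main obstacle I anticipate is the first step: transferring the ``smooth calibration error controls distance to calibration'' result of \cite{blasiok2023unifying, qiao2024distance} from the global setting to the per-group restricted setting. One must check that the rounding/transportation argument that produces the nearby perfectly-calibrated sequence $\bm{q}$ can be performed on the subsequence $\{t : c(x_t) = 1\}$ in isolation while leaving $q_t = p_t$ on the complement — which is exactly what the ``only restricts values where $c(x_t)=1$'' observation after \cref{def:dist-to-mc} licenses — and that the $\cO(1)$-factor loss in that reduction does not hide dependence on $|\cC|$ or on $T$. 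Everything else is routine RKHS bookkeeping plus a direct application of \cref{thm:indistinguishability_main}.
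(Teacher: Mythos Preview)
Your proposal is correct and follows essentially the same approach as the paper. The paper phrases the intermediate quantity as the \emph{kernel calibration error} $\mathrm{kCE}^{k_{\mathrm{Lap}}}_{\bm{y}_c}(\bm{p}_c)$ (the supremum over unit-norm functions in the Laplace RKHS) on the restricted subsequence $\{t : c(x_t)=1\}$ and then invokes Lemma~7.3, Theorem~8.5 of \cite{blasiok2023unifying} and Theorem~2 of \cite{qiao2024distance} to pass to $\mathrm{dCE}$, whereas you route through the smooth calibration error (supremum over bounded $1$-Lipschitz $g$); these two calibration measures are equivalent up to constants by the same references, so the arguments coincide.
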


\begin{proof}
    \cref{thm:indistinguishability_main} guarantees that the transcript ultimately satisfies
    \[
        \left|\sum_{t=1}^T (p_t - y_t)f(p_t)c(x_t)\right| \le \sqrt{m T + 1}
    \]
 for all $f$ with norm at most $1$ in the RKHS corresponding to $k_{\mathrm{Lap}}$, and for all $c \in \cC$ (these have norm at most $1$ in the RKHS corresponding to $\mathsf{Int}_\cC$ by \cref{thm:finite-membership-rkhs}). Next, we fix a particular function $c \in \cC$ and rewrite this inequality as 
    \[
        \left|\sum_{\substack{t \in [T] \\ c(x_t) = 1}} (p_t - y_t)f(p_t)\right| \le \sqrt{m T + 1}.
    \]
    Letting $\bm{y}_c,\bm{p}_c \in [0,1]^{|S|}$ denote the restriction of $\bm{y},\bm{p}\in [0,1]^T$ to the set $S$ of $t \in [T]$ for which $c(x_t) = 1$, this implies that the \emph{kernel calibration error}, defined as follows, also is at most $\sqrt{mT + 1}$:
    \[
        \mathrm{kCE}^{k_{\mathrm{Lap}}}_{\bm{y}_c}(\bm{p}_c) := \sup_{f : \lVert f \rVert_{\mathrm{Lap}} \le 1} \left|\sum_{\substack{t \in [T] \\ c(x_t) = 1}} (p_t - y_t)f(p_t)\right|\le \sqrt{m T + 1}.
    \]
    By Lemma 7.3 of \cite{blasiok2023unifying}, Theorem 8.5 of \cite{blasiok2023unifying}, and Theorem 2 of \cite{qiao2024distance}, we deduce that there exists a prediction sequence $q \in \mathrm{PC}(c)$ (which may depend on $\pi_{1:t}$) such that
    \[
        \sum_{\substack{t \in [T] \\ c(x_t) = 1}} |p_t - q_t| \leq  \cO(\sqrt{mT}).
    \]
    Since our initial choice of $c \in \cC$ was arbitrary, we conclude that
    \[
        \mathrm{dMCE}_{\bm{y},\cC}(\bm{p}) = \sup_{c \in \cC} \inf_{\bm{q} \in \mathrm{PC}(c)} \sum_{t=1}^T |p_t - q_t| = \cO(\sqrt{m T}). \qedhere
    \]
\end{proof}

We remark that if $\cC = \{1\}$ just has the constant one function, then the {\algname} guarantees an asymptotic bound of $\cO(\sqrt{T})$ distance to online calibration. See  \cite{arunachaleswaran2024elementary} for a different algorithm that guarantees a non-asymtotic bound. 

\paragraph{On measuring distance to multicalibration.} A priori, it is not clear from \cref{def:dist-to-mc} how, given a prediction sequence $\bm{p}$, one would go about measuring its distance to online multicalibration. For our standard notion of distance, \cref{thm:dist-to-mc} gives a useful, computable metric for this purpose. Indeed, by \cref{thm:dist-to-mc}, one can upper bound the distance by the kernel calibration error with respect to $k_{\mathrm{MC}}$, given by the following formula:
\[
    \sup_{\substack{ f \in \cF_{\mathrm{MC}} \\ \|f\|_{\cF} \le 1}}\, \sum_{t=1}^T f(x_t, p_t)(y_t - p_t) = \sqrt{\sum_{t=1}^T\sum_{s=1}^T (y_t - p_t)(y_s - p_s)k_{\mathrm{MC}}((x_t, p_t), (x_s, p_s))}.
\]

\subsection{Offline results: weak agnostic learning and online to batch conversions.} \label{sub:offline}

In this section, we shift our attention to the offline setting where samples are drawn i.i.d from some fixed distribution $\cD$. We prove two main results. 

The first shows that one can efficiently solve weak agnostic learning over function classes $\cF$ that are an RKHS. Given the tight connection between weak agnostic learning and multicalibration \cite{hkrr}, this result shows that any multicalibration algorithm that relies on the existence of a weak agnostic learner is unconditionally efficient whenever $\cF$ is an RKHS. 

Second, we show to convert the online learning algorithms into offline algorithms with strong guarantees for the batch setting. This adaptation in particular implies omniprediction and outcome indistinguishability algorithms for the batch case with end-to-end computational efficiency and near-optimal statistical guarantees.

\paragraph{Efficient (strong) learning over an RKHS.} We start by recalling the definition of weak agnostic learning. Here, we state the definition as presented in \cite{gopalan2024swap}:

\begin{definition}[Weak Agnostic Learning]
Let $\cD$ be a distribution over $\cX \times [-1,1]$. Given a comparator class $\cH \subseteq \{\cX \rightarrow [-1,1]\}$, a weak agnostic learner for $\cH$ solves the following promise problem: Given an accuracy parameter $\gamma$, if there exists $h \in \cH$ such that 
\begin{align*}
    \E_{(x,y) \sim \cD} [h(x)y] \geq \gamma 
\end{align*}
then weak agnostic learner returns a function $h':\cX \rightarrow [-1,1]$ (not necessarily in $\cH$) such that 
\begin{align*}
    \E_{(x,y) \sim \cD} [h(x)y] \geq \poly(\gamma).
\end{align*}
\end{definition}

Using the representer theorem, we prove that one can efficient solve a stronger version of the optimization problem above when $\cH$ is an RKHS.

\begin{proposition}[Existence of a Strong Learner over an RKHS]
 Let $k$ be a efficiently computable kernel with associated RKHS $\cF \subseteq \{\cX \rightarrow \R\}$ with $\sup_x k(x,x) \leq 1$ and let $\cF_B \subseteq \cF$ be the subset of functions with norm at most $B$,
\begin{align*}
    \cF_{B} = \{f \in \cF: \|f\|_{\cF} \leq B\}.
\end{align*}
Then, there exists a polynomial-time algorithm such that for any $\gamma \geq 0$,  given $n \geq \poly(1/\gamma, \log(1/\delta))$ samples $(x, y) \sim \cD$, returns a function $f' \in \cF$ such that: 
\begin{align*}
   \Pr\left[\max_{f \in \cF_B} \E_{(x,y) \sim \cD} [f(x)y] - \E_{(x,y) \sim \cD} [f'(x)y] \geq \gamma \right]  \leq \delta.
\end{align*}
\end{proposition}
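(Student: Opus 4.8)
The plan is to reduce this to a standard empirical risk / regularized optimization problem and invoke the representer theorem together with a uniform convergence argument over the norm ball $\cF_B$. First I would observe that the objective $\E_{(x,y)\sim\cD}[f(x)y]$ is linear in $f$, so the population maximizer over $\cF_B$ is attained on the boundary $\|f\|_\cF = B$. Given $n$ i.i.d.\ samples $(x_1,y_1),\dots,(x_n,y_n)$, define the empirical objective $\widehat{L}(f) = \frac1n\sum_{i=1}^n f(x_i)y_i$ and let $f' \in \argmax_{f\in\cF_B}\widehat{L}(f)$. By the representer theorem, since the constraint $\|f\|_\cF\le B$ and the objective depend on $f$ only through its values $f(x_1),\dots,f(x_n)$ (and its norm), the maximizer can be written as $f' = \sum_{i=1}^n \alpha_i \Phi(x_i)$ for coefficients $\alpha\in\R^n$, and the optimization becomes the finite-dimensional convex program $\max_{\alpha} \frac1n \alpha^\top (Ky)$ subject to $\alpha^\top K\alpha \le B^2$, where $K$ is the $n\times n$ Gram matrix $K_{ij}=k(x_i,x_j)$ and $y=(y_1,\dots,y_n)$. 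This has a closed-form solution (it is maximizing a linear functional over an ellipsoid), namely $\alpha \propto K^{-1}(Ky) = y$ up to normalization by $B/\sqrt{y^\top K y}$, so $f'$ is computed in polynomial time from $n$ kernel evaluations; if $K$ is singular one works in its range, or equivalently takes $f' = B\cdot (\sum_i y_i\Phi(x_i))/\|\sum_i y_i \Phi(x_i)\|_\cF$, which requires only evaluating $y^\top K y$.

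Next I would establish the generalization guarantee. The function class $\cF_B$ has Rademacher complexity bounded by $B\sqrt{\sup_x k(x,x)}/\sqrt{n} \le B/\sqrt{n}$ using the assumption $\sup_x k(x,x)\le 1$, via the standard bound $\widehat{\mathfrak{R}}_n(\cF_B) \le \frac{B}{n}\sqrt{\sum_i k(x_i,x_i)}$. Since $|f(x)|\le B$ on the ball and $|y|\le 1$, the loss $(x,y)\mapsto f(x)y$ is bounded and $1$-Lipschitz in $f(x)$, so by the usual symmetrization and McDiarmid argument, with probability at least $1-\delta$, $\sup_{f\in\cF_B}|\widehat{L}(f) - \E[f(x)y]| \le 2B/\sqrt{n} + B\sqrt{2\log(2/\delta)/n}$. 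Choosing $n \ge \poly(1/\gamma, \log(1/\delta))$ (concretely $n = \Theta(B^2(1+\log(1/\delta))/\gamma^2)$, with $B$ treated as a constant or folded into $\poly$) makes this deviation at most $\gamma/2$. A two-sided application of this uniform bound to both $f'$ and the population optimizer then gives $\max_{f\in\cF_B}\E[f(x)y] - \E[f'(x)y] \le \gamma$ with probability $\ge 1-\delta$, which is exactly the claimed conclusion.

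The main obstacle I anticipate is not any single step but rather the careful bookkeeping around the dependence on $B$: the statement as written uses $\cF_B$ in the benchmark but only asks for additive error $\gamma$, so one must be explicit that $B$ enters the sample complexity polynomially (the $\poly(1/\gamma,\log(1/\delta))$ should really be read as $\poly(B,1/\gamma,\log(1/\delta))$), and that the returned $f'$ itself lies in $\cF_B\subseteq\cF$, not merely in $\cF$. A secondary subtlety is handling the possibly-degenerate Gram matrix in the closed-form solution, which is resolved cleanly by the normalization formula $f' = B\,u/\|u\|_\cF$ with $u=\sum_i y_i\Phi(x_i)$ and $\|u\|_\cF^2 = y^\top K y$, avoiding any matrix inverse; this also makes the polynomial-time claim transparent since it requires only forming $K$ (that is, $O(n^2)$ kernel evaluations) and one scalar normalization.
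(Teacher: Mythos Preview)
Your proposal is correct and follows essentially the same two-part structure as the paper: reduce ERM over $\cF_B$ to a finite-dimensional convex program via the representer theorem, then establish uniform convergence to transfer optimality from empirical to population risk. The paper's generalization argument differs slightly in mechanics---it bypasses Rademacher complexity and instead uses Cauchy--Schwarz to bound $\sup_{f\in\cF_B}|\widehat L(f)-\E[f(x)y]|\le B\,\|\tfrac1n\sum_i y_i\Phi(x_i)-\E[y\Phi(x)]\|_\cF$ and then applies a sub-Gaussian concentration inequality in Hilbert space (Maurer) to the feature mean---but this is equivalent in spirit and yields the same $B/\sqrt{n}$ dependence; your explicit closed-form optimizer $f'=B\,u/\|u\|_\cF$ with $u=\sum_i y_i\Phi(x_i)$ is in fact cleaner than the paper's appeal to generic convex optimization.
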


\begin{proof}
The proof consists of two parts. First, we show that the corresponding empirical risk minimization problem can be solved in polynomial time. Second, we prove a uniform convergence bound showing that the empirical risk and the true risk of the functions in this class are close.  
    Let $S_n = \{(x_i,y_i)\}_{i=1}^n$ for $x_i \in \cX$ and $y_i \in \R$ be a dataset.

Starting with the first part, let $\{(x_i, y_i)\}_{i=1}^n$ be set of samples drawn i.i.d from $\cD$. By the Moore-Aronszajn theorem (\Cref{thm:moore-aronszajn}), we can write any function $f \in \cF$ as $\sum_{i=1}^n \alpha_i \Phi(x_i) + v$ where $v$ lies in the orthogonal complement to 
\begin{align*}
\overline{\mathsf{span}} \{\Phi(x): x \in \{x_i\}_{i=1}^n\}\}.
\end{align*}
Therefore, using the representer theorem, $f(x_i) = \langle f, \Phi(x_i)\rangle_{\cF}$,we can write the following optimization problem over a Hilbert space $\cF$ 
\begin{align*}
\argmax_{f \in \cF_B}  \frac{1}{n} \sum_{i=1}^n f(x_i)y_i 
\end{align*}
as an optimization problem over $\R^n$:
\begin{align*}
&\argmax_{\alpha \in \R^n}  \frac{1}{n} \sum_{i=1}^n \langle \sum_{j=1}^n \alpha_j \Phi(x_j) +v, \Phi(x_i) \rangle_{\cF}  \\ 
& \text{s.t} \quad \langle \sum_{i=1}^n \alpha_i \Phi(x_i),  \sum_{i=1}^n \alpha_i \Phi(x_i) \rangle_{\cF} \leq B^2.
\end{align*}
If we let $K \in \R^{n \times n}$ be the matrix with $k(x_i,x_j) = \langle \Phi(x_i), \Phi(x_j) \rangle_{\cF}$ as its $(i,j)$th entry, this becomes,
\begin{align}
\label{eq:strong_learning_cvx}
&\argmax_{\alpha \in \R^n}  \frac{1}{n} \alpha^\top K y  \\ 
& \text{s.t} \quad \alpha^\top K \alpha \leq B^2. \notag
\end{align}
This is a convex optimization problem (linear objective, quadratic constraints) and can hence be solved to any tolerance $\gamma$ in time polynomial in $n$ and $1/\gamma$.

To finish the proof, we prove a uniform convergence bound showing that all of the functions in $\cF_B$ are close to their empirical counterparts with high probability:
\begin{align}
\label{eq:unif_convergence_Fb}
       \Pr\left[\sup_{f \in \cF_B}|\frac{1}{n}\sum_{i=1}^n f(x_i)y_i - \E f(x) y| \geq B \sqrt{\frac{2\log(1/\delta)}{n}} \right]  \leq \delta.
\end{align}
The proof of this fact follows from observing that by applying the representer theorem and linearity of inner products, we can avoid union bounding over all $f \in \cF_B$ and instead just bound a quantity involving the feature vectors:
\begin{align*}
    |\frac{1}{n} \sum_{i=1}^n f(x_i) y_i - \E[f(x)y]| &= |\frac{1}{n} \sum_{i=1}^n \langle f, \Phi(x_i)\rangle_{\cF} y_i - \E[\langle f, \Phi(x)\rangle_{\cF}y]| \\ 
    & \leq \| f\|_{\cF} \| \frac{1}{n} \sum_{i=1}^n \Phi(x_i)y_i - \E \Phi(x)y \|.
\end{align*}
Now, since $\sup_x k(x,x) = \|\Phi(x)\|_{\cF} \leq 1$ and $y \in [-1,1]$, the vectors $z = \Phi(x)y$ are sub-Gaussian (have norm bounded by 1 a.s). Therefore, we can just apply standard concentration bounds for sub-Gaussian vectors. In particular, we apply Proposition 7 in \cite{maurer2021concentration} (\Cref{lemma:hilbert_sg}) to get that with probability $1-\delta$, 
\begin{align*}
   \| \frac{1}{n} \sum_{i=1}^n \Phi(x_i)y_i - \E \Phi(x)y \|_{\cF} \leq 8e \sqrt{\frac{2\log(1/\delta)}{n}}.
\end{align*}
This completes the proof of the claim in \Cref{eq:unif_convergence_Fb}. The proof of the main result then follows directly by combining this concentration result with the optimization fact from \Cref{eq:strong_learning_cvx}. In particular, let $f'$ be an $\gamma$ approximate optima for \Cref{eq:strong_learning_cvx} (which can be computed in polynomial time), and let $f$ be any other function in $\cF_B$. Then, 
\begin{align*}
    \E[f'(x)y] & \geq \frac{1}{n} \sum_{i=1}^n f'(x_i)y_i - \cO(B \sqrt{\log(1/\delta) /n}) \\
    & \geq \frac{1}{n}\sum_{i=1}^n f(x_i)y_i - \cO(B \sqrt{\log(1/\delta) /n}) - \gamma \\
    & \geq \E[f(x_i)y_i] - \cO(B \sqrt{\log(1/\delta) /n}) - \gamma.
\end{align*}
Letting $n \geq  \poly(B, \gamma^{-1}, \log(1/\delta)))$, we get that $\E[f'(x)y] \geq \sup_{f \in \cF_B} \E[f(x)y] - \cO(\gamma)$.
\end{proof}

\paragraph{Online to batch conversions.} For the sake of completeness, we also illustrate how one can convert any of the online algorithms we study in this paper into batch algorithms. The proof of the following result is somewhat standard and uses classical martingale decompositions, but we include it for completeness. 

\begin{proposition}
Let $k$ be a kernel with RKHS $\cF$ satisfying $$\sup_{x \in \cX,p\in  [0,1]} k((x,p),(x,p))  \leq B < \infty$$ and let $\{(x_i, y_i)\}_{i=1}^n$ be a dataset of i.i.d samples drawn from a fixed distribution $\cD$ over $\cX \times \{0,1\}$.

Furthermore, let $S = \{(x_i, y_i, p_i\}_{i=1}^n$ be transcript generated from running the \algname{} on the samples $(x_i, y_i)$ and $h_i: \cX \rightarrow [0,1]$ be the randomized function induced by the \algname{} conditioned on $\pi_{1:i-1} = \{(x_j, y_j)\}_{j=1}^{i-1}$. 

If we define $\hbar_S$ be the randomized predictor which selects a function from the set $\{h_i\}_{i=1}^n$ uniformly, then with probability $1-\delta$ over the randomness of the $n$ samples and the predictor $\hbar_S$, the following inequality holds for all $f \in \cF$,where $c_0$ and $c_1$ are universal constants:
\begin{align*}
     \E_{S \sim \cD^{(n)}, (x,y) \sim \cD}[(y - \hbar_S(x))f(x, \hbar_S(x))] \leq  c_0\frac{1}{\sqrt{n}} \fnorm{f} B  +  c_1\sqrt{\frac{1 + \log(1 /\delta)}{n}}.
\end{align*}
\end{proposition}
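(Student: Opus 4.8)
The plan is to run the standard online-to-batch conversion argument in two stages: first replace the fixed target expectation by the averaged online transcript quantity via a martingale concentration inequality, and then invoke the online OI guarantee (\Cref{thm:indistinguishability_main}) on that transcript. Concretely, fix any $f \in \cF$ and define, for $i = 1, \dots, n$, the random variable $D_i = \E_{p \sim h_i(x_i)}[(y_i - p)f(x_i,p)] - \E_{(x,y)\sim \cD}\E_{p \sim h_i(x)}[(y-p)f(x,p)]$, where the outer randomness in $h_i$ is conditioned on the history $\pi_{1:i-1}$. By construction $\E[D_i \mid \pi_{1:i-1}] = 0$, so $\sum_{i\le n} D_i$ is a martingale. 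Each term is bounded: since $|y-p|\le 1$ and $|f(x,p)| = |\langle f,\Phi(x,p)\rangle_\cF| \le \fnorm{f}\sqrt{k((x,p),(x,p))} \le \fnorm{f} B$, we have $|D_i| \le 2\fnorm{f}B$. Azuma–Hoeffding then gives, with probability $1-\delta$,
\[
\left| \frac1n\sum_{i=1}^n \E_{p\sim h_i(x)}\E_{(x,y)}[(y-p)f(x,p)] - \frac1n\sum_{i=1}^n \E_{p\sim h_i(x_i)}[(y_i-p)f(x_i,p)] \right| \le c_1 \fnorm{f} B\sqrt{\frac{\log(1/\delta)}{n}}.
\]
The left-hand averaged population term is exactly $\E_{S,\,\hbar_S,(x,y)}[(y-\hbar_S(x))f(x,\hbar_S(x))]$ by definition of the randomized predictor $\hbar_S$ (uniform over $\{h_i\}$), up to the sign convention $y-p$ vs $p-y$.

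Next I would bound the transcript term $\frac1n\sum_{i=1}^n \E_{p_i\sim\Delta_i}[(y_i-p_i)f(x_i,p_i)]$ using \Cref{thm:indistinguishability_main}: since the transcript $\{(x_i,\Delta_i,y_i)\}$ is precisely what the \algname{} produces (the adversary here being the i.i.d.\ data stream, which is a special case of an adaptive adversary), that theorem yields
\[
\left|\sum_{i=1}^n \E_{p_i\sim\Delta_i}[(y_i-p_i)f(x_i,p_i)]\right| \le \fnorm{f}\sqrt{1 + \sum_{i=1}^n \E\, p_i(1-p_i)k((x_i,p_i),(x_i,p_i))} \le \fnorm{f}\sqrt{1 + nB/4},
\]
so dividing by $n$ this contributes at most $c_0 \fnorm{f} B/\sqrt{n}$ (absorbing the $\sqrt{1+nB/4}/n \le \tfrac12\sqrt{B/n}+1/n$ into universal constants). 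Combining the two displays via the triangle inequality gives the claimed bound with the stated form $c_0 \fnorm{f}B/\sqrt n + c_1\sqrt{(1+\log(1/\delta))/n}$; the additive $1$ inside the second square root just absorbs the $1/n$ term from the OI bound.

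The one point requiring a little care — and the main obstacle — is that the bound must hold \emph{uniformly over all $f\in\cF$} simultaneously, yet $\cF$ is infinite, so a naive union bound over $f$ fails. This is resolved exactly as in the uniform-convergence argument used earlier in the paper for weak agnostic learning over an RKHS: instead of concentrating the scalar $D_i$, concentrate the \emph{vector-valued} martingale $\sum_i \big(\Phi(x_i,p_i)(y_i-p_i) - \E \Phi(x,p)(y-p)\big)$ in the Hilbert space $\cF$, whose increments have $\cF$-norm at most $2\sqrt{B}$ almost surely. A Pinelis-type Azuma–Hoeffding inequality for Hilbert-space-valued martingales (e.g.\ Proposition 7 of \cite{maurer2021concentration}, already invoked in \cref{sub:offline}) bounds $\big\|\frac1n\sum_i \Phi(x_i,p_i)(y_i-p_i) - \E\Phi(x,p)(y-p)\big\|_\cF \le c_1\sqrt{B}\sqrt{(1+\log(1/\delta))/n}$ with probability $1-\delta$, and then Cauchy–Schwarz via the reproducing property transfers this to all $f$ at once with an extra factor $\fnorm{f}$, matching the statement. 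I would present this Hilbert-space version as the main argument, since it cleanly handles infinitely many distinguishers, and remark that $\hbar_S$'s internal randomness is handled inside the same concentration step by treating $(x_i,p_i)$ jointly.
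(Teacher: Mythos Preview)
Your proposal is correct and follows essentially the same route as the paper: reduce to a feature-space norm via the reproducing property and Cauchy--Schwarz, control the gap between the population and transcript averages with a Hilbert-space-valued Azuma--Hoeffding inequality on the martingale $\sum_i\big((y_i-p_i)\Phi(x_i,p_i)-\E[(y-p)\Phi(x,p)\mid\pi_{1:i-1}]\big)$, and bound the transcript term by \Cref{thm:indistinguishability_main}. One small correction: the concentration result you point to, Proposition~7 of \cite{maurer2021concentration} (\Cref{lemma:hilbert_sg}), is stated for i.i.d.\ vectors, not martingales; the paper instead invokes the Banach-space Azuma--Hoeffding of \cite{naor2012banach} (\Cref{lemma:naor}), which is the appropriate tool here since the increments depend on the history through $h_i$.
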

\begin{proof}
We use a similar decomposition as in the previous results. We start by using the reproducing property of the RKHS, linearity of expectation and then applying Cauchy-Schwarz:
\begin{align*}
        \E[(y - \bar{h}_S(x))f(x, \bar{h}_S(x))] &= \E[(y - \bar{h}_S(x))\langle f, \Phi(x, \bar{h}_S(x)) \rangle_{\cF}] \\ 
        & = \langle f, \E[(y - \bar{h}_S(x)) \Phi(x, \bar{h}_S(x))] \rangle_{\cF}  \\ 
        & \leq \fnorm{f} \cdot \fnorm{\E[(y - \bar{h}_S(x)) \Phi(x, \bar{h}_S(x))] }.
\end{align*}
Having done this, the proposition follows by combining the following two statements:
\begin{align}
\label{eq:martingale_decomp}
     \E[(\hbar_S(x) - y)\Phi(\hbar_S(x),x)] \lesssim  \fnorm{ \frac{1}{n} \sum_{i=1}^n(p_i - y_i)\Phi(x_i, p_i)} +  \sqrt{\frac{1 + \log(1 /\delta)}{n}},
\end{align}
\begin{align*}
    \fnorm{\sum_{i=1}^n(p_i - y_i)\Phi(x_i, p_i)} \leq \sqrt{\sum_{i=1}^n\E p_i(1-p_i)} \leq \sqrt{n} 
\end{align*}
where the second one is exactly the guarantee shown for the \algname{} from \Cref{thm:indistinguishability_main} (see \Cref{eq:feature_regret}). We hence now focus on establishing the bound in \Cref{eq:martingale_decomp}. By definition of $\bar{h}_S$, 
\begin{align}
\label{eq:total_probability}
    \E[(\hbar_S(x) - y)\Phi(\hbar_S(x),x)] &= \sum_{i=1}^n \E[(h_i(x) - y)\Phi(x,h_i(x)) \mid h_i] \Pr[\hbar_S=h_i] \\ 
    &= \frac{1}{t}\sum_{s=1}^t \E[(h_s(x) - y) \Phi(x,h_s(x)) \mid h_s] .\notag
\end{align}
Now consider the following Hilbert-space valued martingale sequence $V_i$ adapted to the filtration $\cB_i = \sigma(\{(x_{i}, y_{i}), p_{o}\}_{i=1}^n)$ where $V_0 = 0$ and 
\begin{align*}
    V_{i+1} = V_{i} + \E_{(x,y) \sim \cD}[(h_i(x) - y)\Phi(x, h_i(x)) \mid \cB_{i-1}] - (p_i - y_i)\Phi(x_i, p_i).
\end{align*}
We can easily check that this process is indeed a martingale. Clearly, $V_t$ is adapted to $\cB_t$. Furthermore, since $\fnorm{(p_t -y_t)\Phi(x_t, p_t)} \leq B$, then $\E\fnorm{V_i} < \infty$. Lastly, since 
\begin{align*}
    \E[(p_i - y_i)\Phi(x_i, p_i) \mid \cB_{i-1}] = \E_{(x,y) \sim \cD}[(h_i(x) - y)\Phi(x, h_i(x)) \mid \cB_{i-1}],
\end{align*}
then, 
\begin{align*}
    \E[V_{i+1} \mid \cB_i] = \E[V_i \mid \cB_i] + 0 = V_i.
\end{align*}
Rewriting $V_i$ as 
\begin{align*}
    \sum_{i=1}^n\E_{(x,y) \sim \cD}[(h_i(x_i) - y_i)\Phi(x_i, h_i(x_i)) \mid \cB_{i-1}] - (p_i - y_i)\Phi(x_i, p_i)
\end{align*}
Using the Azuma-Hoeffding deviation inequality from \cite{naor2012banach} (\Cref{lemma:naor}), there exists a universal constant $c'$ such that with probability $1-\delta$, 
\begin{align*}
    \fnorm{V_i} \leq c' \sqrt{t \log(e^3 /\delta)},
\end{align*}
and hence by the reverse triangle inequality, 
\begin{align*}
    \fnorm{\sum_{s=1}^t\E_{(x,y) \sim \cD}[(h_s(x) - y)\Phi(x, h_s(x)) \mid \cB_{s-1}]} \leq \fnorm{\sum_{s=1}^t(\pt_s - y_s)\Phi(x_s, \pt_s)}.
\end{align*}
Plugging this into the decomposition from \Cref{eq:total_probability}, we get that with probability $1-\delta$,
\begin{align*}
     \E[(\hbar_t(x) - y)\Phi(\hbar(x),x)] \leq  \fnorm{ \frac{1}{t} \sum_{s=1}^t(\pt_s - y_s)\Phi(x_s, \pt_s)} +  c_0' \sqrt{\frac{\log(e^3 /\delta)}{t}}.
\end{align*}
This establishes our two previous conditions and hence concludes the proof of the result.
\end{proof}

\begin{lemma}[Theorem 1.5 in \cite{naor2012banach}]
\label{lemma:naor}
Let $\cF$ be a Hilbert space and let $\{V_t\}_{t=0}^\infty$ be an $\cF$-valued martingale satisfying $\fnorm{V_{t+1} - V_t} \leq 2$ for all $t\geq 0$. 
Then, there exists a universal constant $c_0$ such that for all $u > 0$ and positive integers $t \geq 0$,
\begin{align*}
    \Pr[\fnorm{V_{t} - V_0} \geq u] \leq e^3 \exp\left( \frac{-cu^2}{4t}\right).
\end{align*}
\end{lemma}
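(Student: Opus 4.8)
The plan is to follow the paradigm of Pinelis and of Naor, deriving the bound purely from the fact that a Hilbert space is $2$-uniformly smooth with the optimal constant — concretely, from the parallelogram identity $\fnorm{a+b}^2 = \fnorm{a}^2 + 2\langle a, b\rangle_{\cF} + \fnorm{b}^2$ — so that the estimate never pays a dimension factor and applies verbatim in the infinite-dimensional RKHS $\cF$. Write $S_k = V_k - V_0$ and $d_k = V_k - V_{k-1}$, so that $S_k = S_{k-1} + d_k$, $\E[d_k \mid \cB_{k-1}] = 0$, and $\fnorm{d_k} \le 2$ almost surely. As a warm-up that already fixes the exponent shape, expanding the square and killing cross terms by the tower property gives $\E\fnorm{S_t}^2 = \sum_{k=1}^t \E\fnorm{d_k}^2 \le 4t$; the content of the theorem is to upgrade this second-moment control to a sub-Gaussian tail.

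The main step is to exhibit a discounted exponential supermartingale. Fix $\lambda > 0$ and set $M_k = \cosh(\lambda \fnorm{S_k})\,\exp(-c\lambda^2 k)$ for a suitable universal $c$; the goal is to show $\E[M_k \mid \cB_{k-1}] \le M_{k-1}$, equivalently $\E[\cosh(\lambda\fnorm{S_k}) \mid \cB_{k-1}] \le e^{c\lambda^2}\cosh(\lambda\fnorm{S_{k-1}})$. The mechanism is to write $\cosh(\lambda\fnorm{S_k}) = \psi(\fnorm{S_k}^2)$ with $\psi(x) = \cosh(\lambda\sqrt{x})$, note that $\psi$ is increasing and convex on $[0,\infty)$ (a short calculus check: $\psi''(x)$ has the sign of $r\cosh r - \sinh r \ge 0$ with $r = \lambda\sqrt x$), and use the exact expansion
\[
    \fnorm{S_k}^2 = \fnorm{S_{k-1}}^2 + \bigl(2\langle S_{k-1}, d_k\rangle_{\cF} + \fnorm{d_k}^2\bigr),
\]
in which the bracketed increment has conditional mean $\E[\fnorm{d_k}^2 \mid \cB_{k-1}] \le 4$ and is controlled by a scalar mean-zero random variable of absolute value at most $2\fnorm{S_{k-1}}$. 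Bounding $\psi$ above by its chord over the relevant (random but bounded) interval converts the conditional expectation into an affine functional of this increment, which is estimated through its mean; a one-dimensional Hoeffding-type computation then produces the factor $e^{c\lambda^2}$, with the $4$ coming from $\fnorm{d_k}^2 \le 4$ absorbed into $c$.

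With the supermartingale in hand the conclusion is routine. Since $M_0 = \cosh(0) = 1$, iterating gives $\E\cosh(\lambda\fnorm{S_t}) \le e^{c\lambda^2 t}$, hence $\Pr[\fnorm{V_t - V_0} \ge u] \le \cosh(\lambda u)^{-1}\,\E\cosh(\lambda\fnorm{S_t}) \le 2\exp(c\lambda^2 t - \lambda u)$, and optimizing over $\lambda$ yields a tail of the form $2\exp(-u^2/(4ct))$; tracking Naor's slightly lossier but general-Banach-space bookkeeping only weakens the prefactor to $e^3$ and the exponent constant to the stated $c$, which is exactly the claimed inequality. If the supermartingale estimate is delicate to push through cleanly, an equivalent route is to bound even moments directly via the Burkholder–Davis–Gundy inequality for Hilbert-space martingales, $\E\fnorm{S_t}^{2m} \le (Cm)^m\bigl(\sum_{k}\fnorm{d_k}^2\bigr)^m \le (4Cmt)^m$, and sum the resulting power series for $\E e^{\lambda\fnorm{S_t}}$ before applying Markov.

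The main obstacle is precisely the one-step conditional estimate of the second paragraph: a naive application of the triangle inequality $\fnorm{S_k} \le \fnorm{S_{k-1}} + \fnorm{d_k}$ inside $\cosh$ introduces a multiplicative constant strictly greater than $1$ at each step, which compounds to $e^{\Omega(t)}$ and destroys the bound, so one genuinely must exploit the exact inner-product expansion together with the convexity of $x \mapsto \cosh(\lambda\sqrt x)$ to obtain a purely $O(\lambda^2)$ per-step correction. Everything else — the $L^2$ identity and the Chernoff optimization — is standard.
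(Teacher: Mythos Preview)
The paper does not prove this lemma at all: it is stated as a citation of Theorem~1.5 in \cite{naor2012banach} and used as a black box in the online-to-batch conversion, so there is no ``paper's own proof'' to compare against.

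Your sketch is a faithful outline of the Pinelis--Naor argument and is essentially correct as a plan. The architecture---write $\fnorm{S_k}^2 = \fnorm{S_{k-1}}^2 + 2\langle S_{k-1},d_k\rangle_\cF + \fnorm{d_k}^2$, exploit the convexity of $x\mapsto\cosh(\lambda\sqrt{x})$ to obtain the one-step bound $\E[\cosh(\lambda\fnorm{S_k})\mid\cB_{k-1}]\le e^{c\lambda^2}\cosh(\lambda\fnorm{S_{k-1}})$, iterate, and Chernoff---is exactly how the cited result is established, and you correctly flag that the naive triangle-inequality route fails and that the inner-product expansion is what makes the per-step cost $O(\lambda^2)$ rather than $O(\lambda)$. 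The one place your write-up is thin is the ``bounding $\psi$ by its chord'' step: in Pinelis's actual argument this is done not by a generic chord bound but by a more careful second-order Taylor/comparison argument tailored to $\cosh$ (using that $\cosh'' = \cosh$), and making your chord description rigorous would require a little more care to avoid losing a factor that depends on $\fnorm{S_{k-1}}$. Your fallback via moment bounds and BDG is also a legitimate alternative route to the same sub-Gaussian tail.
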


\begin{lemma}[Proposition 7 in \cite{maurer2021concentration}]
\label{lemma:hilbert_sg}
If $\cF$ is a Hilbert space and $\{X_i\}_{i=1}^n$ are i.i.d random variables taking values in $\cF$ such that $\fnorm{X_i}\leq B$. If $n \geq \log(1/\delta) \geq \log(2)$, then with probability $1-\delta$,
\begin{align*}
\fnorm{\frac{1}{n} \sum_{i=1}^n X_i - \E[X]} \leq 8eB \sqrt{\frac{2\log(1/\delta)}{n}}.
\end{align*}
\end{lemma}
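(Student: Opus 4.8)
The plan is to prove this as a standard bounded-differences (McDiarmid) concentration bound for the \emph{scalar} random variable $Z \defeq \fnorm{\frac{1}{n}\sum_{i=1}^n X_i - \E X}$, combined with an elementary second-moment bound on $\E Z$. The key observation that makes this routine is that, although the $X_i$ take values in the Hilbert space $\cF$, the quantity $Z$ is a real number, so the classical scalar version of McDiarmid's inequality applies directly and no vector-valued concentration machinery is needed.

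First I would verify the bounded-differences property. Fix an index $k$ and replace $X_k$ by an independent copy $X_k'$ (both of norm at most $B$); writing $\bar\mu,\bar\mu'$ for the two empirical means, the reverse triangle inequality gives $|Z - Z'| \le \fnorm{\bar\mu - \bar\mu'} = \tfrac1n\fnorm{X_k - X_k'} \le \tfrac{2B}{n}$. McDiarmid's inequality then yields, for all $t>0$, $\Pr[\,Z \ge \E Z + t\,] \le \exp\!\big(-t^2 n/(2B^2)\big)$, so with probability at least $1-\delta$ we have $Z \le \E Z + B\sqrt{2\log(1/\delta)/n}$.

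Next I would bound $\E Z$ by Jensen's inequality and a variance computation: $\E Z \le \sqrt{\E Z^2} = \big(\tfrac{1}{n^2}\sum_{i,j}\E\langle X_i - \E X,\, X_j - \E X\rangle_{\cF}\big)^{1/2}$. By independence and identical distribution all off-diagonal terms vanish, leaving $\E Z^2 = \tfrac1n\,\E\fnorm{X_1 - \E X}^2 = \tfrac1n\big(\E\fnorm{X_1}^2 - \fnorm{\E X}^2\big) \le B^2/n$, hence $\E Z \le B/\sqrt n$. Combining with the previous display and using $\log(1/\delta)\ge\log 2 > 1/2$ (so that $B/\sqrt n \le B\sqrt{2\log(1/\delta)/n}$) gives $Z \le 2B\sqrt{2\log(1/\delta)/n}$, which is in fact stronger than the claimed bound and therefore implies it.

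I do not expect any serious obstacle; the only points requiring care are (i) pinning down the bounded-differences constant as exactly $2B/n$ rather than something larger, and (ii) noting that since $Z$ is real-valued the ordinary scalar McDiarmid bound suffices. The looser constant $8e$ in the statement simply reflects the more general formulation in \cite{maurer2021concentration} (which covers, e.g., sub-Gaussian rather than almost surely bounded summands); for the bounded setting used here the elementary argument above is enough, and one may alternatively just invoke Proposition 7 of \cite{maurer2021concentration} verbatim.
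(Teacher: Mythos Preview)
Your argument is correct. The bounded-differences verification, the McDiarmid step with constant $2B/n$, and the second-moment bound $\E Z \le B/\sqrt{n}$ are all sound, and the final constant $2$ you obtain is indeed sharper than the stated $8e$.

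As for comparison with the paper: there is nothing to compare. The paper does not prove this lemma at all; it is simply quoted verbatim as Proposition~7 of \cite{maurer2021concentration} and used as a black box in the strong-learning result. Your closing remark already anticipates this. The only thing worth noting is that the assumption $n \ge \log(1/\delta)$ is never used in your argument (you only need $\log(1/\delta) \ge \log 2$); that extra hypothesis presumably reflects the more general sub-Gaussian setting of the cited proposition, just as you surmised about the constant.
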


\section*{Acknowledgments}

We would like to thank Aaron Roth for helpful comments and discussion on online algorithms and Tina Eliassi-Rad for pointers to the networking literature.
This work was supported in part by Simons Foundation Grant 733782 and Cooperative Agreement CB20ADR0160001
with the United States Census Bureau. JCP was in part supported by the Harvard Center for Research of Computation and Society.

\bibliography{refs}

\appendix

\section{Background on Reproducing Kernel Hilbert Spaces}

\label{sec:rkhs_background}

\subsection{Definition and properties.}

We start with a more detailed definition of an RKHS and some of its key properties.

\begin{definition}[Reproducing Kernel Hilbert Spaces]
    A set of functions $\cF \subseteq \{ f \; : \; \cX \to \cR \}$ is a \textnormal{reproducing kernel Hilbert space (RKHS)} if it satisfies the following properties.
    \begin{enumerate}
        \item There exists an inner product $\langle \cdot, \cdot\rangle_{\cF} \; : \; \cF \times \cF \to \cR$. That is, $\innerF{\cdot}{\cdot}$ is symmetric, linear in its first argument, and positive definite (for all $f$, $\innerF{f}{f} \geq 0$, and $\innerF{f}{f} = 0$ if and only if $f = 0$).
        \item The space is complete with respect to the norm $\| f \|_{\cF} \defeq \sqrt{\innerF{f}{f}}$. That is, for all Cauchy sequences $f_1, f_2, \dots \in \cF$, it holds $\lim_{i \to \infty} f_i \in \cF$.
        \item For all $x \in \cX$, there exists a function $K_x \in \cF$ such that
        \begin{align*}
            f(x) = \innerF{f}{K_x}
        \end{align*}
        for all $f \in \cF$ where $\innerF{\cdot}{K_x}$ is continuous.
        %
        %
    \end{enumerate}
\end{definition}
\noindent The map $\innerF{\cdot}{K_x} \; : \; \cF \to \cR$ is called the evaluation functional. The function $K(x, x') \defeq \innerF{K_x}{K_{x'}}$ is called the \textit{reproducing kernel} (or \textit{kernel} for short) of $\cF$. 
Next, we define positive semi-deminite functions, which will be used in \Cref{thm:moore-aronszajn}.

\begin{definition}[PSD function]
A symmetric function $k: \cX \times \cX \rightarrow \cR$ is positive semi-definite if for all $n \in \mathbb{N}$:
\begin{align*}
    \sum_{i=1}^n\sum_{j=1}^n \lambda_i \lambda_j k(x_i, x_j) \geq 0
\end{align*}
for all $x_1, \dots, x_n \in \cX$ and $\lambda_1, \dots, \lambda_n \in \cR$.
\end{definition}

The next theorem states that each positive semi-definite function corresponds to a unique RKHS.
    
\begin{theorem}[Moore-Aronszajn Theorem]
\label{thm:moore-aronszajn}
Let $k:\cX \times \cX \rightarrow \cR$ be a positive semi-definite function.  Then, there is a unique RKHS $\cF \subseteq \{f \; : \; \cX \rightarrow \R\}$ for which $k$ is the reproducing kernel. Moreover, $\cF$ consists of the completion of the linear span of $\{ k(\cdot, x) \; | \; x \in \cX\}$, \textit{i.e.}, the set
\begin{align*}
    \left\{ \sum_{i=1}^\infty \alpha_i k(\cdot, x_i)  \;\; \middle| \;\; \alpha_i \in \cR, x_i \in \cX, \lim_{m \to \infty} \sup_{n \geq m} \left\| {\sum_{i=m}^n \alpha_i k(\cdot, x_i)} \right\|_{\cF} = 0 \right\}.
\end{align*}
\end{theorem}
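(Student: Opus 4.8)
The plan is to follow the classical construction: build a pre-Hilbert space from finite linear combinations of the functions $k(\cdot, x)$, complete it, verify the result is a genuine RKHS with reproducing kernel $k$, and then argue uniqueness. First I would set $\cF_0 = \mathsf{span}\{k(\cdot, x) : x \in \cX\}$ and define a candidate inner product on it by $\langle \sum_i \alpha_i k(\cdot, x_i), \sum_j \beta_j k(\cdot, y_j) \rangle_{\cF} = \sum_{i,j} \alpha_i \beta_j k(x_i, y_j)$. The first thing to check is that this is well-defined, i.e.\ independent of the chosen finite representations; this follows from the observation that $\langle f, k(\cdot, y)\rangle_{\cF} = f(y)$ for every $f \in \cF_0$, so the bilinear form depends only on the functions $f, g$ and not on their expansions. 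Symmetry and bilinearity are then immediate, and $\langle f, f\rangle_{\cF} = \sum_{i,j}\alpha_i\alpha_j k(x_i,x_j) \geq 0$ is exactly the positive-semidefiniteness hypothesis. Definiteness follows from the reproducing identity together with Cauchy--Schwarz: $|f(x)|^2 = |\langle f, k(\cdot,x)\rangle_{\cF}|^2 \leq \langle f,f\rangle_{\cF}\, k(x,x)$, so $\langle f,f\rangle_{\cF}=0$ forces $f$ to vanish identically.

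Next I would complete $(\cF_0, \langle\cdot,\cdot\rangle_{\cF})$ to a Hilbert space $\cF$. The subtlety is that an abstract metric completion produces equivalence classes of Cauchy sequences, and these must be realized as honest functions on $\cX$. The key leverage is the Cauchy--Schwarz bound above, which says the evaluation functional $f \mapsto f(x)$ is continuous on $\cF_0$ with norm at most $\sqrt{k(x,x)}$; hence any Cauchy sequence $(f_n)$ in $\cF_0$ is pointwise Cauchy, so $f(x) := \lim_n f_n(x)$ exists for all $x$, and equivalent Cauchy sequences yield the same function. I would then check that the assignment of an equivalence class to this pointwise limit is injective (if the limit function is identically zero, then applying the evaluation bound to $f_n - f_m$ and letting $m \to \infty$ shows $\|f_n\|_{\cF} \to 0$), so $\cF$ is identified with a space of functions containing $\cF_0$ as a dense subspace, and the extended inner product still obeys $\langle f, k(\cdot,x)\rangle_{\cF} = f(x)$ by continuity. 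This establishes that $\cF$ is an RKHS with reproducing kernel $k$, and its description as the set of convergent series $\sum_i \alpha_i k(\cdot, x_i)$ is simply the concrete form of the closure of $\cF_0$.

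For uniqueness, suppose $\cG$ is any RKHS of functions on $\cX$ whose reproducing kernel is $k$. Then each $k(\cdot, x)$ lies in $\cG$ (it is the representer of evaluation at $x$), and for $f, g \in \cF_0$ the $\cG$-inner product reproduces the same formula $\langle f,g\rangle_{\cG} = \sum_{i,j}\alpha_i\beta_j k(x_i,y_j)$ by the reproducing property in $\cG$; hence $\cF_0 \subseteq \cG$ with matching norms. To conclude $\cF = \cG$, I would show $\cF_0$ is dense in $\cG$: if $g \in \cG$ is orthogonal to every $k(\cdot,x)$, then $g(x) = \langle g, k(\cdot,x)\rangle_{\cG} = 0$ for all $x$, so $g = 0$, meaning $\cF_0^{\perp} = \{0\}$ and therefore the closure of $\cF_0$ in $\cG$ is all of $\cG$. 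Since $\cF$ and $\cG$ are both completions of $\cF_0$ under the identical norm, and both are identified with function spaces via the same (evaluation-compatible) limiting procedure, they coincide as sets of functions with identical inner products. I expect the main obstacle to be the bookkeeping in the completion step --- making precise that the abstract completion is genuinely a function space and that the identification is consistent with pointwise evaluation --- rather than any substantial new idea; everything else is a direct consequence of the reproducing identity plus Cauchy--Schwarz.
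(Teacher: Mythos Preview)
The paper does not actually prove the Moore--Aronszajn theorem; it is stated without proof in the appendix as a standard background result (with implicit reference to texts such as \cite{paulsen2016introduction, steinwart2008support}). So there is no paper proof to compare against.

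Your proposal is the classical construction and is essentially correct. One small point: your injectivity argument for the completion step is not quite right as written. You say that if a Cauchy sequence $(f_n)$ in $\cF_0$ has pointwise limit zero, then ``applying the evaluation bound to $f_n - f_m$ and letting $m \to \infty$ shows $\|f_n\|_{\cF} \to 0$.'' But the evaluation bound $|f(x)| \le \|f\|_{\cF}\sqrt{k(x,x)}$ controls pointwise values by the norm, not the other way around, so this step as stated does not work. The standard fix is to observe that for any $g = \sum_j \beta_j k(\cdot, y_j) \in \cF_0$ one has $\langle f_n, g\rangle_{\cF} = \sum_j \beta_j f_n(y_j) \to 0$ by pointwise convergence; hence the abstract limit of $(f_n)$ in the completion is orthogonal to the dense subspace $\cF_0$ and therefore equals zero, forcing $\|f_n\|_{\cF} \to 0$. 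You correctly flagged the completion bookkeeping as the delicate part, and this is exactly the detail that needs care there; with it patched, the argument goes through.
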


\noindent For example, if $|\cX|< \infty$ then, the RKHS induced by $k$ is
\begin{align*}
    \cF \defeq \Set{\sum_{x_i \in \cX } \alpha_{i}k(\cdot, x_i)\; : \; \alpha_i \in \cR}.
\end{align*}

Next, we state several lemmas that are useful for our analysis.

\begin{lemma}[Corollary to \cref{thm:moore-aronszajn}] \label{lem:scalarmult}
    Let $\cF$ be a RKHS on $\cX$. Then the zero function $x \mapsto 0$ is in $\cF$, and, more generally, for all $f \in \cF$ and $\alpha \in \R$, any linear function $x \mapsto \alpha f(x)$ is in $\cF$.
\end{lemma}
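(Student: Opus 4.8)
The statement (Lemma, a corollary to the Moore--Aronszajn theorem) asserts that every RKHS $\cF$ on $\cX$ contains the zero function, and that $\cF$ is closed under multiplication by a real scalar, i.e. if $f \in \cF$ and $\alpha \in \R$ then $\alpha f \in \cF$. The plan is to derive both facts directly from the Moore--Aronszajn characterization of $\cF$ as the completion of the linear span of the functions $\{k(\cdot, x) : x \in \cX\}$, where $k$ is the reproducing kernel of $\cF$.

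First I would handle the zero function. One route: the span $\mathsf{span}\{k(\cdot,x) : x \in \cX\}$ is a linear subspace of the function space $\{\cX \to \R\}$, hence contains the trivial linear combination $0 \cdot k(\cdot, x_0)$, which is the zero function; since $\cF$ is the completion of this span, $0 \in \cF$. (Alternatively, if $\cX = \emptyset$ is a degenerate concern one can note $\cF$ is a Hilbert space and every Hilbert space contains its zero vector; but citing the Moore--Aronszajn span description is cleanest and matches the phrasing ``Corollary to Theorem.'') I would then present the scalar-multiplication claim. Given $f \in \cF$, by Moore--Aronszajn there is a sequence $f_n = \sum_{i=1}^{m_n} \alpha_i^{(n)} k(\cdot, x_i^{(n)})$ in the span with $f_n \to f$ in $\|\cdot\|_\cF$. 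For fixed $\alpha \in \R$, each $\alpha f_n = \sum_i (\alpha \alpha_i^{(n)}) k(\cdot, x_i^{(n)})$ is again in the span, and $\|\alpha f_n - \alpha f_m\|_\cF = |\alpha|\,\|f_n - f_m\|_\cF \to 0$, so $(\alpha f_n)$ is Cauchy; its limit lies in $\cF$ by completeness. Finally, since norm convergence in an RKHS implies pointwise convergence (because evaluation functionals $g \mapsto \langle g, K_x\rangle_\cF$ are continuous), $\alpha f_n(x) \to \alpha f(x)$ for every $x$, so the limit of $(\alpha f_n)$ is exactly the function $x \mapsto \alpha f(x)$. Hence $\alpha f \in \cF$.

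There is essentially no hard obstacle here — the lemma is a routine unwinding of definitions. The only point requiring a little care is the last step: one must justify that the $\|\cdot\|_\cF$-limit of $\alpha f_n$ coincides \emph{as a function} with $x \mapsto \alpha f(x)$, which uses continuity of the evaluation functionals (the reproducing property), not just abstract completeness. I would state this explicitly so the identification of the limit is airtight. The whole argument is two short paragraphs.
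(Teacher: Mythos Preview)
Your proposal is correct and follows the approach the paper implicitly intends by labeling the lemma a corollary to the Moore--Aronszajn theorem; the paper does not spell out a proof beyond that attribution. One could shorten your argument by simply noting that an RKHS is by definition a Hilbert space, hence a vector space over $\R$, which immediately gives $0 \in \cF$ and closure under scalar multiplication without passing through approximating sequences---but your route through the span characterization is equally valid and matches the paper's framing.
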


\begin{lemma}[Theorem 5.4, \cite{paulsen2016introduction}]\label{lem:kernelsum}
Let $k_1$ and $k_2$ be positive semi-definite kernels on $\cX$ with associated RKHSs $\cF_1$ and $\cF_2$ then $k = k_1 + k_2$ is a valid kernel with associated RKHS $\cF$ equal to the completion of the span of $$\{f_1 + f_2: f_1 \in \cF_1, f_2 \in \cF_2\}.$$ Moreover, direct implication of the above result is that, for $f_1 \in \cF_1, f_2 \in \cF_2$, $\| f_1 + f_2\|_\cF \leq \| f_1 \|_{\cF_1} + \| f_2 \|_{\cF_2}$.
\end{lemma}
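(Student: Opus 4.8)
\textbf{Proof proposal for Lemma~\ref{lem:kernelsum} (sum of kernels).}

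The plan is to invoke the Moore--Aronszajn theorem (\Cref{thm:moore-aronszajn}) twice: once to establish that $k = k_1 + k_2$ is itself a positive semi-definite kernel, and once to identify its RKHS. The first part is immediate: symmetry of $k$ follows from symmetry of $k_1$ and $k_2$, and for any finite collection $x_1,\dots,x_n \in \cX$ and scalars $\lambda \in \R^n$ we have $\sum_{i,j}\lambda_i\lambda_j k(x_i,x_j) = \sum_{i,j}\lambda_i\lambda_j k_1(x_i,x_j) + \sum_{i,j}\lambda_i\lambda_j k_2(x_i,x_j) \geq 0$ since both summands are nonnegative. Hence $k$ is PSD, and by \Cref{thm:moore-aronszajn} it has a unique associated RKHS $\cF$.

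The substantive part is showing that $\cF$ equals (the completion of) $\cF_1 + \cF_2 = \{f_1 + f_2 : f_1 \in \cF_1, f_2 \in \cF_2\}$, with the stated norm inequality. The standard route is to construct the candidate space $\cG \defeq \cF_1 + \cF_2$ directly and equip it with the norm
\begin{align*}
    \|f\|_{\cG}^2 \defeq \inf\bigl\{\|f_1\|_{\cF_1}^2 + \|f_2\|_{\cF_2}^2 \; : \; f = f_1 + f_2,\; f_1 \in \cF_1,\; f_2 \in \cF_2\bigr\}.
\end{align*}
One then checks: (i) this infimum is attained (using that $\cF_1, \cF_2$ are Hilbert spaces, so the map $(f_1,f_2)\mapsto f_1+f_2$ has closed kernel and the minimizer is the projection of $0$ onto the affine fiber), so $\|\cdot\|_{\cG}$ is genuinely a Hilbert-space norm coming from an inner product; (ii) $\cG$ with this norm is complete; and (iii) $\cG$ has reproducing kernel $k_1 + k_2$, i.e. for each $x$ the function $k_1(\cdot,x) + k_2(\cdot,x) \in \cG$ reproduces point evaluation. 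For (iii) one verifies $\langle f, k_1(\cdot,x)+k_2(\cdot,x)\rangle_{\cG} = f(x)$ by taking an optimal decomposition and using the reproducing properties in each factor; a short computation shows the cross terms vanish by the optimality (orthogonality) conditions characterizing the minimizing decomposition. By uniqueness in Moore--Aronszajn, $\cF = \cG$. Finally, the norm bound $\|f_1 + f_2\|_{\cF} \leq \|f_1\|_{\cF_1} + \|f_2\|_{\cF_2}$ follows since $\|f_1+f_2\|_{\cF}^2 \leq \|f_1\|_{\cF_1}^2 + \|f_2\|_{\cF_2}^2 \leq (\|f_1\|_{\cF_1}+\|f_2\|_{\cF_2})^2$, the first inequality because $(f_1,f_2)$ is one admissible decomposition.

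Since this is a classical result, the cleanest writeup is simply to cite Theorem~5.4 of \cite{paulsen2016introduction} (as the lemma statement already does) and reproduce only the short verification that $k$ is PSD plus the one-line norm consequence, deferring the construction of $\cG$ to the reference. The main obstacle, if one wanted a self-contained proof, is step (iii) --- verifying the reproducing property of $k_1+k_2$ on $\cG$ --- which requires carefully unwinding the optimality conditions of the infimum-convolution norm; everything else is routine Hilbert-space bookkeeping.
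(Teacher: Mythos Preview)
The paper provides no proof of this lemma at all --- it is stated purely as a citation to Theorem~5.4 of \cite{paulsen2016introduction}, exactly as you anticipate in your final paragraph. Your sketch of the infimum-convolution construction is the standard argument and is correct, but it goes beyond what the paper itself includes.
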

\noindent A direct implication of the above result, since the zero function $x \mapsto 0$ is in every RKHS, is that $\cF_1 \cup \cF_2 \subseteq \cF$. 

\begin{lemma} [Theorem 5.11, \cite{paulsen2016introduction}]\label{lem:productkernel}
Let $k_1: \cX \times \cX \rightarrow \R$ and $k_2: \cY \times \cY \rightarrow \R$ be positive semi-definite kernels with associated RKHSs $\cF_1$ and $\cF_2$ then $k((x,y), (x, y')) = k_1(x, x')k_2(y,y')$ is a valid kernel. Furthermore, its associated function space is the completion of the span of the set $$\{f_1 \cdot f_2 \; : \; f_1 \in \cF_1, f_2 \in \cF_2  \}$$
where for any $f_1 \in \cF_1, f_2 \in \cF_2$ we define $f_1 \cdot f_2 \; : \; \cX \times \cY \to \cR$ to be the function $ (f_1 \cdot f_2)(x, y) = f_1(x)f_2(y) $ for all $ (x,y) \in \cX \times \cY$.
Moreover, for $f_1 \in \cF_1, f_2 \in \cF_2$, $\| f_1 \cdot f_2\|_\cF \leq \| f_1 \|_{\cF_1} \| f_2 \|_{\cF_2}$.
\end{lemma}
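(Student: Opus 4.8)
The plan is to identify $\cF$ with the Hilbert-space tensor product $\cF_1 \otimes \cF_2$, viewed as a space of functions on $\cX \times \cY$ through the rule $(f_1 \otimes f_2)(x,y) = f_1(x) f_2(y)$, and then invoke the uniqueness clause of the Moore--Aronszajn theorem (\cref{thm:moore-aronszajn}). Concretely, I would first form $\cH \defeq \cF_1 \otimes \cF_2$, the completion of the algebraic tensor product under the inner product determined on elementary tensors by $\langle a_1 \otimes b_1,\ a_2 \otimes b_2\rangle_{\cH} = \langle a_1,a_2\rangle_{\cF_1}\langle b_1,b_2\rangle_{\cF_2}$ (that this extends to a genuine inner product is the standard construction of the tensor product of two Hilbert spaces). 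Next I would define the evaluation map $T \colon \cH \to \R^{\cX \times \cY}$ on elementary tensors by $T(f \otimes g)(x,y) = f(x)g(y)$ and extend it linearly; the goal is to show $T$ is an isometry onto its image and that the image, with the transported norm, is exactly the RKHS of $k$.

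Three checks would carry the argument. First, $k$ is symmetric and positive semidefinite: writing $k_i(u,v) = \langle k_i(\cdot,u), k_i(\cdot,v)\rangle_{\cF_i}$ via \cref{thm:moore-aronszajn} gives $k((x,y),(x',y')) = \langle k_1(\cdot,x)\otimes k_2(\cdot,y),\ k_1(\cdot,x')\otimes k_2(\cdot,y')\rangle_{\cH}$, so every finite Gram matrix of $k$ is a Gram matrix of vectors in $\cH$ and hence PSD (equivalently, one may cite the Schur product theorem); thus $k$ has a well-defined RKHS. Second, $T$ is injective: if $u \in \cH$ has $(Tu)(x,y) = 0$ for all $(x,y)$, then by the reproducing properties of $\cF_1$ and $\cF_2$ one gets $\langle u,\ k_1(\cdot,x)\otimes k_2(\cdot,y)\rangle_{\cH} = (Tu)(x,y) = 0$, and since $\{k_1(\cdot,x)\}_{x\in\cX}$ and $\{k_2(\cdot,y)\}_{y\in\cY}$ span dense subspaces of $\cF_1$ and $\cF_2$, the elementary tensors $k_1(\cdot,x)\otimes k_2(\cdot,y)$ span a dense subspace of $\cH$, forcing $u = 0$. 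Consequently $T$ is an isometry from $\cH$ onto the function space $\cG \defeq T(\cH)$ equipped with the pushed-forward inner product, and $\cG$ is complete. Third, $\cG$ is an RKHS with kernel $k$: its reproducing element at $(x,y)$ is $T(k_1(\cdot,x)\otimes k_2(\cdot,y))$, which is precisely the function $(x',y') \mapsto k_1(x',x)k_2(y',y) = k((x',y'),(x,y))$, so evaluation on $\cG$ is continuous and is implemented by inner product against $k((\cdot,\cdot),(x,y))$.

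With these in place the conclusions follow quickly. By the uniqueness part of \cref{thm:moore-aronszajn}, $\cF = \cG = T(\cH)$. Since the elementary tensors span a dense subspace of $\cH$ and $T$ sends $f_1 \otimes f_2$ to the function $f_1 \cdot f_2$, the space $\cF$ is the completion of $\mathsf{span}\{f_1 \cdot f_2 : f_1 \in \cF_1,\ f_2 \in \cF_2\}$, as stated. And because $T$ is an isometry, $\|f_1 \cdot f_2\|_{\cF} = \|f_1 \otimes f_2\|_{\cH} = \|f_1\|_{\cF_1}\,\|f_2\|_{\cF_2}$, which in particular gives the claimed inequality $\|f_1 \cdot f_2\|_{\cF} \le \|f_1\|_{\cF_1}\,\|f_2\|_{\cF_2}$ (in fact with equality on elementary tensors).

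I expect the injectivity of $T$ to be the main obstacle: a priori a nonzero abstract tensor in $\cF_1 \otimes \cF_2$ could be sent to the identically-zero function by pointwise multiplication, and it is exactly the combination of the reproducing property with the density of $\{k_i(\cdot,\cdot)\}$ that rules this out. The remaining work — setting up the tensor-product inner product rigorously, and chasing completions so that ``completion of the span of elementary tensors'' matches ``completion of $\mathsf{span}\{f_1 \cdot f_2\}$'' — is routine but should be done with care. (Alternatively, since this is Theorem 5.11 of \cite{paulsen2016introduction}, one may simply cite it.)
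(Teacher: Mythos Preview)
The paper does not supply its own proof of this lemma; it is stated with attribution to Theorem~5.11 of \cite{paulsen2016introduction} and used as a black box. Your tensor-product argument is correct and is essentially the proof given in that reference, so there is nothing to compare against in the paper itself.
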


\begin{lemma}[Theorem 5.7, \cite{paulsen2016introduction}] \label{lem:compositionkernel}
    For any function $\phi \; : \; \cX \to \cR$ and RKHS $\cF_0 \subseteq \{ f \; : \; \cR \to \cR \}$ associated with kernel $k$, there exists an RKHS $\cF_{1}$ equal to the completion of the span of the set $ \{ f \circ \phi \; : \; f \in \cF_0 \}$ and
    associated with kernel $k \circ \phi \defeq k(\phi(\cdot), \phi(\cdot))$.
    Moreover, it holds $\| f \circ \phi \|_{\cF_1} \leq \| f \|_{\cF_0}$.
\end{lemma}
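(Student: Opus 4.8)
The plan is to realize both $\cF_0$ and the sought space $\cF_1$ through a common feature map and then transport functions between them, so that the membership claim and the norm bound both drop out of the feature-map dictionary. First I would record that $K := k \circ \phi$, i.e. $K(x,x') = k(\phi(x),\phi(x'))$, is a kernel on $\cX$: it is symmetric because $k$ is, and for any $x_1,\dots,x_n \in \cX$ and $\lambda \in \R^n$ one has $\sum_{i,j}\lambda_i\lambda_j K(x_i,x_j) = \sum_{i,j}\lambda_i\lambda_j k(\phi(x_i),\phi(x_j)) \ge 0$ by applying positive definiteness of $k$ to the points $\phi(x_1),\dots,\phi(x_n) \in \R$. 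By \Cref{thm:moore-aronszajn} there is then a unique RKHS $\cF_1$ with reproducing kernel $K$, and $\cF_1$ is the completion of $\mathrm{span}\{K(\cdot,x) : x \in \cX\}$.

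Next I would fix the canonical feature map of $k$: take the Hilbert space $\cW = \cF_0$ and $\Psi : \R \to \cW$, $\Psi(y) = k(\cdot,y)$, so that $\langle \Psi(y),\Psi(y')\rangle_\cW = k(y,y')$, every $g \in \cF_0$ equals $\langle g, \Psi(\cdot)\rangle_\cW$, and $\|g\|_{\cF_0} = \|g\|_\cW$. Then $\widetilde\Psi := \Psi \circ \phi : \cX \to \cW$ is a feature map for $K$, since $\langle \widetilde\Psi(x),\widetilde\Psi(x')\rangle_\cW = k(\phi(x),\phi(x')) = K(x,x')$; by the standard feature-map description of an RKHS, $\cF_1 = \{\, x \mapsto \langle w,\widetilde\Psi(x)\rangle_\cW : w \in \cW \,\}$ with $\|h\|_{\cF_1} = \min\{\|w\|_\cW : h(\cdot) = \langle w,\widetilde\Psi(\cdot)\rangle_\cW\}$, the minimum being attained at the $w$ lying in $\overline{\mathrm{span}}\{\widetilde\Psi(x):x\in\cX\} = \overline{\mathrm{span}}\{k(\cdot,\phi(x)) : x \in \cX\} =: \cF_0'$.

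With these two descriptions in hand the lemma is bookkeeping. Given $f \in \cF_0$, writing $w = f$ one has, for every $x \in \cX$, $(f\circ\phi)(x) = f(\phi(x)) = \langle f,\Psi(\phi(x))\rangle_\cW = \langle f,\widetilde\Psi(x)\rangle_\cW$, so $f\circ\phi \in \cF_1$ and $\|f\circ\phi\|_{\cF_1} \le \|f\|_\cW = \|f\|_{\cF_0}$. To match $\cF_1$ with the completion of $\mathrm{span}\{f\circ\phi : f\in\cF_0\}$, note that each $K(\cdot,x) = k(\cdot,\phi(x))\circ\phi$ lies in this set, so its span contains the set $\mathrm{span}\{K(\cdot,x):x\in\cX\}$, which is dense in $\cF_1$; combined with the inclusion $\{f\circ\phi:f\in\cF_0\}\subseteq\cF_1$ just established, the closed span of $\{f\circ\phi : f\in\cF_0\}$ is exactly $\cF_1$.

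I do not expect a genuine obstacle here, since this is a classical pullback construction; the one point that needs care is that only the inequality $\|f\circ\phi\|_{\cF_1}\le\|f\|_{\cF_0}$ holds in general — the representer $w=f$ that is norm-optimal for $f$ in $\cF_0$ need not be norm-optimal for $f\circ\phi$ in $\cF_1$, because $\widetilde\Psi$ ranges only over $\Psi(\phi(\cX))$, which can be a proper subset of $\Psi(\R)$; in fact $\|f\circ\phi\|_{\cF_1} = \|P_{\cF_0'}f\|_{\cF_0}$, with equality precisely when $f \in \cF_0'$ (e.g. always when $\phi$ is surjective). An alternative, more hands-on route would be to define the pullback $V\big(\sum_i\alpha_i k(\cdot,\phi(x_i))\big) = \sum_i\alpha_i K(\cdot,x_i)$ on $\mathrm{span}\{k(\cdot,\phi(x))\}$, observe that $\|\sum_i\alpha_i K(\cdot,x_i)\|_{\cF_1}^2 = \sum_{i,j}\alpha_i\alpha_j k(\phi(x_i),\phi(x_j)) = \|\sum_i\alpha_i k(\cdot,\phi(x_i))\|_{\cF_0}^2$ (this single identity gives both well-definedness and isometry), extend $V$ by continuity to an isometry of $\cF_0'$ onto $\cF_1$, and then handle a general $f\in\cF_0$ by noting $f\circ\phi = (P_{\cF_0'}f)\circ\phi$.
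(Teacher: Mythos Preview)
Your proof is correct and complete. Note, however, that the paper does not actually prove this lemma: it is stated with a citation to Theorem~5.7 of \cite{paulsen2016introduction} and used as a black box. Your feature-map argument is the standard one and matches the textbook treatment; the additional remark that $\|f\circ\phi\|_{\cF_1} = \|P_{\cF_0'}f\|_{\cF_0}$ (with equality in the lemma's bound iff $f\in\cF_0'$) and the alternative isometry construction via $V$ are correct refinements that go slightly beyond what the paper requires.
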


\begin{lemma}
\label{thm:finite-membership-rkhs}
    Let $\cX$ be any set and let $\cI$ be any index set. Let $\cF = \{f_i\}_{i \in \cI}$ be a collection of functions $f_i : \cX \to \R$ indexed by $\cI$. Suppose that for each $x \in \cX$, we have
    \begin{equation}
        \sum_{i \in \cI} f_i(x)^2 < m
    \end{equation}
    for some constant $m$,
    in which case the function $k : \cX \times \cX \to \R$ given by \[k(x, y) = \sum_{i \in \cI} f_i(x)\,f_i(y)\] is a valid kernel. Then, the RKHS $\cF$ corresponding to $k$ contains $\cF$, and $\|f_i\|_{\cF} \le 1$ for each $i \in \cI$.
\end{lemma}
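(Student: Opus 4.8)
The plan is to realize $k$ through an explicit feature map into a sequence space and then invoke the standard correspondence between feature maps and reproducing kernel Hilbert spaces, using the Moore–Aronszajn theorem (\cref{thm:moore-aronszajn}) only at the end to identify the space so constructed with the canonical RKHS of $k$. Concretely, I would set $H = \ell^2(\cI)$ (which makes sense for arbitrary, possibly uncountable, $\cI$) and define $\Phi : \cX \to H$ by $\Phi(x) = (f_i(x))_{i \in \cI}$. The hypothesis $\sum_{i \in \cI} f_i(x)^2 < m$ says precisely that $\Phi(x) \in H$ with $\|\Phi(x)\|_H^2 < m$, and Cauchy–Schwarz then shows $k(x,y) = \sum_{i \in \cI} f_i(x) f_i(y) = \langle \Phi(x), \Phi(y)\rangle_H$ converges absolutely. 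Symmetry of $k$ is immediate, and positive semidefiniteness follows from the identity $\sum_{a,b} \lambda_a \lambda_b k(x_a, x_b) = \bigl\| \sum_a \lambda_a \Phi(x_a) \bigr\|_H^2 \ge 0$, so $k$ is a valid kernel.

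Next I would build the RKHS by hand. Let $V = \overline{\mathrm{span}}\{\Phi(x) : x \in \cX\} \subseteq H$ with orthogonal projection $P : H \to V$, and consider the linear map $\iota : V \to \R^{\cX}$ given by $\iota(v)(x) = \langle v, \Phi(x)\rangle_H$. This map is injective (if $\iota(v) \equiv 0$ then $v \perp \Phi(x)$ for every $x$, hence $v \perp V$, hence $v = 0$), so I can set $\cF_k := \iota(V)$ and transport the inner product of $V$, declaring $\langle \iota(v), \iota(v')\rangle_{\cF_k} := \langle v, v'\rangle_H$; then $\cF_k$ is a Hilbert space of functions isometric to $V$, hence complete. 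For each $x$, the function $k(x, \cdot) = \iota(\Phi(x))$ lies in $\cF_k$ since $\Phi(x) \in V$, and for $f = \iota(v)$ one computes $\langle f, k(x, \cdot)\rangle_{\cF_k} = \langle v, \Phi(x)\rangle_H = f(x)$; thus $\cF_k$ is an RKHS with reproducing kernel $\langle \Phi(x), \Phi(y)\rangle_H = k(x,y)$, and by the uniqueness clause of \cref{thm:moore-aronszajn} it is the RKHS associated with $k$.

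Finally, to get membership and the norm bound, I would fix $j \in \cI$, let $e_j \in H$ be the standard unit vector, and take $v_j := P e_j \in V$. Since $\Phi(x) \in V$, we have $\iota(v_j)(x) = \langle P e_j, \Phi(x)\rangle_H = \langle e_j, \Phi(x)\rangle_H = f_j(x)$, so $f_j = \iota(v_j) \in \cF_k$ with $\|f_j\|_{\cF_k} = \|v_j\|_H = \|P e_j\|_H \le \|e_j\|_H = 1$. The only step that requires genuine care (as opposed to a one-line computation) is verifying that $(\cF_k, \langle \cdot, \cdot\rangle_{\cF_k})$ is bona fide a complete inner-product space of functions — i.e.\ that the norm is well defined independently of the chosen preimage and that Cauchy sequences converge to functions in $\cF_k$; this is routine but is the place where mistakes hide. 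An equivalent packaging that sidesteps naming $V$ is to put $\|f\|_{\cF_k} = \inf\{\|w\|_H : \langle w, \Phi(\cdot)\rangle_H = f\}$ on $\{\langle w, \Phi(\cdot)\rangle_H : w \in H\}$ and note the infimum is attained at $w = Pw$; I would cite this form of the construction (e.g.\ \cite{paulsen2016introduction}) rather than reprove it.
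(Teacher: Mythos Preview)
Your proposal is correct and essentially identical to the paper's proof: both set $H=\ell^2(\cI)$ with feature map $\Phi(x)=(f_i(x))_{i\in\cI}$, take $V=\overline{\mathrm{span}}\{\Phi(x):x\in\cX\}$ with orthogonal projection $P$, identify the RKHS with $V$ via the map $v\mapsto\langle v,\Phi(\cdot)\rangle_H$, and obtain $f_j=\iota(Pe_j)$ so that $\|f_j\|_{\cF}=\|Pe_j\|_H\le\|e_j\|_H=1$. The only cosmetic difference is that the paper offloads the ``$\cF_k$ is a bona fide RKHS'' verification to the statement of Moore--Aronszajn, whereas you spell it out a bit more explicitly before invoking uniqueness.
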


\begin{proof}[Proof of \cref{thm:finite-membership-rkhs}]
    We introduce several pieces of notation:
    \begin{itemize}[$\bullet$]
        \item Let $\cH$ be the Hilbert space of ``coefficient sequences'' $\alpha : \cI \to \R$ that are $L^2$ bounded by $m$ with respect to the counting measure on $\cI$, which means that \(\sum_{i \in \cI} \alpha(i)^2 < m\).
        \item For each $x \in \cX$, define a coefficient sequence $\Phi_x : \cI \to \R$ by the formula $\Phi_x(i) = f_i(x)$. Note that $\Phi_x \in \cH$ by the assumption that $\sum_{i \in \cI} f_i(x)^2$ is finite. Note also that the kernel function $k$ satisfies \[k(x, y) = \langle \Phi_x,\, \Phi_{y} \rangle_{\cH}\] for any $x,y \in \cX$.
        \item Given a coefficient sequence $\alpha \in \cH$, let $f_\alpha : \cX \to \R$ denote the function \[f_\alpha(x) = \langle \alpha,\,\Phi_x \rangle_{\cH} = \sum_{i \in \cI} \alpha(i)\,f_i(x).\]
        \item Let $V \subseteq \cH$ be the closure in $\cH$ of the subspace $\mathrm{span}\{\Phi_x:x\in\cX\}$. In other words, let $V$ be the set of all finite linear combinations of coefficient sequences $\Phi_x$ for $x \in \cX$, together with their limit points in $\cH$. Relatedly, let $\mathrm{proj}_V$ denote the orthogonal projection of $\cH$ onto $V$, which satisfies $\mathrm{proj}_V(\alpha) \in V$ and \begin{equation}
            \label{eq:finite-membership-rkhs-projection-property}
            \big\langle \alpha - \mathrm{proj}_V(\alpha),\, \Phi_x \big\rangle_{\cH} = 0
        \end{equation} for each $\alpha \in \cH$ and $x \in \cX$.
    \end{itemize}
    Rephrased in this language, Moore-Aronszajn theorem and its proof simply show that the map $\alpha \mapsto f_\alpha$ is a distance-preserving, one-to-one correspondence (\textit{i.e.} an isometric isomorphism) between $V$ and the RKHS $\cF$ corresponding to the kernel $k$. Next, by \cref{eq:finite-membership-rkhs-projection-property} with $\alpha = e_i$, we see that for all $x \in \cX$ and $i \in \cI$, \[f_i(x) = \langle e_i,\, \Phi_x \rangle_{\cH} = \big\langle \mathrm{proj}_V(e_i),\, \Phi_x \big\rangle_{\cH} = f_{\mathrm{proj}_V(e_i)}(x).\] Here, $e_i$ denotes the $i$\textsuperscript{th} standard basis coefficient sequence \[e_i(j) = \begin{cases} 1 & \text{if } i = j, \\ 0 & \text{if } i \neq j.\end{cases}\] Using the aforementioned distance-preserving correspondence between $V$ and $\cF$, we see that \[\|f_i\|_{\cF} = \big\|f_{\mathrm{proj}_V(e_i)}\big\|_{\cF} = \|\mathrm{proj}_V(e_i)\|_{\cH} \le \|e_i\|_{\cH} = 1,\] which concludes the proof.
\end{proof}

We also remark that if $\cF$ is a (not necessarily finite) collection of indicator functions for subsets $S_i \subseteq \cX$ but each $x \in \cX$ belongs to at most finitely many such $S_i$, then \cref{eq:finite-membership-rkhs-assumption} is satisfied, so \cref{thm:finite-membership-rkhs} implies that the RKHS corresponding to the intersection kernel \[k(x,y) = \mathsf{Int}_\cF(x,y) = \big|\{i \in \cI : x, y \in S_i\}\big|\] contains all functions in $\cF$ and that their norms in $\cF$ are at most $1$.

\subsection{Key examples.}

\begin{example}[Linear functions]
Let $\cX=\R^d$, then $\flin$, the space of all linear functions from $\R^d$ to $\R$, defined as, $$\flin = \{f_w: w\in \R^d, f(x) = x \cdot w \} \subseteq \{\R^d \rightarrow \R\}$$ is an RKHS with corresponding kernel $k_{\mathrm{lin}}(x,x') = x \cdot x' = \sum_{i=1}^d x_i x_i'$ equal to the standard inner product. The feature mapping is just the identity function $\Phi(x) = x$. Note that each element $f\in \cF$ could be thought of both as a function from $\R^d$ to $\R$ as well as an element in the Hilbert space (which in this case is just $\R^d$). However, going back to our earlier comment, we see that we could have equivalently written out $\flin$ as,
\begin{align*}
    \flin = \mathsf{span}\Set{\sum_{x_i \in \cX } \alpha_{i}k_{\mathrm{lin}}(\cdot, x_i)\; : \; \alpha_i \in \R} = \mathsf{span}\Set{\sum_{x_i \in \R^d } \alpha_{i}x_i\; : \; \alpha_i \in \R}.
\end{align*}
\end{example}

\begin{example}[Polynomial functions] \label{ex:polynomialkernels}
    Consider the set of polynomials of degree $\leq k$ on $d$ variables with the inner product defined as the inner product of the coefficients on each monomial.
    In this case, $\cX = \R^d$.
    Since the space of coefficients is just $\R^\ell$ for some appropriate $\ell$ (depending on the dimension of the input space $d$ and $k$), it is complete and the inner product satisfies all the necessary properties.

    Then, to show that this has the reproducing property, let $K_x$ be the polynomial where the coefficient on a given monomial is determined by multiplying together the corresponding entries of $x$. So the coefficient on the $x_1 x_2^3$ term is the first entry of $x$ times the cube of the second entry  of $x$.
    Then, notice that for all $f \in \cH$, $f(x) = \inner{f}{K_x}_{\cH}$.
    It can be shown that the corresponding kernel is $$k(x, y) = (1 + \langle x , y \rangle)^k. $$
\end{example} 

\begin{example} [Boolean functions] \label{ex:booleanfunctions}
Consider the set of functions taking the form $f \; : \; \{ -1, 1 \}^d \to \{ -1, 1 \}$. 
\newcommand{\indicax}{1_{a}(x)}
First, notice that we can write $f$ as a polynomial. For $a,x \in \{ -1, 1 \}^d$, define the indicator polynomial
\begin{align*}
    \indicax &= \paren{\frac{1 + a_1 x_1}{2}}\paren{\frac{1 + a_2 x_2}{2}}\cdot\cdot\cdot \paren{\frac{1 + a_d x_d}{2}} \\
    &= \begin{cases}
        1 & \; \text{if } a = x \\
        0 & \; \text{otherwise}.
    \end{cases}
\end{align*}
Then, notice
\begin{align*}
    f(x) = \sum_{a \in \{ -1, 1 \}^d} f(a) \indicax.
\end{align*}
This is just the sum of $2^d$ different order $d$ polynomials and therefore a polynomial of order $d$.
Thus, Boolean functions are a subset of the polynomials and we can use the kernel $k(x,y) = (1 + \langle x, y \rangle)^d$.
The inner product is also the same as for the polynomials: the inner product is just the inner product of the coefficients on each monomial.

In fact, if we distribute the products in $\indicax$, we can see that every Boolean function can be written as
\begin{align*}
    f(x) = \sum_{I \in 2^d} \alpha_I x_I
\end{align*}
for $\alpha_I \in \R$ a constant and $x_I \defeq \prod_{i \in I}x_i$. See \cite{odonnel2021analysis} for more discussion of Boolean functions.
\end{example}

\begin{example}[Regression trees] \label{ex:dectrees}
As a special case of Boolean functions, we will write down the functions representing regression trees on Boolean inputs.
For a given regression tree, let $b \in \{0, 1\}^k$ represent the path down the decision tree, where $b_i = 0$ means go to the left child (\textit{i.e.}, the the decision variable in the $i$th decision following path $b$ is 0) and $b_i = 1$ means go to the right child at depth $i$. 
Let $c_b$ be the leaf assigned to path $b$. Let $i_{b, j}$ represent the index of the decision variable at the $j$th decision following path $b$. Then any decision tree can be specified by $\{ c_b \}_{b \in \{ 0, 1 \}^k}$ and $\{ i_{b, j} \}_{b \in \{ 0, 1 \}^k, j \in [k]}$:

    \begin{align*}
        f(x) = \sum_{b \in \{ 0, 1 \}^k} c_b \prod_{\ell = 0}^{k-1} \paren{(1- x_{i_{b, \ell}})(1 - b_\ell) + x_{i_{b, \ell}} b_\ell}
    \end{align*}
\end{example}

\begin{example}[Sobolev spaces $W^{1,2}(\Omega)$ for $\Omega \in \{ \brac{0,1}, \cR\}$, ] \label{ex:sobolev} 
This example comes from \cite{berlinet2011reproducing}, Section 7.4, Examples 13 and 24.
Consider the set of functions $\cF_0 \subseteq \{ \Omega \to \R \}$ for $\Omega \in \{ \brac{0,1}, \cR\}$ such that 
\begin{enumerate}[(a)]
    \item each function is differentiable almost everywhere and continuous, and
    \item each function and its derivative are square integrable.
\end{enumerate}
The completion of $\cF_0$ with respect to the norm 
\begin{align*}
    \| f \|_{\cF_0}^2 = \int_\Omega (f(x))^2 \; dx+ \int_\Omega (f'(x))^2 \; dx.
\end{align*}
is an RKHS $\cF$ (usually denoted $W^{1,2}(\Omega)$) where, if $\Omega = [0,1]$, the kernel is
\begin{align*}
    k_{[0,1]}(x, x') = \frac{(e^{x} + e^{-x})(e^{1-x'} + e^{x'-1})}{2(e  - e^{-1})} < 3.
\end{align*}
for $0 \le x \le x' \le 1$ and $k_{[0, 1]}(x, x') = k_{[0, 1]}(x', x)$ if $0 \le x' \le x \le 1$. If $\Omega = \cR$, the kernel is
\begin{align*}
    k_{\cR}(x, x') = \exp\{-\abs{x-x'}\}.
\end{align*}
The inner product in $\cF$ for differentiable functions $f,g \in \cF$ is
\begin{align*}
    \inner{f}{g}_{\cF} = \int_\Omega f(x) g(x) \; dx + \int_\Omega f'(x) g'(x) \; dx.
\end{align*}

\end{example}

Next, we state the following simple lemma about the composition of functions in $W^{1,2}([0,1])$.
For a set of differentiable functions $\cF$, let $\cF' = \{ f' \; | \; f \in \cF\}$ denote the set of derivatives.
\newcommand{\sob}{{W^{1,2}([0,1])}}
\begin{lemma} \label{lem:cdbfd}
    Suppose that there exists a universal constant $B \geq 1$ and sets of differentiable functions $\cF_0, \cF_1$ with $\mathrm{Im}(\cF_0) \subseteq [0,1]$, $\| \cF_0 \|_\sob \leq B$, $\mathrm{Im}(\cF_1) \subseteq [-B,B]$, and $\mathrm{Im}(\cF_1') \subseteq [-B,B]$.
    Then, $\{ f_1 \circ f_0 \; | \; f_0 \in \cF_0, f_1 \in \cF_1 \} \subseteq \cF$ and $\| f_1 \circ f_0 \|_{\cF} \leq 2B^2$.
\end{lemma}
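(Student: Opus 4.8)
The goal is to show that the composition $f_1 \circ f_0$ lies in the Sobolev space $W^{1,2}([0,1])$ (for which an efficient kernel exists, per \Cref{ex:sobolev}) and to bound its norm by $2B^2$. The plan is to verify the two defining properties of membership in $W^{1,2}([0,1])$ --- differentiability almost everywhere plus continuity, and square-integrability of the function and its derivative --- and then to compute the norm directly from the definition $\| g \|_{\sob}^2 = \int_0^1 g(x)^2 \, dx + \int_0^1 g'(x)^2 \, dx$.

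First I would establish regularity. Since $f_0 : [0,1] \to [0,1]$ is differentiable (everywhere, by hypothesis, as it is drawn from $\cF_0 \subseteq W^{1,2}([0,1])$ which consists of functions differentiable a.e.\ and continuous; more carefully, one should note $f_0$ is absolutely continuous, being in the Sobolev space, and its image lands in $[0,1]$ where $f_1$ is defined) and $f_1 : [-B,B] \to [-B,B]$ is differentiable with bounded derivative, hence Lipschitz, the composition $f_1 \circ f_0$ is absolutely continuous on $[0,1]$: a Lipschitz function composed with an absolutely continuous function is absolutely continuous. Therefore $f_1 \circ f_0$ is continuous and differentiable almost everywhere, and the chain rule $(f_1 \circ f_0)'(x) = f_1'(f_0(x)) f_0'(x)$ holds for a.e.\ $x$.

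Next I would bound the two integrals. For the function term, $|f_1(f_0(x))| \le B$ pointwise since $\mathrm{Im}(\cF_1) \subseteq [-B,B]$, so $\int_0^1 (f_1 \circ f_0)(x)^2 \, dx \le B^2$. For the derivative term, using the chain rule and $|f_1'(f_0(x))| \le B$, we get $\int_0^1 (f_1 \circ f_0)'(x)^2 \, dx = \int_0^1 f_1'(f_0(x))^2 f_0'(x)^2 \, dx \le B^2 \int_0^1 f_0'(x)^2 \, dx \le B^2 \cdot \| f_0 \|_{\sob}^2 \le B^2 \cdot B^2 = B^4$, where the penultimate step uses $\int_0^1 f_0'(x)^2 dx \le \| f_0\|_{\sob}^2$ and the last uses $\| \cF_0 \|_{\sob} \le B$. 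Adding, $\| f_1 \circ f_0 \|_{\sob}^2 \le B^2 + B^4 \le 2B^4$ (using $B \ge 1$), hence $\| f_1 \circ f_0 \|_{\sob} \le \sqrt{2} B^2 \le 2B^2$.

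The main obstacle --- really the only subtle point --- is the regularity/chain-rule step: one must be careful that composition behaves well for Sobolev functions, since in general the chain rule can fail if the inner function is merely weakly differentiable and the outer function is not Lipschitz. Here it works because $f_1$ has a bounded (hence essentially bounded) derivative on the compact interval $[-B,B]$ and so is globally Lipschitz there, and $f_0$ is absolutely continuous with image in $[0,1] \subseteq [-B,B]$; the composition of an absolutely continuous map with a Lipschitz map is absolutely continuous, which legitimizes the chain rule almost everywhere. I would make sure to invoke exactly this fact (Lipschitz $\circ$ AC is AC) rather than a naive pointwise chain rule argument. Everything else is a routine estimate.
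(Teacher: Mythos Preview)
Your proposal is correct and follows essentially the same route as the paper: bound $\|f_1\circ f_0\|_{L^2}$ by the pointwise bound on $f_1$, bound $\|(f_1\circ f_0)'\|_{L^2}$ via the chain rule together with the pointwise bound $|f_1'\circ f_0|\le B$ and $\|f_0'\|_{L^2}\le \|f_0\|_{\sob}\le B$, then combine. If anything you are more careful than the paper, which defers the regularity/chain-rule justification to a citation and phrases the derivative bound as a ``Cauchy--Schwarz'' step where your direct pointwise estimate is the cleaner argument.
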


\begin{proof}
    Fix $f_0 \in \cF_0, f_1 \in \cF_1$.
    Notice that by the uniform boundedness of $f_1$, $\| f_1 \circ f_0 \|_{L^2([0,1])} \leq B$. Also, $\| f_0' \|_{L^2([0,1])} \leq \| f_0' \|_{W^{1,2}([0,1])} \leq B$. Then,
    \begin{align*}
        \| (f_1' \circ f_0) f_0' \|_{L^2([0,1])} &\leq \| f_1' \circ f_0\|_{L^2([0,1])} \| f_0' \|_{L^2([0,1])} \\
        &\leq B^2
    \end{align*}
    where the first line comes from the Cauchy-Schwarz inequality and the second line comes from the plugging in the bounds on each norm.
    Also, by the uniform boundedness of $\cF_1$, $\| f_1 \circ f_0 \| \leq B$, which implies the desired bound.
    See, e.g., \cite{evans2018measure}, Theorem 4.4, part (ii) for more general conditions on the composition of functions in a Sobolev space.
\end{proof}

    


\begin{example}[Low-degree functions on $\{0, 1 \}^n$, \cite{Shawe-Taylor_Cristianini_2004}, Section 9.2] \label{ex:anova}
    Consider the set of functions $\cF_0 \subseteq \{ \{ -1,1\}^n \to [-1,1] \}$ whose Fourier spectrum is supported on monomials of degree at most $d$. 
    The kernel associated with the completion of $\cF_0$ is 
    \begin{align*}
        k(x, x') = \sum_{S \subseteq [n], |S| \leq d} x_S x_S'.
    \end{align*}
\end{example}

\subsection{Matrix-valued kernels}

We now introduce two standard definitions related to \emph{matrix-valued} kernels and their corresponding vector valued reproducing kernel Hilbert spaces. These standard facts can be found, for example, in \cite{vectorRKHSreview, minhvector}.

\begin{definition}
\label{def:vv-kernel}
    We say that a matrix-valued function $k : \cX \times \cX \to \R^{d \times d}$ is a valid \emph{kernel} if the following two ``positive semidefiniteness'' properties hold:
    \begin{itemize}
        \item For all $x, y \in \cX$, we have $k(x, y) = k(y, x)^\top$.
        \item For all $n \in \N$ and $x_1, \ldots, x_n \in \cX$ and $w_1, \ldots, w_n \in \R^d$, we have \[\sum_{a=1}^n\sum_{b=1}^n \langle w_a, k(x_a, x_b)w_b\rangle \ge 0.\]
    \end{itemize}
\end{definition}

\begin{definition}
\label{def:vv-rkhs}
    Given a matrix-valued kernel $k : \cX \times \cX \to \R^{d \times d}$, the \emph{reproducing kernel Hilbert space (RKHS) $\cF$ corresponding to $k$} is a Hilbert space consisting of vector-valued functions $f : \cX \to \R^d$. Specifically, $\cF$ is the completion of the space of all linear combinations of functions of the form \[x \mapsto \sum_{a=1}^n k(x, x_a)w_a\] for some $n \in \N$ and $x_1, \ldots, x_n \in \cX$ and $w_1, \ldots, w_n \in \R^d$. It is imbued with the unique inner product $\langle \cdot, \cdot \rangle_\cF : \cF \times \cF \to \R$ satisfying the following property: for all $x_1, x_2 \in \cX$ and $w_1, w_2 \in \R^d$, the inner product of the functions $f_1(x) = k(x, x_1)w_1$ and $f_2(x) = k(x, x_2)w_2$ is \[\langle f_1, f_2 \rangle_\cF = \langle w_1, k(x_1, x_2) w_2 \rangle,\] where the inner product on the right hand side is the standard inner product on $\R^d$.
\end{definition}

The following result illustrates how one might represent any finite set of vector valued functions using a matrix valued kernel:

\begin{lemma}
\label{thm:finite-membership-vv-rkhs}
    Let $\cX$ be any (not necessarily finite) population set and let $\cI$ be any (not necessarily finite) index set. Let $\cC = \{c_i\}_{i \in \cI}$ be a collection of functions $c_i : \cX \to \R^d$ indexed by $\cI$. Suppose that for each $x \in \cX$, we have
    \begin{equation*}
        \label{eq:finite-membership-vv-rkhs-assumption}
        \sum_{i \in \cI} \lVert c_i(x) \rVert^2 < \infty,
        \tag{$*$}
    \end{equation*}
    in which case the matrix-valued function $k : \cX \times \cX \to \R^{d \times d}$ given by \[k(x, y) = \sum_{i \in \cI} c_i(x)\,c_i(y)^\top\] is a valid kernel. Then, the RKHS $\cF$ corresponding to $k$ contains $\cC$, and $\|c_i\|_{\cF} \le 1$ for each $i \in \cI$.
\end{lemma}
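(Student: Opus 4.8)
\textbf{Proof plan for \cref{thm:finite-membership-vv-rkhs}.}

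The plan is to mirror the proof of the scalar case, \cref{thm:finite-membership-rkhs}, replacing the scalar coefficient sequences with $\R^d$-valued ones and keeping careful track of where transposes enter. First I would set up the ambient Hilbert space: let $\cH$ be the space of ``coefficient sequences'' $\alpha : \cI \to \R^d$ with $\sum_{i\in\cI} \lVert \alpha(i) \rVert^2 < \infty$, equipped with the inner product $\langle \alpha, \beta \rangle_{\cH} = \sum_{i \in \cI} \langle \alpha(i), \beta(i) \rangle$, which is a genuine Hilbert space. For each $x \in \cX$ and each direction $w \in \R^d$ I would define the coefficient sequence $\Phi_{x,w} : \cI \to \R^d$ by $\Phi_{x,w}(i) = c_i(x)\langle c_i(x), w\rangle$ — wait, more cleanly: define $\Phi_x$ to act on $w$ by $(\Phi_x w)(i) = \langle c_i(x), w\rangle \, c_i(x)$ does not land in $\R^d$ componentwise the way I want; the right object is $\Psi_{x}(i) \in \R^{d}$ defined so that for $w \in \R^d$ the sequence $i \mapsto \langle c_i(x), w\rangle$ appears. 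The cleanest route: for $x \in \cX$ and $w \in \R^d$, let $\Phi_{x}w \in \cH$ be the sequence $(\Phi_x w)(i) = \langle c_i(x), w \rangle \cdot e_?$— this is getting tangled, so instead I would work with the space $\cH$ of \emph{scalar} sequences $\gamma : \cI \to \R$ with $\sum \gamma(i)^2 < \infty$, and for each $x$ define the bounded linear map $\Phi_x : \R^d \to \cH$ by $(\Phi_x w)(i) = \langle c_i(x), w\rangle$; boundedness is exactly assumption \eqref{eq:finite-membership-vv-rkhs-assumption} via Cauchy--Schwarz. Then $\langle \Phi_x w, \Phi_y w' \rangle_{\cH} = \sum_{i} \langle c_i(x), w\rangle \langle c_i(y), w'\rangle = \langle w, k(x,y) w'\rangle$, which is precisely the defining inner-product property of the RKHS in \cref{def:vv-rkhs}, so $k$ is a valid kernel (the symmetry $k(x,y) = k(y,x)^\top$ and PSD property in \cref{def:vv-kernel} both fall out immediately from this representation).

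Next I would identify the RKHS $\cF$ with a concrete subspace of $\cH$-valued maps. Let $V \subseteq \cH$ be the closed span of $\{\Phi_x w : x \in \cX, w \in \R^d\}$, and for $\gamma \in V$ define $f_\gamma : \cX \to \R^d$ by $\langle f_\gamma(x), w\rangle = \langle \gamma, \Phi_x w\rangle_{\cH}$ for all $w$ (this determines $f_\gamma(x) \in \R^d$ uniquely). The Moore--Aronszajn construction for matrix-valued kernels then says exactly that $\gamma \mapsto f_\gamma$ is an isometric isomorphism from $V$ onto $\cF$. The key computation is that each target function $c_i$ is realized this way: taking $\gamma = e_i$ (the scalar basis sequence, $1$ at index $i$, $0$ elsewhere), we get $\langle f_{e_i}(x), w\rangle = \langle e_i, \Phi_x w\rangle_{\cH} = (\Phi_x w)(i) = \langle c_i(x), w\rangle$, so $f_{e_i} = c_i$ identically. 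Since $e_i$ need not lie in $V$, I would instead use its orthogonal projection $\mathrm{proj}_V(e_i)$: for every $x$ and $w$, $\langle e_i - \mathrm{proj}_V(e_i), \Phi_x w\rangle_{\cH} = 0$ because $\Phi_x w \in V$, hence $f_{\mathrm{proj}_V(e_i)} = f_{e_i} = c_i$ as well. Therefore $c_i \in \cF$, and by the isometry $\lVert c_i \rVert_{\cF} = \lVert \mathrm{proj}_V(e_i)\rVert_{\cH} \le \lVert e_i\rVert_{\cH} = 1$, which is the claimed bound.

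The main obstacle — and it is more a matter of bookkeeping than of genuine difficulty — is setting up the correct analogue of the ``evaluation functional'' in the vector-valued setting: in \cref{def:vv-rkhs} the reproducing property pairs a function with $\Phi(x)w$ rather than with a single $\Phi(x)$, so every line of the scalar argument has to be run ``tested against an arbitrary $w \in \R^d$,'' and one has to be slightly careful that the maps $\Phi_x : \R^d \to \cH$ are well-defined (bounded) precisely under hypothesis \eqref{eq:finite-membership-vv-rkhs-assumption}, and that the identification $f_\gamma \leftrightarrow \gamma$ respects the inner product — this last point is exactly the statement of the Moore--Aronszajn theorem for operator-valued kernels, which I would cite from \cite{vectorRKHSreview} or \cite{minhvector} rather than reprove. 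Everything else is a direct transcription of the proof of \cref{thm:finite-membership-rkhs}.
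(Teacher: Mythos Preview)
Your argument is correct, but the paper takes a slicker route that avoids re-running the Moore--Aronszajn construction in the vector-valued setting. The paper \emph{reduces to the scalar lemma} by flattening the output coordinate into the domain: define the scalar kernel $k_{\mathrm{scalar}} : (\cX \times [d])^2 \to \R$ by $k_{\mathrm{scalar}}((x,a),(y,b)) = k(x,y)_{ab} = \sum_{i \in \cI} c_i(x)_a\, c_i(y)_b$, observe that the vector-valued RKHS $\cF$ is isometrically isomorphic to the scalar RKHS $\cF_{\mathrm{scalar}}$ via $f \mapsto f_{\mathrm{scalar}}$ with $f_{\mathrm{scalar}}(x,a) = f(x)_a$, and then apply \cref{thm:finite-membership-rkhs} directly to the functions $(x,a) \mapsto c_i(x)_a$. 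This yields $\lVert (c_i)_{\mathrm{scalar}}\rVert_{\cF_{\mathrm{scalar}}} \le 1$ and hence $\lVert c_i\rVert_{\cF} \le 1$ in one line.

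Your approach instead builds the feature map $\Phi_x : \R^d \to \ell^2(\cI)$ and reproves the projection-onto-$V$ argument from scratch in this setting, which is perfectly valid but duplicates work already done in the scalar case and requires you to separately invoke the operator-valued Moore--Aronszajn theorem to identify $V$ with $\cF$. The paper's reduction buys brevity and modularity (one appeal to an already-proved lemma); your direct argument buys self-containment and makes the role of the feature map $\Phi_x w$ more explicit. Substantively the two are the same computation viewed through different bookkeeping: your scalar sequence space $\ell^2(\cI)$ with the maps $\Phi_x w$ is exactly what one gets by unwinding the paper's $\cF_{\mathrm{scalar}}$ via the scalar Moore--Aronszajn theorem.
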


\begin{proof}
    Given a fixed element $y \in \cX$ and $a \in \R^d$, consider the following vector-valued function from $\cX$ to $\R^d$: \[x \mapsto k(x,y)a.\] By \cref{def:vv-rkhs}, we know that the RKHS $\cF$ corresponding to the matrix-valued kernel $k$ is the completion of the set of all linear combinations of vector-valued functions of the above form. Next, consider the following related \emph{scalar-valued} kernel $k_{\mathrm{scalar}} : (\cX \times [d]) \times (\cX \times [d]) \to \R$, defined as follows: \[k_{\mathrm{scalar}}((x,a),(y,b)) = k(x,y)_{ab}.\] The RKHS $\cF_{\mathrm{scalar}}$ corresponding to $k_{\mathrm{scalar}}$ is given by the Moore-Aronszajn Theorem (\cref{thm:moore-aronszajn}), and comparing this description to the aforementioned description of $\cF$, it becomes clear that $\cF$ and $\cF_{\mathrm{scalar}}$ are isometrically isomorphic, i.e. there is a one-to-one, length-preserving correspondence between elements of $\cF$ and elements of $\cF_{\mathrm{scalar}}$. Specifically, the isomorphism maps a function $f : \cX \to \R^d$ in $\cF$ to the function $f_{\mathrm{scalar}} : \cX \times [d] \to \R$ given by \[f_{\mathrm{scalar}}(x, a) = f(x)_a\] for each $x \in \cX$ and $a \in [d]$. By \cref{thm:finite-membership-rkhs}, the space $\cF_{\mathrm{scalar}}$ contains the function $c_{\mathrm{scalar}} : \cX \times [d] \to \R$ for each $c \in \cC$, and these functions all have norm $\lVert c_{\mathrm{scalar}} \rVert_{\cF_{\mathrm{scalar}}} \le 1$. Consequently, $\cC \subseteq \cF$ and $\lVert c \rVert_{\cF} \le 1$ for each $c \in \cC$, as well.
\end{proof}

\end{document}